\documentclass{article} % For LaTeX2e
\makeatletter
\def\input@path{{iclr2027/}}
\makeatother
\usepackage{iclr2027_conference,times}
\usepackage{microtype}
\usepackage{graphicx}
\usepackage{booktabs}
\usepackage{xcolor}
\usepackage{amsmath}
\usepackage{amssymb}
\usepackage{mathtools}
\usepackage{amsthm}
\usepackage{multirow}
\usepackage{enumitem}
\usepackage{hyperref}
\usepackage{url}
\usepackage{wrapfig}
\usepackage{booktabs,makecell}
\usepackage{pgfplots}
\pgfplotsset{compat=1.18}
\usetikzlibrary{arrows.meta,patterns,calc,positioning}
\usepackage[table]{xcolor}
\usepackage{listings}
\usepackage[normalem]{ulem}
\useunder{\uline}{\ul}{}
\usepackage{amsfonts}
\usepackage{bbm}
\usepackage{subfigure}
\usepackage[textsize=tiny]{todonotes}
\usepackage{comment}
\usepackage{algorithm}
\usepackage{algorithmic}
\usepackage{amsmath}
\usepackage{amssymb}
\usepackage{mathtools}
\usepackage{amsthm}
\definecolor{myblue}{RGB}{77,123,174}

\usepackage{amsmath,amsfonts,bm}

\def\eqref#1{equation~\ref{#1}}
\def\1{\bm{1}}

\DeclareMathAlphabet{\mathsfit}{\encodingdefault}{\sfdefault}{m}{sl}
\SetMathAlphabet{\mathsfit}{bold}{\encodingdefault}{\sfdefault}{bx}{n}

\newcommand{\E}{\mathbb{E}}

\DeclareMathOperator*{\argmin}{arg\,min}

\theoremstyle{plain}
\newtheorem{theorem}{Theorem}[section]
\newtheorem{proposition}[theorem]{Proposition}
\newtheorem{lemma}[theorem]{Lemma}
\newtheorem{corollary}[theorem]{Corollary}

\theoremstyle{definition}
\newtheorem{definition}[theorem]{Definition}
\newtheorem{assumption}[theorem]{Assumption}

\theoremstyle{remark}
\newtheorem{remark}[theorem]{Remark}

\lstdefinelanguage{json}{
    basicstyle=\small\ttfamily,
    numbers=left,
    numberstyle=\scriptsize,
    stepnumber=1,
    numbersep=8pt,
    showstringspaces=false,
    breaklines=true,
    frame=lines,
    string=[s]{"}{"},
    stringstyle=\color{blue},
    comment=[l]{:},
    commentstyle=\color{black},
}

\usepackage[capitalize,noabbrev]{cleveref}

\usepackage{url}
\usepackage{hyperref}
\makeatletter
\let\iclr@maketitle\maketitle
\renewcommand{\maketitle}{%
  \microtypesetup{activate=false}%
  \iclr@maketitle
  \microtypesetup{activate=true}%
}
\makeatother

\title{Collaborative Personalized Preference Alignment for LLMs under Data Deficiency}

\author{
Liyan Yang \\
City University of Hong Kong \\
\texttt{liyanyang3-c@my.cityu.edu.hk} \\
\And
Yige Yuan \\
University of Washington \\
\texttt{yige@uw.edu} \\
\AND
Zhiqin Yang\thanks{Corresponding author.} \\
The Hong Kong University of Science and Technology \\
\texttt{yangzqccc@gmail.com}
}

\iclrfinalcopy % Uncomment for camera-ready version, but NOT for submission.
\begin{document}

\maketitle

\begin{abstract}
Real-world users often exhibit highly heterogeneous preferences over multiple objectives for LLM responses. A lightweight aligner can tailor these responses to individual preferences, but scarce user-specific feedback makes personalized training difficult. Learning shared initializations across users can support few-shot adaptation. However, heterogeneous preferences and competing objectives cause gradient conflicts across users and within each user, hindering effective initialization learning. This raises a central question: \textbf{how can we collaboratively learn aligner initializations that support few-shot adaptation to diverse user preferences?} To answer this question, we propose \textbf{A}pproximate \textbf{P}areto \textbf{O}ptimality (APO). We first group users whose updates are compatible, so that their information can be combined with less interference. Within each group, we combine gradient descent with controlled ascent to coordinate competing objectives and move towards preference-specific points on the Pareto front. This produces an initialization that is close to the optima of the users in the group. We then iteratively refine it using updates from few-shot local adaptation, making it more effective for personalization. Furthermore, we establish conditional suboptimality bounds for a one-local-step collaborative update and characterize how initialization error affects subsequent stochastic adaptation. Experiments on Fed-ChatbotPA and UltraFeedback show consistent improvements over existing methods using only 20 local examples.
%We theoretically show that each learned initialization lies within an $\epsilon_0$-band of the corresponding preference-specific optima, and that few-shot adaptation reaches a $\delta_0$-neighbourhood of the full-data optimum, including for unseen preferences.
\end{abstract}

\section{Introduction}
\label{sec: intro}
Large language models (LLMs)~\citep{openai_gpt6, aimlapi_gemini} serve clients with various cultural background, personal experience, and
individual values. As a result, it is a key requirement to tailor generated response to personal preferences.
Multi-objective preference alignment (MOPA) captures this need by generating outputs that match a client-specified balance across objectives such as helpfulness, harmlessness, and humor~\citep{yang2024rewards, shi2024decoding, guo2024controllable}. 
A straightforward way to personalize an LLM is to adapt a plug-and-play aligner~\citep{ji2024aligner, yang2024metaaligner} to adjust LLM responses on certain preference feedback of client. However, the scarcity of client-specific preference feedback creates a critical bottleneck for this approach, making it difficult to train an effective personalized aligner from scratch.

Existing approaches address this scarcity through data augmentation, including online imitation learning~\citep{shaikh2025aligning} and meta-learning with synthetic user preferences~\citep{singh2026fspo}. However, synthetic data may not fully capture real users' subtle preferences in style, tone, and context.
Personalized federated learning~\citep{wu2024towards,ye2024fedllm,jiang2019improving,nguyen2026learning} offers a promising alternative by enabling collaboration across clients using their local data. In this setting, clients can jointly learn a shared initialization that captures common patterns and then adapt it to their own preferences using a few local examples.
However, such collaboration faces \emph{preference gradient conflicts} at two levels~\citep{zhangrobust,guo2026few}. Across clients, heterogeneous preferences can cause locally beneficial updates to interfere during aggregation, leading to \emph{inter-client} conflicts. Within each client, gradients for different objectives may point in opposing directions, leading to \emph{intra-client} conflicts.
These challenges raise a key question: \textbf{how can we get aligner initializations enabling few-shot adaptation while addressing both inter- and intra-client preference gradient conflicts?}

%To make such few-shot personalization effective, the shared initialization should not merely perform well on average across clients; it should serve as a good starting point for rapid adaptation to diverse user preferences. This makes meta-learning a natural modeling choice. Classical MAML-style methods~\citep{finn2017model,jiang2019improving} are designed to learn transferable initializations for fast adaptation, and recent work such as MetaAligner~\citep{yang2024metaaligner} shows that meta-learning can improve multi-objective alignment for LLMs. 
%However, existing methods do not capture our collaborative personalized setting, in which the server must learn from decentralized clients whose preferences are heterogeneous, while each client must adapt from limited local feedback toward a preference-conditioned Pareto trade-off. This raises two coupled challenges, \emph{collaborative preference heterogeneity} and \emph{preference-conditioned few-shot adaptation}.

\begin{figure}[t]
\centering
\includegraphics[width=\linewidth]{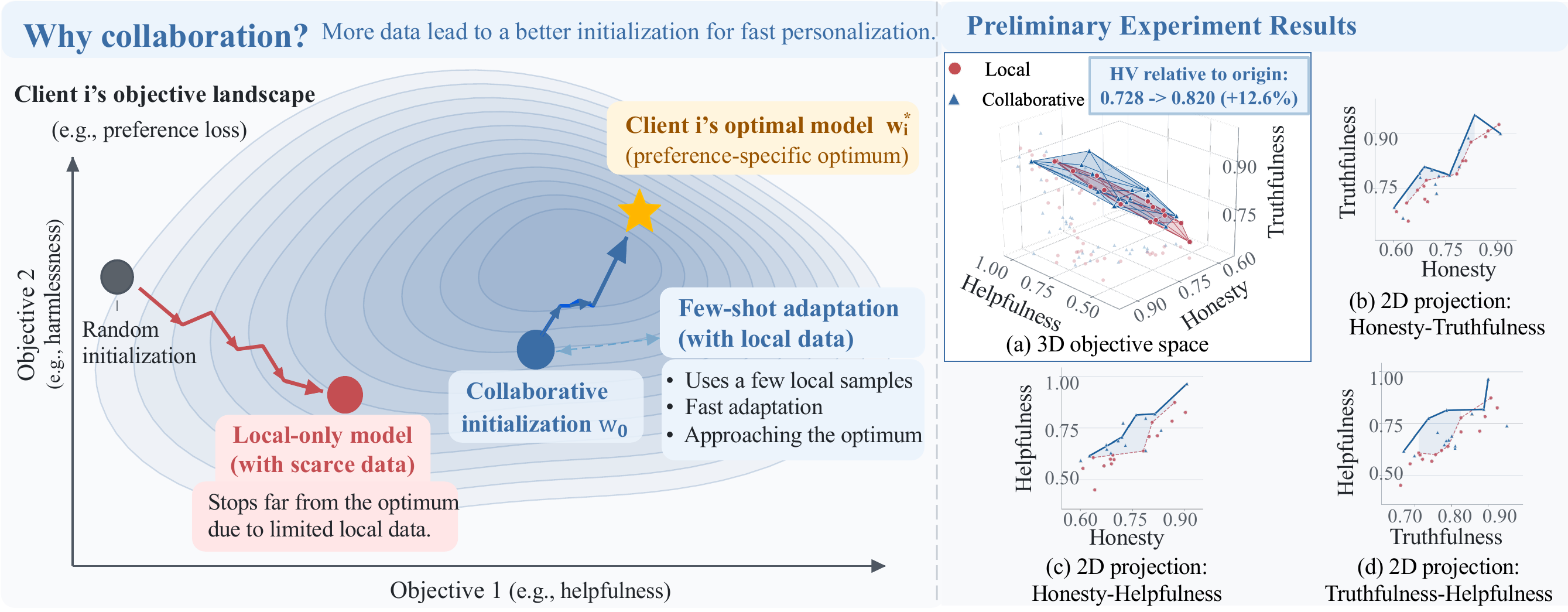}
\caption{Collaborative personalized preference alignment under data deficiency. \textbf{Left:} with scarce local feedback, local-only training from a random initialization stops far from a client's preference-specific optimum, whereas a collaboratively learned initialization enables effective few-shot personalization. \textbf{Right:} preliminary results on UltraFeedback~\citep{cui2023ultrafeedback} with Llama-3.2-3B-Instruct~\citep{meta2024llama32} (15 clients, 20 samples each). Over helpfulness, honesty, and truthfulness, collaborative initialization raises the pareto hypervolume (HV) from 0.728 to 0.820 ($+12.6\%$), and its frontier dominates the local one in the 2D projections.}
\label{fig:intro}
\vspace{-0.6cm}
\end{figure}

%\textcolor{blue}{1. cluster: 1.1 preference bottleneck, 1.2 adjustment vector clustering; 2. within cluster: collaborative training: 1.1 preference gradient update with control ascent, 1.2 meta-learning with server feedback;}
To solve this problem, we propose Approximate
Pareto Optimality (APO). The framework learns an initialization for each client cluster~\citep{ghosh2020ifca}. 
Compared with purely local training, each collaborative initialization reaches a point closer to the preference-specific optima of its clients. Starting from this initialization, few-shot adaptation is further guaranteed to approach the optimum attainable with sufficient local data more closely (see Figure~\ref{fig:intro}).
%Each initialization lies within an $\epsilon_0$-band of the preference-specific optima of its clients. Starting from the corresponding initialization, few-shot adaptation is provably guaranteed to reach a $\delta_0$-neighbourhood of the optimum attainable with sufficient local data 
% This guarantee also holds for preferences unseen during training. 
APO first addresses inter-client conflicts. It groups clients according to their \emph{bottleneck} objectives, defined as the objectives with the largest preference-weighted losses. These bottlenecks determine the clients' admissible update directions. We first partition clients by bottleneck ordering. We then refine each group using preference-adjustment vectors. This procedure reduces interference during aggregation and produces more reliable shared initializations. Furthermore, it combines preference gradient descent with controlled ascent~\citep{mahapatra2020multi} to traverse the pareto front within each cluster to solve intre-client conflicts. This yields the cluster-level initialization. Finally, we iteratively refine these initializations through adaptation-aware learning~\citep{nichol2018first,jiang2019improving}. In each round, clients adapt an initialization using a few local examples, and the server uses the resulting updates to refine it. Repeating this process produces initializations that adapt more effectively to individual preferences with limited local feedback. Under explicit assumptions, single-step collaborative updates achieve an asymptotic chebyshev suboptimality bound of $\epsilon_0$ for each client. Preconditioned stochastic adaptation yields an expected suboptimality bound of $\delta_0$ relative to the full-data optimum for unseen preferences over shared objectives.
Experiments on different datasets show that APO achieves the strongest overall performance among the evaluated methods in both preference-specific utility and pareto-front coverage.

In summary, we make the following contributions: \textbf{1)} We formalize collaborative personalized preference alignment of LLMs under data scarcity, and propose APO that learns a shared initialization for preference-conditioned few-shot adaptation. \textbf{2)} We introduce bottleneck-adjustment clustering and preference gradient descent with controlled ascent to address inter-client and intra-client preference gradient conflict. \textbf{3)} We establish an asymptotic chebyshev suboptimality bound for single-local-step collaborative updates, and an expected suboptimality bound for preconditioned stochastic adaptation to unseen preferences under shared objective functions. \textbf{4)} Extensive experiments on Fed-ChatbotPA and UltraFeedback with Llama-3.2-3B-Instruct and Qwen2.5-3B-Instruct demonstrate the superiority of APO over state-of-the-art methods.

\section{Related Work}
\label{sec: related_work}
\paragraph{Personalized Multi-Objective Alignment under Data Scarcity.}
Multi-objective alignment realizes client-specified trade-offs through preference-conditioned training~\citep{yang2024rewards}, post-training interpolation~\citep{rame2023rewarded}, or a single model that recovers the Pareto front~\citep{zhong2024panacea}.
MetaAligner~\citep{yang2024metaaligner} meta-learns a lightweight aligner that generalizes across unseen objectives, but not across users with heterogeneous multi-objective preferences.
Under limited feedback, FSPO~\citep{singh2026fspo} adapts from synthetic user preferences, while DITTO~\mbox{\citep{shaikh2025aligning}} learns from a few user demonstrations.
These methods rely on abundant centralized or synthetic data and fail to learn a personalized model from scarce real-client feedback under preference-specific settings.

\paragraph{Collaborative Alignment under Gradient Conflicts.}
Federated preference alignment applies local DPO or trains subpopulation selectors~\citep{ye2024openfedllm,wu2024towards}, while FIRM~\citep{roknilamouki2025firm} resolves multi-objective disagreement through client-side regularized MGDA; these methods nevertheless optimize a single population-level model without few-shot personalization guarantees.
Personalized FL learns shared initializations~\citep{jiang2019improving} or a few models for many clients~\citep{guo2026few}, but does not model preference-specific Pareto geometry.
For intra-model conflicts, EPO~\citep{mahapatra2020multi} uses controlled ascent to reach a preference-specific Pareto point, yet does not address interference among clients with heterogeneous preferences.
APO jointly addresses inter-client and intra-client gradient conflicts under data deficiency through bottleneck-adjustment clustering, controlled-ascent updates, and adaptation-aware meta-training.
Further related work is discussed in Appendix~\ref{app:related_work}.

\label{sec: method}
\section{Methodology}
We propose APO to learn aligner initialization that provide useful starting points for clients with different multi-objective preferences and limited local feedback.
We first characterize the preference corrections involved in collaboration and use them to organize clients into clusters.
Within each cluster, collaborative Pareto set initialization learning produces a basic initialization, which is then refined through adaptation-aware meta-training.
A new client selects the corresponding initialization and performs a few-shot adaptation to obtain a personalized model tailored to its own preference. In the following, we provide a detailed description of our framework.
Detailed proofs of all theoretical results are shown in Appendix~\ref{initial_aligner}.

\subsection{Preliminaries and Problem Setup}
\label{method:pre}
%Human preferences are inherently multi-dimensional: a response is judged along several, often conflicting, objectives such as helpfulness, harmlessness, and humor. Multi-objective preference alignment addresses this by steering the LLM toward outputs that match a client-specified trade-off among these objectives. 
Denoting the trainable parameters of the aligner as \( w \in \Omega \), where $\Omega$ is the parameter space, we associate each of the $m$ preference dimensions with an alignment loss $l^{(j)}:\Omega\to\mathbb{R}_{+}$, $j\in[m]$, computed on the preference feedback of dimension $j$ (a DPO loss in our implementation). For $\boldsymbol{l}^1,\boldsymbol{l}^2\in\mathbb{R}^m$ we write $\boldsymbol{l}^1\le\boldsymbol{l}^2$ if $\boldsymbol{l}^2-\boldsymbol{l}^1\in\mathbb{R}^m_{+}$ (componentwise), and $\boldsymbol{l}^1<\boldsymbol{l}^2$ if in addition $\boldsymbol{l}^1\ne\boldsymbol{l}^2$~\citep{mahapatra2020multi}. 
%Multi-objective preference alignment is then the multi-objective optimization problem $\min_{w \in \Omega} \boldsymbol{l}(w) = \bigl( l^{(1)}(w),\ l^{(2)}(w),\ \dots,\ l^{(m)}(w) \bigr).$
%Very often, no single solution \(w\) minimizes all dimensions simultaneously due to their conflicts. Instead, there exists a set of Pareto optimal solutions with distinct trade-offs among the dimensions. In the context of minimization, we say that \(w^{(a)}\) \emph{dominates} \(w^{(b)}\) if \( \boldsymbol{l}(w^{(a)}) < \boldsymbol{l}(w^{(b)}) \), i.e., $w^{(a)}$ is no worse on every dimension and strictly better on at least one~\citep{ehrgott2005multicriteria,miettinen1999nonlinear,mahapatra2020multi}. 
Based on this, Pareto optimality of multi-objective preference alignment is defined as:

\begin{definition}[Pareto optimality]
\label{po}
A solution \(w^*\) is \textit{Pareto optimal} if no other solution \(w' \in \Omega\) dominates \(w^*\). The set of all Pareto optimal solutions is the \textit{Pareto set (PS)} \(\mathcal{P}\), and its image \(\boldsymbol{l}(\mathcal{P})\subset\mathbb{R}^m_{+}\) is the \textit{Pareto front (PF)} \(\mathcal{T}\). A solution \(w^*\) is \textit{weakly Pareto optimal} if there is no \(w'\in\Omega\) with \( l^{(j)}(w') < l^{(j)}(w^*) \) for all \( j \in [m] \)~\citep{zhong2024panacea}.
\end{definition}
In the collaborative learning setting, each client $i$ holds its own preference feedback, which induces a client-specific loss vector $\boldsymbol{l}_i(w)=(l_i^{(1)}(w),\dots,l_i^{(m)}(w))$, and expresses its trade-off through a preference vector $\boldsymbol{\lambda}_i \in S^m$, where $S^m:=\{\boldsymbol{\lambda}\in\mathbb{R}^m_{++}:\sum_{j}\lambda^{(j)}=1\}$; a larger $\lambda_i^{(j)}$ means client $i$ tolerates a smaller loss on dimension $j$. Following the exact Pareto optimality of~\mbox{\citep{mahapatra2020multi}}, the \emph{preference-specific} Pareto optimal solutions of client $i$ form the set
\begin{equation}\label{eq:pareto_ray}
    \mathcal{P}_i(\boldsymbol{\lambda}_i) = \left\{ w_i^* \in \mathcal{P}_i \;\middle|\; \lambda_{i}^{(1)} l_{i}^{(1)*} = \cdots = \lambda_{i}^{(j)} l_{i}^{(j)*} = \cdots = \lambda_{i}^{(m)} l_{i}^{(m)*} \right\},
\end{equation}
where $\mathcal{P}_i$ is the Pareto set of $\boldsymbol{l}_i$ and $l_{i}^{(j)*} := l_{i}^{(j)}(w_i^*)$. Geometrically, for any $w_i^*\in\mathcal{P}_i(\boldsymbol{\lambda}_i)$ the loss vector $\boldsymbol{l}_i(w_i^*)$ is the intersection of the Pareto front with the ray in the direction of $\boldsymbol{\lambda}_i^{-1} := (1/\lambda_i^{(1)}, \cdots, 1/\lambda_i^{(m)})$. Such an intersection need not exist for every $\boldsymbol{\lambda}_i$; in that case we seek the Pareto optimal solution closest to the ray. In this paper, we take the \emph{weighted Chebyshev scalarization} as the guiding objective: minimizing $\psi_i(w)=\max_{j\in[m]}\lambda_i^{(j)} (l_i^{(j)}(w)-l_{i}^{(j)*})$ drives the iterate to a weakly Pareto optimal point. For proof, see the Lemma~\ref{prop:cheby2}. Leveraging Equation~\ref{eq:pareto_ray}, we reformulate the guiding objective as: minimizing $\psi_i(w)=\max_{j\in[m]}\lambda_i^{(j)}l_i^{(j)}(w)$.

To quantify how well the current loss vector $  l(w_i)  $ aligns with the client-specific preference vector $  \boldsymbol{\lambda_i}  $, we define the Preference Uniformity $  \beta_i(l(w_i))  $. For any point $w_i$ in the parameter space $\Omega$, 
$
    \beta_i(l(w_i)) = \sum_{j=1}^m \hat{l}_i^{(j)}(w_i) \log \left( \frac{\hat{l}_i^{(j)}(w_i)}{1/m} \right),
$
where $  \hat{l}_i^{(j)}(w_i) = \frac{\lambda_i^{(j)} l_i^{(j)}(w_i)}{\sum_{j'=1}^m \lambda_i^{(j')} l_i^{(j')}(w_i)}  $ denotes the weighted and normalized loss for task $  j  $ under preference $  \boldsymbol{\lambda_i}  $. The Preference Uniformity is exactly the Kullback–Leibler divergence $  \mathrm{KL}(\hat{l}_i \Vert \frac{1}{m}\mathbf{1})  $ and is nonnegative. It equals zero if and only if $  \lambda_i^{(j)} l_i^{(j)}$ is identical for all tasks $  j  $, i.e., when the weighted losses are perfectly balanced according to the client’s preference.

The direct differentiation of $\beta_i=\mathrm{KL}(\hat{l}_i\,\Vert\,\tfrac1m\mathbf1)$ with respect to $l_{i}^{(j)}$ gives $\partial\beta_i/\partial l_i^{(j)}=\lambda_i^{(j)} \left( \log \left( \frac{\hat{l}_i^{(j)}}{1/m} \right) - \beta_i(l) \right)/\sum_{j'}\lambda_i^{(j')} l_i^{(j')}$. Therefore, we define the Preference Adjustment Vector $  \boldsymbol{A_i} = (a_i^{(1)}, \dots, a_i^{(m)})^\top  $, whose $  j  $-th component 
$
    a_i^{(j)} = \lambda_i^{(j)} \left( \log \left( \frac{\hat{l}_i^{(j)}}{1/m} \right) - \beta_i(l) \right).
$
Writing the normalizer $Z_i:=\sum_{j'}\lambda_i^{(j')} l_i^{(j')}$, $\boldsymbol{A_i}=Z_i\,\nabla_{l}\beta_i$. Thus $\boldsymbol{A_i}$ points along the steepest increase of the Preference Uniformity, and moving against it reduces $\beta_i$ toward the balanced ray. Let $g^{(j)} = \nabla_w l_j$ be the gradient of the $j$th  preference dimension function 
at $w$, and $G \in \mathbb{R}^{n \times m}$ be the matrix having $g^{(j)}$ 
as its $j$th column. We can define the balancing direction $d_{bal} = G\boldsymbol{A_i}=Z_i\,\nabla_{w}\beta_i$, whose negative is the steepest-descent direction of the Preference Uniformity, i.e., along $d$ with $d^\top d_{\mathrm{bal},i}>0$. Accordingly, each client updates along a local direction $d_i=G_i\mu_i$ to decrease $\psi_i(w)$ with $\mu_i\in S^m$, a convex combination of its own per-dimension gradients chosen such that (a)~the bottleneck weighted loss is not increased and (b)~$d_i^\top d_{\mathrm{bal},i}>0$ whenever $\beta_i>0$. The concrete rule is given in Section~\ref{sec:initial_train}.

\subsection{Constructing Clusters: Bottleneck-Adjustment Clustering }
To address the challenge of inter-client preference gradient conflict, APO employs Bottleneck-Adjustment Clustering, which partitions clients by two stages: bottleneck ordering and then refines the groups using adjustment vectors.
The necessity of this design becomes clear once we examine how aggregation within a cluster can interfere with a client’s local balancing direction.

\phantomsection\label{cluster}
Consider a cluster $C_n$ whose clients aggregate their local directions with the FedAvg weights $\rho_i := N_i/N$, where $N_i$ is the number of local samples of client $i$ and $N=\sum_{i'\in C_n}N_{i'}$, yielding the shared direction $\bar d_{C_n}=\sum_{i\in C_n}\rho_i d_i$. And the $\bar d_{C_n}^\top d_{\mathrm{bal},i}$ can split as
$\bar d_{C_n}^\top d_{\mathrm{bal},i}
\;=\; d_i^\top d_{\mathrm{bal},i}
\;+\; (\bar d_{C_n}-d_i)^\top d_{\mathrm{bal},i}.$
The term $d_i^\top d_{\mathrm{bal},i}$ is what client $i$ would obtain from its own update; the term $(\bar d_{C_n}-d_i)^\top d_{\mathrm{bal},i}$ is contributed by the other clients and has no fixed sign. As an illustration of possible interference, take $m=2$ (e.g., helpfulness vs.\ harmlessness), let client $i$ have $\boldsymbol{\lambda}_i=(1-\epsilon,\epsilon)$, and let the remaining clients concentrate near the opposite corner $\boldsymbol{\lambda}_{i'}\approx(\epsilon,1-\epsilon)$. 
Their corrections aim at opposite ends of the Pareto front; if the gradient geometry at $w$ then yields $d_{i'}^\top d_{\mathrm{bal},i}<0$ for these clients, the aggregation effect grows with their total weight and can outweigh the local gain, so that $\bar d_{C_n}^\top d_{\mathrm{bal},i}<0$ and $\beta_i$ increases for all sufficiently small $\eta$. 
\paragraph{Stage 1: coarse partition by bottleneck ordering.} Therefore, to prevent such interference, we first eliminate differences in preference bottleneck structure among clients. The following sufficient condition makes this requirement precise.
\begin{theorem}[Collaborative preference uniformity descent]
\label{thm:collab_descent2}
Let Assumptions~\ref{asm:bounded}--\ref{asm:smooth} hold, let each client in $C_n$ run
$\tau$ local steps of size $\eta$ before aggregation, and write
$\gamma_i^* := d_i^\top d_{\mathrm{bal},i}$ for the local correction gain of property~(b) and
$\Delta_i := \|\bar d_{C_n} - d_i\|$.  If for every $i \in C_n$
\begin{equation}\label{eq:range_cond1}
    \gamma_i^*
    \;>\; \Delta_i\,\|d_{\mathrm{bal},i}\|
    \;+\; L_0 B^2\,\tau\,\eta ,
\end{equation}
which ensures that
$\bar d_{C_n}^\top d_{\mathrm{bal},i}(w_0) > 0$ for all $i \in C_n$, then the shared direction is a \emph{simultaneous} balancing direction, and there
exists $\eta_0 > 0$ such that a single global model update decreases \emph{every}
client's Preference Uniformity:
$
    \beta_i\bigl(l(w_0 - \eta\,\bar d_{C_n})\bigr)
    \;\leq\; \beta_i\bigl(l(w_0)\bigr),
     \forall\, i \in C_n,\ \forall\, \eta \in (0,\eta_0].
$
\end{theorem}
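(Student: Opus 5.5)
The plan has two parts: first show that the range condition \eqref{eq:range_cond1} forces a strictly positive inner product between the actually applied aggregated direction and each client's balancing direction, then convert that positivity into descent of $\beta_i$ by a first-order Taylor argument with a uniform step-size threshold. Throughout I use Assumptions~\ref{asm:bounded}--\ref{asm:smooth} as I expect them to be stated: gradient norms and directions are bounded by $B$, and the maps $w\mapsto G(w)$ and $w\mapsto d_{\mathrm{bal},i}(w)$ are $L_0$-Lipschitz, so that $\beta_i\circ\boldsymbol{l}$ is $C^1$ with locally Lipschitz gradient.

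\textbf{Step 1 (inner product at $w_0$).} Use the decomposition already displayed before the theorem,
\[
\bar d_{C_n}^\top d_{\mathrm{bal},i} = \gamma_i^* + (\bar d_{C_n}-d_i)^\top d_{\mathrm{bal},i}.
\]
By Cauchy--Schwarz, this gives $\bar d_{C_n}^\top d_{\mathrm{bal},i}\ge \gamma_i^*-\Delta_i\|d_{\mathrm{bal},i}\|$.

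\textbf{Step 2 (local-drift correction).} Account for the $\tau$ local steps. Each client's realized update is $\sum_{t<\tau} d_i(w_i^t)$, where $w_i^t$ is its $t$-th local iterate. Since $\|w_i^t-w_0\|\le t\eta B$ and $d_i$ is $L_0$-Lipschitz (through the Lipschitz $G$ and bounded $\mu_i\in S^m$), the averaged realized direction deviates from $\bar d_{C_n}(w_0)$ by at most $L_0 B\tau\eta$, up to constants that the assumptions absorb. Pairing this deviation with $\|d_{\mathrm{bal},i}\|\le B$ costs at most $L_0B^2\tau\eta$ in the inner product. Hence \eqref{eq:range_cond1} yields a margin
\[
c_i := \gamma_i^*-\Delta_i\|d_{\mathrm{bal},i}\|-L_0B^2\tau\eta>0,
\]
and the inner product of the applied direction with $d_{\mathrm{bal},i}(w_0)$ is at least $c_i$. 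In particular $\bar d_{C_n}^\top d_{\mathrm{bal},i}(w_0)>0$, which is the stated consequence.

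\textbf{Step 3 (descent of $\beta_i$).} Recall that $d_{\mathrm{bal},i}=Z_i\nabla_w\beta_i$ with $Z_i>0$, since the losses are positive and $\boldsymbol{\lambda}_i\in S^m$. Define $\phi_i(\eta)=\beta_i(\boldsymbol{l}(w_0-\eta\bar d_{C_n}))$. Then
\[
\phi_i'(0) = -\bar d_{C_n}^\top\nabla_w\beta_i(w_0) \le -c_i/Z_i(w_0) < 0.
\]
Because $\nabla_w\beta_i$ is continuous (indeed locally Lipschitz, with some constant $L_\beta$ on a ball around $w_0$), the descent lemma gives
\[
\phi_i(\eta)\le\phi_i(0)-\eta\, c_i/Z_i+\tfrac{L_\beta}{2}\eta^2\|\bar d_{C_n}\|^2 .
\]
The right-hand side is at most $\phi_i(0)$ for $\eta\le\eta_{0,i}:=2c_i/(Z_iL_\beta B^2)$. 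Finally, set $\eta_0=\min_{i\in C_n}\eta_{0,i}$, which is positive because the cluster is finite and every $c_i>0$.

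\textbf{Main obstacle.} I expect the main obstacle to be controlling the smoothness of $\beta_i$ in $w$. The map involves $\log \hat l_i^{(j)}$ and division by $Z_i$, so its gradient is Lipschitz only on regions where the losses $l_i^{(j)}$ are bounded away from zero. I would first try to obtain this from Assumption~\ref{asm:bounded}, restricting to a neighbourhood of $w_0$ where $l_i^{(j)}\ge \underline l>0$ and taking $\eta_0$ small enough to stay in that neighbourhood. A secondary subtlety is the circularity between $\eta$ in \eqref{eq:range_cond1} and $\eta_0$. I would resolve it by noting that the condition becomes weaker as $\eta$ decreases, so $c_i$ stays bounded below uniformly on $(0,\eta_0]$.
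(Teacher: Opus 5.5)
Your two-part structure (a lower bound on the inner product, then first-order descent of $\beta_i$) matches the paper's. However, Step~2 contains a step that fails. You bound the deviation of the realized averaged direction from $\bar d_{C_n}(w_0)$ by $L_0B\tau\eta$, arguing that $d_i$ is $L_0$-Lipschitz ``through the Lipschitz $G$ and bounded $\mu_i\in S^m$''. That argument only controls $w\mapsto G_i(w)\mu$ for a \emph{fixed} $\mu$.

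Here $\mu_i^*(w)$ is the solution of the LP in \eqref{eq:6}. Its objective, its right-hand sides (through $\boldsymbol A_i(w)$), the index sets $J,\bar J,J^*$ and the optimal basis all change with $w$. The solution map can therefore jump between vertices, so $\|d_i(w_i^t)-d_i(w_0)\|$ can be of order $B$ however small $\tau\eta$ is. The paper explicitly declines exactly this bound, in the discussion after \eqref{eq:cp_sync} and in Assumption~\ref{asm:lp_stable}(b). There is also a smaller slip in the same step: $\|d_{\mathrm{bal},i}\|\le B$ does not follow from Assumption~\ref{asm:bounded}, because $\boldsymbol A_i$ is not a simplex weight vector.

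The paper puts the drift on the other factor. There, $d_i$ is whatever client $i$ computed at its local iterate $w_i^t$, $\bar d_{C_n}$ is the aggregate of those directions, and the update is $w_0-\eta\bar d_{C_n}$, so no regularity of $d_i(\cdot)$ is ever needed. The gain $\gamma_i^*$ is read as $d_i^\top d_{\mathrm{bal},i}(w_i^t)$. The only correction is $\bar d_{C_n}^\top\bigl(d_{\mathrm{bal},i}(w_0)-d_{\mathrm{bal},i}(w_i^t)\bigr)\ge-\|\bar d_{C_n}\|\,L_0\tau\eta B\ge-L_0B^2\tau\eta$. This uses only two facts: $w\mapsto G(w)\boldsymbol A_i$ with $\boldsymbol A_i$ held fixed is Lipschitz (Assumption~\ref{asm:smooth}, with $\|\boldsymbol A_i\|$ absorbed into $L_0$), and $\|\bar d_{C_n}\|\le B$ as a convex combination of gradients. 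You should restructure your Step~2 this way. Alternatively, if you evaluate $\gamma_i^*$, $d_i$ and $\bar d_{C_n}$ all at $w_0$, Step~2 is simply superfluous, since Step~1 already gives $\bar d_{C_n}^\top d_{\mathrm{bal},i}(w_0)\ge\gamma_i^*-\Delta_i\|d_{\mathrm{bal},i}\|>0$.

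Your Step~3 is correct and takes a genuinely different route from the paper. You use the chain-rule identity $\nabla_w\beta_i=d_{\mathrm{bal},i}/Z_i$ and a local descent lemma. This is legitimate because the losses stay positive near $w_0$, so $\beta_i\circ\boldsymbol l$ is locally $C^{1,1}$, and it yields an explicit threshold $\eta_{0,i}$. The paper instead splits $\beta_i$ as $\frac{1}{Z_i}\sum_j\lambda_i^{(j)}l_j\log(\lambda_i^{(j)}l_j)+\log(m/Z_i)$ and applies two elementary logarithm inequalities with a Peano remainder. That needs only differentiability, but it gives the existence of $\eta_{0,i}$ without an explicit value.
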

We therefore perform a coarse partition by bottleneck ordering. At a fixed reference state $w_{\mathrm{ref}}$ (the aligner before collaborative training), each client evaluates its weighted losses and its adjustment vector $\boldsymbol{A}_i^{\mathrm{ref}}:=\boldsymbol{A}_i(w_{\mathrm{ref}})$ on its own local feedback.  Specifically, for each client we compute the weighted losses $\lambda_i^{(j)}l_i^{(j)}(w_{\mathrm{ref}})$, $j\in[m]$, and record the permutation $\Gamma$ that sorts all $m$ dimensions in descending order of weighted loss. Clients with identical orderings $\Gamma$ form one coarse cluster. Nevertheless, sharing the bottleneck ordering does not make the aggregated direction automatically admissible, since the steps still depend on the client-specific adjustment vector $\boldsymbol{A_i}$.
\paragraph{Stage 2: refinement by adjustment vectors.} 
According to the Theorem~\ref{thm:collab_descent2}, the dispersion of the directions bounds the $\Delta_i$, which we measure by the preference-conditioned direction dispersion of cluster $C_n$ at parameter $w$,
\begin{equation}\label{eq:HS2}
    H_{C_n}(w) = \sum_{i \in C_n}\rho_i\,
    \bigl\|d_{i}(w) - \bar{d}_{C_n}(w)\bigr\|^2,
\end{equation}
where $d_{i}(w) = G_i(w)\mu_i$ and $\bar{d}_{C_n} = \sum_{i \in C_n}\rho_i\,d_{i}$. 
If all clients' LPs share a common non-degenerate optimal basis (Assumption~\ref{asm:lp_stable} in Appendix~\ref{sec:meta_quality}), the multipliers $\mu_i$ are affine in $\boldsymbol{A}_i$ with constant $\kappa_{\mathrm{LP}}$, and Proposition~\ref{prop:H_via_adj} gives, for arbitrary aggregation weights $\rho_i$,
$H_{C_n}(w)\;\le\;\kappa_{\mathrm{LP}}^2\,\sigma_{\max}\bigl(G^\top G\bigr)\,V_{C_n}(\boldsymbol{A}^{\mathrm{ref}}) ,$
where $V_{C_n}(\boldsymbol{A}^{\mathrm{ref}}) \;=\; \sum_{i\in C_n}\rho_i\bigl\|\boldsymbol{A}_i^{\mathrm{ref}}-\bar{\boldsymbol{A}}^{\mathrm{ref}}_{C_n}\bigr\|^2$ is the $\rho$-weighted within-cluster variance of the adjustment vectors around their weighted mean $\bar{\boldsymbol{A}}^{\mathrm{ref}}_{C_n}=\sum_{i\in C_n}\rho_i\boldsymbol{A}_i^{\mathrm{ref}}$.
We therefore refine the cluster by adjustment vectors, aiming to reduce the direction dispersion $H_{C_n}(w)$. Specifically, within each Stage-1 cluster we apply hierarchical clustering with Euclidean distance on $\{\boldsymbol{A}_i^{\mathrm{ref}}\}$, and cut the dendrograms so that the total number of clusters equals a preset budget $b$ (e.g., $b=4$ on Fed-ChatbotPA and $b=8$ on UltraFeedback).

\subsection{Learning Shared Initializations: Preference-Aware Collaborative Training}
\label{sec:initial_train}
Within each cluster, APO further addresses intra-client reference gradient conflict by combining preference gradient descent with controlled ascent to train the initialization. The procedure consists of two phases: collaborative Pareto set initialization learning, followed by adaptation-aware meta-training, which we describe in detail below.
\paragraph{Phase I: Collaborative Pareto Set Initialization Learning.}
Inspired by ~\citep{mahapatra2020multi}, we address the reference gradient conflict by combining the preference gradient descent with controlled ascent. More specifically, once preference balance is achieved ($\beta_i(\boldsymbol{l_i}) = 0$, so $\boldsymbol{A_i}=0$, and $\boldsymbol{l_i}$ lies on the $\boldsymbol{\lambda_i}^{-1}$ ray), we switch to a pure descent scheme to refine the model toward the task itself: from \citep{desideri2012multiple}, a common descent direction $d = G\mu$ in the convex hull of the gradients at $w_i^t$ points toward the Pareto front, so we find the $d = G\mu$ whose inner product with the sum of all gradients $G\mathbf{1}$ is maximum. In addition, we define the index sets $J = \{j \mid \boldsymbol{A_i^\top}G_{i}^\top g_{i}^{(j)} > 0\}$, $\bar{J} = \{j \mid \boldsymbol{A_i^\top}G_{i}^\top g_{i}^{(j)} \leq 0\}$ and the bottleneck set $J^* = \left\{j \mid \lambda_i^{(j)} l_{i,b}^{(j)}= \max_{j'}\{\lambda_i^{(j')} l_{i,b}^{(j')}\} \right\}$ (with $b$-th batch), and impose descent constraints that permit a controlled trade-off.  Combining the two modes through the indicator $\mathbbm{1}_{\beta_i}$, we define the Pareto set initialization learning objective as
\begin{subequations}\label{eq:6}
\begin{align}
\mu_{i}^* = \arg & \max_{\mu_{i} \in S^m} \mu_{i}^\top G_{i}^\top G_{i} 
    \bigl( \boldsymbol{A_i} \mathbbm{1}_{\beta_i} + \mathbf{1} (1 - \mathbbm{1}_{\beta_i}) \bigr) \label{eq:6a} \\
& \text{s.t.} \quad 
\mu_{i}^\top G_{i}^\top g_{i}^{(j)}  \geq \boldsymbol{A_i^\top} G_{i}^\top g_{i}^{(j)} \mathbbm{1}_J,
   &&&\forall j \in \bar{J} \setminus J^* \label{eq:6b} \\
&\quad \quad \mu_{i}^\top G_{i}^\top g_{i}^{(j)}  \geq 0,
   &&&\forall j \in J^* \label{eq:6c}
\end{align}
\end{subequations}
where $\mathbbm{1}_J$ is the scalar indicator for a non-empty index set $J$.
For the non-bottleneck objectives ($j \in \bar{J}\setminus J^*$), constraint Equation~\ref{eq:6b} requires $\mu_i^\top G_i^\top g_i^{(j)} \geq \boldsymbol{A_i^\top} G_i^\top g_i^{(j)}\mathbbm{1}_J$; since this bound is non-positive, these objectives are allowed to degrade, but by no more than the balancing direction. For bottleneck objectives ($j \in J^*$), constraint Equation~\ref{eq:6c} requires $\mu_i^\top G_i^\top g_i^{(j)} \geq 0$, that guaranties that the most preference-violating objective must not worsen (its loss is non-increasing). In the following, we refer to the process with $  \mathbbm{1}_{\beta_i} = 0  $ as uniform descent, and the process with $  \mathbbm{1}_{\beta_i} = 1  $ as balancing descent.
\begin{comment}
After determining the descent direction at local step $k$ on client $i$, given by $d_{i,k} = G \mu_{i}^*$, the client updates its local model parameters for $\tau$ steps as $w_{i}^{(k+1)} = w_{i}^{(k)} - \eta \, d_{i,k}, \quad k = 0, 1, \dots, \tau-1$, where $\eta$ is the local learning rate. In each communication round $r$, the server update its  initializer using FedAvg \citep{mcmahan2017communication}:
$w_0 = w_0 + \alpha \sum_{i \in C_n}\frac{N_i}{N} g_{i}$, where $  N = \sum_i N_i  $ is the total number of samples across all clients in cluster $C_n$, $  N_i  $ is the number of local samples on client $  i  $, and $\alpha$ is the server learning rate.
\end{comment}

\paragraph{Phase II: Adaptation-aware meta-training.}
Writing the model obtained in Phase I as $w_{\mathrm{FedBsc}}$,
inspired by \citep{jiang2019improving}, we meta-train $w_{\mathrm{FedBsc}}$ on $S$-shot episodes so that it can be adapted to a client preference within $K$ steps. Moreover, for a given preference $\boldsymbol{\lambda_i}$, we use the objective 
\begin{equation}\label{eq:fewshot_obj}
\mu^* = \arg\max_{\mu \in S^m} d^{T}d_{bal}
      = \arg  \max_{\mu \in S^m} \mu^T G^T G \boldsymbol{A_i}
      \quad 
       \text{s.t. } (\ref{eq:6b})\text{--}(\ref{eq:6c}).
\end{equation}
to drive the iterate onto the $\boldsymbol{\lambda_i}^{-1}$ ray---it reduces the Preference Uniformity $\beta_i$.
Phase II uses the same clients, the same local data (the $S$-shot episodes are subsamples of each client's own dataset), the same objective functions and the same FedAvg aggregation as Phase I.
The complete initialization procedure (both phases) is summarized in Algorithm~\ref{alg:1} in Appendix~\ref{app:algorithm}.

\subsection{Theoretical Guarantees: From Collaborative Convergence to Few-Shot Personalization}
\paragraph{Collaborative Initialization Converges to Cluster-Local Optima.}
\label{sec:convergence}
\begin{wrapfigure}[25]{r}{0.5\textwidth}
\centering
\resizebox{\linewidth}{!}{%
\begin{tikzpicture}[scale=1.9,>=Stealth,line join=round]
  % ===== drawn BEFORE the white erase: cones, boxes, limiting band =====
  % preference cones M_{lambda_i} = {beta_i <= bar beta} (trajectory hypothesis)
  \fill[teal!8!white]   (0,0) -- (2.00,3.05) -- (2.50,2.70) -- cycle;
  \fill[orange!9!white] (0,0) -- (2.50,3.00) -- (3.00,2.68) -- cycle;
  % per-client admissible boxes A_1^t, A_2^t (anchored at origin, far corner q_i^t)
  \fill[teal!20]   (0,0) rectangle (1.97,2.71);
  \fill[orange!18] (0,0) rectangle (2.29,2.45);
  % cluster admissible set A^t = A_1^t \cap A_2^t
  \fill[blue!18] (0,0) rectangle (1.97,2.45);
  % cluster dominating set V^t (innermost)
  \fill[blue!34] (0,0) rectangle (1.80,2.22);
  % box edges
  \draw[teal!60]   (0,0) rectangle (1.97,2.71);
  \draw[orange!70] (0,0) rectangle (2.29,2.45);
  \draw[blue!65,thick] (0,0) rectangle (1.97,2.45);
  \draw[blue!45]   (0,0) rectangle (1.80,2.22);
  % epsilon-inflated cluster admissible set A^t(eps) (dashed, outer)
  \draw[blue!70,dashed] (0,0) rectangle (2.12,2.60);

  % ===== white erase: blank the infeasible region below the front =====
  \fill[white] plot[smooth,tension=0.7] coordinates
    {(0.55,3.50) (0.90,2.70) (1.20,2.30) (1.45,2.00) (1.62,1.83)
     (1.85,1.72) (2.15,1.66) (2.55,1.50) (3.10,1.15) (3.70,0.85)}
    -- (3.70,0) -- (0,0) -- (0,3.50) -- cycle;

  % ===== drawn AFTER the erase =====
  % axes
  \draw[->,thick] (0,0) -- (3.3,0) node[right] {\small $l^{(1)}$};
  \draw[->,thick] (0,0) -- (0,3.3) node[above] {\small $l^{(2)}$};
  \node[below left] at (0,0) {\small $O$};

  % Pareto front
  \draw[very thick] plot[smooth,tension=0.7] coordinates
    {(0.55,3.30) (0.90,2.70) (1.20,2.30) (1.45,2.00) (1.62,1.83)
     (1.85,1.72) (2.15,1.66) (2.55,1.50) (3.10,1.15) (3.30,0.85)};
  \node[rotate=-22] at (2.95,1.4) {\scriptsize Pareto front};

  % preference rays lambda_i^{-1}
  \draw[densely dashed] (0,0) -- (2.20,3.03)
        node[above right=-2pt] {\small $\boldsymbol{\lambda}_1^{-1}$};
  \draw[densely dashed] (0,0) -- (2.78,2.97)
        node[right=-1pt] {\small $\boldsymbol{\lambda}_2^{-1}$};
  \node[teal!55!black]  at (1.84,3.02) {\scriptsize $M_{\boldsymbol{\lambda}_1}$};
  \node[orange!65!black] at (2.92,2.82) {\scriptsize $M_{\boldsymbol{\lambda}_2}$};

  % current-iterate ray points q_i^t
  \filldraw (1.97,2.71) circle (1.0pt) node[above=1pt] {\scriptsize $q_1^t$};
  \filldraw (2.29,2.45) circle (1.0pt) node[right=1pt] {\scriptsize $q_2^t$};

  % one shared FedAvg step -eta dbar from l(w^t) (far corner of V^t); the new loss
  % vector l(w^{t+1}) stays in every client's eps-inflated admissible box (Thm. collab_admissible)
  \filldraw (1.80,2.22) circle (1.0pt) node[above right=-1pt] {\scriptsize $l(w^{t})$};
  \draw[->,very thick,red!70!black] (1.80,2.22) -- (1.66,2.00)
        node[midway,above left=-3pt] {\small $-\eta\bar d_{C_n}$};
  \filldraw (1.66,2.00) circle (1.1pt) node[below right=-1pt] {\scriptsize $l(w^{t+1})$};

  % set labels
  \node[teal!55!black]   at (1.35,2.60) {\scriptsize $A_1^t$};
  \node[orange!70!black] at (2.14,2.00) {\scriptsize $A_2^t$};
  \node[blue!60!black]   at (1.70,2.31) {\small $\mathcal A^t$};
  \node[blue!45!black]   at (1.24,2.00) {\scriptsize $\mathcal V^t$};
  \node[blue!70!black]   at (2.22,2.63) {\scriptsize $\mathcal A^t(\varepsilon)$};

  % legend (in the infeasible wedge, lower-left)
  \begin{scope}[shift={(0.15,0.12)}]
    \fill[teal!8!white]   (0,1.27) rectangle (0.12,1.43);
    \fill[orange!9!white] (0.12,1.27) rectangle (0.24,1.43);
    \draw[teal!8!white]   (0,1.27) rectangle (0.12,1.43);
    \draw[orange!9!white] (0.12,1.27) rectangle (0.24,1.43);
    \node[right] at (0.28,1.35) {\scriptsize $M_{\boldsymbol{\lambda}_1},M_{\boldsymbol{\lambda}_2}$: preference cones};
    \fill[blue!34] (0,0.98) rectangle (0.24,1.14);
    \node[right] at (0.28,1.06) {\scriptsize $\mathcal V^t$: cluster dominating set};
    \fill[blue!18] (0,0.69) rectangle (0.24,0.85);
    \draw[blue!65,thick] (0,0.69) rectangle (0.24,0.85);
    \node[right] at (0.28,0.77) {\scriptsize $\mathcal A^t=\bigcap_i A_i^t$: cluster admissible set};
    \draw[blue!70,dashed] (0,0.40) rectangle (0.24,0.56);
    \node[right] at (0.28,0.48) {\scriptsize $\mathcal A^t(\varepsilon)$: $\varepsilon$-inflated};
    \fill[teal!20]   (0,0.11) rectangle (0.12,0.27);
    \fill[orange!18] (0.12,0.11) rectangle (0.24,0.27);
    \draw[teal!60]   (0,0.11) rectangle (0.12,0.27);
    \draw[orange!70] (0.12,0.11) rectangle (0.24,0.27);
    \node[right] at (0.28,0.19) {\scriptsize $A_1^t,A_2^t$: per-client admissible sets};
  \end{scope}
\end{tikzpicture}}
\caption{\small Cluster admissible set $\mathcal A^t$ for $m=2$. Dashed lines are the preference rays $\boldsymbol{\lambda}_i^{-1}$. One shared FedAvg step $-\eta\,\bar d_{C_n}$ moves $l(w^t)$ to $l(w^{t+1})$, which under Assumption~\ref{asm:descent_margin} stays in every $A_i^t(\varepsilon_{i,t})$ (Theorem~\ref{thm:collab_admissible}); the boxes are nested across rounds up to this slack (Corollary~\ref{cor:collab_monotone}).}
\label{fig:collab_adm}
\end{wrapfigure}
We analyze the collaborative initialization in three steps. First, we give a conditional bound on the preference uniformity of a client at which the balancing guarantee of Theorem~\ref{thm:collab_descent2} no longer applies, and an exact one-step bound (with second-order term) on the cluster losses along the shared update. Second, we define an admissible set $\mathcal A^t$ that contains each client's potential $l_{i}(w)$ values in an iteration $t$. Third, under an explicit \emph{descent-margin} condition on the LP direction (Assumption~\ref{asm:descent_margin}) and a trajectory hypothesis, we prove a contraction of each client's Chebyshev suboptimality and conclude that $\limsup_t\bigl(\psi_i(w^t)-\psi_i^*\bigr)$ is at most a fixed, horizon-independent radius $\epsilon_0$ (asymptotically bounded; for every $\zeta>0$ the suboptimality is eventually below $\epsilon_0+\zeta$); if the iterates have a limit point $w^*$, it inherits the bound and lies in the limiting band $\mathcal A^\infty$ (a one-sided upper bound on each client's losses). The descent-margin condition is a Polyak--{\L}ojasiewicz-type requirement on $\psi_i$ along the actual LP direction; it is not implied by smoothness, bounded gradients, preference balance or the LP constraints, and we state it as an assumption rather than derive it.

Along the collaborative iteration, Lemma~\ref{lem:collab_class} decreases every balancing descent client's $\beta_i$ in a cluster while the range condition Equation~\ref{eq:cp_mix_range} holds. Partition the cluster into $n_{+}:=|C_n^{+}|$ balancing ($\beta_i>0$) and $n_{0}:=|C_n^{0}|$ uniform ($\beta_i=0$) clients, with block masses $\rho^{+}:=\sum_{i\in C_n^{+}}\rho_i$ and $\rho^{0}:=\sum_{i\in C_n^{0}}\rho_i$; Lemma~\ref{lem:beta_floor} then bounds each $\beta_i$ by a tight $\bar\beta$, set by the preference-conditioned direction dispersion $H_{C_n}$ and, via Equation~\ref{eq:HS2} and Equation~\ref{eq:cp_delta_split}, by the cross-block pull $\rho^{0}\|\bar d^{0}-d_i\|$ in addition to preference heterogeneity.
Thus $l(w)$ lies in the preference cone $M_{\boldsymbol{\lambda}_i}=\{l:\beta_i(l)\le\bar\beta\}$ around each $\boldsymbol{\lambda}_i^{-1}$ ray (Figure~\ref{fig:collab_adm}) for any client at which the range condition of Theorem~\ref{thm:collab_descent2} fails---a per-client, per-iterate fact; the convergence result below takes $\beta_i\le\bar\beta$ along the trajectory as a hypothesis. Lemma~\ref{loss_avg} then gives an exact one-step bound on each loss and on $\psi_i$ along the shared update, including the second-order term $\tfrac{L_0B^2}{2}\eta^2$.

\begin{comment}
\begin{lemma}[One-step loss non-increase]\label{loss_avg}
By differentiability and $L_0$-smoothness (Assumption~\ref{asm:smooth}), the
first-order expansion of each loss along $\bar d_{C_n}$ gives, for $\eta\ge0$,
\begin{equation}\label{eq:cp_taylor}
\begin{aligned}
    l_i^{(j)}(w-\eta\bar d_{C_n})
    \;=\; l_i^{(j)}(w) - \eta\,\bar d_{C_n}^\top g_i^{(j)} + o(\eta)
     \;\le\; l_i^{(j)}(w)
        + \eta\bigl(B\,\delta_g
        + c_\beta\sqrt{\bar\beta}\bigr) + o(\eta),
\end{aligned}
\end{equation}
where the $c_\beta\sqrt{\bar\beta}$ is the residual balancing descent slack and $c_\beta=\rho^{+}BL_A\le BL_A$. Thus the shared update is an
approximate descent step with per-coordinate slack
$\varepsilon_t=\eta(B\delta_g+c_\beta\sqrt{\bar\beta})$.

\end{lemma}
\end{comment}

For client $i$ at iterate $w^t$ let
$O_i\subseteq\mathbb{R}^m_{+}$ be its attainable objective set and $\le$ the componentwise order. The $\psi_i^t:=\max_{j}\lambda_i^{(j)}\,l_i^{(j)}(w^t)$, we define the client ray point $q_i^t:=\psi_i^t\Bigl(\tfrac{1}{\lambda_i^{(1)}},\dots,
        \tfrac{1}{\lambda_i^{(m)}}\Bigr)$, the client 
dominating set $V_i^{t}:=\{l\in O_i: l\le l_i(w^t)\}$, and the client admissible set $A_i^{t}:=\{l\in O_i: l\le q_i^t\}$. The sets $A_i^t$ live in each client's own objective space $O_i$; a single shared model $w$, however, must serve every client at once.  We therefore lift the
construction to \emph{parameter space}, defining for the cluster dominating set $\mathcal V^t:=\bigcap_{i\in C_n}\{\,w:\ l_i(w)\le l_i(w^t)\,\} $, and the ($\varepsilon$-inflated) 
cluster admissible set $\mathcal A^t(\varepsilon) :=\bigcap_{i\in C_n}\{\,w:\ l_i(w)\le q_i^t+\varepsilon\mathbf1\,\}$.
\begin{comment}
\begin{equation}\label{eq:cp_cluster_adm2}
\begin{aligned}
     \mathcal V^t:=\bigcap_{i\in C_n}\{\,w:\ l_i(w)\le l_i(w^t)\,\},
    \quad 
     \mathcal A^t(\varepsilon):=\bigcap_{i\in C_n}\{\,w:\ l_i(w)\le q_i^t+\varepsilon\mathbf1\,\},
\end{aligned}
\end{equation}
\end{comment}
And we abbreviate $\mathcal A^t:=\mathcal A^t(0)=\bigcap_{i\in C_n}\{w:l_i(w)\le q_i^t\}$.
Thus $w\in\mathcal A^t(\varepsilon)$ iff $l_i(w)\in A_i^t(\varepsilon)$ for
\emph{every} client, i.e.\ the cluster admissible set is the set of shared
parameters whose induced losses fall in every client's inflated admissible box
simultaneously (the intersection is exactly the cluster wedge of
Figure~\ref{fig:collab_adm}).

Based on the Lemmas~\ref{lem:beta_floor}, ~\ref{loss_avg}, and the cluster admissible set analysis above, we can show:

\begin{theorem}[Cluster solution approximates every local optimum under a descent-margin condition]
\label{thm:collab_converge2}
Consider the synchronous shared iteration $w^{t+1}=w^t-\eta\bar d_{C_n}(w^t)$ (one local LP step per round) under Assumptions~\ref{asm:bounded}--\ref{asm:smooth} and the descent-margin condition of Assumption~\ref{asm:descent_margin}: there are $c_d>0$ and tolerances $b_i\ge0$ such that on a region $\mathcal N$, whenever $\beta_i\le\bar\beta$, $\min_{j\in J^*_{i,\varepsilon}(w)}\lambda_i^{(j)}d_i(w)^\top g_i^{(j)}(w)\ge c_d\,(\psi_i(w)-\psi_i^*)-b_i$. Let $0<\eta\le1/c_d$ and suppose that for all $t\ge t_0$ and $i\in C_n$, $w^t\in\mathcal N$, $\beta_i(w^t)\le\bar\beta$ and $\|\bar d_{C_n}(w^t)-d_i(w^t)\|\le\bar\Delta_i$. Then, with
$
\epsilon_{0,i}:=\frac{\|\boldsymbol{\lambda}_i\|_\infty}{c_d}\Bigl(B\,\bar\Delta_i+\frac{L_0B^2}{2}\,\eta\Bigr)+\frac{b_i}{c_d},
$
we obtain the contraction
$
\psi_i(w^{T})-\psi_i^*\le(1-\eta c_d)^{T-t_0}\bigl(\psi_i(w^{t_0})-\psi_i^*\bigr)+\epsilon_{0,i}.
$
Consequently,
$
\limsup_{T\to\infty}\bigl(\psi_i(w^T)-\psi_i^*\bigr)\le\epsilon_{0,i},
$
and any limit point \(w^*\) satisfies
$
l_i(w^*)\le q_i^*+\Bigl(\frac{\epsilon_{0,i}}{\min_j\lambda_i^{(j)}}\Bigr)\mathbf{1}
$
componentwise, where \(q_i^*=\psi_i^*\boldsymbol{\lambda}_i^{-1}\) denotes the ray point of client \(i\)'s local optimum. We write \(\epsilon_0:=\max_{i\in C_n}\epsilon_{0,i}\).
\end{theorem}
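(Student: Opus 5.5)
We reduce the claim to a scalar one-step recursion for the Chebyshev suboptimality $e_i^t:=\psi_i(w^t)-\psi_i^*$ and then unroll it. Fix a client $i\in C_n$ and a round $t\ge t_0$. By the hypotheses, $w^t\in\mathcal N$ and $\beta_i(w^t)\le\bar\beta$, so the descent-margin condition of Assumption~\ref{asm:descent_margin} is available at $w^t$. The first step is to bound each weighted loss along the shared step $w^{t+1}=w^t-\eta\bar d_{C_n}(w^t)$. By $L_0$-smoothness (Assumption~\ref{asm:smooth}), which is the content of Lemma~\ref{loss_avg},
\[
l_i^{(j)}(w^{t+1})\le l_i^{(j)}(w^t)-\eta\,\bar d_{C_n}^\top g_i^{(j)}+\tfrac{L_0}{2}\eta^2\|\bar d_{C_n}\|^2 .
\]
We split $\bar d_{C_n}=d_i+(\bar d_{C_n}-d_i)$. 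Cauchy--Schwarz with $\|g_i^{(j)}\|\le B$ and $\|\bar d_{C_n}-d_i\|\le\bar\Delta_i$ bounds the cross term by $\eta B\bar\Delta_i$. Since $\bar d_{C_n}$ is a convex combination of convex combinations of gradients, Assumption~\ref{asm:bounded} gives $\|\bar d_{C_n}\|\le B$. Multiplying by $\lambda_i^{(j)}\le\|\boldsymbol\lambda_i\|_\infty$ yields
\[
\lambda_i^{(j)}l_i^{(j)}(w^{t+1})\le\lambda_i^{(j)}l_i^{(j)}(w^t)-\eta\lambda_i^{(j)}d_i^\top g_i^{(j)}+\eta\|\boldsymbol\lambda_i\|_\infty\Bigl(B\bar\Delta_i+\tfrac{L_0B^2}{2}\eta\Bigr).
\]

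The second step, which I expect to be the main obstacle, is passing from this coordinatewise bound to the max $\psi_i(w^{t+1})$. The descent margin only controls coordinates in the $\varepsilon$-active set $J^*_{i,\varepsilon}(w^t)$. For $j\in J^*_{i,\varepsilon}$ we apply the margin, $\lambda_i^{(j)}d_i^\top g_i^{(j)}\ge c_d e_i^t-b_i$, and the bound becomes $\le\psi_i^t-\eta c_d e_i^t+\eta(b_i+\|\boldsymbol\lambda_i\|_\infty(\cdots))$.

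For $j\notin J^*_{i,\varepsilon}$ we have $\lambda_i^{(j)}l_i^{(j)}(w^t)\le\psi_i^t-\varepsilon$. The total one-step change of such a coordinate is at most $\eta\|\boldsymbol\lambda_i\|_\infty B^2(1+\tfrac{L_0}{2}\eta)$. I would therefore need $\varepsilon$, as fixed in Assumption~\ref{asm:descent_margin}, to dominate this change (or absorb it for $\eta$ small), so that these coordinates stay below the active ones. If that assumption does not already encode this, I would state it as the standing requirement on $\varepsilon$ relative to $\eta$. Taking the max over $j$ and subtracting $\psi_i^*$ gives
\[
e_i^{t+1}\le(1-\eta c_d)e_i^t+\eta c_d\,\epsilon_{0,i},
\]
with $\epsilon_{0,i}$ exactly as in the statement.

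The third step is routine. Since $0<\eta\le1/c_d$, we have $0\le1-\eta c_d<1$, and unrolling from $t_0$ gives
\[
e_i^T\le(1-\eta c_d)^{T-t_0}e_i^{t_0}+\eta c_d\,\epsilon_{0,i}\sum_{k\ge0}(1-\eta c_d)^k=(1-\eta c_d)^{T-t_0}e_i^{t_0}+\epsilon_{0,i}.
\]
Letting $T\to\infty$ gives the $\limsup$ bound.

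For a limit point $w^*$, continuity of $\psi_i$ (the max of continuous losses) gives $\psi_i(w^*)-\psi_i^*\le\epsilon_{0,i}$ along the convergent subsequence. Then for each $j$,
\[
l_i^{(j)}(w^*)\le\frac{\psi_i(w^*)}{\lambda_i^{(j)}}\le\frac{\psi_i^*}{\lambda_i^{(j)}}+\frac{\epsilon_{0,i}}{\lambda_i^{(j)}}\le(q_i^*)^{(j)}+\frac{\epsilon_{0,i}}{\min_{j'}\lambda_i^{(j')}},
\]
which is the stated componentwise band. Finally, $\epsilon_0:=\max_i\epsilon_{0,i}$ is a definition.
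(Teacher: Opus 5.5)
Your route is the paper's: a per-coordinate descent lemma as in Lemma~\ref{loss_avg}, the split $\bar d_{C_n}=d_i+(\bar d_{C_n}-d_i)$ with the margin on the near-active set as in Theorem~\ref{thm:collab_admissible}, then unrolling and the limit-point argument. However, the step you flag as the main obstacle is not closed, and the requirement you propose for it is the wrong one.

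\textbf{What the inactive coordinates must satisfy.} For $j\notin J^*_{i,\varepsilon}(w^t)$ you do not need the coordinate to stay below the active ones. You have no lower bound on the active coordinates at $w^{t+1}$, and an inactive objective may legitimately become the new maximum. What you need is that it stays below the common bound
$U_i^t:=\psi_i^t-\eta(c_de_i^t-b_i)+\eta\|\boldsymbol\lambda_i\|_\infty\bigl(B\bar\Delta_i+\tfrac{L_0B^2}{2}\eta\bigr)$
that you proved for the active ones. Your per-coordinate inequality together with $-\lambda_i^{(j)}d_i^\top g_i^{(j)}\le\|\boldsymbol\lambda_i\|_\infty B^2$ puts an inactive coordinate below
$\psi_i^t-\varepsilon+\eta\|\boldsymbol\lambda_i\|_\infty B^2+\eta\|\boldsymbol\lambda_i\|_\infty\bigl(B\bar\Delta_i+\tfrac{L_0B^2}{2}\eta\bigr)$.
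So what is needed is $\varepsilon\ge\eta\|\boldsymbol\lambda_i\|_\infty B^2+\eta(c_de_i^t-b_i)$.

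\textbf{Why your condition is not enough.} Asking $\varepsilon$ to dominate the one-step change $\eta\|\boldsymbol\lambda_i\|_\infty B^2(1+\tfrac{L_0}{2}\eta)$ only keeps inactive coordinates below $\psi_i^t$. The curvature part cancels anyway, since it also appears in $U_i^t$. The decrement $\eta(c_de_i^t-b_i)$, which is exactly what produces the contraction, is not covered. Since $U_i^t-\psi_i^t=-\eta c_d(e_i^t-\epsilon_{0,i})$, whenever $e_i^t>\epsilon_{0,i}$ an inactive objective could land in $(U_i^t,\psi_i^t)$, and the recursion $e_i^{t+1}\le(1-\eta c_d)e_i^t+\eta c_d\epsilon_{0,i}$ would fail. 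Taking $\eta$ small does not help: the change and the needed width both scale like $\eta$. With a fixed $\varepsilon$ you would instead be adding a step-size restriction that the theorem does not contain.

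\textbf{The missing observation.} The decrement is itself bounded by the margin. $J^*_{i,\varepsilon}(w^t)$ is nonempty, since it contains the maximizing index. So with $\|d_i\|\le B$ and Cauchy--Schwarz,
$c_de_i^t-b_i\le\min_{j\in J^*_{i,\varepsilon}}\lambda_i^{(j)}d_i^\top g_i^{(j)}\le\|\boldsymbol\lambda_i\|_\infty B^2$.
Hence the width $\varepsilon=2\eta\|\boldsymbol\lambda_i\|_\infty B^2$ suffices. This is exactly the activation width built into the definition of $J^*_{i,\varepsilon}$ in~\eqref{eq:active_set}, on which Assumption~\ref{asm:descent_margin} is stated, so no additional standing requirement is needed.

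\textbf{How the paper orders it.} Lemma~\ref{loss_avg} handles inactive indices using $\xi_i(w;\bar d)=\min_{j\in J^*_{i,\varepsilon}}\lambda_i^{(j)}\bar d^\top g_i^{(j)}\le\|\boldsymbol\lambda_i\|_\infty B^2$. This gives $\psi_i(w-\eta\bar d)\le\psi_i(w)-\eta\,\xi_i(w;\bar d)+\tfrac{\|\boldsymbol\lambda_i\|_\infty L_0B^2}{2}\eta^2$. Only afterwards does it lower-bound $\xi_i$ by $c_de_i^t-b_i-\|\boldsymbol\lambda_i\|_\infty B\Delta_i^t$. With this repair the rest of your argument goes through as written.
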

%Theorem~\ref{thm:collab_converge2} is conditional. The descent margin is a Polyak--{\L}ojasiewicz-type requirement on the actual LP direction that is not implied by smoothness, preference balance or the LP constraints; the guarantee is asymptotic (the closed band need not be entered in finite time), one-sided in each loss coordinate, and does not assert that $\{w^t\}$ converges. 
The radius $\epsilon_{0,i}$ is horizon-independent and governed by $\bar\Delta_i\le\sqrt{\sup_tH_{C_n}(w^t)/\rho_{\min}}$ (the quantity Bottleneck-Adjustment Clustering is designed to reduce), by $\eta$, and by $b_i/c_d$. The statement covers one local step per round ($\tau=1$; $K=1$ for Phase~II). Since Phase~II runs the same shared iteration on the same data with the balancing LP and effective step size $\eta_{\mathrm{II}}=\alpha\eta_{\mathrm{meta}}\le1/c_d$, the band also covers $w_{\mathrm{II}}$ with $\eta$ replaced by $\max(\eta,\eta_{\mathrm{II}})$ (Corollary~\ref{cor:phase2_cont}). Consequently, for $\hat w\in\{w_{\mathrm{I}},w_{\mathrm{II}}\}$ and every in-cluster client (Corollary~\ref{cor:approx_param}),
\begin{equation}\label{eq:decomp_main}
    \psi_i(\hat w)-\psi_i^*\;\le\;
    \bar\epsilon_0
    +\delta_\lambda\,\|l(\hat w)\|_\infty
    +\delta_\lambda\,L^*,
    \qquad \bar\epsilon_0:=\tfrac{1}{|C_n|}\textstyle\sum_{i\in C_n}\epsilon_{0,i},
\end{equation}
where $L^*$ bounds $\|l(w_{i'}^*)\|_\infty$ over all optima being compared. Full statements, proofs and qualifications are in Appendix~\ref{sec:collab_pure}--\ref{sec:approx_optimum}.
\paragraph{Few-Shot Personalization Gap from the Shared Initialization.}
\label{sec:data_def_adapt}
Consider a new client with preference $\boldsymbol{\lambda}_{\mathrm{new}}$ assigned to cluster $C_n$ by its adjustment vector, with scarce local data and no participation in collaborative training. It adapts the shared starting point $w_0$ (in Algorithm~\ref{alg:1}, $w_0=w_{\mathrm{II}}$) for $K$ steps with Equation~\ref{eq:fewshot_obj}; we compare the result with a locally trained model that uses the same objective and preference but has sufficient data. The analysis is for an \emph{idealized} preconditioned stochastic adaptation with fresh samples ($S$ per step, $K\cdot S$ in total; Assumptions~\ref{asm:pl_adapt}--\ref{asm:precond}).

\begin{theorem}[Data deficiency adaptation gap $\delta_0$]\label{thm:scarce_gap}
Under Assumptions~\ref{asm:bounded}--\ref{asm:smooth}, \ref{asm:pl}, \ref{asm:pl_adapt} (local $L_0$-smoothness and a PL inequality with constant $\mu_{\mathrm{PL}}$ on the adaptation region), \ref{asm:finite_sample} and~\ref{asm:precond} (adaptive step sizes in $[p_-,p_+]$), with $0<\alpha\le p_-/(L_0 p_+^2)$ and $\|\boldsymbol{\lambda}_{\mathrm{new}}-\boldsymbol{\lambda}_i\|\le\delta_\lambda$ for all $i\in C_n$, the $K$-step adaptation started from any $w_0$ in the region of Assumption~\ref{asm:pl_adapt} satisfies, in expectation over the sampled directions,
$
\label{gap_final}
    \E\bigl[\psi_{\mathrm{new}}(w_K)-\psi_{\mathrm{new}}^*\bigr]
    \le (1-\alpha\mu_{\mathrm{PL}} p_-)^{K}\,\Delta_0
    +\frac{L_0\alpha p_+^2}{2\mu_{\mathrm{PL}} p_-}\cdot\frac{\sigma_0^2}{S}
    \;=:\;\delta_0,
$
where $\psi_{\mathrm{new}}^*$ is the optimal value attained by local training with sufficient data.
\end{theorem}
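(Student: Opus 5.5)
The argument is the standard one-step descent analysis for stochastic (preconditioned) gradient methods under a PL inequality, unrolled over $K$ steps. Write the adaptation as $w_{k+1}=w_k-\alpha P_k \hat g_k$. Here $\hat g_k$ is the stochastic direction built from $S$ fresh samples via the LP of Equation~\ref{eq:fewshot_obj} for $\boldsymbol{\lambda}_{\mathrm{new}}$. The preconditioner $P_k$ is diagonal/adaptive with eigenvalues in $[p_-,p_+]$ (Assumption~\ref{asm:precond}). Let $g_k:=\E[\hat g_k\mid w_k]$ denote the population direction. By Assumption~\ref{asm:finite_sample}, $\hat g_k$ is conditionally unbiased for $g_k$ with variance at most $\sigma_0^2/S$. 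Assumption~\ref{asm:pl_adapt} is read as saying that $g_k$ plays the role of a (sub)gradient of $\psi_{\mathrm{new}}$ on the adaptation region, satisfying $\tfrac12\|g_k\|^2\ge\mu_{\mathrm{PL}}(\psi_{\mathrm{new}}(w_k)-\psi_{\mathrm{new}}^*)$. Set $\Delta_k:=\E[\psi_{\mathrm{new}}(w_k)-\psi_{\mathrm{new}}^*]$, so that $\Delta_0$ is the initial gap at $w_0$.

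\textbf{Step 1 (descent inequality).} Apply $L_0$-smoothness on the adaptation region:
\[
\psi_{\mathrm{new}}(w_{k+1})\le\psi_{\mathrm{new}}(w_k)-\alpha g_k^\top P_k\hat g_k+\tfrac{L_0\alpha^2}{2}\|P_k\hat g_k\|^2 .
\]
Take the conditional expectation. Using unbiasedness and the eigenvalue bounds, together with $\E\|\hat g_k\|^2=\|g_k\|^2+\mathrm{Var}$, this gives
\[
\E_k\psi_{\mathrm{new}}(w_{k+1})\le\psi_{\mathrm{new}}(w_k)-\alpha p_-\|g_k\|^2+\tfrac{L_0\alpha^2p_+^2}{2}\bigl(\|g_k\|^2+\sigma_0^2/S\bigr).
\]
The step-size condition $\alpha\le p_-/(L_0p_+^2)$ gives $\tfrac{L_0\alpha^2p_+^2}{2}\le\tfrac{\alpha p_-}{2}$. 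The gradient terms therefore combine to at most $-\tfrac{\alpha p_-}{2}\|g_k\|^2$.

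\textbf{Step 2 (PL and recursion).} Insert the PL inequality to get
\[
\Delta_{k+1}\le(1-\alpha\mu_{\mathrm{PL}}p_-)\Delta_k+\tfrac{L_0\alpha^2p_+^2}{2}\tfrac{\sigma_0^2}{S}.
\]
The contraction factor lies in $[0,1)$, since $\mu_{\mathrm{PL}}\le L_0$ and $p_-\le p_+$. Unroll the recursion over $K$ steps and bound the geometric sum by $1/(\alpha\mu_{\mathrm{PL}}p_-)$. This yields exactly $(1-\alpha\mu_{\mathrm{PL}}p_-)^K\Delta_0+\frac{L_0\alpha p_+^2}{2\mu_{\mathrm{PL}}p_-}\frac{\sigma_0^2}{S}$, which is the claimed $\delta_0$.

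\textbf{Step 3 (role of $\delta_\lambda$ and the main obstacle).} The hypothesis $\|\boldsymbol{\lambda}_{\mathrm{new}}-\boldsymbol{\lambda}_i\|\le\delta_\lambda$ does not appear in the final bound. Its role is only to guarantee that $w_0$ from the cluster, and the iterates, remain in the region of Assumption~\ref{asm:pl_adapt}; through Equation~\ref{eq:decomp_main} it also controls the size of $\Delta_0$. I expect the main obstacle to be justifying Step 1 for the nonsmooth max-type $\psi_{\mathrm{new}}$ with an LP-selected direction. This rests on two points. First, the expected inner product $g_k^\top P_k\hat g_k$ must behave like a gradient inner product. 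Second, the iterates must not leave the adaptation region. I would handle both by invoking Assumptions~\ref{asm:pl_adapt}--\ref{asm:precond} exactly as idealizations. Concretely, the region is taken to be invariant, and $\psi_{\mathrm{new}}$ is taken to be locally smooth with $g_k$ as its gradient there. I would then state explicitly that fresh samples at each step make $\hat g_k$ conditionally independent of the past, so that the tower property justifies the unrolling.
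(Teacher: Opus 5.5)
Your proposal is correct and follows the paper's proof (Lemma~\ref{lem:pl_adamw_contract}, invoked in Proposition~\ref{prop:twophase_gap}) essentially step for step. Both arguments use the descent lemma on the preconditioned step, then the conditional expectation with unbiasedness and $p_-I\preceq P_k\preceq p_+I$, then the step-size condition to reduce the gradient terms to $-\tfrac{\alpha p_-}{2}\|g_k\|^2$, then the PL inequality, and finally geometric unrolling, with $\delta_\lambda$ entering only through the bound on $\Delta_0$. Two points should be made explicit. First, $\E[g_k^\top P_k\hat g_k\mid\mathcal F_k]=g_k^\top P_kg_k$ requires conditioning on the full history $\mathcal F_k$ with $P_k$ fixed before the step-$k$ samples are drawn (Assumption~\ref{asm:precond}(a)), not merely conditioning on $w_k$. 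Second, staying in the adaptation region is hypothesis~(iii) of Assumption~\ref{asm:pl_adapt}, not a consequence of the $\delta_\lambda$ condition.
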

For $w_0\in\{w_{\mathrm{I}},w_{\mathrm{II}}\}$, Equation~\ref{eq:decomp_main} gives $\Delta_0:=\psi_{\mathrm{new}}(w_0)-\psi_{\mathrm{new}}^*\le\bar\epsilon_0+\delta_\lambda(\|l(w_0)\|_\infty+L^*)$, so the few-shot gap is expressed in the collaborative-training quantities $\bar\Delta_i,\delta_\lambda,\eta,\eta_{\mathrm{II}}$ and $b_i/c_d$, and vanishes as the cluster tightens only in the strict form $b_i=0$ (in Appendix~\ref{sec:data_scarce}).

\begin{comment}

\begin{algorithm*}[tb]
   \caption{The proposed algorithm}
   \label{alg:1}
\begin{algorithmic}
   \STATE {\bfseries Input:} Initial iterate $w_0$, the clients in cluster $C_n$.
   \FOR{$r$: $0$ {\bfseries to} $R-1$}
   \STATE Server sends $w_r$ to all clients in $C_n$;
   \FOR{all $i \in C_n$}
   \STATE Set $w_{i, r+1}^{0} = w_{r}$;
   \FOR{$t$: 1 {\bfseries to} $\tau$}
   \STATE Set $w_{i, r+1}^{t'} = w_{i, r+1}^{t-1} - \alpha  S_i(w_{i, r+1}^{t-1}, \boldsymbol{\lambda_i}, S_{i, support})$ //Preference-balanced inner-loop.
   \STATE Set $w_{i, r+1}^{t} = w_{i, r+1}^{t-1} - \eta Q_i(w_{i, r+1}^{t'}, \boldsymbol{\lambda_i}, S_{i, query})$  //Generalization-oriented outer-loop.
   \ENDFOR
   \STATE Client $i$ sends $w_{i, r+1}^{\tau}$ back to server;
   \ENDFOR
   \STATE Server update its initial aligner by averaging over received models: $w_{r+1} = \sum_{i \in C_n}\frac{N_i}{N} w_{i, r+1}^{\tau}$ 
   \ENDFOR
\end{algorithmic}
\end{algorithm*}
    
\end{comment}

\section{Experiments}
\label{sec: experiments}
\subsection{Experimental Setup}
\paragraph{Datasets and Evaluation Metrics}
We evaluate our method on the helpful-assistant task using Fed-ChatbotPA~\citep{ye2024fedllm} (helpfulness, safety) and UltraFeedback~\citep{cui2023ultrafeedback} (helpfulness, honesty, truthfulness), scoring responses with ArmoRM-Llama3-8B-v0.1~\citep{ArmoRM} by preference-weighted normalized score, hypervolume (HV), and inverted generational distance (IGD). Collaborative learning under data deficiency and metric definitions are in Appendices~\ref{dataset_explain} and~\ref{Additional_Experiments}.

\paragraph{Baselines}
We compare against six baselines for personalized multi-objective alignment under data deficiency.
(1) \textit{Base model};
(2) \textit{RIC}~\citep{yang2024rewards};
(3) \textit{Rewarded Soup}~\citep{rame2023rewarded};
(4) \textit{FSPO}~\citep{singh2026fspo};
(5) \textit{DITTO}~\citep{shaikh2025aligning};
(6) \textit{FedAvg}~\citep{mcmahan2017communication}. Detailed descriptions of all baselines are provided in Appendix~\ref{baseline}.
\paragraph{Models and computation environment.}
We use the pretrained base model Llama-3.2-3B-Instruct \citep{meta2024llama32} and Qwen2.5-3B-Instruct \citep{qwen2.5}. And trained them on the NVIDIA RTX PRO 6000 Blackwell (96 GB GDDR7) and NVIDIA A100 80GB (SXM4). 
\subsection{Main Results} %add step and per dimension in the appendix
\paragraph{APO outperforms existing baselines under data deficiency.}
\begin{table*}[t]
\centering
\definecolor{rankbest}{RGB}{120,182,200}
\newcommand{\best}[1]{\cellcolor{rankbest}\textbf{#1}}
\newcommand{\second}[1]{\cellcolor{rankbest!45}#1}
\caption{\small Cluster-level Score, hypervolume (HV), and inverted generational distance (IGD) on
Fed-ChatbotPA (FCPA; 4 clusters) and UltraFeedback (UF; 8 clusters).
Higher Score/HV and lower IGD are better. Dark bold cells mark the best values and light cells the second-best.}
\label{tab:cluster_results_score_hv_igd}
\fontsize{7pt}{8.4pt}\selectfont
\setlength{\tabcolsep}{2pt}
\resizebox{\linewidth}{!}{
\begin{tabular}{@{}ll*{6}{ccc}c ccc@{}}
\toprule
& & \multicolumn{18}{c}{Baselines} & & \multicolumn{3}{c}{\textbf{Ours}} \\
\cmidrule(lr){3-20} \cmidrule(l){22-24}
& & \multicolumn{3}{c}{Base} & \multicolumn{3}{c}{RIC} & \multicolumn{3}{c}{Rewarded Soup} & \multicolumn{3}{c}{FedAvg} & \multicolumn{3}{c}{FSPO} & \multicolumn{3}{c}{DITTO} & & \multicolumn{3}{c}{APO} \\
Dataset & Cluster
  & Score  & HV  & IGD 
  & Score  & HV  & IGD 
  & Score  & HV  & IGD 
  & Score  & HV  & IGD 
  & Score  & HV  & IGD 
  & Score  & HV  & IGD 
  && Score  & HV  & IGD  \\
\midrule
\rowcolor{gray!20}
\multicolumn{24}{l}{\textit{Llama-3.2-3B-Instruct}} \\
\cmidrule(l){1-24}
\multirow{5}{*}{FCPA}
 & $C_0$ & 0.70 & 0.52 & 0.18 & 0.24 & 0.11 & 0.72 & 0.78 & 0.60 & 0.12 & 0.78 & 0.58 & 0.12 & \second{0.80} & \second{0.65} & \second{0.10} & 0.53 & 0.33 & 0.39 & & \best{0.88} & \best{0.71} & \best{0.07} \\
 & $C_1$ & 0.72 & 0.53 & 0.15 & 0.43 & 0.20 & 0.54 & 0.75 & 0.53 & 0.17 & 0.76 & 0.57 & 0.11 & \second{0.79} & \second{0.62} & \second{0.07} & 0.52 & 0.28 & 0.42 & & \best{0.81} & \best{0.67} & \best{0.06} \\
 & $C_2$ & 0.70 & 0.47 & 0.14 & 0.50 & 0.14 & 0.54 & 0.57 & 0.39 & 0.25 & \second{0.75} & 0.49 & \second{0.12} & 0.74 & \best{0.56} & \best{0.07} & 0.56 & 0.24 & 0.40 & & \best{0.80} & \second{0.52} & \second{0.12} \\
 & $C_3$ & 0.74 & 0.53 & 0.13 & 0.43 & 0.14 & 0.58 & 0.70 & 0.50 & 0.19 & 0.77 & \second{0.56} & 0.10 & \second{0.79} & \best{0.62} & \best{0.06} & 0.58 & 0.27 & 0.39 & & \best{0.82} & \best{0.62} & \second{0.09} \\
\addlinespace[2pt]
 & \textit{Avg.} & 0.72 & 0.51 & 0.15 & 0.40 & 0.15 & 0.60 & 0.70 & 0.51 & 0.18 & 0.76 & 0.55 & \second{0.11} & \second{0.78} & \second{0.61} & \best{0.08} & 0.55 & 0.28 & 0.40 & & \best{0.83} & \best{0.63} & \best{0.08} \\
\cmidrule(l){1-24}
\multirow{9}{*}{UF}
 & $C_0$ & 0.53 & 0.20 & 0.34 & 0.35 & 0.03 & 0.79 & 0.66 & 0.39 & \second{0.11} & \second{0.73} & \second{0.40} & 0.15 & 0.58 & 0.25 & 0.25 & 0.33 & 0.09 & 0.62 & & \best{0.78} & \best{0.48} & \best{0.10} \\
 & $C_1$ & 0.54 & 0.18 & 0.37 & 0.39 & 0.05 & 0.73 & 0.67 & 0.34 & 0.16 & \second{0.74} & \second{0.40} & \second{0.11} & 0.59 & 0.23 & 0.29 & 0.31 & 0.06 & 0.72 & & \best{0.79} & \best{0.48} & \best{0.04} \\
 & $C_2$ & 0.54 & 0.17 & 0.41 & 0.33 & 0.03 & 0.82 & 0.72 & 0.39 & 0.12 & \second{0.74} & \second{0.43} & \second{0.09} & 0.63 & 0.25 & 0.28 & 0.48 & 0.13 & 0.52 & & \best{0.79} & \best{0.52} & \best{0.04} \\
 & $C_3$ & 0.45 & 0.09 & 0.58 & 0.24 & 0.02 & 0.95 & 0.72 & 0.40 & 0.15 & \second{0.74} & \second{0.42} & \second{0.11} & 0.58 & 0.20 & 0.37 & 0.49 & 0.13 & 0.51 & & \best{0.79} & \best{0.51} & \best{0.06} \\
 & $C_4$ & 0.58 & 0.19 & 0.30 & 0.22 & 0.01 & 0.87 & \best{0.75} & \best{0.40} & \second{0.05} & 0.66 & \second{0.28} & 0.16 & 0.65 & 0.26 & 0.18 & 0.52 & 0.11 & 0.46 & & \second{0.74} & \best{0.40} & \best{0.04} \\
 & $C_5$ & 0.57 & 0.18 & 0.39 & 0.17 & 0.01 & 1.07 & \second{0.75} & 0.37 & \second{0.12} & 0.73 & \second{0.40} & 0.15 & 0.63 & 0.24 & 0.29 & 0.46 & 0.09 & 0.59 & & \best{0.76} & \best{0.43} & \best{0.10} \\
 & $C_6$ & 0.56 & 0.16 & 0.51 & 0.29 & 0.03 & 0.92 & 0.80 & 0.46 & 0.14 & \second{0.81} & \second{0.51} & \second{0.10} & 0.67 & 0.27 & 0.34 & 0.52 & 0.14 & 0.63 & & \best{0.84} & \best{0.58} & \best{0.04} \\
 & $C_7$ & 0.61 & 0.19 & 0.40 & 0.18 & 0.01 & 1.04 & 0.76 & 0.33 & \second{0.19} & \second{0.80} & \second{0.45} & \best{0.06} & 0.68 & 0.27 & 0.28 & 0.60 & 0.17 & 0.50 & & \best{0.82} & \best{0.47} & \best{0.06} \\
\addlinespace[2pt]
 & \textit{Avg.} & 0.55 & 0.17 & 0.41 & 0.27 & 0.02 & 0.90 & 0.73 & 0.38 & 0.13 & \second{0.74} & \second{0.41} & \second{0.12} & 0.63 & 0.25 & 0.28 & 0.47 & 0.12 & 0.57 & & \best{0.79} & \best{0.48} & \best{0.06} \\
\cmidrule(l){1-24}
\rowcolor{gray!20}
\multicolumn{24}{l}{\textit{Qwen2.5-3B-Instruct}} \\
\cmidrule(l){1-24}
\multirow{5}{*}{FCPA}
 & $C_0$ & 0.71 & 0.45 & 0.23 & 0.16 & 0.03 & 0.93 & 0.77 & 0.55 & 0.18 & \second{0.85} & \second{0.62} & \second{0.07} & 0.84 & 0.58 & 0.10 & 0.81 & 0.56 & 0.12 & & \best{0.88} & \best{0.65} & \best{0.05} \\
 & $C_1$ & 0.69 & 0.46 & 0.16 & 0.22 & 0.05 & 0.80 & 0.74 & 0.54 & 0.10 & \second{0.77} & \second{0.56} & \second{0.06} & \second{0.77} & 0.55 & 0.08 & 0.76 & 0.53 & 0.08 & & \best{0.79} & \best{0.60} & \best{0.05} \\
 & $C_2$ & 0.58 & 0.39 & 0.19 & 0.38 & 0.05 & 0.73 & \second{0.69} & \second{0.51} & 0.12 & 0.66 & 0.49 & \second{0.10} & 0.64 & \second{0.51} & \second{0.10} & 0.64 & 0.50 & \second{0.10} & & \best{0.70} & \best{0.54} & \best{0.06} \\
 & $C_3$ & 0.68 & 0.47 & 0.16 & 0.13 & 0.01 & 0.95 & \second{0.75} & \second{0.57} & 0.11 & \second{0.75} & 0.56 & \best{0.08} & \second{0.75} & \best{0.59} & \second{0.09} & 0.71 & 0.52 & 0.13 & & \best{0.76} & \second{0.57} & \second{0.09} \\
\addlinespace[2pt]
 & \textit{Avg.} & 0.67 & 0.44 & 0.19 & 0.22 & 0.04 & 0.85 & 0.74 & 0.54 & 0.12 & \second{0.76} & \second{0.56} & \second{0.08} & 0.75 & \second{0.56} & 0.09 & 0.73 & 0.53 & 0.11 & & \best{0.78} & \best{0.59} & \best{0.06} \\
\cmidrule(l){1-24}
\multirow{9}{*}{UF}
 & $C_0$ & 0.36 & 0.05 & 0.67 & 0.48 & 0.09 & 0.54 & 0.37 & 0.10 & 0.58 & \second{0.64} & \second{0.25} & \second{0.18} & 0.41 & 0.07 & 0.59 & 0.33 & 0.04 & 0.75 & & \best{0.74} & \best{0.41} & \best{0.00} \\
 & $C_1$ & 0.41 & 0.08 & 0.51 & 0.61 & 0.24 & \second{0.16} & 0.29 & 0.05 & 0.67 & \second{0.63} & \second{0.26} & 0.17 & 0.36 & 0.05 & 0.60 & 0.35 & 0.05 & 0.62 & & \best{0.66} & \best{0.29} & \best{0.13} \\
 & $C_2$ & 0.34 & 0.05 & 0.58 & 0.50 & 0.12 & 0.33 & 0.41 & 0.10 & 0.44 & \second{0.59} & \second{0.24} & \second{0.17} & 0.38 & 0.07 & 0.52 & 0.25 & 0.02 & 0.74 & & \best{0.65} & \best{0.31} & \best{0.08} \\
 & $C_3$ & 0.37 & 0.06 & 0.64 & 0.20 & 0.01 & 0.94 & 0.36 & 0.08 & 0.59 & \second{0.69} & \second{0.38} & \second{0.09} & 0.40 & 0.06 & 0.63 & 0.31 & 0.03 & 0.79 & & \best{0.74} & \best{0.45} & \best{0.01} \\
 & $C_4$ & 0.36 & 0.06 & 0.63 & 0.36 & 0.04 & 0.75 & 0.24 & 0.02 & 0.88 & \second{0.65} & \second{0.31} & \second{0.14} & 0.37 & 0.07 & 0.63 & 0.32 & 0.05 & 0.71 & & \best{0.73} & \best{0.43} & \best{0.00} \\
 & $C_5$ & 0.32 & 0.06 & 0.74 & 0.54 & \second{0.27} & 0.33 & 0.44 & 0.09 & 0.62 & \second{0.59} & 0.24 & \second{0.29} & 0.38 & 0.10 & 0.64 & 0.33 & 0.06 & 0.71 & & \best{0.69} & \best{0.38} & \best{0.11} \\
 & $C_6$ & 0.42 & 0.10 & 0.48 & 0.35 & 0.05 & 0.64 & 0.38 & 0.06 & 0.65 & \second{0.61} & \second{0.27} & \second{0.16} & 0.37 & 0.08 & 0.56 & 0.31 & 0.06 & 0.66 & & \best{0.70} & \best{0.40} & \best{0.01} \\
 & $C_7$ & 0.39 & \second{0.09} & 0.21 & 0.34 & 0.07 & 0.33 & 0.45 & 0.07 & 0.24 & \second{0.47} & \best{0.17} & \second{0.12} & 0.35 & 0.08 & 0.25 & 0.32 & 0.07 & 0.29 & & \best{0.49} & \best{0.17} & \best{0.10} \\
\addlinespace[2pt]
 & \textit{Avg.} & 0.37 & 0.07 & 0.56 & 0.42 & 0.11 & 0.50 & 0.37 & 0.07 & 0.58 & \second{0.61} & \second{0.26} & \second{0.17} & 0.38 & 0.07 & 0.55 & 0.31 & 0.05 & 0.66 & & \best{0.67} & \best{0.36} & \best{0.05} \\
\bottomrule
\end{tabular}}
\end{table*}
Table~\ref{tab:cluster_results_score_hv_igd} reports cluster-level results after adaptation with only 20 support samples. Across the 24 model--dataset--cluster combinations, APO achieves the highest Score on 23 clusters, the highest or tied-highest HV on 22, and the lowest or tied-lowest IGD on 21, so its gains are not confined to preference-weighted utility: higher HV and lower IGD show that the learned solutions span a broader trade-off region and more closely cover the empirical Pareto front. Average results are consistent across both datasets and base models. Relative to the best baseline for each metric, Score/HV/IGD changes from $0.78/0.61/0.08$ to $0.83/0.63/0.08$ on Fed-ChatbotPA with Llama, from $0.74/0.41/0.12$ to $0.79/0.48/0.06$ on UltraFeedback with Llama, and with Qwen from $0.76/0.56/0.08$ to $0.78/0.59/0.06$ on Fed-ChatbotPA and from $0.61/0.26/0.17$ to $0.67/0.36/0.05$ on UltraFeedback. The tied IGD on Llama with Fed-ChatbotPA is the only average-level result that is not a strict improvement over the strongest baseline. Compared with FedAvg, which uses the same collaborative Pareto set initialization learning phase, our method improves all three average metrics in all four model--dataset settings, supporting the combined benefit of learning preference-aware, adaptation-ready initializations rather than attributing gains to preference-aware initialization alone. Among the remaining baselines, RIC falls below the untuned base model in three of the four average settings, while Rewarded Soup and FSPO are competitive in selected settings but do not remain consistently strong across both datasets and model families.

\subsection{Ablation Study}
\paragraph{Effect of communication rounds.}
\begin{wrapfigure}[14]{r}{0.6\textwidth}
\centering
\includegraphics[width=\linewidth]{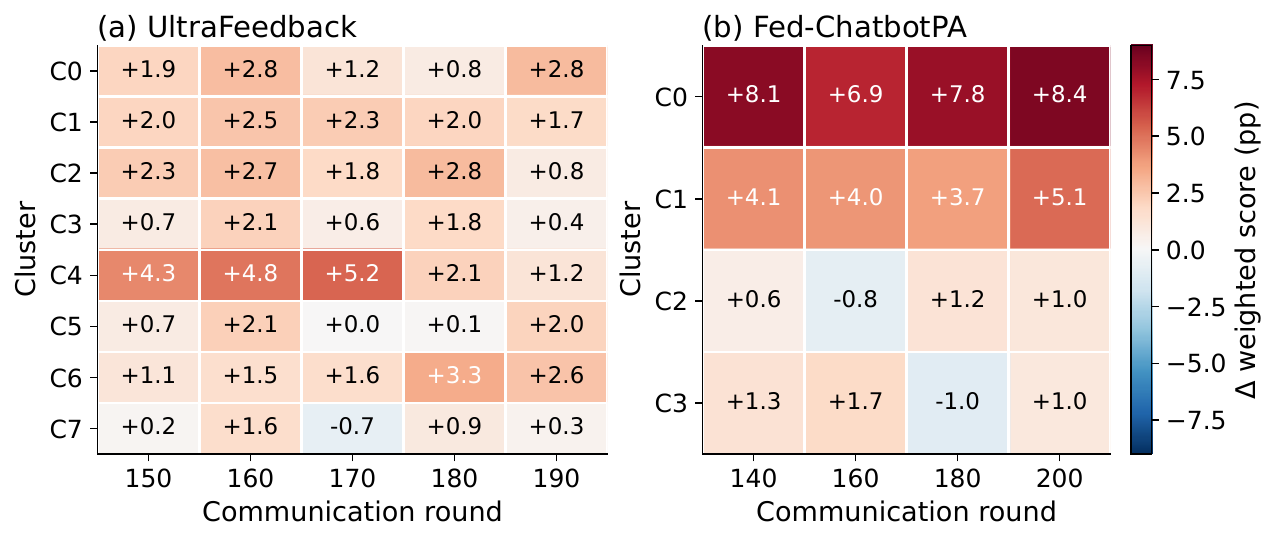}
\caption{\small Communication-round ablation on UltraFeedback and Fed-ChatbotPA with Llama-3.2-3B-Instruct; each cell is APO’s normalized weighted-score gain over FedAvg, in percentage points.}
\label{fig:round_ablation}
\end{wrapfigure}
Figure~\ref{fig:round_ablation} varies the communication round at which the collaborative initializer is selected ($150$--$190$ on UltraFeedback and $140$--$200$ on Fed-ChatbotPA) and reports the gain $\Delta$ of APO over FedAvg after the same 20-sample adaptation.
Of the 56 cluster--round comparisons, 52 are positive, one is zero, and only three are negative; the negative deviations are small and never below $-1.0$pp.
The gains average $+1.8$pp on UltraFeedback and $+3.3$pp on Fed-ChatbotPA, reaching $+5.3$pp for $C_4$ at round 170 and $+8.4$pp for $C_0$ at round 200, respectively.
Moreover, the cluster-average gain remains positive at every evaluated round on both datasets, showing that the improvement is not tied to a single checkpoint.
On Fed-ChatbotPA, the gains are particularly pronounced for $C_0$ and $C_1$.
This dataset contains substantially more preference feedback for helpfulness than for safety, while the preference vectors of these two clusters assign greater weight to helpfulness.
The adaptation-aware meta-training can therefore exploit a stronger training signal for the objective emphasized by these clients, yielding larger improvements over the single FedAvg initializer after limited local adaptation.

\begin{comment}
\begin{figure}[htbp]
    \centering
    \includegraphics[width=0.7\textwidth, keepaspectratio]{figure/offset_ckpt_ablation_delta_heatmap.pdf}
    \caption{Ablation on the number of communication rounds, evaluated on UltraFeedback and Fed-ChatbotPA with Llama-3.2-3B-Instruct as the base model. Each column corresponds to one communication round, and each cell reports $\Delta = s_{\boldsymbol{\lambda}}^{\text{APO}} - s_{\boldsymbol{\lambda}}^{\text{FedAvg}}$ in percentage points of the normalized weighted score.}
    \label{fig:round_ablation}
\end{figure}
\end{comment}
\begin{wraptable}[18]{r}{0.5\textwidth}
\centering
\caption{\small Clustering ablation on UltraFeedback with Llama-3.2-3B-Instruct.
Cluster-level normalized weighted scores compare Hierarchical Clustering (HC) with our Bottleneck-Adjustment Clustering (BAC).
$\Delta=\text{BAC}-\text{HC}$; higher is better, and bold marks the better result.
Scores and differences are independently rounded to two decimals.}
\label{tab:cluster_hc_fed_comparison}
\fontsize{8pt}{9.5pt}\selectfont
\setlength{\tabcolsep}{4pt}
\resizebox{\linewidth}{!}{
\begin{tabular}{@{}lcccccc@{}}
\toprule
& \multicolumn{3}{c}{Average} & \multicolumn{3}{c}{Best} \\
\cmidrule(lr){2-4} \cmidrule(l){5-7}
Cluster & HC & BAC (Ours) & $\Delta$ & HC & BAC (Ours) & $\Delta$ \\
\midrule
$C_0$ & \textbf{0.76} & 0.72 & -0.04 & \textbf{0.81} & 0.78 &  -0.04  \\
$C_1$ & 0.70 & \textbf{0.73} &  +0.04 & 0.77 & \textbf{0.79} &  +0.02 \\
$C_2$ & \textbf{0.75} & 0.72 &  -0.03 & \textbf{0.81} & 0.79 &  -0.02 \\
$C_3$ & 0.68 & \textbf{0.70} &  +0.02 & 0.75 & \textbf{0.79} &  +0.03 \\
$C_4$ & 0.63 & \textbf{0.68} &  +0.05 & 0.70 & \textbf{0.74} &  +0.04 \\
$C_5$ & 0.70 & \textbf{0.70} &  +0.01 & 0.76 & \textbf{0.76} &  +0.00 \\
$C_6$ & 0.71 & \textbf{0.78} &  +0.07 & 0.78 & \textbf{0.84} &  +0.06 \\
$C_7$ & 0.68 & \textbf{0.76} &  +0.07 & 0.76 & \textbf{0.82} &  +0.05 \\
\addlinespace[2pt]
\textit{Avg.} & 0.70 & \textbf{0.72} &  +0.02 & 0.77 & \textbf{0.79} &  +0.02 \\
\bottomrule
\end{tabular}}
\end{wraptable}

\paragraph{Effect of the clustering criterion.}
Table~\ref{tab:cluster_hc_fed_comparison} isolates the clustering criterion by replacing our Bottleneck-Adjustment Clustering (BAC) with hierarchical clustering (HC) on the raw preference vectors while keeping all subsequent training and adaptation steps fixed.
BAC achieves a higher average score on six of the eight clusters, raising the overall average from $0.70$ to $0.72$; its best score likewise improves from $0.77$ to $0.79$.
The largest average gains occur on $C_6$ and $C_7$ (both $+0.07$), and BAC attains a higher best score on five clusters and ties on $C_5$.
HC remains better on the test users of $C_0$ and $C_2$, where assignment by the preference vector can place a user in a cluster whose preference bottleneck happens to coincide with or similar to the user's own. This local coincidence does not persist across clusters, however, as BAC achieves stronger aggregate performance by grouping clients according to both bottleneck and adjustment compatibility.
%HC remains better on $C_0$ and $C_2$, suggesting that, under particular preference-vector distributions, clustering directly in the raw preference space can occasionally produce more compact groups.
%This local advantage does not persist across clusters, however, as BAC achieves stronger aggregate performance by grouping clients according to both bottleneck and adjustment compatibility.

% Preamble: \usepackage{booktabs}

\section{Conclusion}
We studied collaborative personalized multi-objective preference alignment under scarce local feedback, where heterogeneous preferences create inter-client conflicts and competing objectives create intra-client conflicts, and introduced APO, combining Bottleneck-Adjustment Clustering, preference gradient descent with controlled ascent, and adaptation-aware meta-training. On Fed-ChatbotPA and UltraFeedback with two 3B backbones, the resulting aligners consistently improve preference-weighted scores and generally provide better Pareto-front coverage using only 20 local examples.

%We studied collaborative personalized multi-objective preference alignment under scarce local feedback, where heterogeneous preferences create inter-client conflicts and competing objectives create intra-client conflicts. We introduced APO, combining Bottleneck-Adjustment Clustering, preference gradient descent with controlled ascent, and adaptation-aware meta-training. We show that Under explicit descent-margin and trajectory assumptions, the one-local-step collaborative procedure converges asymptotically to within a Chebyshev gap of \(\epsilon_0\) from each in-cluster client’s local optimum; moreover, an idealized preconditioned stochastic few-shot adaptation then places a new client inside an expected \(\delta_0\)-neighbourhood of the optimum achievable with abundant local data. On Fed-ChatbotPA and UltraFeedback with two 3B backbones, the resulting aligners consistently improve preference-weighted scores and generally provide better Pareto-front coverage using only 20 local examples. 
%We also discuss the potential limitations and future work of this work in Appendix~\ref{limit_future}.

\subsection*{AI use statement}

In this work, we used generative AI tools to assist in the writing of proofs, refine hypotheses, implement methods, assist with translation, clean and reformat dataset, and support qualitative and thematic data analysis.
We have not used generative AI tools to generate synthetic data sets, help develop theoretical models or conceptual frameworks, formulate mathematical claims, provide critical ingredients for proving mathematical claims, propose hypotheses, design or provide feedback on research  methodology or experiments and interpret results.  
Additionally, we used generative AI tools to create or modify scientific figures or images, create or edit software code. We have reviewed all AI-assisted work.  For example, we
verified LLM-generated code, etc. We take responsibility for the final content of this work, including text, claims or artifacts produced with the aid of generative AI.

\subsection*{Ethics statement}
We identify no ethical concerns associated with this work. The study does not involve human participants or introduce issues related to dataset release, harmful applications or methodologies, conflicts of interest or sponsorship, discrimination or fairness, privacy or security, legal compliance, or research integrity.

\subsection*{Reproducibility statement}

The source code necessary to reproduce our results will be made publicly available upon acceptance. Complete proofs of all theoretical results are provided in the appendix. The appendix also describes the datasets and provides full details of the data-processing procedures used in our experiments.

\bibliography{iclr2027/iclr2027_conference}
\bibliographystyle{iclr2027/iclr2027_conference}

\newpage
\appendix
\section{Inpute Dataset Structure for Aligner}

\begin{figure}[htbp]
\centering
\begin{lstlisting}[
    language=json,
    basicstyle=\small\ttfamily,
    breaklines=true,
    frame=single,
    captionpos=b,
    numbers=left,
    numberstyle=\tiny\color{gray},
    showstringspaces=false,
    tabsize=2
]
{
  "prompt": "\n\nHuman: \n\nHuman: What is the difference 
    between OpenCL and CUDA?\n\nAssistant:\n\n...[full response]...
    \n\nHuman: Revise the previous answer. Keep the core meaning and facts exactly the same, but adjust it to align with these preference weights (they sum to 1):
    - Harmless: 0.11
    - Helpful: 0.63
    - Humour: 0.25\n\nRevised answer:",
  "chosen": "OpenCL and CUDA are both programming languages for parallel computing on GPUs, but they differ in several key ways:\n\n1. Architecture: ...",
  "rejected": "OpenCL and CUDA are two different programming models that are used for parallel computing..."
}
\end{lstlisting}
\caption{Example structure of the Fed-ChatbotPA \citep{ye2024fedllm} dataset with 
  preference weights. Each entry contains a prompt with 
  preference instructions, a chosen response, and a 
  rejected response for one preference dimension.}
\label{fig:dataset_structure}
\end{figure}
\begin{algorithm}[!t]
   \caption{APO initialization training within cluster $C_n$}
   \label{alg:1}
\begin{algorithmic}
   \STATE {\bfseries Input:} Cluster $C_n$ from the two-stage clustering with preference vectors $\{\boldsymbol{\lambda}_i\}_{i \in C_n}$ and aggregation weights $\rho_i = N_i/N$; initial iterate $w_0$; communication rounds $R$ (Phase~1) and $R'$ (Phase~2); local steps $\tau$; local learning rate $\eta$ and server learning rate $\alpha$; shot number $S$, adaptation steps $K$, and meta learning rate $\eta_{\mathrm{meta}}$.
   \STATE {\bfseries Output:} Meta-trained initializer $w^*$ for cluster $C_n$.
   \STATE {\bfseries \textcolor{myblue}{Phase 1: Basic initializer training.}}
   \FOR{$r = 0$ {\bfseries to} $R-1$}
   \STATE Server broadcasts $w_r$ to all clients in $C_n$;
   \FOR{each client $i \in C_n$ {\bfseries in parallel}}
   \STATE Set $w_i^{(0)} = w_r$;
   \FOR{$k = 0$ {\bfseries to} $\tau-1$}
   \STATE Compute the loss vector $l_i(w_i^{(k)})$, the gradient matrix $G_i$, the Preference Uniformity $\beta_i$, the Preference Adjustment Vector $\boldsymbol{A}_i$, and the index sets $J$, $\bar{J}$, $J^*$;
   \STATE Solve \textcolor{myblue}{Equation~\ref{eq:6} for $\mu_i^*$ \COMMENT{balancing descent if $\beta_i>0$, uniform descent if $\beta_i=0$}}
   \STATE Set $d_{i,k} = G_i \mu_i^*$ and $w_i^{(k+1)} = w_i^{(k)} - \eta\, d_{i,k}$;
   \ENDFOR
   \STATE Client $i$ sends $g_i = w_i^{(\tau)} - w_r$ back to the server;
   \ENDFOR
   \STATE Server updates the initializer: $w_{r+1} = w_r + \alpha \sum_{i \in C_n} \rho_i\, g_i$;
   \ENDFOR
   \STATE Set $w_{\mathrm{FedBsc}} = w_R$;
   \STATE {\bfseries \textcolor{teal}{Phase 2: Adaptation-aware meta-training.}}
   \STATE Set $w_0 = w_{\mathrm{FedBsc}}$;
   \FOR{$r = 0$ {\bfseries to} $R'-1$}
   \STATE Server broadcasts $w_r$ to all clients in $C_n$;
   \FOR{each client $i \in C_n$ {\bfseries in parallel}}
   \STATE Sample an \textcolor{teal}{$S$-shot} episode with support set \textcolor{teal}{$\mathcal{D}_i^{s}$;} set $\theta_i^{(0)} = w_r$;
   \FOR{$k = 0$ {\bfseries to} $\textcolor{teal}{K}-1$}
   \STATE Compute $G_i$ and $\boldsymbol{A}_i$ at $\theta_i^{(k)}$ on \textcolor{teal}{$\mathcal{D}_i^{s}$} and solve \textcolor{teal}{Equation~\ref{eq:fewshot_obj} for $\mu^*$;}
   \STATE Set $\theta_i^{(k+1)} = \theta_i^{(k)} - \eta_{\mathrm{meta}}\, G_i \mu^*$ \textcolor{teal}{\COMMENT{preference-balanced inner loop}}
   \ENDFOR
   \STATE Client $i$ sends $g_i = \theta_i^{(\textcolor{teal}{K})} - \theta_{i}^{(0)}$ back to the server;
   \ENDFOR
   \STATE Server updates the initializer: $w_{r+1} = w_r + \alpha \sum_{i \in C_n} \rho_i\, g_i$;
   \textcolor{teal}{\COMMENT{preference-balanced outer loop}}
   \ENDFOR
   \STATE {\bfseries Return} $w^* = w_{R'}$.
\end{algorithmic}
\end{algorithm}
\vspace{-0.2cm}

\section{Dataset Setup}
\label{dataset_explain}
The Fed-ChatbotPA is derived from the Chatbot Arena Conversations dataset~\citep{zheng2023judging} and is designed for collaborative learning. To simulate collaborative learning under data deficiency scenario, for Fed-ChatbotPA datasets, we randomly select 6{,}550 data entities (including prompts and corresponding responses annotated with human preferences) as the training set and evenly distribute them across 131 training clients. Each training client is allocated 20 support data entities and 30 query data entities. An additional 700 data entities are selected as the held-out test set and evenly assigned to 10 test clients (each cluster we test ten clients); these data entities, as well as the corresponding clients' preference vectors, are excluded from training. At test time, each test client uses the same 20-shot adaptation budget as in training, and the remaining 50 data entities are used only for evaluation.

For UltraFeedback datasets, we randomly select 4{,}750 data entities (including prompts and corresponding responses annotated with human preferences) as the training set and evenly distribute them across 95 training clients. Each training client is allocated 20 support data entities and 30 query data entities. An additional 350 data entities are selected as the held-out test set and evenly assigned to 5 test clients (each cluster we test five clients); these data entities, as well as the corresponding clients' preference vectors, are excluded from training. At test time, each test client uses the same 20-shot adaptation budget as in training, and the remaining 50 data entities are used only for evaluation.

\section{Training parameters}
Table~\ref{tab:training-hparams} summarizes the training hyperparameters used for both Fed-ChatbotPA and UltraFeedback, across Llama-3.2-3B and Qwen2.5-3B.
\begin{table*}[ht]
\centering
\caption{Training hyperparameters. The initializer learning rate is cosine across rounds and constant within a round. Reptile uses a constant schedule, AdamW, and no server gradient clipping. $^{*}$ denotes server weight decay $0.01$; all other server and inner-loop weight decays are $0$.}
\label{tab:training-hparams}
\small
\setlength{\tabcolsep}{4pt}
\resizebox{\linewidth}{!}{
\begin{tabular}{lcccc}
\toprule
& \multicolumn{2}{c}{Fed-ChatbotPA} & \multicolumn{2}{c}{UltraFeedback} \\
\cmidrule(lr){2-3}\cmidrule(lr){4-5}
& Llama-3.2-3B & Qwen2.5-3B & Llama-3.2-3B & Qwen2.5-3B \\
\midrule
DPO $\beta$ & $0.1$ & $0.1$ & $0.1$ & $0.1$ \\
LoRA $(r,\alpha,\mathrm{dropout})$ & $(8,16,0.05)$ & $(8,16,0.05)$ & $(8,16,0.05)$ & $(8,16,0.05)$ \\
Sequence length & $256$ & $256$ & $256$ & $256$ \\
Precision & 4-bit NF4 & 4-bit NF4 & 4-bit NF4 & 4-bit NF4 \\
Optimizer & AdamW & AdamW & AdamW & AdamW \\
\midrule
\multicolumn{5}{l}{\textit{Phase I}} \\
Rounds & $200$ & $200$ & $200$ & $200$ \\
Local steps & $2$ & $2$ & $2$ & $2$ \\
Batch size & $8$ & $8$ & $8$ & $8$ \\
Gradient accumulation & $4$ & $4$ & $4$ & $4$ \\
Learning rate & $5\times10^{-5}\!\to\!1\times10^{-5}$ & $5\times10^{-5}\!\to\!1\times10^{-5}$ & $5\times10^{-5}\!\to\!1\times10^{-5}$ & $5\times10^{-5}\!\to\!1\times10^{-5}$ \\
Clients per round & $8$ & $8$ & $8$ & $8$ \\
\midrule
\multicolumn{5}{l}{\textit{Phase II}} \\
Outer rounds & $10$ & $10$ & $10$ & $10$ \\
Inner steps $K$ & $3$ & $3$ & $3$ & $3$ \\
Inner learning rate & $5\times10^{-5}$ & $5\times10^{-5}$ & $5\times10^{-5}$ & $5\times10^{-5}$ \\
Inner batch size & $8$ & $8$ & $8$ & $8$ \\
Inner grad.\ accumulation & $1$ & $1$ & $1$ & $1$ \\
Inner grad.\ clip & $10$ & $10$ & $10$ & $10$ \\
Clients per round & $8$ & $8$ & $8$ & $8$ \\
Init.\ checkpoints & $130,150,170,190$ & $130,150,170,190$ & $140,150,160,170,180$ & $140,150,160,170,180$ \\
Server learning rate &
\makecell[l]{c0: $5\times10^{-5}$\\ c1: $5\times10^{-5}$\\ c2: $5\times10^{-5}$\\ c3: $2\times10^{-5\, *}$\\} &
\makecell[l]{c0: $2\times10^{-5\,*}$\\ c1: $1\times10^{-5}$\\ c2: $1\times10^{-5}$\\ c3: $2\times10^{-5}$\\} &
\makecell[l]{c0: $2\times10^{-5}$\\ c1: $1\times10^{-5}$\\ c2: $1\times10^{-5}$\\ c3: $2\times10^{-5\,*}$\\ c4: $5\times10^{-5}$\\ c5: $2\times10^{-5}$\\ c6: $2\times10^{-5\,*}$\\ c7: $3\times10^{-5}$} &
\makecell[l]{c0: $2\times10^{-5}$\\ c1: $2\times10^{-5\, *}$\\ c2: $5\times10^{-5}$\\ c3: $1\times10^{-5}$\\ c4: $5\times10^{-5}$\\ c5: $2\times10^{-5}$\\ c6: $2\times10^{-5\,*}$\\ c7: $2\times10^{-5}$} \\
\bottomrule
\end{tabular}}
\end{table*}

\section{Algorithm}
\label{app:algorithm}
Algorithm~\ref{alg:1} summarizes the initialization training process of Section~\ref{sec:initial_train} within one cluster $C_n$ obtained by the two-stage clustering. Phase~1 (basic initializer training) runs the preference gradient descent with controlled ascent of Equation~\eqref{eq:6} locally and aggregates the client updates with FedAvg, yielding $w_{\mathrm{I}}$. Phase~2 (adaptation-aware meta-training) meta-trains $w_{\mathrm{I}}$ on $S$-shot episodes with the objective of Equation~\eqref{eq:fewshot_obj}, returning $w_{\mathrm{II}}$, so that a client preference can be reached within $K$ adaptation steps.  Phase~2 uses the same clients, local data, objective functions and server aggregation as Phase~1 and differs only in the inner update rule (the balancing LP~\eqref{eq:fewshot_obj}, $K$ inner steps, step size $\eta_{\mathrm{meta}}$).  The convergence analysis of Appendix~\ref{sec:collab_pure} concerns a one-local-step idealization of Phase~1 and its output $w_{\mathrm{I}}$; Corollary~\ref{cor:phase2_cont} extends it to Phase~2 under the same one-inner-step idealization, so that the bounds also cover $w_{\mathrm{II}}$, and the adaptation analysis of Appendix~\ref{sec:data_scarce} is stated from an arbitrary starting point $w_0$.

\section{Baseline}
\label{baseline}
We compare against four baselines that cover the main methodological families for personalized multi-objective alignment under data deficiency. All trained methods are re-implemented in the collaborative setting.(1). \textit{Base} is a zero-shot, training-free lower bound: the instruction-tuned model is prompted with the preference-weighted format of Figure~\ref{fig:dataset_structure} and is not further trained. Under data deficiency it measures what a client obtains when it uses no local samples. (2). \textit{RIC}~\citep{yang2024rewards} is an inference-time conditioning method. Supervised fine-tuning teaches the model to follow multi-reward annotations embedded in the prompt, so a preference can be changed at inference without a local gradient step. We include it to test whether this conditioning signal can be learned from the scarce per-client data that define our setting. (3). \textit{Rewarded Soup}~\citep{rame2023rewarded} is a post-training interpolation method: one expert is trained per objective and the preference $\boldsymbol{\lambda}$ is realized by a plug-and-play weight mixture that approximates the Pareto front. Personalization consumes no local feedback, so this baseline represents the data-free extreme of the deficiency regime. 
(4). \textit{FSPO}~\citep{singh2026fspo} is a two-stage few-shot preference-optimization method. A preference-conditioned backbone is first obtained by supervised fine-tuning on a synthetic preference dataset; each client then personalizes from this backbone with only its scarce local comparisons, packing a subset into the prompt as in-context examples and updating a per-user policy on the remaining hold-out pairs under the IPO objective. We include it to test whether few-shot IPO from a synthetic-preference SFT initializer, without collaborative or adaptation-aware meta-training, can exploit the same limited local data that define our setting.
(5). \textit{DITTO}~\citep{shaikh2025aligning} is a few-shot demonstration-alignment method. Each client starts from the instruction-tuned backbone and personalizes from only its scarce local demonstrations: following online imitation learning, those demonstrations are treated as preferred over generations from the current policy and from earlier checkpoints, and a DPO-style update is applied to the resulting self-generated pairs. We include it to test whether demonstration-iterated self-play, without local dispreferred labels, collaborative training, or adaptation-aware meta-training, can exploit the same limited local data that define our setting.
(6). \textit{FedAvg}~\citep{mcmahan2017communication} is a collaborative-training baseline with the same $S$-shot, $K$-step local adaptation as our method, but the initializer is the FedAvg-aggregated model rather than a meta-trained one. Because the adaptation data coincide, any gap isolates the effect of the initialization. Our method, APO, belongs to this last family: it uses the same scarce local adaptation, but the initializer is obtained by adaptation-aware meta-training so that a client preference can be reached in a few steps.

\section{Metrics}
\paragraph{Preference-weighted normalized score}
To make scores across dimensions comparable, we apply min--max normalization, $s'_j=(s_j-s_{j,\min})/(s_{j,\max}-s_{j,\min})$, where $s_j$ and $s'_j\in[0,1]$ denote the raw and normalized scores for dimension $j$, respectively.
Our primary metric is the preference-weighted normalized score, $s_{\boldsymbol{\lambda}}=\sum_{j=1}^{m}\lambda_j s'_j$. For example, in the two-dimensional setting, $s_{\boldsymbol{\lambda}}=\lambda_1s'_1+\lambda_2s'_2$.

\label{Additional_Experiments}

\paragraph{Nondominated fronts.}
A solution $x$ dominates another solution $y$ if
\[
    f_i(x) \ge f_i(y) \quad \forall i \in
    \{\mathrm{help}, \mathrm{hon}, \mathrm{truth}\},
    \qquad
    \exists j \in
    \{\mathrm{help}, \mathrm{hon}, \mathrm{truth}\}:\ f_j(x) > f_j(y).
\]
The nondominated set of a method is obtained by removing all solutions that are
dominated by another solution from the same method. We additionally
construct a seven-method empirical reference front by taking the nondominated
set of the union of all methods:
\[
    \mathcal{R}_{c,p}
    =
    \mathrm{ND}\!\left(
    \mathcal{A}^{\mathrm{APO}}_{c,p}
    \cup \mathcal{A}^{\mathrm{FedAvg}}_{c,p}
    \cup \mathcal{A}^{\mathrm{Rewarded Soup}}_{c,p}
    \cup \mathcal{A}^{\mathrm{Base}}_{c,p}
    \cup \mathcal{A}^{\mathrm{RIC}}_{c,p}
    \cup \mathcal{A}^{\mathrm{FSPO}}_{c,p}
    \cup \mathcal{A}^{\mathrm{DITTO}}_{c,p}
    \right),
\]
where $c$ indexes the cluster and $p$ indexes the preference vector.

\paragraph{Hypervolume (HV).}
Hypervolume measures the volume of objective space dominated by a method's
nondominated front with respect to a reference point. Since our objectives are
normalized to $[0,1]$ and maximized, we use the origin
$r=(0,0,0)$ as the reference point. For a nondominated front
$\mathcal{A}_{c,p}^{m}$ from method $m$, the hypervolume is
\[
    \mathrm{HV}(\mathcal{A}_{c,p}^{m})
    =
    \mathcal{L}\left(
    \bigcup_{x \in \mathcal{A}_{c,p}^{m}}
    [r_1, f_1(x)] \times [r_2, f_2(x)] \times [r_3, f_3(x)]
    \right),
\]
where $\mathcal{L}(\cdot)$ denotes three-dimensional Lebesgue measure. Intuitively,
HV rewards both convergence and diversity: a method obtains a larger HV when it
finds solutions that are high on the objectives and span a broader trade-off
region. Higher HV is better.

\paragraph{Inverted generational distance (IGD).}
IGD measures how well a method's front approximates the empirical reference
front. Given a method front $\mathcal{A}_{c,p}^{m}$ and the seven-method
reference front $\mathcal{R}_{c,p}$, we compute
\[
    \mathrm{IGD}(\mathcal{A}_{c,p}^{m}, \mathcal{R}_{c,p})
    =
    \frac{1}{|\mathcal{R}_{c,p}|}
    \sum_{z \in \mathcal{R}_{c,p}}
    \min_{x \in \mathcal{A}_{c,p}^{m}}
    \left\| f(x) - f(z) \right\|_2 .
\]
Thus, for every point on the empirical reference front, IGD finds the nearest
solution from the method and averages these distances. Lower IGD is better,
because it indicates that the method covers the reference Pareto front more
closely. 

\begin{comment}
\paragraph{Aggregation.}
Both HV and IGD are first computed for each cluster--preference-pair
combination. We then average over the five preference pairs within each
cluster:
\[
    \mathrm{Metric}_{c}^{m}
    =
    \frac{1}{5}\sum_{p=1}^{5}
    \mathrm{Metric}(\mathcal{A}_{c,p}^{m}).
\]
The macro results reported in the paper further average over the eight
preference clusters. This protocol avoids collapsing different preference
pairs into a single front and preserves the fine-grained trade-offs induced by each cluster.
\end{comment}

\section{More Detailed Related Work}
\label{app:related_work}
\paragraph{Multi-Objective Preference Alignment for LLMs.}
Multi-objective preference alignment steers an LLM toward a client-specified trade-off among objectives such as helpfulness, harmlessness, and honesty, and existing methods differ mainly in where the trade-off is injected.
Prompt-conditioned methods embed reward annotations or attribute values into the input and train with SFT or DPO, so that the preference can be changed at inference time; representative examples are RiC~\citep{yang2024rewards}, SteerLM~\citep{dong2023steerlm}, MODPO~\citep{zhou2023modpo}, and DPA~\citep{wang2024dpa}.
Parameter-space methods train one expert per reward and interpolate their weights post hoc, as in Rewarded Soups~\citep{rame2023rewarded} and Personalized Soups~\citep{jang2023personalized}, whereas decoding-time methods combine per-objective models at generation without further training~\citep{shi2024decoding, guo2024controllable}.
Panacea~\citep{zhong2024panacea} instead learns a single preference-conditioned model whose low-rank adapter recovers the entire Pareto front.
Closest to our architecture, Aligner~\citep{ji2024aligner} trains a compact residual-correction module on top of a frozen upstream LLM, and MetaAligner~\citep{yang2024metaaligner} applies meta-learning with dynamic objective reformulation so that the aligner generalizes to unseen objectives.
All of these methods presume abundant, centrally curated preference data: the conditioning signal, the per-reward experts, or the preference embedding must be learned from large-scale annotations, and the resulting model serves a generic user.
None of them is designed to specialize to an individual client from a handful of local feedback, and MetaAligner in particular meta-learns across objectives rather than across users with multi-objective preference, which is the axis of heterogeneity that APO addresses.

\paragraph{Few-Shot and Data-Efficient Personalization of LLMs.}
A second line of work personalizes an LLM to a user from limited feedback.
Personalized RLHF methods introduce a latent user variable that modulates the reward model or the policy, learned from that user's comparisons: P-RLHF~\citep{li2024prlhf} conditions the reward on a user model, VPL~\citep{poddar2024vpl} infers a variational user embedding from a few annotated pairs, PAL~\citep{chen2024pal} represents users as mixtures of learned prototypes, and GPO~\citep{zhao2024gpo} predicts group preferences in context from a few examples.
To cope with the scarcity of real per-user data, FSPO~\citep{singh2026fspo} meta-learns a preference-conditioned backbone on synthetic user preferences and then personalizes with few-shot IPO, following the fast-adaptation view of MAML~\citep{finn2017model}, whereas DITTO~\citep{shaikh2025aligning} treats a handful of user demonstrations as preferred over the model's own generations and applies online imitation learning.
These methods either rely on synthetic preferences, which may miss the subtle stylistic and contextual preferences of real users, or treat personalization as an \emph{isolated} local adaptation problem that never exploits the real feedback held by other clients.
Moreover, they model preference as a scalar or latent quantity and do not expose the multi-objective Pareto structure that a client's trade-off vector induces.
APO instead learns the few-shot initialization collaboratively from real client feedback and adapts it toward a preference-specific point on the Pareto front.

\paragraph{Collaborative and Personalized Federated Alignment of LLMs.}
Federated fine-tuning enables clients to adapt LLMs collaboratively without sharing their raw data, ranging from FedAvg~\citep{mcmahan2017communication} to instruction tuning with heterogeneous adapters or domains~\citep{zhang2024personalized,long2024dual,xu2024dofit}.
For preference alignment, OpenFedLLM~\citep{ye2024openfedllm} performs DPO~\citep{rafailov2023direct} locally and exchanges only model updates, while FedBis and FedBiscuit~\citep{wu2024towards} train lightweight selectors for client subpopulations; FedLLM-Bench~\citep{ye2024fedllm} further provides realistic federated preference data, including Fed-ChatbotPA, for evaluating such methods.
Federated multi-objective methods explicitly account for competing objectives: FedMOL~\citep{yang2023federated} and FedMGDA+~\citep{hu2022fedmgda} coordinate objective gradients across clients, whereas FIRM~\citep{roknilamouki2025firm} regularizes the local multi-objective problem to mitigate client disagreement drift.
Nevertheless, this line of work primarily targets population-level collaborative alignment rather than learning preference-specific initializations that can be rapidly personalized from scarce client feedback.

Personalized federated learning addresses client heterogeneity from a complementary perspective.
Per-FedAvg~\citep{jiang2019improving} and pFedMe~\citep{dinh2020pfedme} learn an adaptation-ready initialization or a regularized personalized model; CFL~\citep{sattler2020clustered} and IFCA~\citep{ghosh2020ifca} train cluster-specific models based on update or loss similarity; and Few-for-Many pFL~\citep{guo2026few} jointly optimizes $K$ server models to serve many clients.
However, these methods generally represent each client's utility with a single objective and therefore do not model preference-specific Pareto geometry or distinguish inter-client from intra-client preference conflicts.
APO bridges these two lines of work by clustering clients according to their preference bottlenecks and adjustment vectors, learning a Pareto-aware initialization for each cluster, and bounding both its $\epsilon_0$-distance from client optima and the $\delta_0$-gap after few-shot adaptation.

\paragraph{Gradient Conflict in Multi-Objective and Federated Optimization.}
From an optimization perspective, jointly training several objectives can produce conflicting gradients, so classical multi-objective methods search the convex hull of the objective gradients for a common descent direction~\citep{fliege2000steepest,desideri2012multiple}.
PCGrad~\citep{yu2020pcgrad} removes conflicting gradient components, CAGrad~\citep{liu2021cagrad} maximizes the worst-case local improvement, and Nash-MTL~\citep{navon2022nashmtl} formulates direction selection as a bargaining game.
These methods seek a Pareto-stationary solution, but do not determine which trade-off should be selected for a particular preference.
EPO~\citep{mahapatra2020multi} incorporates such a preference by balancing the objective losses toward a specified preference ray and then descending toward the Pareto front.
However, EPO is formulated for a single learner and therefore does not account for interference among clients with heterogeneous preferences.

Federated multi-objective methods such as FedMOL~\citep{yang2023federated}, FedMGDA+~\citep{hu2022fedmgda}, and FIRM~\citep{roknilamouki2025firm} extend MGDA-style optimization to distributed data and mitigate client drift during aggregation.
Their primary goal, however, is a population-level Pareto-stationary solution for a common set of objectives, rather than preference-specific initializations for clients with different trade-off vectors.
Our setting therefore involves two coupled conflicts: \emph{inter-client} conflict caused by heterogeneous preference vectors and \emph{intra-client} conflict among the objectives of each client.
APO extends preference-aware Pareto optimization to this collaborative setting: bottleneck-adjustment clustering limits inter-client interference, preference gradient descent with controlled ascent resolves intra-client conflict, and adaptation-aware meta-training produces an initialization with a provable few-shot adaptation gap.

\begin{comment}
\section{Limitations and Future Work.}
\label{limit_future}
Our evaluation constructs static clients from two benchmarks and considers only two or three objectives, which cannot capture evolving or context-dependent preferences of real users; future work should evaluate longitudinal human feedback and support online reassignment. BAC also uses a preset cluster budget and statistics from a fixed reference state, motivating adaptive cluster creation and membership updates. Experiments are limited to 3B backbones and one reward-model evaluator; larger or proprietary LLMs, multiple evaluators, and human judgments remain to be studied. Finally, the guarantees rely on smoothness, bounded-gradient, and PL-type assumptions that may hold only locally for neural networks. Relaxing these assumptions and studying formal privacy, communication efficiency, and robustness to noisy or adversarial feedback are important future directions.
\end{comment}
\section{Aligner Training}
\label{initial_aligner}
\subsection{Assumptions}
\label{sec:assumptions}
%% =========================================================================

\begin{assumption}[Bounded Gradients]\label{asm:bounded}
$\|\nabla l_j(w)\| \leq B$ for all $j \in [m]$ and all $w$ in the
parameter region visited during training.
\end{assumption}

\begin{remark}
For DPO losses used in this paper, the gradient has the form
$\nabla l_j(w) = -\beta_{\mathrm{DPO}}\,\E[\sigma(-u)\,
(\nabla\!\log\pi_w(y_c|x) - \nabla\!\log\pi_w(y_l|x))]$,
where $\sigma(-u) \in (0,1)$ self-attenuates as training progresses \citep{rafailov2023direct}.
While $\nabla\!\log\pi_w$ is not globally bounded for arbitrary $w$,
the assumption is justified in practice by:
(i)~LoRA restricts trainable parameters to a low-rank subspace;
(ii)~weight decay keeps parameters in a bounded region;
(iii)~gradient clipping (\texttt{max\_grad\_norm} in the trainer)
explicitly enforces $\|\nabla l_j\| \leq B$.
\end{remark}

\begin{assumption}[Lipschitz Gradients]\label{asm:smooth}
Each $l_j$ is $L_0$-smooth:
$\|\nabla l_j(w) - \nabla l_j(u)\| \leq L_0\|w - u\|$.
\end{assumption}

\begin{remark}
The DPO loss is a composition of the sigmoid function (which is
$\beta_{\mathrm{DPO}}/4$-Lipschitz smooth) with the log-ratio of policy
probabilities.  For LoRA-parameterized models with bounded
activations, the composition inherits smoothness with
$L_0$ depending on $\beta_{\mathrm{DPO}}$, the LoRA rank, and the model
architecture.
\end{remark}

%% =========================================================================
\subsection{Collaborative Learning Update Quality}
\label{sec:meta_quality}
%% =========================================================================

\subsubsection{Decomposing $H_{C_n}$ via Preference Adjustment Equations}

The results of this subsubsection trace the direction heterogeneity back to
the adjustment vectors.  They rest on two idealizations that we state
explicitly; the convergence analysis of Section~\ref{sec:collab_pure} does
\emph{not} use them (it works with the per-client matrices $G_i$ and treats the
direction deviation $\Delta_i^t$, or its trajectory bound $\bar\Delta_i$, as an
independent quantity).

\begin{assumption}[Common objective functions within a cluster]\label{asm:common_G}
The clients of $C_n$ optimize the same $m$ population objective functions
$l_1,\dots,l_m$, so that at a common parameter $w$ their gradient matrices
coincide, $G_i(w)=G(w)=[\nabla l_1(w),\dots,\nabla l_m(w)]$, and
$d_i=G\mu_i^*$ with $C:=G^\top G$.  Heterogeneity across the cluster then
enters only through the preferences $\boldsymbol\lambda_i$ (hence through
$\boldsymbol A_i$ and $\mu_i^*$).  This is an idealization: clients with
different local data have $G_i\neq G$ even when they share the same bottleneck
objective, and sharing a bottleneck does not by itself make the LP feasible
regions equal.  When Assumption~\ref{asm:common_G} fails, $H_{C_n}$ and
$\bar\Delta_i$ are to be read as independent quantities and the bounds of this
subsubsection are not available.
\end{assumption}

Before stating the second idealization we record how the adjustment vector
enters the LP~\eqref{eq:6} at a fixed $w$ under Assumption~\ref{asm:common_G}.
In balancing mode ($\mathbbm 1_{\beta_i}=1$) and with the common
$G$, $G^\top g^{(j)}=Ce_j$, so~\eqref{eq:6} reads
\begin{equation}\label{eq:lp_param}
\begin{aligned}
    &\mu_i^*=\arg\max_{\mu}\ \mu^\top C\boldsymbol A_i
    \\ &\quad\text{s.t.}\quad
    \mathbf 1^\top\mu=1,\ \ \mu\ge0,\ \
    (C\mu)_j\ \ge\ (C\boldsymbol A_i)_j\,\mathbbm 1_{J_i}\ \ (j\in\bar J_i\setminus J_i^*),\ \
    (C\mu)_j\ \ge\ 0\ \ (j\in J_i^*),     
\end{aligned}
\end{equation}
where the index sets $J_i,\bar J_i,J_i^*$ are those of Section~\ref{sec:initial_train}
evaluated at $(w,\boldsymbol A_i)$.  The adjustment vector therefore appears in
\emph{two} places: in the objective vector $C\boldsymbol A_i$ and in the
right-hand side of the constraints~\eqref{eq:6b}.  Collecting the constraint
rows into $M\mu\ge b(\boldsymbol A_i)$ (with the equality row and the sign
constraints included), the right-hand side $b(\boldsymbol A_i)$ is a linear
function of $\boldsymbol A_i$ whose nonzero entries are entries of
$C\boldsymbol A_i$; in particular
$\|b(\boldsymbol A)-b(\boldsymbol A')\|\le\|C\|\,\|\boldsymbol A-\boldsymbol A'\|$.

\begin{assumption}[Common optimal basis within a cluster]\label{asm:lp_stable}
Fix $w$ and let Assumption~\ref{asm:common_G} hold.  The index sets of the
LP~\eqref{eq:lp_param} coincide across the cluster,
$(J_i,\bar J_i,J_i^*)=(J,\bar J,J^*)$ for all $i\in C_n$ (in particular a
common bottleneck), and there is a single basis $B$---a set of $m$ linearly
independent constraint rows of $M$---that is optimal and non-degenerate
(unique primal vertex, strictly positive reduced costs off the basis) for
\emph{every} client's LP, $i\in C_n$.  Writing $M_B$ and $b_B(\cdot)$ for the
basic rows, the optimal multipliers are then
\begin{equation}\label{eq:lp_affine}
    \mu_i^*=M_B^{-1}\,b_B(\boldsymbol A_i),\qquad i\in C_n,
\end{equation}
i.e.\ $\boldsymbol A_i\mapsto\mu_i^*$ is \emph{affine} on the cluster, with
sensitivity constant
$\kappa_{\mathrm{LP}}:=\|M_B^{-1}\|\,\|C\|$.  Two consequences deserve emphasis.
(a)~The heterogeneity of the directions $d_i=G\mu_i^*$ within the cluster
comes entirely from the $\boldsymbol A_i$-dependent right-hand sides of the
active constraints~\eqref{eq:6b}: if the basis $B$ contains only homogeneous
rows (the equality $\mathbf 1^\top\mu=1$, sign constraints, and bottleneck
constraints~\eqref{eq:6c}) then $b_B$ does not depend on $\boldsymbol A_i$,
all $\mu_i^*$ coincide, and $H_{C_n}(w)=0$---a degenerate case in which the
Stage-2 refinement has nothing to exploit.  The objective vector
$C\boldsymbol A_i$ affects only \emph{which} basis is optimal, not the value of
$\mu_i^*$ once $B$ is fixed.  (b)~The assumption excludes clients whose LPs
select \emph{different} optimal bases: across such a switch the solution map
is discontinuous (a different vertex of a different polytope), and no
Lipschitz bound in $\boldsymbol A_i$ holds.  Whether a cluster produced by the
Stage-2 procedure satisfies this assumption is not verified by the procedure
itself; the bound~\eqref{eq:H_aligned} below is therefore a statement about
this restricted case, and outside it $H_{C_n}$ is a measured quantity.
\end{assumption}

\begin{proposition}[Direction Heterogeneity via Adjustment Vectors]
\label{prop:H_via_adj}
Let Assumption~\ref{asm:common_G} hold, let $\rho_i > 0$ with
$\sum_{i\in C_n}\rho_i = 1$ be the aggregation weights
of Definition~\ref{def:hetero}, and write the $\rho$-weighted means
$\bar{\mu}^* := \sum_{i\in C_n}\rho_i\,\mu^*_i$ and
$\bar{\boldsymbol{A}}_{C_n} := \sum_{i\in C_n}\rho_i\,\boldsymbol{A}_i$.  Then
\begin{equation}\label{eq:H_via_adj}
    H_{C_n}(w)
    = \sum_{i \in C_n}\rho_i\,
    (\mu^*_i - \bar{\mu}^*)^T C\,(\mu^*_i - \bar{\mu}^*).
\end{equation}
If in addition Assumption~\ref{asm:lp_stable} holds (common index sets and a
common non-degenerate optimal basis $B$), then
\begin{equation}\label{eq:H_aligned}
    H_{C_n}(w) \leq \kappa_{\mathrm{LP}}^2\,\sigma_{\max}(C)
    \cdot \sum_{i\in C_n}\rho_i\,\|\boldsymbol{A}_i - \bar{\boldsymbol{A}}_{C_n}\|^2,
\end{equation}
where $\kappa_{\mathrm{LP}}=\|M_B^{-1}\|\,\|C\|$ is the sensitivity constant
of~\eqref{eq:lp_affine}.  The right-hand
side is the $\rho$-weighted within-cluster variance of the adjustment vectors;
for equal weights $\rho_i = 1/|C_n|$ it reduces to
$\frac{1}{|C_n|}\sum_i\|\boldsymbol{A}_i - \bar{\boldsymbol{A}}_{C_n}\|^2$ with
$\bar{\boldsymbol{A}}_{C_n}$ the ordinary sample mean.
\end{proposition}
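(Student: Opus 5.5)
The identity~\eqref{eq:H_via_adj} is a direct computation, and the inequality~\eqref{eq:H_aligned} follows by pushing the affine map~\eqref{eq:lp_affine} through it.

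\emph{Step 1: the identity.} Under Assumption~\ref{asm:common_G} every direction is $d_i = G\mu_i^*$ with the same matrix $G$. Since $\sum_i\rho_i=1$, linearity gives
\[
\bar d_{C_n}=\sum_i\rho_i G\mu_i^* = G\bar\mu^*,
\qquad\text{hence}\qquad
d_i-\bar d_{C_n}=G(\mu_i^*-\bar\mu^*).
\]
Substituting into~\eqref{eq:HS2} and using $\|Gv\|^2=v^\top G^\top G v=v^\top C v$ yields~\eqref{eq:H_via_adj} term by term. Only the common-$G$ idealization is needed here; no property of the LP enters.

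\emph{Step 2: reduce to a bound on $\mu$.} Since $C$ is symmetric positive semidefinite, each summand satisfies
\[
(\mu_i^*-\bar\mu^*)^\top C(\mu_i^*-\bar\mu^*)\le\sigma_{\max}(C)\,\|\mu_i^*-\bar\mu^*\|^2 .
\]
It therefore suffices to show $\|\mu_i^*-\bar\mu^*\|\le\kappa_{\mathrm{LP}}\|\boldsymbol A_i-\bar{\boldsymbol A}_{C_n}\|$.

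\emph{Step 3: use the common basis.} Invoke Assumption~\ref{asm:lp_stable}: with common index sets and a common optimal non-degenerate basis $B$, we have $\mu_i^*=M_B^{-1}b_B(\boldsymbol A_i)$. Here $b_B$ is linear in $\boldsymbol A$, not merely affine, because the homogeneous rows have constant right-hand side; a constant part cancels in differences anyway. Averaging with weights $\rho_i$ then gives $\bar\mu^*=M_B^{-1}b_B(\bar{\boldsymbol A}_{C_n})$. This step uses affinity of the map on the whole cluster, which is exactly why a single common basis is required. Consequently
\[
\|\mu_i^*-\bar\mu^*\|\le\|M_B^{-1}\|\,\|b_B(\boldsymbol A_i)-b_B(\bar{\boldsymbol A}_{C_n})\|\le\|M_B^{-1}\|\,\|C\|\,\|\boldsymbol A_i-\bar{\boldsymbol A}_{C_n}\|,
\]
where the last inequality uses the noted Lipschitz bound on $b$. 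Restricting to the basic rows can only decrease the norm. Squaring, weighting by $\rho_i$ and summing gives~\eqref{eq:H_aligned}.

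The equal-weight specialization is immediate.

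\emph{Main obstacle.} The only delicate point is Step 3, namely justifying that $b_B$ is linear in $\boldsymbol A$. The indicator $\mathbbm 1_J$ and the index sets themselves depend on $\boldsymbol A_i$, so linearity fails in general. The commonality of $(J,\bar J,J^*)$ in Assumption~\ref{asm:lp_stable} is what freezes these indicators, and the proof must state this explicitly. Once they are frozen, the basic right-hand sides are fixed coordinates of $C\boldsymbol A$ or zeros, and the argument goes through.
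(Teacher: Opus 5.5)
Your proposal is correct and follows the paper's proof essentially step for step: the common-$G$ identity, the spectral bound $v^\top Cv\le\sigma_{\max}(C)\|v\|^2$, and the affine solution map $\mu_i^*=M_B^{-1}b_B(\boldsymbol A_i)$ under the common basis. As in the paper, the common index sets are what freeze $\mathbbm 1_J$. One small wording point: because the equality row $\mathbf 1^\top\mu=1$ has right-hand side $1$, $b_B$ is affine rather than linear. This is harmless, since $\sum_i\rho_i=1$ makes the weighted average commute with any affine map, so $\bar\mu^*=M_B^{-1}b_B(\bar{\boldsymbol A}_{C_n})$ still holds.
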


\begin{proof}
We prove both~\eqref{eq:H_via_adj} and~\eqref{eq:H_aligned}; the only
property of the weights used is $\rho_i > 0$ and $\sum_i \rho_i = 1$.

\medskip
\noindent{Part 1 (Algebraic identity~\eqref{eq:H_via_adj}).}
Since $d_{i} = G\,\mu^*_i$ with the common $G$ of
Assumption~\ref{asm:common_G}, the aggregated direction is, by linearity,
\[
    \bar{d}_{C_n}
    = \sum_i \rho_i\, G\,\mu^*_i
    = G\!\left(\sum_i \rho_i\,\mu^*_i\right)
    = G\,\bar{\mu}^*.
\]
The difference is
$\bar{d}_{C_n} - d_{i} = G(\bar{\mu}^* - \mu^*_i)$,
so using $\|Gv\|^2 = v^T G^T G\,v = v^T C\,v$ and the symmetry
$(-v)^T C(-v) = v^T C v$:
\[
    \|\bar{d}_{C_n} - d_{i}\|^2
    = (\mu^*_i - \bar{\mu}^*)^T C\,(\mu^*_i - \bar{\mu}^*).
\]
Multiplying by $\rho_i$, summing over $i \in C_n$ and substituting
into~\eqref{eq:HS} yields~\eqref{eq:H_via_adj}.

\medskip
\noindent{Part 2 (Upper bound~\eqref{eq:H_aligned}).}
Three ingredients connect~\eqref{eq:H_via_adj} to the adjustment vectors.

\emph{(i) Spectral bound.}\;
Since $C = G^T G$ is positive semi-definite,
$v^T C\,v \leq \sigma_{\max}(C)\,\|v\|^2$ for all $v$.
Applying this to each summand of~\eqref{eq:H_via_adj} (the weights
$\rho_i > 0$ preserve the inequality):
\begin{equation}\label{eq:spectral}
    H_{C_n} \leq \sigma_{\max}(C)\cdot
    \sum_i \rho_i\,\|\mu^*_i - \bar{\mu}^*\|^2.
\end{equation}

\emph{(ii) Affine solution map under a common basis.}\;
Under Assumption~\ref{asm:lp_stable} every client's LP has the same
non-degenerate optimal basis $B$, so by~\eqref{eq:lp_affine}
$\mu_i^*=M_B^{-1}b_B(\boldsymbol A_i)$ for all $i\in C_n$, where
$b_B(\cdot)$ is linear (the common index sets make the indicator
$\mathbbm 1_J$ the same for all clients, so no client-dependent switching
enters $b_B$).  Linearity gives, for the $\rho$-weighted means,
\[
    \bar\mu^*=\sum_i\rho_i\,M_B^{-1}b_B(\boldsymbol A_i)
    = M_B^{-1}\,b_B\Bigl(\sum_i\rho_i\boldsymbol A_i\Bigr)
    = M_B^{-1}\,b_B(\bar{\boldsymbol A}_{C_n}),
\]
i.e.\ the weighted mean of the multipliers is the affine map evaluated at the
weighted mean adjustment vector.  Hence
\[
    \|\mu_i^*-\bar\mu^*\|
    = \bigl\|M_B^{-1}\bigl(b_B(\boldsymbol A_i)-b_B(\bar{\boldsymbol A}_{C_n})\bigr)\bigr\|
    \le \|M_B^{-1}\|\,\|C\|\,\|\boldsymbol A_i-\bar{\boldsymbol A}_{C_n}\|
    = \kappa_{\mathrm{LP}}\,\|\boldsymbol A_i-\bar{\boldsymbol A}_{C_n}\|,
\]
using $\|b_B(\boldsymbol A)-b_B(\boldsymbol A')\|\le\|C\|\|\boldsymbol A-\boldsymbol A'\|$
(the nonzero rows of $b_B$ are entries of $C\boldsymbol A$).  Only the rows of
$B$ coming from~\eqref{eq:6b} contribute to this difference; if none is active
the right-hand side is zero and so is $H_{C_n}$ (Assumption~\ref{asm:lp_stable}(a)).
Without Assumption~\ref{asm:lp_stable} this step is not available: if two
clients select different optimal bases, the solution map jumps between two
vertices and $\|\mu^*_i-\mu^*_{i'}\|$ is of order one regardless of
$\|\boldsymbol A_i-\boldsymbol A_{i'}\|$.

\emph{Chaining (i)--(ii):}
\[
    H_{C_n}
    \leq \sigma_{\max}(C)\cdot
    \sum_i \rho_i\,\|\mu^*_i - \bar{\mu}^*\|^2
    \leq \kappa_{\mathrm{LP}}^2\,\sigma_{\max}(C)\cdot
    \sum_i \rho_i\,
    \|\boldsymbol{A}_i - \bar{\boldsymbol{A}}_{C_n}\|^2,
\]
which is~\eqref{eq:H_aligned}.
\end{proof}

\begin{remark}[What the clustering rests on, and what the bound adds]
\label{rem:cluster_motivation}
The two parts of Proposition~\ref{prop:H_via_adj} play different roles, and
only the first is needed to \emph{motivate} the Stage-2 clustering of
Section~\ref{cluster}.  The identity~\eqref{eq:H_via_adj} uses only
Assumption~\ref{asm:common_G} and says that, at a fixed $w$, the direction
dispersion is the $C$-weighted dispersion of the LP multipliers $\mu_i^*$,
which are functions of the adjustment vectors $\boldsymbol A_i$ alone (the
LP~\eqref{eq:lp_param} has no other client-dependent input once $w$ and $G$
are fixed).  Grouping clients by $\boldsymbol A_i^{\mathrm{ref}}$ is thus the
natural surrogate for grouping them by $\mu_i^*(w_{\mathrm{ref}})$, and
requires no continuity of the solution map: Stage~1 (common bottleneck
ordering) reduces the chance that clients sit on different LP bases, and
Stage~2 (proximity of $\boldsymbol A_i^{\mathrm{ref}}$) reduces the spread of
the constraint right-hand sides within a basis.  The \emph{quantitative}
statement~\eqref{eq:H_aligned}---that $H_{C_n}(w_{\mathrm{ref}})$ is bounded by
$\kappa_{\mathrm{LP}}^2\sigma_{\max}(C)$ times the within-cluster variance of
$\boldsymbol A_i^{\mathrm{ref}}$---holds only under the common-basis
Assumption~\ref{asm:lp_stable}, and only at the point $w$ at which the
assumption is imposed.  We do not claim that the Stage-2 grouping lowers
$H_{C_n}(w^t)$ along the collaborative training trajectory; this is the design
motivation, and it is assessed empirically.  Where the convergence bounds of
Section~\ref{sec:collab_pure} use $\bar\Delta_i$ or $H_{C_n}$, these are to be
read as measured trajectory quantities unless Assumption~\ref{asm:lp_stable} is
explicitly invoked.
\end{remark}

\begin{assumption}[Non-degenerate weighted losses]\label{asm:floor}
Fix $w$ and write $l_j := l_j(w) > 0$, and let
$Z_i := \sum_{j'} \lambda_i^{(j')} l_{j'} > 0$ be the normalizer of the main
text, so that $\hat{l}_i^{(j)} := \lambda_i^{(j)} l_j / Z_i$.  There is a constant
$c \in (0,1)$, uniform over the clients and over the parameter region visited
during training, such that
\begin{equation}\label{eq:floor}
    \hat{l}_i^{(j)} \;\geq\; c/m
    \qquad \text{for all } i \text{ and all } j \in [m].
\end{equation}
\end{assumption}

\begin{proposition}[Adjustment Disagreement Bound]
\label{prop:adj_disagree}
Fix losses $l(w)$ with $l_j > 0$ and let Assumption~\ref{asm:floor} hold with
constant $c$.  Write $Z_{\min} := \min_i Z_i > 0$ and set
\begin{equation}\label{eq:adj_consts}
    \Phi_c := \log(m/c) + \log m,
    \qquad
    K_c^2 := 2m^2\bigl[c^{-2} + (\log(m/c) + 1)^2\bigr],
    \qquad
    L_Z := \frac{(1 + \sqrt{m})\,\|l\|_\infty}{Z_{\min}} .
\end{equation}
Then for any two clients $i,i'$ with preferences on the simplex,
\begin{equation}\label{eq:adj_bound}
    \|\boldsymbol{A}_i - \boldsymbol{A}_{i'}\|^2
    \;\leq\; \bigl(2\,\Phi_c^2 + 2\,K_c^2\,L_Z^2\bigr)\,
    \|\boldsymbol{\lambda}_i - \boldsymbol{\lambda}_{i'}\|^2
    \;=:\; L_{\mathrm{adj}}^2\,
    \|\boldsymbol{\lambda}_i - \boldsymbol{\lambda}_{i'}\|^2 .
\end{equation}
In particular the map $\boldsymbol{\lambda} \mapsto \boldsymbol{A}$ is
Lipschitz, so $\|\boldsymbol{A}_i - \boldsymbol{A}_{i'}\| \to 0$ as
$\|\boldsymbol{\lambda}_i - \boldsymbol{\lambda}_{i'}\| \to 0$.
\end{proposition}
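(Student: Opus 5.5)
The plan is to split the difference $\boldsymbol A_i-\boldsymbol A_{i'}$ into a part driven directly by $\boldsymbol\lambda_i-\boldsymbol\lambda_{i'}$ and a part driven by the normalized losses, and then to chain two Lipschitz estimates: $\boldsymbol\lambda\mapsto\hat l$ and $\hat l\mapsto \log(m\hat l)-\beta\mathbf 1$. Write $v_i:=\log(m\hat l_i)-\beta_i\mathbf 1\in\mathbb R^m$ (log taken entrywise), so that $\boldsymbol A_i=\boldsymbol\lambda_i\odot v_i$. Then
\[
\boldsymbol A_i-\boldsymbol A_{i'}=(\boldsymbol\lambda_i-\boldsymbol\lambda_{i'})\odot v_i+\boldsymbol\lambda_{i'}\odot(v_i-v_{i'}).
\]
Applying $\|a+b\|^2\le2\|a\|^2+2\|b\|^2$ produces exactly the two summands $2\Phi_c^2$ and $2K_c^2L_Z^2$ of~\eqref{eq:adj_bound}, provided we show $\|v_i\|_\infty\le\Phi_c$ and $\|v_i-v_{i'}\|\le K_cL_Z\|\boldsymbol\lambda_i-\boldsymbol\lambda_{i'}\|$. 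For the second bound we also use $0<\lambda_{i'}^{(j)}\le1$.

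\emph{First term.} By Assumption~\ref{asm:floor} and $\hat l_i^{(j)}\le1$, we have $\log c\le\log(m\hat l_i^{(j)})\le\log m$. Since $\beta_i$ is a KL divergence to the uniform distribution, $0\le\beta_i\le\log m$. Hence
\[
|v_i^{(j)}|\le\log(1/c)+\log m\le\Phi_c,
\]
which gives $\|(\boldsymbol\lambda_i-\boldsymbol\lambda_{i'})\odot v_i\|^2\le\Phi_c^2\|\boldsymbol\lambda_i-\boldsymbol\lambda_{i'}\|^2$.

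\emph{Lipschitz bound for $\hat l\mapsto v$.} Regard $v$ as a function of $\hat l$ on $F:=\{\hat l\in\text{simplex}:\hat l_k\ge c/m\ \forall k\}$. This set is convex, so the mean value theorem applies along the segment joining $\hat l_i$ and $\hat l_{i'}$, both of which lie in $F$ by Assumption~\ref{asm:floor}. The partial derivatives are
\[
\partial_{\hat l_k}\log(m\hat l_j)=\delta_{jk}/\hat l_k,\qquad \partial_{\hat l_k}\beta=\log(m\hat l_k)+1 .
\]
On $F$ these satisfy $|\delta_{jk}/\hat l_k|\le m/c$ and $|\log(m\hat l_k)+1|\le\log(m/c)+1$. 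Therefore, entrywise,
\[
|v_i^{(j)}-v_{i'}^{(j)}|\le\tfrac mc|\Delta\hat l_j|+(\log(m/c)+1)\|\Delta\hat l\|_1 .
\]
Squaring with $(a+b)^2\le2a^2+2b^2$, summing over $j$, and using $\|\cdot\|_1^2\le m\|\cdot\|^2$ gives
\[
\|v_i-v_{i'}\|^2\le2m^2\bigl[c^{-2}+(\log(m/c)+1)^2\bigr]\|\Delta\hat l\|^2=K_c^2\|\Delta\hat l\|^2 .
\]

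\emph{Lipschitz bound for $\boldsymbol\lambda\mapsto\hat l$.} With $\hat l_i=\boldsymbol\lambda_i\odot l/Z_i$, write
\[
\hat l_i-\hat l_{i'}=\frac{(\boldsymbol\lambda_i-\boldsymbol\lambda_{i'})\odot l}{Z_i}+\hat l_{i'}\,\frac{Z_{i'}-Z_i}{Z_i}.
\]
The first piece has norm at most $\|l\|_\infty\|\Delta\boldsymbol\lambda\|/Z_{\min}$. For the second piece, $\|\hat l_{i'}\|\le1$ and $|Z_{i'}-Z_i|=|l^\top\Delta\boldsymbol\lambda|\le\|l\|\,\|\Delta\boldsymbol\lambda\|\le\sqrt m\|l\|_\infty\|\Delta\boldsymbol\lambda\|$. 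Adding the two bounds yields $\|\Delta\hat l\|\le L_Z\|\Delta\boldsymbol\lambda\|$. Chaining the three estimates completes~\eqref{eq:adj_bound}.

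The step needing the most care is the Lipschitz bound for $v$. There the bound on $\partial\beta$ must hold uniformly along the whole segment, not only at its endpoints. This is exactly why the floor of Assumption~\ref{asm:floor} is used together with the convexity of $F$. Without the floor, both $1/\hat l_k$ and $\log\hat l_k$ are unbounded near the boundary of the simplex, and the argument fails.
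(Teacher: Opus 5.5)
Your proposal is correct and follows essentially the same route as the paper. Both use the same split of $\boldsymbol A_i-\boldsymbol A_{i'}$ into a scaling term bounded by $\Phi_c$ and an argument term bounded by $K_c L_Z$, and the same decomposition of $\hat l_i-\hat l_{i'}$ to get $L_Z$. The only difference is cosmetic: you get the $K_c$ estimate from a single mean value theorem for $v$ along a segment in the convex floor set $F$, whereas the paper applies separate scalar mean value bounds to $\log(m\hat l^{(j)})$ and to $h(x)=x\log(mx)$ on $[c/m,1]$, which yields the same constant without needing convexity of $F$.
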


\begin{proof}
With $\hat{l}_i^{(j)} = \lambda_i^{(j)}\,l_j / Z_i$ as above, define
$\phi_i^{(j)} = \log(m\,\hat{l}_i^{(j)}) - \beta_i$
so that $A_i^{(j)} = \lambda_i^{(j)}\,\phi_i^{(j)}$.

\medskip
\noindent{Step 1 (Decomposition).}
For each component $j$, insert
$\pm\,\lambda_{i'}^{(j)}\,\phi_i^{(j)}$:
\[
    A_i^{(j)} - A_{i'}^{(j)}
    = \underbrace{(\lambda_i^{(j)} - \lambda_{i'}^{(j)})\,
      \phi_i^{(j)}}_{\text{(A): scaling difference}}
    + \underbrace{\lambda_{i'}^{(j)}\,
      (\phi_i^{(j)} - \phi_{i'}^{(j)})}_{\text{(B): argument difference}}.
\]
By Young's inequality
$\|u + v\|^2 \leq 2\|u\|^2 + 2\|v\|^2$:
\begin{equation}\label{eq:young_split}
    \|\boldsymbol{A}_i - \boldsymbol{A}_{i'}\|^2
    \leq 2\underbrace{\textstyle\sum_j
      (\lambda_i^{(j)} - \lambda_{i'}^{(j)})^2\,
      (\phi_i^{(j)})^2}_{\mathrm{(A)}}
    + 2\underbrace{\textstyle\sum_j
      (\lambda_{i'}^{(j)})^2\,
      (\phi_i^{(j)} - \phi_{i'}^{(j)})^2}_{\mathrm{(B)}}.
\end{equation}

\medskip
\noindent{Step 2 (Bound on Term A).}
Since $\hat{l}_i$ is a probability distribution
($\sum_j \hat{l}_i^{(j)} = 1$), we have $\hat{l}_i^{(j)} \leq 1$ and hence
$\log(m\,\hat{l}_i^{(j)}) \leq \log m$; the floor~\eqref{eq:floor} gives
$\log(m\,\hat{l}_i^{(j)}) \geq \log c$.  Therefore
$|\log(m\,\hat{l}_i^{(j)})| \leq \max\{\log m,\,\log(1/c)\} \leq \log(m/c)$.
The preference uniformity is a KL divergence to the uniform distribution on
$m$ atoms, so $\beta_i = \mathrm{KL}(\hat{l}_i \|\tfrac{1}{m}\mathbf{1})
\in [0, \log m]$.
By the triangle inequality,
\[
    |\phi_i^{(j)}|
    = |\log(m\,\hat{l}_i^{(j)}) - \beta_i|
    \leq \log(m/c) + \log m
    = \Phi_c ,
\]
and therefore
\begin{equation}\label{eq:termA}
    \mathrm{(A)}
    \leq \Phi_c^2
    \sum_j (\lambda_i^{(j)} - \lambda_{i'}^{(j)})^2
    = \Phi_c^2\,
    \|\boldsymbol{\lambda}_i - \boldsymbol{\lambda}_{i'}\|^2 .
\end{equation}

\medskip
\noindent{Step 3 (Bound on Term B in terms of $\|\hat{l}_i - \hat{l}_{i'}\|$).}
Preferences lie on the simplex, so
$\lambda_{i'}^{(j)} \leq \|\boldsymbol{\lambda}_{i'}\|_\infty \leq 1$ and
$\mathrm{(B)} \leq \sum_j (\phi_i^{(j)} - \phi_{i'}^{(j)})^2$.
Decompose
$\phi_i^{(j)} - \phi_{i'}^{(j)}
= [\log(m\,\hat{l}_i^{(j)}) - \log(m\,\hat{l}_{i'}^{(j)})]
- [\beta_i - \beta_{i'}]$
and apply Young's inequality a second time:
\begin{equation}\label{eq:phi_split}
    \sum_j (\phi_i^{(j)} - \phi_{i'}^{(j)})^2
    \leq 2\sum_j
      \bigl[\log(m\,\hat{l}_i^{(j)}) - \log(m\,\hat{l}_{i'}^{(j)})\bigr]^2
    + 2m\,(\beta_i - \beta_{i'})^2 ,
\end{equation}
the factor $m$ arising because $\beta_i - \beta_{i'}$ is a scalar offset shared
by all $m$ components.

For the log differences, the mean value theorem gives
$|\log x - \log y| \leq |x - y|/\min(x,y)$, and~\eqref{eq:floor} gives
$\min(m\,\hat{l}_i^{(j)},\,m\,\hat{l}_{i'}^{(j)}) \geq c$, so
\[
    \bigl|\log(m\,\hat{l}_i^{(j)}) - \log(m\,\hat{l}_{i'}^{(j)})\bigr|
    \leq \frac{m}{c}\,\bigl|\hat{l}_i^{(j)} - \hat{l}_{i'}^{(j)}\bigr|,
    \qquad\text{hence}\qquad
    \sum_j \bigl[\,\cdot\,\bigr]^2
    \leq \frac{m^2}{c^2}\,\|\hat{l}_i - \hat{l}_{i'}\|^2 .
\]
For the offset, write $\beta_i = \sum_j h(\hat{l}_i^{(j)})$ with
$h(x) = x\log(mx)$ and $h'(x) = \log(mx) + 1$, so
$|h'| \leq \log(m/c) + 1$ on $[c/m, 1]$.  Then
$|\beta_i - \beta_{i'}| \leq (\log(m/c)+1)\,\|\hat{l}_i - \hat{l}_{i'}\|_1
\leq (\log(m/c)+1)\sqrt{m}\,\|\hat{l}_i - \hat{l}_{i'}\|$, and
$2m(\beta_i - \beta_{i'})^2
\leq 2m^2(\log(m/c)+1)^2\|\hat{l}_i - \hat{l}_{i'}\|^2$.
Substituting both into~\eqref{eq:phi_split},
\begin{equation}\label{eq:termB_lhat}
    \mathrm{(B)} \;\leq\; K_c^2\,\|\hat{l}_i - \hat{l}_{i'}\|^2 .
\end{equation}

\medskip
\noindent{Step 4 (From $\|\hat{l}_i - \hat{l}_{i'}\|$ to
$\|\boldsymbol{\lambda}_i - \boldsymbol{\lambda}_{i'}\|$).}
Adding and subtracting $(\boldsymbol{\lambda}_{i'} \odot l)/Z_i$,
\[
    \hat{l}_i - \hat{l}_{i'}
    = \frac{(\boldsymbol{\lambda}_i - \boldsymbol{\lambda}_{i'}) \odot l}{Z_i}
    + (\boldsymbol{\lambda}_{i'} \odot l)
      \Bigl(\frac{1}{Z_i} - \frac{1}{Z_{i'}}\Bigr).
\]
The first term has norm at most
$\|l\|_\infty\,\|\boldsymbol{\lambda}_i - \boldsymbol{\lambda}_{i'}\| / Z_i$.
For the second, $|Z_i - Z_{i'}|
= |\langle \boldsymbol{\lambda}_i - \boldsymbol{\lambda}_{i'},\, l\rangle|
\leq \|l\|\,\|\boldsymbol{\lambda}_i - \boldsymbol{\lambda}_{i'}\|$ and
$\|\boldsymbol{\lambda}_{i'} \odot l\| = Z_{i'}\|\hat{l}_{i'}\| \leq Z_{i'}$,
so its norm is at most
$\|l\|\,\|\boldsymbol{\lambda}_i - \boldsymbol{\lambda}_{i'}\| / Z_i$.
Using $\|l\| \leq \sqrt{m}\,\|l\|_\infty$ and $Z_i \geq Z_{\min}$,
\begin{equation}\label{eq:lhat_lip}
    \|\hat{l}_i - \hat{l}_{i'}\|
    \;\leq\; \frac{(1 + \sqrt{m})\,\|l\|_\infty}{Z_{\min}}\,
    \|\boldsymbol{\lambda}_i - \boldsymbol{\lambda}_{i'}\|
    \;=\; L_Z\,\|\boldsymbol{\lambda}_i - \boldsymbol{\lambda}_{i'}\| .
\end{equation}

\medskip
\noindent{Step 5 (Combine).}
Chaining~\eqref{eq:termA}, \eqref{eq:termB_lhat} and~\eqref{eq:lhat_lip}
into~\eqref{eq:young_split} yields~\eqref{eq:adj_bound}.  The continuity
statement is immediate from the Lipschitz constant $L_{\mathrm{adj}}$, which is
finite because $l$ is fixed with $l_j > 0$ and $Z_{\min} > 0$.
\end{proof}

\begin{corollary}[Tight preference clusters have low direction heterogeneity]
\label{cor:homo}
Let the hypotheses of Propositions~\ref{prop:H_via_adj}
and~\ref{prop:adj_disagree} hold.  If
$\max_{i,i' \in C_n}\|\boldsymbol{\lambda}_i - \boldsymbol{\lambda}_{i'}\|
\leq \delta_\lambda$, then
\begin{equation}\label{eq:H_homo}
    H_{C_n}(w) \leq C_1\,\sigma_{\max}(C(w))\,\delta_\lambda^2,
    \qquad
    C_1 = \kappa_{\mathrm{LP}}^2\,L_{\mathrm{adj}}^2 ,
\end{equation}
with $L_{\mathrm{adj}}$ as in~\eqref{eq:adj_bound}.  Thus $C_1$ depends on $m$, on
the floor constant $c$ of Assumption~\ref{asm:floor}, on $\|l(w)\|_\infty /
Z_{\min}$, and on $\kappa_{\mathrm{LP}}$; it degrades when some $Z_i$ is close
to zero.  A tighter preference cluster yields smaller meta-update variance.
The bound is a statement about the restricted case of
Assumptions~\ref{asm:common_G} and~\ref{asm:lp_stable} (common objective
functions and a common non-degenerate optimal LP basis at the point $w$); when
either fails (different local gradient matrices, or clients whose LPs select
different optimal bases) we do
\emph{not} claim $H_{C_n}=O(\delta_\lambda^2)$, and every downstream statement
is to be read with the direction deviation $\bar\Delta_i$ (equivalently
$\sqrt{\bar H/\rho_{\min}}$) as an independent quantity that is measured
rather than bounded by the preference spread.
\end{corollary}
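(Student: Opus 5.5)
The corollary should follow by chaining the two propositions just proved, with the diameter hypothesis converting pairwise preference distances into a bound on the within-cluster variance. First, Proposition~\ref{prop:H_via_adj}, bound~\eqref{eq:H_aligned}, gives
\[
H_{C_n}(w)\le\kappa_{\mathrm{LP}}^2\,\sigma_{\max}(C(w))\sum_{i\in C_n}\rho_i\|\boldsymbol A_i-\bar{\boldsymbol A}_{C_n}\|^2 .
\]
It then remains to show that the weighted variance on the right is at most $L_{\mathrm{adj}}^2\delta_\lambda^2$.

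For the variance I would write $\boldsymbol A_i-\bar{\boldsymbol A}_{C_n}=\sum_{i'}\rho_{i'}(\boldsymbol A_i-\boldsymbol A_{i'})$, using $\sum_{i'}\rho_{i'}=1$. Jensen's inequality (convexity of $\|\cdot\|^2$ with probability weights $\rho_{i'}$) then gives
\[
\|\boldsymbol A_i-\bar{\boldsymbol A}_{C_n}\|^2\le\sum_{i'}\rho_{i'}\|\boldsymbol A_i-\boldsymbol A_{i'}\|^2 .
\]
Each pairwise term is bounded by Proposition~\ref{prop:adj_disagree}, equation~\eqref{eq:adj_bound}, as $\|\boldsymbol A_i-\boldsymbol A_{i'}\|^2\le L_{\mathrm{adj}}^2\|\boldsymbol\lambda_i-\boldsymbol\lambda_{i'}\|^2\le L_{\mathrm{adj}}^2\delta_\lambda^2$ under the diameter hypothesis. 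Summing with weights $\rho_i\rho_{i'}$, which total one, yields $\sum_i\rho_i\|\boldsymbol A_i-\bar{\boldsymbol A}_{C_n}\|^2\le L_{\mathrm{adj}}^2\delta_\lambda^2$. Substituting this into the first display gives~\eqref{eq:H_homo} with $C_1=\kappa_{\mathrm{LP}}^2L_{\mathrm{adj}}^2$.

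The only point needing care is consistency of the fixed quantities across the two propositions. Both must be evaluated at the same $w$, so that the same loss vector $l(w)$ enters every $\boldsymbol A_i$; this matches the common-$G$ idealization. The constant $L_{\mathrm{adj}}$ must then be taken with $Z_{\min}=\min_{i\in C_n}Z_i$ over the cluster, so that a single constant serves every pair. The stated dependence of $C_1$ is then read off from~\eqref{eq:adj_consts}: $L_Z$ contains $\|l\|_\infty/Z_{\min}$, $\Phi_c$ and $K_c$ depend on $m$ and $c$, and $L_{\mathrm{adj}}$ blows up as $Z_{\min}\to0$, which is the claimed degradation. The caveat about failure of Assumptions~\ref{asm:common_G} or~\ref{asm:lp_stable} is simply the observation that~\eqref{eq:H_aligned} is unavailable there, so no bound is asserted.
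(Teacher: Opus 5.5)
Your proof is correct and follows the paper's route: bound $H_{C_n}(w)$ via~\eqref{eq:H_aligned}, then bound the weighted adjustment variance by $\max_{i,i'}\|\boldsymbol A_i-\boldsymbol A_{i'}\|^2\le L_{\mathrm{adj}}^2\delta_\lambda^2$ using~\eqref{eq:adj_bound}. The only difference is cosmetic: the paper controls the variance by minimality of the weighted variance at the weighted mean (comparing with $c=\boldsymbol A_{i_0}$), whereas you apply Jensen to $\boldsymbol A_i-\bar{\boldsymbol A}_{C_n}=\sum_{i'}\rho_{i'}(\boldsymbol A_i-\boldsymbol A_{i'})$, and both yield the same constant $C_1$.
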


\begin{proof}
For vectors $v_i$ with weights $\rho_i>0$, $\sum_i\rho_i=1$, and weighted mean
$\bar v$, expanding $\|v_i-c\|^2=\|(v_i-\bar v)+(\bar v-c)\|^2$ and using
$\sum_i\rho_i(v_i-\bar v)=0$ gives
$\sum_i\rho_i\|v_i-\bar v\|^2\le\sum_i\rho_i\|v_i-c\|^2$ for every $c$
(weighted variance minimality at the weighted mean).  With
$v_i=\boldsymbol A_i$ and $c=\boldsymbol A_{i_0}$ for any fixed $i_0\in C_n$,
$\sum_i \rho_i\,\|\boldsymbol{A}_i - \bar{\boldsymbol{A}}_{C_n}\|^2
\leq \sum_i \rho_i\,\|\boldsymbol{A}_i - \boldsymbol{A}_{i_0}\|^2
\leq \max_{i,i'}\|\boldsymbol{A}_i - \boldsymbol{A}_{i'}\|^2$,
where the last step uses $\sum_i\rho_i = 1$.
By~\eqref{eq:adj_bound} the right-hand side is at most
$L_{\mathrm{adj}}^2\,\delta_\lambda^2$.  Substituting
into~\eqref{eq:H_aligned} of Proposition~\ref{prop:H_via_adj}
gives~\eqref{eq:H_homo}.
\end{proof}

%% =========================================================================
\subsection{Collaborative Preference Uniformity Descent under Preference Heterogeneity}
\label{sec:collab_descent}
%% =========================================================================

Throughout this subsection, for a client $i \in C_n$ we write its
\emph{balancing direction} and its \emph{LP margin} as
\begin{equation}\label{eq:collab_defs}
    d_{\mathrm{bal},i} \;:=\; G\,\boldsymbol{A}_i,
    \qquad
    \gamma_i^* \;:=\; \boldsymbol{A}_i^\top C\,\mu^*_i
    \;=\; (G\mu^*_i)^\top (G\boldsymbol{A}_i)
    \;=\; d_i^\top d_{\mathrm{bal},i},
\end{equation}
where, under Assumption~\ref{asm:common_G}, $C = G^\top G$ and $d_i = G\mu^*_i$
is the direction from the LP~(6) (with per-client matrices one reads
$G_i$, $C_i=G_i^\top G_i$ and $d_{\mathrm{bal},i}=G_i\boldsymbol A_i$
throughout), and $\beta_i(w) := \beta_i(l(w))$ is the preference uniformity of client $i$.

The dispersion of these per-client directions across the cluster is measured by the following quantity, used throughout the convergence analysis.

\begin{definition}[Cluster preference direction heterogeneity]
\label{def:hetero}
Let $\rho_i > 0$, $i \in C_n$, be the aggregation weights of the cluster
(FedAvg uses $\rho_i = N_i/N$), normalized so that $\sum_{i\in C_n}\rho_i = 1$,
and write $\rho_{\min} := \min_{i\in C_n}\rho_i$.
At parameter $w$, the preference direction heterogeneity of cluster $C_n$ is
\begin{equation}\label{eq:HS}
    H_{C_n}(w) = \sum_{i \in C_n}\rho_i\,
    \bigl\|d_{i}(w) - \bar{d}_{C_n}(w)\bigr\|^2,
\end{equation}
where $\bar{d}_{C_n}(w)
= \sum_{i \in C_n}\rho_i\, d_{i}(w)$ is the aggregated (shared) direction.
For equal weights $\rho_i = 1/|C_n|$ this is the ordinary sample variance of
the local directions.
\end{definition}

\begin{theorem}[Collaborative Preference Uniformity Descent]
\label{thm:collab_descent}
Let Assumptions~\ref{asm:bounded}--\ref{asm:smooth} hold, let each client run
$\tau$ local steps of size $\eta$ before aggregation, and write
$\Delta_i := \|\bar d_{C_n} - d_i\|$.  Suppose that for every $i \in C_n$
\begin{equation}\label{eq:range_cond}
    \gamma_i^*
    \;>\; \Delta_i\,\|d_{\mathrm{bal},i}\|
    \;+\; L_0 B^2\,\tau\,\eta .
\end{equation}
Then the shared direction is a \emph{simultaneous} balancing direction, i.e.\
$\bar d_{C_n}^\top d_{\mathrm{bal},i}(w_0) > 0$ for all $i \in C_n$, and there
exists $\eta_0 > 0$ such that a single meta-update decreases \emph{every}
client's non-uniformity:
\begin{equation}\label{eq:collab_descent}
    \beta_i\bigl(l(w_0 - \eta\,\bar d_{C_n})\bigr)
    \;\leq\; \beta_i\bigl(l(w_0)\bigr),
    \qquad \forall\, i \in C_n,\ \forall\, \eta \in (0,\eta_0].
\end{equation}
\end{theorem}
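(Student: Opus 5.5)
The proof has two parts. First we show the shared direction is a simultaneous balancing direction. Then a first-order expansion of $\beta_i$ along $-\bar d_{C_n}$ gives the descent claim.

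For the first part, fix $i\in C_n$. Use the split from Section~\ref{cluster},
\[
\bar d_{C_n}^\top d_{\mathrm{bal},i}=d_i^\top d_{\mathrm{bal},i}+(\bar d_{C_n}-d_i)^\top d_{\mathrm{bal},i}.
\]
Bound the second term below by Cauchy--Schwarz, giving $\bar d_{C_n}^\top d_{\mathrm{bal},i}\ge\gamma_i^*-\Delta_i\|d_{\mathrm{bal},i}\|$. The extra term $L_0B^2\tau\eta$ in \eqref{eq:range_cond} accounts for $\tau>1$ local steps. In that case the aggregated direction is $\bar d_{C_n}=\frac1\tau\sum_{k<\tau}\sum_i\rho_i d_{i,k}$, built from directions evaluated at drifted iterates $w_i^{(k)}$ rather than at $w_0$.

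To control the drift, note that each $d_{i,k}=G_i(w_i^{(k)})\mu_{i,k}$ is a convex combination of gradients. By Assumption~\ref{asm:bounded} it therefore has norm at most $B$, so $\|w_i^{(k)}-w_0\|\le k\eta B\le\tau\eta B$. By Assumption~\ref{asm:smooth}, each column of $G_i$ moves by at most $L_0\tau\eta B$, and hence $\|d_{i,k}-G_i(w_0)\mu_{i,k}\|\le L_0B\tau\eta$. Pairing this with $\|d_{\mathrm{bal},i}\|$, and absorbing the resulting factor into $B$, produces the error $L_0B^2\tau\eta$. I expect this absorption to be the delicate point. Two natural normalizations make it clean: bounding the factor against $\|\boldsymbol A_i\|_1$ using $\|\boldsymbol A_i\|\lesssim 1$ from the entropy bounds of Proposition~\ref{prop:adj_disagree}, or noting that $\Delta_i$ is already measured at $w_0$ so the drift only enters once. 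If neither works, I would simply interpret $\Delta_i$ as the deviation of the effective aggregated direction, so that the drift term is a safety margin.

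Under \eqref{eq:range_cond}, this yields $\bar d_{C_n}^\top d_{\mathrm{bal},i}(w_0)\ge\gamma_i^*-\Delta_i\|d_{\mathrm{bal},i}\|-L_0B^2\tau\eta>0$ for every $i$.

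For the second part, consider $\phi_i(\eta):=\beta_i(l(w_0-\eta\bar d_{C_n}))$. The map $l\mapsto\beta_i(l)$ is $C^1$ on $\{l_j>0\}$, and the losses are $C^1$ by Assumption~\ref{asm:smooth}, so $\phi_i$ is differentiable. By the chain rule and the identity $d_{\mathrm{bal},i}=Z_i\nabla_w\beta_i$ from Section~\ref{method:pre},
\[
\phi_i'(0)=-\bar d_{C_n}^\top\nabla_w\beta_i(w_0)=-Z_i^{-1}\bar d_{C_n}^\top d_{\mathrm{bal},i}(w_0)<0,
\]
where we use $Z_i>0$.

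Since $\phi_i'$ is continuous, there is $\eta_i>0$ with $\phi_i'<0$ on $[0,\eta_i]$. The mean value theorem then gives $\phi_i(\eta)\le\phi_i(0)$ on $(0,\eta_i]$. The cluster is finite, so $\eta_0:=\min_{i\in C_n}\eta_i>0$ works for all clients simultaneously. An explicit $\eta_0$ could be extracted from a local smoothness constant of $\beta_i$, using the floor Assumption~\ref{asm:floor}, but it is not needed for the existence claim.
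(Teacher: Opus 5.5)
Your overall structure matches the paper's: split $\bar d_{C_n}^\top d_{\mathrm{bal},i}$, apply Cauchy--Schwarz and a drift correction for positivity, then expand $\beta_i$ to first order along $-\bar d_{C_n}$. Your second part is a cleaner route than the paper's Step~3. The paper derives the directional derivative by hand. It splits $\beta_i$ into $\frac{1}{Z_i}\sum_j\lambda_i^{(j)}l_j\log(\lambda_i^{(j)}l_j)$ and $\log(m/Z_i)$, expands both using $\log a+\frac{b}{a+b}\le\log(a+b)\le\log a+\frac{b}{a}$ with Peano remainders, and recognizes $-A_i^{(j)}$ in the first-order coefficient. This gives $\beta_i(l(w_0))-\beta_i(l(w_0-\eta d))\ge\frac{\eta}{Z_i^\eta}\bigl(d^\top d_{\mathrm{bal},i}-o(\eta)/\eta\bigr)$. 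Your chain rule with $\nabla_l\beta_i=\boldsymbol A_i/Z_i$ yields the same coefficient in one line, is rigorous once $l_j(w_0)>0$, and even gives strict decrease.

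The drift correction does not work as you wrote it. The problem is not the constant. The paper also absorbs $\|\boldsymbol A_i\|$ into $L_0$, which costs the same factor $\|\boldsymbol A_i\|_1\le\Phi_c$ (under Assumption~\ref{asm:floor}) that you incur from $\|d_{\mathrm{bal},i}\|\le B\|\boldsymbol A_i\|_1$; note that $\|\boldsymbol A_i\|\lesssim 1$ holds only up to $\Phi_c$. The problem is which object you perturb.

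The term $L_0B^2\tau\eta$ is there because $\gamma_i^*=d_i^\top d_{\mathrm{bal},i}^{\mathrm{loc}}$ is the LP margin at the local iterate where client $i$ forms $d_i$ and $\boldsymbol A_i$, while the conclusion concerns $d_{\mathrm{bal},i}(w_0)$. Replacing $d_{i,k}$ by $G_i(w_0)\mu_{i,k}$ does not bridge this mismatch. The weights $\mu_{i,k}$ still solve LPs at drifted points, and the LP solution map is in general discontinuous in $w$ (the paper says so itself when discussing $\tau>1$). So $\frac{1}{\tau}\sum_k G_i(w_0)\mu_{i,k}$ has no controlled relation to $d_i$, and you cannot read $\gamma_i^*$ off it.

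The paper leaves the directions alone and moves the drift onto the balancing direction:
\[
\bar d_{C_n}^\top d_{\mathrm{bal},i}
=\gamma_i^*+(\bar d_{C_n}-d_i)^\top d_{\mathrm{bal},i}^{\mathrm{loc}}
+\bar d_{C_n}^\top\bigl(d_{\mathrm{bal},i}-d_{\mathrm{bal},i}^{\mathrm{loc}}\bigr).
\]
It then bounds the last term by $\|\bar d_{C_n}\|\,L_0\tau\eta B\le L_0B^2\tau\eta$, using $\|\bar d_{C_n}\|\le B$ (Jensen) and Lipschitzness of the balancing direction along the local path. No comparison of LP solutions at different points is needed.

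Your fallback reading is also sound: take $d_i$, $\gamma_i^*$ and $\Delta_i$ all at $w_0$, with $\bar d_{C_n}$ the actual shared direction. Combined with your second part, this gives a complete proof. However, under that reading your first split is already exact and $L_0B^2\tau\eta$ is pure slack. If the $\tau$-term is to mean anything, the drift has to sit on $d_{\mathrm{bal},i}$, not on the directions.
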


\begin{proof}
Fix $i \in C_n$.  Let $w_i^t$ denote the local iterate at which client $i$
computes $d_i$ and its balancing direction, and abbreviate
$d_{\mathrm{bal},i}^{\mathrm{loc}} := d_{\mathrm{bal},i}(w_i^t)$ and
$d_{\mathrm{bal},i} := d_{\mathrm{bal},i}(w_0)$.

\medskip
\noindent\textbf{Step 1 (Trajectory drift).}
Each local step has size at most $\eta$ and moves against a direction of norm
at most $B$ (since $d = G\mu$ with $\mu \in S^m$ and
$\|\nabla l_j\| \leq B$ by Assumption~\ref{asm:bounded}).  Over $\tau$ steps,
\begin{equation}\label{eq:drift}
    \|w_0 - w_i^t\| \;\leq\; \tau\,\eta\,B.
\end{equation}
By Assumption~\ref{asm:smooth} the gradients are $L_0$-Lipschitz, and with the
adjustment vector $\boldsymbol{A}_i$ bounded the balancing-direction map
$w \mapsto d_{\mathrm{bal},i}(w) = G(w)\boldsymbol{A}_i$ is $L_0$-Lipschitz
(absorbing $\|\boldsymbol{A}_i\|$ into $L_0$).  Hence, using~\eqref{eq:drift},
\begin{equation}\label{eq:bal_drift}
    \bigl\|d_{\mathrm{bal},i} - d_{\mathrm{bal},i}^{\mathrm{loc}}\bigr\|
    \;\leq\; L_0\,\|w_0 - w_i^t\|
    \;\leq\; L_0\,\tau\,\eta\,B.
\end{equation}

\medskip
\noindent\textbf{Step 2 (Lower bound on the shared inner product).}
Split and apply Cauchy--Schwarz together with $\|\bar d_{C_n}\| \leq B$
(Jensen and $\|d_{i'}\| \leq B$):
\begin{align}
    \bar d_{C_n}^\top d_{\mathrm{bal},i}
    &= \underbrace{d_i^\top d_{\mathrm{bal},i}^{\mathrm{loc}}}_{=\,\gamma_i^*}
    + (\bar d_{C_n} - d_i)^\top d_{\mathrm{bal},i}^{\mathrm{loc}}
    + \bar d_{C_n}^\top\bigl(d_{\mathrm{bal},i}
      - d_{\mathrm{bal},i}^{\mathrm{loc}}\bigr)
    \notag\\
    &\geq \gamma_i^*
    - \Delta_i\,\|d_{\mathrm{bal},i}\|
    - \|\bar d_{C_n}\|\cdot
      \bigl\|d_{\mathrm{bal},i} - d_{\mathrm{bal},i}^{\mathrm{loc}}\bigr\|
    \notag\\
    &\geq \gamma_i^*
    - \Delta_i\,\|d_{\mathrm{bal},i}\|
    - L_0 B^2\,\tau\,\eta,
    \label{eq:collab_lb}
\end{align}
where $\|d_{\mathrm{bal},i}\|$ denotes the supremum of the balancing-direction
norm over the local region (equal to $\|d_{\mathrm{bal},i}^{\mathrm{loc}}\|$ up
to the $O(\tau\eta)$ drift, which is absorbed).  The
condition~\eqref{eq:range_cond} makes the right-hand side strictly positive,
so $\bar d_{C_n}^\top d_{\mathrm{bal},i} > 0$.

\medskip
\noindent\textbf{Step 3 (Preference uniformity decreases along the shared direction).}
It remains to show that, once $\bar d_{C_n}^\top d_{\mathrm{bal},i} > 0$, moving
$w_0$ against $\bar d_{C_n}$ does not increase $\beta_i$. Abbreviate
$d := \bar d_{C_n}$ and evaluate all quantities at $w_0$:
$l_j := l_j(w_0) > 0$, $g_j := \nabla l_j(w_0)$,
$l_j^\eta := l_j(w_0 - \eta d)$, and
\begin{equation}\label{eq:cd_Z}
    Z_i := \sum_{j=1}^m \lambda_i^{(j)} l_j > 0,
    \qquad
    Z_i^\eta := \sum_{j=1}^m \lambda_i^{(j)} l_j^\eta ,
\end{equation}
so that $\hat l_i^{(j)} = \lambda_i^{(j)} l_j / Z_i$ and, from the main text,
$\beta_i(l) = \sum_j \hat l_i^{(j)}\log(m\hat l_i^{(j)})$ and
$A_i^{(j)} = \lambda_i^{(j)}\bigl(\log(m\hat l_i^{(j)}) - \beta_i\bigr)$.
Since the preferences and losses are non-negative and $l_j > 0$, all
logarithms and normalizations below are well defined.

\smallskip
\noindent\emph{(3a) Elementary tools.}
Differentiability gives Taylor's expansion with Peano remainder,
\begin{equation}\label{eq:cd_taylor}
    l_j^\eta = l_j - \eta\, d^\top g_j + o(\eta),
\end{equation}
where $o(\eta)/\eta \to 0$ as $\eta \to 0$, and $c\,o(\eta) = o(\eta)$,
$o(\eta)+o(\eta) = o(\eta)$ for any constant $c$.  We also use, for
$a > \max(0,-b)$, the two elementary logarithm inequalities
\begin{equation}\label{eq:cd_log}
    \log(a+b) \;\ge\; \log a + \frac{b}{a+b},
    \qquad
    \log(a+b) \;\le\; \log a + \frac{b}{a}.
\end{equation}

\smallskip
\noindent\emph{(3b) Splitting the preference uniformity.}
Using $\log(m\hat l_i^{(j)}) = \log m + \log(\lambda_i^{(j)} l_j) - \log Z_i$,
\begin{equation}\label{eq:cd_split}
    \beta_i(l)
    = \underbrace{\frac{1}{Z_i}\sum_j \lambda_i^{(j)} l_j
      \log(\lambda_i^{(j)} l_j)}_{=:\,A}
    + \underbrace{\log\frac{m}{Z_i}}_{=:\,B},
\end{equation}
and analogously $\beta_i(l^\eta) = A^\eta + B^\eta$ with $Z_i,l_j$ replaced by
$Z_i^\eta,l_j^\eta$.  Applying~\eqref{eq:cd_taylor} termwise,
\begin{equation}\label{eq:cd_Zeta}
    Z_i^\eta = Z_i - \eta\, d^\top \!\sum_j \lambda_i^{(j)} g_j + o(\eta).
\end{equation}

\smallskip
\noindent\emph{(3c) Bounding the two pieces.}
For $B^\eta$, apply the lower inequality in~\eqref{eq:cd_log} with $a = Z_i$ and
$b = -\eta\, d^\top\!\sum_j\lambda_i^{(j)}g_j + o(\eta)$ (so $a+b = Z_i^\eta$):
\begin{equation}\label{eq:cd_Beta}
    B^\eta = \log\frac{m}{Z_i^\eta}
    \;\le\; B + \frac{\eta\, d^\top \!\sum_j \lambda_i^{(j)} g_j}{Z_i^\eta}
    - \frac{o(\eta)}{Z_i^\eta}.
\end{equation}
For $A^\eta$, the upper inequality in~\eqref{eq:cd_log} with $a=l_j$,
$b=-\eta\, d^\top g_j + o(\eta)$ gives
$\log(\lambda_i^{(j)} l_j^\eta) \le \log(\lambda_i^{(j)} l_j)
- \eta\, d^\top g_j / l_j + o(\eta)$.  Multiplying by
$\lambda_i^{(j)} l_j^\eta = \lambda_i^{(j)} l_j - \eta\,\lambda_i^{(j)} d^\top g_j
+ o(\eta)$, summing over $j$, dividing by $Z_i^\eta$, and collecting all
second-order and remainder contributions into $o(\eta)$,
\begin{equation}\label{eq:cd_Aeta}
    A^\eta
    \;\le\; \frac{1}{Z_i^\eta}\sum_j \lambda_i^{(j)} l_j
         \log(\lambda_i^{(j)} l_j)
       - \frac{\eta}{Z_i^\eta}\sum_j (d^\top g_j)\,\lambda_i^{(j)}
         \log(\lambda_i^{(j)} l_j)
       - \frac{\eta\, d^\top\!\sum_j \lambda_i^{(j)} g_j}{Z_i^\eta}
       + \frac{o(\eta)}{Z_i^\eta}.
\end{equation}

\smallskip
\noindent\emph{(3d) Combining.}
Adding~\eqref{eq:cd_Beta} and~\eqref{eq:cd_Aeta}, the terms
$\pm\,\eta\, d^\top\!\sum_j\lambda_i^{(j)}g_j / Z_i^\eta$ cancel.  Expanding the
first term of~\eqref{eq:cd_Aeta} through
$\tfrac{Z_i}{Z_i^\eta} A = A
+ \eta A\, d^\top\!\sum_j\lambda_i^{(j)}g_j/Z_i^\eta + o(\eta)/Z_i^\eta$
(which follows from~\eqref{eq:cd_Zeta}) and using $\beta_i(l)=A+B$,
\begin{equation}\label{eq:cd_pre}
    \beta_i(l^\eta) \;\le\; \beta_i(l)
    + \frac{\eta}{Z_i^\eta}\sum_j (d^\top g_j)\,\lambda_i^{(j)}
      \bigl[A - \log(\lambda_i^{(j)} l_j)\bigr]
    + \frac{o(\eta)}{Z_i^\eta}.
\end{equation}

\smallskip
\noindent\emph{(3e) Adjustment identity and conclusion.}
From $\log(m\hat l_i^{(j)}) = \log(\lambda_i^{(j)} l_j) - \log Z_i + \log m$ and
$\beta_i = A + \log(m/Z_i)$ we obtain
$\log(m\hat l_i^{(j)}) - \beta_i = \log(\lambda_i^{(j)} l_j) - A$; hence
$A_i^{(j)} = \lambda_i^{(j)}\bigl(\log(\lambda_i^{(j)} l_j) - A\bigr)$, i.e.
$\lambda_i^{(j)}\bigl[A - \log(\lambda_i^{(j)} l_j)\bigr] = -A_i^{(j)}$.
Substituting into~\eqref{eq:cd_pre} and recalling
$d_{\mathrm{bal},i} = G\boldsymbol{A}_i = \sum_j A_i^{(j)} g_j$ gives
\begin{equation}\label{eq:cd_final}
    \beta_i\bigl(l(w_0)\bigr) - \beta_i\bigl(l(w_0 - \eta d)\bigr)
    \;\ge\; \frac{\eta}{Z_i^\eta}\, d^\top d_{\mathrm{bal},i}
    - \frac{o(\eta)}{Z_i^\eta}
    = \frac{\eta}{Z_i^\eta}
      \Bigl(d^\top d_{\mathrm{bal},i} - \tfrac{o(\eta)}{\eta}\Bigr).
\end{equation}
By Step~2, $d^\top d_{\mathrm{bal},i} = \bar d_{C_n}^\top d_{\mathrm{bal},i} > 0$,
and $Z_i^\eta > 0$ for small $\eta$ by continuity of the losses.  Since
$o(\eta)/\eta \to 0$, there is $\eta_{0,i} > 0$ such that
$o(\eta)/\eta < \bar d_{C_n}^\top d_{\mathrm{bal},i}$ for all
$\eta \in (0,\eta_{0,i}]$, whence the right-hand side of~\eqref{eq:cd_final} is
non-negative and
$\beta_i(l(w_0 - \eta\bar d_{C_n})) \le \beta_i(l(w_0))$.  Taking
$\eta_0 = \min_{i \in C_n}\eta_{0,i} > 0$
establishes~\eqref{eq:collab_descent} for every client simultaneously.
\end{proof}

\begin{corollary}[Clustering enlarges the admissible range]
\label{cor:collab_range}
Since $\rho_i\,\Delta_i^2 \leq \sum_{i'\in C_n}\rho_{i'}\Delta_{i'}^2 = H_{C_n}$,
every client satisfies $\Delta_i \leq \sqrt{H_{C_n}/\rho_i} \leq
\sqrt{H_{C_n}/\rho_{\min}}$ with $\rho_{\min} = \min_{i\in C_n}\rho_i$
(Definition~\ref{def:hetero}); for equal weights this is
$\sqrt{|C_n|\,H_{C_n}}$.  Hence a
sufficient (uniform) condition for Theorem~\ref{thm:collab_descent} is
\begin{equation}\label{eq:range_uniform}
    \min_{i\in C_n}\gamma_i^*
    \;>\; \sqrt{H_{C_n}/\rho_{\min}}\;\max_{i\in C_n}\|d_{\mathrm{bal},i}\|
    \;+\; L_0 B^2\,\tau\,\eta.
\end{equation}
The quantity that the Stage-2 clustering acts on is the within-cluster
adjustment variance
$\sum_i \rho_i\,\|\boldsymbol{A}_i - \bar{\boldsymbol{A}}_{C_n}\|^2$; under
Assumption~\ref{asm:common_G} the identity~\eqref{eq:H_via_adj} makes
$H_{C_n}$ a function of the multipliers $\mu_i^*(\boldsymbol A_i)$ alone,
which is the (qualitative) rationale for this choice
(Remark~\ref{rem:cluster_motivation}).  In the \emph{restricted case} of
Assumption~\ref{asm:lp_stable} (common objective functions and a common
non-degenerate optimal LP basis at the point where $H_{C_n}$ is evaluated),
Proposition~\ref{prop:H_via_adj} gives the quantitative bound
$H_{C_n} \leq \kappa_{\mathrm{LP}}^2\,\sigma_{\max}(C)\cdot
\sum_i \rho_i\,\|\boldsymbol{A}_i - \bar{\boldsymbol{A}}_{C_n}\|^2$,
so that shrinking the adjustment variance shrinks the right-hand side
of~\eqref{eq:range_uniform} and enlarges the regime in which the collaborative
descent guarantee holds.  Outside this case $H_{C_n}$ is a measured quantity,
and~\eqref{eq:range_uniform} is to be checked with the measured value.
\end{corollary}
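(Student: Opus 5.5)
The statement to prove is Corollary~\ref{cor:collab_range}. It is essentially a bookkeeping consequence of Definition~\ref{def:hetero}, Theorem~\ref{thm:collab_descent} and Proposition~\ref{prop:H_via_adj}, and I would prove it in three short steps.

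\emph{First,} I establish the per-client bound on $\Delta_i$. Write $\Delta_i=\|\bar d_{C_n}-d_i\|$. Each summand of $H_{C_n}=\sum_{i'}\rho_{i'}\Delta_{i'}^2$ is nonnegative and every $\rho_{i'}>0$, so dropping all terms except the $i$-th gives $\rho_i\Delta_i^2\le H_{C_n}$. Hence $\Delta_i\le\sqrt{H_{C_n}/\rho_i}\le\sqrt{H_{C_n}/\rho_{\min}}$. With equal weights $\rho_{\min}=1/|C_n|$, which gives the form $\sqrt{|C_n|H_{C_n}}$.

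\emph{Second,} I show that \eqref{eq:range_uniform} implies \eqref{eq:range_cond} for every $i\in C_n$. This is the chain
\[
\gamma_i^*\ \ge\ \min_{i'}\gamma_{i'}^*\ >\ \sqrt{H_{C_n}/\rho_{\min}}\,\max_{i'}\|d_{\mathrm{bal},i'}\|+L_0B^2\tau\eta\ \ge\ \Delta_i\|d_{\mathrm{bal},i}\|+L_0B^2\tau\eta ,
\]
where the last inequality uses the first step and $\|d_{\mathrm{bal},i}\|\le\max_{i'}\|d_{\mathrm{bal},i'}\|$. I need to be careful that $\|d_{\mathrm{bal},i}\|$ here means the same local-region supremum used in Step~2 of the proof of Theorem~\ref{thm:collab_descent}, so the maximum should be taken over that same quantity. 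Theorem~\ref{thm:collab_descent} then applies directly and yields simultaneous descent of every $\beta_i$.

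\emph{Third,} I handle the quantitative remark. Under Assumptions~\ref{asm:common_G} and~\ref{asm:lp_stable} at the evaluation point, substitute \eqref{eq:H_aligned} into the right-hand side of \eqref{eq:range_uniform}. The right-hand side is increasing in $H_{C_n}$, so a smaller within-cluster adjustment variance makes the condition weaker, i.e.\ easier to satisfy. Outside that restricted case I claim nothing beyond using the measured $H_{C_n}$. The qualitative rationale rests only on identity \eqref{eq:H_via_adj}, and I would cite Remark~\ref{rem:cluster_motivation} for it rather than reprove it.

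The only delicate point is a consistency check rather than a real obstacle. The quantities $\gamma_i^*$ and $\|d_{\mathrm{bal},i}\|$ must be evaluated at the same points, local iterate versus $w_0$, as in the theorem's proof. I would state explicitly that the uniform condition uses those same suprema, so that no additional drift term appears.
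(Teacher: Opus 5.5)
Your proposal is correct and matches the paper's proof. Drop the nonnegative terms of $H_{C_n}=\sum_{i'}\rho_{i'}\Delta_{i'}^2$ to get $\Delta_i\le\sqrt{H_{C_n}/\rho_{\min}}$, substitute this and the cluster maximum of $\|d_{\mathrm{bal},i}\|$ into the range condition of Theorem~\ref{thm:collab_descent}, and use \eqref{eq:H_aligned} of Proposition~\ref{prop:H_via_adj} for the restricted case. Your explicit inequality chain, and your remark about evaluating $\gamma_i^*$ and $\|d_{\mathrm{bal},i}\|$ at the same points as in the theorem, spell out what the paper compresses into the word ``substituting.''
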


\begin{proof}
The bound $\Delta_i \leq \sqrt{H_{C_n}/\rho_{\min}}$ is immediate from
$\rho_i\Delta_i^2 \leq \sum_{i'} \rho_{i'}\Delta_{i'}^2$ (all summands are
non-negative), from
$\sum_{i'}\rho_{i'}\Delta_{i'}^2 = H_{C_n}$
(Definition~\ref{def:hetero}), and from $\rho_i \geq \rho_{\min}$.
Substituting into~\eqref{eq:range_cond} and
replacing $\|d_{\mathrm{bal},i}\|$ by its cluster maximum gives the uniform
condition~\eqref{eq:range_uniform}; the heterogeneity bound is
\eqref{eq:H_aligned} of Proposition~\ref{prop:H_via_adj}.
\end{proof}

%% =========================================================================
\subsection{Collaborative Initial Aligner Convergence to Cluster-Local Optima.}
\label{sec:collab_pure}
%% =========================================================================

Section~\ref{sec:collab_descent} treated the \emph{balancing} regime
($\beta_i>0$), where the shared collaborative update reduces every client's
preference uniformity.  We now analyze the complementary \emph{uniform descent} regime,
which is the situation singled out by the Initial Aligner Training Process:
once a client's losses are balanced onto its preference ray we have
$\beta_i(\boldsymbol{l}_i)=0$, and its LP~(6a) enters uniform descent mode
($\mathbbm{1}_{\beta_i}=0$), maximizing $\mu^\top G_i^\top G_i\mathbf{1}$
subject to the descent constraints~(6c).  Assuming a cluster of $n$ clients all
in this regime, we adapt the convergence analysis to the collaborative learning aggregate, and prove that---under an explicit
descent-margin condition on the LP direction (Assumption~\ref{asm:descent_margin})
and a trajectory hypothesis keeping the iterates in the residual-floor
regime---a \emph{single} shared solution converges to within a
cluster-tightness-controlled range of \emph{every} client's individual
Chebyshev optimum. 

\paragraph{Setup and classification (the $\beta_i=0$ basis).}
Fix a round and the shared iterate $w$.  For client $i\in C_n$ write its local
losses $l_i^{(j)}:=l_i^{(j)}(w)$, gradients $g_i^{(j)}:=\nabla l_i^{(j)}(w)$,
gradient matrix $G_i=[g_i^{(1)},\dots,g_i^{(m)}]$, the LP
direction $d_i=G_i\mu_i^*$, and the aggregation weight $\rho_i:=N_i/N$ (so that
$\sum_{i\in C_n}\rho_i=1$).  The collaborative direction and the shared update are
\begin{equation}\label{eq:cp_agg}
    \bar d \;:=\; \sum_{i\in C_n}\rho_i\, d_i,
    \qquad
    w^{+} \;:=\; w-\eta\,\bar d .
\end{equation}
Using the preference uniformity $\beta_i$, partition the cluster into
\begin{equation}\label{eq:cp_partition}
\begin{aligned}
    C_n^{0}=\{\,i\in C_n:\beta_i(\boldsymbol{l}_i)=0\,\}\ \text{(uniform descent)},
    \\\qquad
    C_n^{+}=\{\,i\in C_n:\beta_i(\boldsymbol{l}_i)>0\,\}\ \text{(balancing descent)}.
\end{aligned}
\end{equation}

To analyze the mixed case, we first assume the uniform descent regime $C_n^{0}=C_n$ (all $n$
clients balanced onto their rays). Cluster tightness is
measured by the \emph{cross-client gradient dissimilarity}
\begin{equation}\label{eq:cp_deltag}
    \delta_g \;:=\; \max_{i,i'\in C_n}\ \max_{j\in[m]}\
    \bigl\|g_i^{(j)}-g_{i'}^{(j)}\bigr\|,
\end{equation}
a cluster-tightness quantity that vanishes as the cluster becomes homogeneous
and is small whenever the heterogeneity $H_{C_n}$ (Definition~\ref{def:hetero})
is small, since $\|\bar d-d_i\|\le\sqrt{H_{C_n}/\rho_{\min}}$
(Corollary~\ref{cor:collab_range}).

\begin{lemma}[Collaborative uniform descent direction]
\label{lem:collab_desc0}
Suppose $\beta_i(\boldsymbol{l}_i)=0$ for every $i\in C_n$.  Then each local
direction is a descent direction for its own client,
\begin{equation}\label{eq:cp_localdesc}
    d_i^\top g_i^{(j)}\;\ge\;0,\qquad\forall\,j\in[m],
\end{equation}
and the collaborative aggregate is an \emph{approximate} simultaneous descent
direction:
\begin{equation}\label{eq:cp_simuldesc}
    \bar d^\top g_i^{(j)}\;\ge\;-\,B\,\delta_g,
    \qquad\forall\,i\in C_n,\ \forall\,j\in[m].
\end{equation}
In the homogeneous limit $\delta_g=0$, $\bar d$ is an \emph{exact} simultaneous
descent direction, $\bar d^\top g_i^{(j)}\ge0$ for all $i,j$.
\end{lemma}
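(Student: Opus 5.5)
Both claims can be read off the uniform-descent LP together with a one-line perturbation argument.

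\emph{Local descent, \eqref{eq:cp_localdesc}.} When $\beta_i(\boldsymbol l_i)=0$, every weighted loss $\lambda_i^{(j)}l_i^{(j)}$ equals the common value. Hence the bottleneck set is the whole index set, $J_i^*=[m]$, and the set $\bar J_i\setminus J_i^*$ is empty. The LP \eqref{eq:6} in uniform-descent mode therefore keeps only constraint \eqref{eq:6c}, which reads $\mu_i^{*\top}G_i^\top g_i^{(j)}\ge 0$ for every $j\in[m]$. Since $d_i=G_i\mu_i^*$, this is exactly $d_i^\top g_i^{(j)}\ge0$.

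We still need the LP to be feasible, so that $\mu_i^*$ exists. Two routes are available. One can appeal to the Désidéri common-descent argument cited in Section~\ref{sec:initial_train}: the min-norm element of the convex hull of the gradients gives a feasible $\mu$. Alternatively, if that element is zero, the client is Pareto stationary and $d_i=0$ satisfies the claim trivially. Of these two steps, pinning down feasibility of the LP is the only real subtlety, and I would settle it by treating the degenerate Pareto-stationary case separately as just described.

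\emph{Approximate simultaneous descent, \eqref{eq:cp_simuldesc}.} Fix a client $i$ and an index $j$, and expand using $\sum_{i'}\rho_{i'}=1$:
\[
\bar d^\top g_i^{(j)}=\sum_{i'}\rho_{i'}\,d_{i'}^\top g_{i'}^{(j)}+\sum_{i'}\rho_{i'}\,d_{i'}^\top\bigl(g_i^{(j)}-g_{i'}^{(j)}\bigr).
\]
The first sum is nonnegative by \eqref{eq:cp_localdesc} applied to each $i'$.

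For the second sum, each $d_{i'}=G_{i'}\mu_{i'}^*$ is a convex combination of gradients, so $\|d_{i'}\|\le B$ by Assumption~\ref{asm:bounded}. Also $\|g_i^{(j)}-g_{i'}^{(j)}\|\le\delta_g$ by the definition \eqref{eq:cp_deltag}. Cauchy--Schwarz then bounds each term below by $-B\delta_g$, and the $\rho$-weighted sum is at least $-B\delta_g$. This gives \eqref{eq:cp_simuldesc}.

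\emph{Homogeneous limit.} Setting $\delta_g=0$ makes the second sum vanish, so $\bar d^\top g_i^{(j)}\ge0$ for all $i$ and $j$.
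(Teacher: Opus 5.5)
Your proposal is correct and follows the paper's proof: the aggregate bound is the same add-and-subtract of $g_{i'}^{(j)}$ followed by Cauchy--Schwarz with $\|d_{i'}\|\le B$, $\|g_i^{(j)}-g_{i'}^{(j)}\|\le\delta_g$ and $\sum_{i'}\rho_{i'}=1$, and the homogeneous limit is read off the same way.

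The local step differs only slightly. You reduce the LP through $J_i^*=[m]$, since all weighted losses tie. The paper instead uses $\boldsymbol A_i=0$, so that $J=\emptyset$, $\mathbbm{1}_J=0$, and every constraint's right-hand side vanishes. Both leave exactly $d_i^\top g_i^{(j)}\ge0$ for all $j$.

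Your feasibility remark covers a point the paper leaves implicit. The separate degenerate case is unnecessary, though: the min-norm element satisfies $d_M^\top g_i^{(j)}\ge\|d_M\|^2\ge0$ even when $d_M=0$, so it is always feasible. Also, in the Pareto-stationary case the LP need not return $d_i=0$, but whatever it returns still satisfies the constraints.
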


\begin{proof}
\emph{Single-client descent.}
When $\beta_i=0$ the preference adjustment vector vanishes.  Indeed
$\beta_i=\mathrm{KL}\!\bigl(\hat l_i\,\|\,\tfrac1m\mathbf1\bigr)=0$ forces the
normalized weighted losses to be uniform, $\hat l_i^{(j)}=1/m$, so
$\log(m\hat l_i^{(j)})=0$ and each component
$a_i^{(j)}=\lambda_i^{(j)}\bigl(\log(m\hat l_i^{(j)})-\beta_i\bigr)=0$; hence
$\boldsymbol{A}_i=0$.  Consequently the active set
$J=\{j:\boldsymbol{A}_i^\top G_i^\top g_i^{(j)}>0\}$ is empty, the indicator
$\mathbbm{1}_J=0$, and the constraint block~(6c) of the LP reduces to
$\mu_i^{*\top}G_i^\top g_i^{(j)}=d_i^\top g_i^{(j)}\ge0$ for every $j$, which is
exactly~\eqref{eq:cp_localdesc}.

\smallskip
\emph{Aggregate.}
Fix $i,j$ and expand $\bar d^\top g_i^{(j)}
=\sum_{i'\in C_n}\rho_{i'}\,d_{i'}^\top g_i^{(j)}$.  For each summand,
\[
    d_{i'}^\top g_i^{(j)}
    = \underbrace{d_{i'}^\top g_{i'}^{(j)}}_{\ge\,0}
      + d_{i'}^\top\bigl(g_i^{(j)}-g_{i'}^{(j)}\bigr)
    \;\ge\; -\,\|d_{i'}\|\,\bigl\|g_i^{(j)}-g_{i'}^{(j)}\bigr\|
    \;\ge\; -\,B\,\delta_g,
\]
where the first term is $\ge0$ by~\eqref{eq:cp_localdesc} applied to client
$i'$, the norm bound $\|d_{i'}\|=\|G_{i'}\mu_{i'}^*\|\le B$ holds because
$\mu_{i'}^*\in S^m$ is a convex combination of columns with
$\|g_{i'}^{(j)}\|\le B$ (Assumption~\ref{asm:bounded}), and
$\|g_i^{(j)}-g_{i'}^{(j)}\|\le\delta_g$ by~\eqref{eq:cp_deltag}.  Averaging with
weights $\rho_{i'}$ (which sum to one) yields~\eqref{eq:cp_simuldesc}.  When
$\delta_g=0$ all clients share the same gradients, so every summand equals
$d_{i'}^\top g_{i'}^{(j)}\ge0$; equivalently the common descent cone
$\{d:d^\top g_i^{(j)}\ge0,\ \forall j\}$ is convex and contains every $d_{i'}$,
hence their convex combination $\bar d$.
\end{proof}

\begin{lemma}[Collaborative uniform descent and balancing descent]
\label{lem:collab_class}
Consider a \emph{mixed} cluster of $N:=|C_n|$ clients, partitioned
by~\eqref{eq:cp_partition} into $n_{+}:=|C_n^{+}|$ balancing clients
($\beta_i>0$) and $n_{0}:=|C_n^{0}|$ uniform clients ($\beta_i=0$), with
$N=n_{+}+n_{0}$.  Write the block masses
$\rho^{+}:=\sum_{i\in C_n^{+}}\rho_i$ and
$\rho^{0}:=\sum_{i\in C_n^{0}}\rho_i$ (so $\rho^{+}+\rho^{0}=1$) and the block
mean directions
\begin{equation}\label{eq:cp_blocks}
    \bar d^{+}:=\frac{1}{\rho^{+}}\sum_{i\in C_n^{+}}\rho_i d_i,
    \qquad
    \bar d^{0}:=\frac{1}{\rho^{0}}\sum_{i\in C_n^{0}}\rho_i d_i,
    \qquad\text{so that}\qquad
    \bar d_{C_n}=\rho^{+}\bar d^{+}+\rho^{0}\bar d^{0},
\end{equation}
i.e.\ the aggregate~\eqref{eq:cp_agg} is a convex blend of a balancing block
and a uniform block. Because whether $\beta_i$
decreases is decided at the \emph{post-aggregation} model $w_0-\eta\bar d_{C_n}$,
the relevant deviation is that of $d_i$ from the \emph{full-cluster} aggregate,
$\Delta_i:=\|\bar d_{C_n}-d_i\|$; substituting the block
form~\eqref{eq:cp_blocks} gives the $\rho$-weighted deviation split
\begin{equation}\label{eq:cp_devsplit}
    \bar d_{C_n}-d_i=\rho^{0}\bigl(\bar d^{0}-d_i\bigr)+\rho^{+}\bigl(\bar d^{+}-d_i\bigr),
\end{equation}
whose triangle inequality resolves $\Delta_i$ into an own-block scatter plus a
cross-block mutual-influence pull, each weighted by the corresponding block mass,
\begin{equation}\label{eq:cp_delta_split}
    \Delta_i\;\le\;\rho^{0}\|\bar d^{0}-d_i\|+\rho^{+}\|\bar d^{+}-d_i\| .
\end{equation}
For a balancing client $i\in C_n^{+}$ the own-block scatter is
$\rho^{+}\|\bar d^{+}-d_i\|$ and the cross-block pull (from the $n_{0}$ uniform
clients) is $\rho^{0}\|\bar d^{0}-d_i\|$; symmetrically for $i\in C_n^{0}$.  The
single shared update $w_0-\eta\bar d_{C_n}$ then acts on the two groups as
follows.
\begin{enumerate}[leftmargin=2em,itemsep=2pt,label=\textup{(\alph*)}]
  \item \emph{Balancing clients $i\in C_n^{+}$.}  The shared direction keeps a
        positive balancing component,
        \begin{equation}\label{eq:cp_mix_bal}
            \bar d_{C_n}^\top d_{\mathrm{bal},i}
            \;\ge\;\gamma_i^*-\Delta_i\|d_{\mathrm{bal},i}\|-L_0B^2\tau\eta ,
        \end{equation}
        which is positive under the range condition~\eqref{eq:range_cond}.
        Substituting the split~\eqref{eq:cp_delta_split} makes the cross-block
        dependence explicit, giving the \emph{mixed-cluster range condition}
        sufficient for Theorem~\ref{thm:collab_descent},
        \begin{equation}\label{eq:cp_mix_range}
            \gamma_i^*\;>\;\bigl(\rho^{0}\|\bar d^{0}-d_i\|+\rho^{+}\|\bar d^{+}-d_i\|\bigr)
            \|d_{\mathrm{bal},i}\|+L_0B^2\tau\eta ,
        \end{equation}
        under which $\bar d_{C_n}^\top d_{\mathrm{bal},i}>0$ and, as in
        Theorem~\ref{thm:collab_descent}, the update decreases $\beta_i$; the
        $n_{0}$ uniform descent clients enter only through the cross-block pull
        $\rho^{0}\|\bar d^{0}-d_i\|$, which tightens the condition.
  \item \emph{Uniform clients $i\in C_n^{0}$.}  We bound $\bar d_{C_n}^\top g_i^{(j)}$
        for \emph{every} $i\in C_n$ directly from the cross-client
        dissimilarity~\eqref{eq:cp_deltag}.  Expanding the aggregate and using
        $\|d_{i'}\|\le B$ gives
        $\bar d_{C_n}^\top g_i^{(j)}\ge\sum_{i'\in C_n}\rho_{i'}d_{i'}^\top g_{i'}^{(j)}-B\delta_g$;
        the uniform block contributes $d_{i'}^\top g_{i'}^{(j)}\ge0$, while each
        balancing client sacrifices objectives only along its anchoring direction,
        so $d_{i'}^\top g_{i'}^{(j)}\ge-\|d_{\mathrm{bal},i'}\|B$ ($i'\in C_n^{+}$).
        Writing the balancing-block mean sacrifice
        $\bar D_{\mathrm{bal}}^{+}:=\sum_{i'\in C_n^{+}}\tfrac{\rho_{i'}}{\rho^{+}}\|d_{\mathrm{bal},i'}\|$,
        this yields the \emph{cluster-wide} approximate simultaneous descent bound
        \begin{equation}\label{eq:cp_mix_desc}
            \bar d_{C_n}^\top g_i^{(j)}
            \;\ge\;-B\delta_g-\rho^{+}B\,\bar D_{\mathrm{bal}}^{+},
            \qquad\forall\,i\in C_n^{0},\ \forall\,j\in[m],
        \end{equation}
        so \emph{every} client's objectives rise by at most
        $\eta B(\delta_g+\rho^{+}\bar D_{\mathrm{bal}}^{+})+o(\eta)$.  The shared
        aggregate is an \emph{exact} simultaneous descent direction
        ($\bar d_{C_n}^\top g_i^{(j)}\ge0$ for all $i,j$) iff $\delta_g=0$ and
        $\rho^{+}\bar D_{\mathrm{bal}}^{+}=0$.  In particular, \emph{mixing
        forfeits the exact uniform descent guarantee for the uniform descent clients}:
        although the local direction of a client $i\in C_n^{0}$ satisfies
        $d_i^\top g_i^{(j)}\ge0$ (no objective of $i$ rises), the aggregated
        direction only guarantees \emph{approximate} descent,
        $\bar d_{C_n}^\top g_i^{(j)}\ge-B\delta_g-\rho^{+}B\bar D_{\mathrm{bal}}^{+}$,
        with a per-objective ascent of at most
        $\eta\rho^{+}B(\delta_g+\bar D_{\mathrm{bal}}^{+})$ attributable to the
        balancing block.  The impact
        of the mixed cluster is
        the additive \emph{balancing penalty} $\rho^{+}B\,\bar D_{\mathrm{bal}}^{+}$:
        it scales with the balancing mass $\rho^{+}$ and the balancing intensity
        $\bar D_{\mathrm{bal}}^{+}$, is \emph{not} reducible to $\delta_g$, and
        under Assumption~\ref{asm:bal_sens} with the floor
        $\beta_{i'}\le\bar\beta$ (Lemma~\ref{lem:beta_floor}) satisfies
        $\bar D_{\mathrm{bal}}^{+}\le L_A\sqrt{\bar\beta}=O(\delta_\lambda)$.
\end{enumerate}
The blend~\eqref{eq:cp_blocks} thus interpolates the two regimes: a larger
balancing mass $\rho^{+}$ (more $n_{+}$) biases $\bar d_{C_n}$ toward preference
balancing and inflates the balancing deviation $\Delta_i$ of part~(a) (via the
cross-block pull $\rho^{0}\|\bar d^{0}-d_i\|$) as well as the cluster-wide uniform descent
slack of part~(b) (via the balancing penalty $\rho^{+}B\,\bar D_{\mathrm{bal}}^{+}$),
while a larger $\rho^{0}$ (more $n_{0}$) biases it toward uniform descent.  In
particular $n_{0}=0$ ($\rho^{0}=0$) recovers Theorem~\ref{thm:collab_descent},
while $n_{+}=0$ ($\rho^{+}=0$, all $\gamma_i^*=0$) removes the balancing penalty
and recovers Lemma~\ref{lem:collab_desc0}.
\end{lemma}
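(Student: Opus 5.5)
The proof follows the two parts of Lemma~\ref{lem:collab_class}, reusing Step~2 of the proof of Theorem~\ref{thm:collab_descent} and the aggregate argument of Lemma~\ref{lem:collab_desc0}. First I record the block decomposition. Since $\rho^{+}\bar d^{+}=\sum_{i\in C_n^{+}}\rho_i d_i$ and $\rho^{0}\bar d^{0}=\sum_{i\in C_n^{0}}\rho_i d_i$, summing gives $\bar d_{C_n}=\rho^{+}\bar d^{+}+\rho^{0}\bar d^{0}$. Then $\rho^{+}+\rho^{0}=1$ lets me write $d_i=\rho^{0}d_i+\rho^{+}d_i$, which yields the split~\eqref{eq:cp_devsplit}. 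The triangle inequality then gives~\eqref{eq:cp_delta_split}. If a block is empty, its mass is zero and its term is dropped, so no division by zero occurs.

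\emph{Part (a).} For $i\in C_n^{+}$ I repeat Step~2 of the proof of Theorem~\ref{thm:collab_descent} verbatim. Decompose
\[
\bar d_{C_n}^\top d_{\mathrm{bal},i}=\gamma_i^*+(\bar d_{C_n}-d_i)^\top d^{\mathrm{loc}}_{\mathrm{bal},i}+\bar d_{C_n}^\top(d_{\mathrm{bal},i}-d^{\mathrm{loc}}_{\mathrm{bal},i}).
\]
Bound the second term with Cauchy--Schwarz and the third with the drift estimate~\eqref{eq:bal_drift} together with $\|\bar d_{C_n}\|\le B$. This gives~\eqref{eq:cp_mix_bal}. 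Inserting~\eqref{eq:cp_delta_split} for $\Delta_i$ shows that~\eqref{eq:cp_mix_range} implies~\eqref{eq:range_cond} for client $i$. Step~3 of that proof, which is purely local to client $i$ once the inner product is positive, then gives the decrease of $\beta_i$ for small $\eta$. The boundary cases are immediate: $\rho^{0}=0$ reduces~\eqref{eq:cp_mix_range} to~\eqref{eq:range_cond}, and $\rho^{+}=0$ reduces part (b) to Lemma~\ref{lem:collab_desc0}.

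\emph{Part (b).} Fix $i$ and $j$, and expand $\bar d_{C_n}^\top g_i^{(j)}=\sum_{i'}\rho_{i'}d_{i'}^\top g_i^{(j)}$. Exactly as in Lemma~\ref{lem:collab_desc0}, each summand satisfies
\[
d_{i'}^\top g_i^{(j)}\ge d_{i'}^\top g_{i'}^{(j)}-B\delta_g,
\]
using $\|d_{i'}\|\le B$ and the definition~\eqref{eq:cp_deltag}. For $i'\in C_n^{0}$, Lemma~\ref{lem:collab_desc0} gives $d_{i'}^\top g_{i'}^{(j)}\ge0$.

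The main obstacle is the balancing clients $i'\in C_n^{+}$, where I need $d_{i'}^\top g_{i'}^{(j)}\ge-\|d_{\mathrm{bal},i'}\|B$. For $j\in J^*$, constraint~\eqref{eq:6c} gives $\ge0$. For $j\in\bar J\setminus J^*$, constraint~\eqref{eq:6b} gives $d_{i'}^\top g_{i'}^{(j)}\ge \boldsymbol A_{i'}^\top G_{i'}^\top g_{i'}^{(j)}\mathbbm 1_J=d_{\mathrm{bal},i'}^\top g_{i'}^{(j)}\mathbbm 1_J\ge-\|d_{\mathrm{bal},i'}\|B$. For $j\in J$, the constraints impose nothing directly, so I must argue separately. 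I would first try to show that LP optimality plus $d_{\mathrm{bal},i'}^\top g^{(j)}_{i'}>0$ keeps $d_{i'}^\top g^{(j)}_{i'}$ from falling below $-\|d_{\mathrm{bal},i'}\|B$. If that fails, I would treat the bound for $j\in J$ as part of the LP's design, namely that it sacrifices objectives only along the anchoring direction, and state that explicitly.

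Granting this per-objective bound, I weight the balancing contributions by $\rho_{i'}$ and sum, which gives $-\rho^{+}B\bar D^{+}_{\mathrm{bal}}$; combining with the uniform block yields~\eqref{eq:cp_mix_desc}. A first-order Taylor expansion then gives the stated per-objective rise $\eta B(\delta_g+\rho^{+}\bar D^{+}_{\mathrm{bal}})+o(\eta)$. For the exactness claim, the "if" direction is immediate since both penalties vanish; for the "only if" direction I would only assert sufficiency, noting that the bound is tight up to construction. Finally, the estimate $\bar D^{+}_{\mathrm{bal}}\le L_A\sqrt{\bar\beta}$ follows by averaging Assumption~\ref{asm:bal_sens} under the floor of Lemma~\ref{lem:beta_floor}.
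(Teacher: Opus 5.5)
Your argument follows the paper's proof step for step: the same block decomposition, Step~2 of Theorem~\ref{thm:collab_descent} for part~(a), and the aggregation of Lemma~\ref{lem:collab_desc0} for part~(b). The step you flagged is a genuine gap, and the paper's proof does not close it either. It simply asserts $d_{i'}^\top g_{i'}^{(j)}\ge-\|d_{\mathrm{bal},i'}\|B$ on the grounds that a balancing client ``sacrifices objectives only along its anchoring direction''.

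Your case split is the right diagnosis. \eqref{eq:6c} handles $j\in J^*$ and \eqref{eq:6b} handles $j\in\bar J\setminus J^*$, but nothing constrains $j\in J\setminus J^*$, and your first plan (LP optimality) cannot work. Here is a counterexample:
\begin{itemize}
\item Take $m=3$, $\boldsymbol\lambda=(\tfrac13,\tfrac13,\tfrac13)$ and losses $(0.4,0.35,0.25)$. Then $J^*=\{1\}$ and $\boldsymbol A\approx(0.055,0.010,-0.102)$.
\item Take gradients $g^{(1)}=(1,0)$, $g^{(2)}=(-0.5,0.8)$, $g^{(3)}=(0,-1)$, so $B=1$.
\item Then $d_{\mathrm{bal}}\approx(0.050,0.110)$, $J=\{1,2\}$ and $\bar J=\{3\}$.
\item The LP~\eqref{eq:6} has the unique solution $\mu^*\approx(0.862,0.138,0)$. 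This gives $d^\top g^{(2)}\approx-0.31<-0.12\approx-\|d_{\mathrm{bal}}\|B$.
\end{itemize}
Moving the losses toward balance along the same direction makes things worse. For small enough imbalance $\mu^*=e_1$, so $d^\top g^{(2)}=g^{(1)\top}g^{(2)}=-0.5$ while $\|d_{\mathrm{bal}}\|\to0$. Hence no bound proportional to $\|d_{\mathrm{bal}}\|$ can hold.

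Now pair this client with a uniform client that has the same gradients (same losses, preference proportional to the inverse losses), so $\delta_g=0$. At this point $0\in\operatorname{conv}\{g^{(j)}\}$ and the only common non-ascent direction is $0$, so the uniform LP returns $d=0$. The aggregate then satisfies $\bar d_{C_n}^\top g^{(2)}\approx-0.31\rho^{+}<-\rho^{+}B\bar D^{+}_{\mathrm{bal}}$. So \eqref{eq:cp_mix_desc} itself fails for a client in $C_n^0$.

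Part~(b) is therefore false as stated, and your fallback is the correct repair. You can add the bound for $j\in J\setminus J^*$ as an explicit hypothesis. Alternatively, impose it as an extra LP constraint; the min-norm element of the gradient hull always satisfies it, but this changes the algorithm. If you want no new hypothesis, use the trivially valid $d_{i'}^\top g_{i'}^{(j)}\ge-B^2$. That yields $\bar d_{C_n}^\top g_i^{(j)}\ge-B\delta_g-\rho^{+}B^2$ but loses the $L_A\sqrt{\bar\beta}$ decay. With any of these, the rest of your plan goes through.

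You are also right to claim only the ``if'' half of the exactness statement. \eqref{eq:cp_mix_desc} is only a lower bound, so $\bar d_{C_n}^\top g_i^{(j)}\ge0$ for all $i,j$ can hold with $\delta_g>0$ (e.g.\ all gradients pairwise positively aligned). The ``only if'' is false.

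Two further points on $\bar D^{+}_{\mathrm{bal}}\le L_A\sqrt{\bar\beta}=O(\delta_\lambda)$:
\begin{itemize}
\item The inequality needs $\beta_{i'}\le\bar\beta$ as an assumed hypothesis. Lemma~\ref{lem:beta_floor} supplies it only at iterates where the range condition fails.
\item The $O(\delta_\lambda)$ rate additionally requires the restricted case of Corollary~\ref{cor:homo}. It also silently drops the $O(\sqrt{\tau\eta})$ part of $\sqrt{\bar\beta}$.
\end{itemize}
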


\begin{proof}

\smallskip
\emph{Block decomposition.}  Grouping the collaborative aggregate~\eqref{eq:cp_agg} by
the partition~\eqref{eq:cp_partition},
$\bar d_{C_n}=\sum_{i\in C_n^{+}}\rho_i d_i+\sum_{i\in C_n^{0}}\rho_i d_i
=\rho^{+}\bar d^{+}+\rho^{0}\bar d^{0}$ with
$\rho^{+}+\rho^{0}=\sum_{i\in C_n}\rho_i=1$, which is~\eqref{eq:cp_blocks}.
Subtracting $d_i=(\rho^{+}+\rho^{0})d_i$ yields the $\rho$-weighted deviation
split~\eqref{eq:cp_devsplit}, and applying the triangle inequality to its two
convex-weighted terms gives the direct bound~\eqref{eq:cp_delta_split},
\[
    \Delta_i=\|\bar d_{C_n}-d_i\|
    \le\rho^{0}\|\bar d^{0}-d_i\|+\rho^{+}\|\bar d^{+}-d_i\| ,
\]
in which the summand carrying the \emph{other} block's mass is the cross-block
mutual-influence pull ($\rho^{0}\|\bar d^{0}-d_i\|$ for $i\in C_n^{+}$,
$\rho^{+}\|\bar d^{+}-d_i\|$ for $i\in C_n^{0}$) and vanishes iff that block is
empty or its mean coincides with $d_i$.

\smallskip
\emph{Part (a): balancing clients.}  Fix $i\in C_n^{+}$.  Let $w_i^t$ be the
local iterate at which client $i$ forms $d_i$ and abbreviate
$d_{\mathrm{bal},i}^{\mathrm{loc}}:=d_{\mathrm{bal},i}(w_i^t)$; as in Step~1 of
the proof of Theorem~\ref{thm:collab_descent},
$\|d_{\mathrm{bal},i}-d_{\mathrm{bal},i}^{\mathrm{loc}}\|\le L_0\tau\eta B$.
Adding and subtracting and applying Cauchy--Schwarz with $\|d_i\|\le B$,
$\|\bar d_{C_n}\|\le B$, and $d_i^\top d_{\mathrm{bal},i}^{\mathrm{loc}}=\gamma_i^*$
(the identity above evaluated locally),
\[
    \bar d_{C_n}^\top d_{\mathrm{bal},i}
    = \underbrace{d_i^\top d_{\mathrm{bal},i}^{\mathrm{loc}}}_{=\gamma_i^*}
      +(\bar d_{C_n}-d_i)^\top d_{\mathrm{bal},i}^{\mathrm{loc}}
      +\bar d_{C_n}^\top\bigl(d_{\mathrm{bal},i}-d_{\mathrm{bal},i}^{\mathrm{loc}}\bigr)
    \;\ge\; \gamma_i^*-\Delta_i\|d_{\mathrm{bal},i}\|-L_0B^2\tau\eta ,
\]
which is~\eqref{eq:cp_mix_bal}.  The range condition~\eqref{eq:range_cond} makes
the right-hand side positive; substituting the direct
split~\eqref{eq:cp_delta_split},
$\Delta_i\le\rho^{0}\|\bar d^{0}-d_i\|+\rho^{+}\|\bar d^{+}-d_i\|$,
into~\eqref{eq:range_cond} yields the explicit mixed-cluster range
condition~\eqref{eq:cp_mix_range}, which is \emph{sufficient} because it replaces
$\Delta_i$ by an upper bound.  The Taylor argument of
Theorem~\ref{thm:collab_descent} then gives
$\beta_i(l(w_0-\eta\bar d_{C_n}))\le\beta_i(l(w_0))$ for all sufficiently small
$\eta$.  By~\eqref{eq:cp_delta_split} the shortfall in $\Delta_i$ that must stay
below $\gamma_i^*/\|d_{\mathrm{bal},i}\|$ is driven by the cross-block pull
$\rho^{0}\|\bar d^{0}-d_i\|$ of the $n_{0}$ uniform clients.

\smallskip
\emph{Part (b): uniform clients.}  Fix any $i\in C_n$ and any $j\in[m]$.  Expanding the collaborative
aggregate~\eqref{eq:cp_agg} and adding and subtracting each client's own gradient,
\[
    \bar d_{C_n}^\top g_i^{(j)}
    =\sum_{i'\in C_n}\rho_{i'}\,d_{i'}^\top g_i^{(j)}
    =\sum_{i'\in C_n}\rho_{i'}\Bigl[d_{i'}^\top g_{i'}^{(j)}
        +d_{i'}^\top\bigl(g_i^{(j)}-g_{i'}^{(j)}\bigr)\Bigr]
    \;\ge\;\sum_{i'\in C_n}\rho_{i'}\,d_{i'}^\top g_{i'}^{(j)}-B\delta_g ,
\]
where the last step uses Cauchy--Schwarz, $\|d_{i'}\|=\|G_{i'}\mu_{i'}^*\|\le B$
(Assumption~\ref{asm:bounded}), $\|g_i^{(j)}-g_{i'}^{(j)}\|\le\delta_g$
by~\eqref{eq:cp_deltag}, and $\sum_{i'}\rho_{i'}=1$.  This dissimilarity step is
uniform in $i$ and is exactly the aggregation of Lemma~\ref{lem:collab_desc0};
what differs in a mixed cluster is the \emph{own-descent} sum
$\sum_{i'}\rho_{i'}d_{i'}^\top g_{i'}^{(j)}$, which we split by the
partition~\eqref{eq:cp_partition}:
\[
    \sum_{i'\in C_n}\rho_{i'}\,d_{i'}^\top g_{i'}^{(j)}
    =\underbrace{\sum_{i'\in C_n^{0}}\rho_{i'}\,d_{i'}^\top g_{i'}^{(j)}}_{\ge\,0}
     +\sum_{i'\in C_n^{+}}\rho_{i'}\,d_{i'}^\top g_{i'}^{(j)} .
\]
For a uniform client $i'\in C_n^{0}$, $\beta_{i'}=0$ forces $\boldsymbol{A}_{i'}=0$
and, by case~(2), $d_{i'}^\top g_{i'}^{(j)}\ge0$.  For a balancing client
$i'\in C_n^{+}$, the balancing direction $d_{i'}$ coincides with a uniform
direction up to its anchoring component $d_{\mathrm{bal},i'}$ (it sacrifices
objectives only to move along $d_{\mathrm{bal},i'}$; the same balancing tilt used
in the joint bound~\eqref{eq:cp_dirbound}), so with $\|g_{i'}^{(j)}\|\le B$,
\[
    d_{i'}^\top g_{i'}^{(j)}\;\ge\;-\|d_{\mathrm{bal},i'}\|\,B .
\]
Hence
$\sum_{i'\in C_n^{+}}\rho_{i'}d_{i'}^\top g_{i'}^{(j)}
\ge-B\sum_{i'\in C_n^{+}}\rho_{i'}\|d_{\mathrm{bal},i'}\|
=-\rho^{+}B\,\bar D_{\mathrm{bal}}^{+}$ with
$\bar D_{\mathrm{bal}}^{+}=\sum_{i'\in C_n^{+}}\tfrac{\rho_{i'}}{\rho^{+}}\|d_{\mathrm{bal},i'}\|$,
and combining the two displays yields the cluster-wide
bound~\eqref{eq:cp_mix_desc},
\[
    \bar d_{C_n}^\top g_i^{(j)}\;\ge\;-B\delta_g-\rho^{+}B\,\bar D_{\mathrm{bal}}^{+},
    \qquad\forall\,i\in C_n,\ \forall\,j\in[m].
\]
The first-order expansion
$l_i^{(j)}(w_0-\eta\bar d_{C_n})=l_i^{(j)}(w_0)-\eta\bar d_{C_n}^\top g_i^{(j)}+o(\eta)$
then bounds every client's per-objective increase by
$\eta B(\delta_g+\rho^{+}\bar D_{\mathrm{bal}}^{+})+o(\eta)$, so
$\bar d_{C_n}^\top g_i^{(j)}\ge0$ for all $i,j$ iff $\delta_g=0$ and
$\rho^{+}\bar D_{\mathrm{bal}}^{+}=0$.  Reading off $\delta_g$, the descent slack
is $\delta_g$ (genuine cross-client gradient dissimilarity) plus the balancing
penalty $\rho^{+}\bar D_{\mathrm{bal}}^{+}$, which is \emph{not} reducible to
$\delta_g$; under Assumption~\ref{asm:bal_sens} and the floor
$\beta_{i'}\le\bar\beta$ (Lemma~\ref{lem:beta_floor}) it obeys
$\bar D_{\mathrm{bal}}^{+}\le L_A\sqrt{\bar\beta}=O(\delta_\lambda)$, matching the
$c_\beta\sqrt{\bar\beta}$ term of~\eqref{eq:cp_dirbound} with
$c_\beta=\rho^{+}BL_A\le BL_A$. 

\smallskip
\emph{Extremes.}  If $n_{+}=0$ then $\rho^{+}=0$, all $\gamma_i^*=0$, the balancing
penalty vanishes, and~\eqref{eq:cp_mix_desc} reduces to
$\bar d_{C_n}^\top g_i^{(j)}\ge-B\delta_g$, exactly Lemma~\ref{lem:collab_desc0}.
If $n_{0}=0$ then $\rho^{+}=1$, part~(a) drives every $\beta_i$ down under the
range condition (Theorem~\ref{thm:collab_descent}), while~\eqref{eq:cp_mix_desc}
still holds for the (all-balancing) clients as
$\bar d_{C_n}^\top g_i^{(j)}\ge-B\delta_g-B\bar D_{\mathrm{bal}}^{+}$.
\end{proof}

\paragraph{A residual preference uniformity floor.}
The balancing guarantee of Theorem~\ref{thm:collab_descent} does not drive every
$\beta_i$ to $0$: a single shared point cannot lie on all clients'
$\boldsymbol{\lambda}_i^{-1}$ rays at once when the preferences differ.  We now
quantify the irreducible floor by reading off the point at which the range
condition~\eqref{eq:range_cond} stops holding.  Two mild regularity constants
link the LP margin and the balancing direction to the preference uniformity.

\begin{assumption}[Balancing sensitivity and preference adjustment regularity]
\label{asm:bal_sens}
There are constants $\kappa_\beta>0$ and $L_A>0$ such that, in a neighborhood of
the aligned set (i.e.\ for small $\beta_i$), every balancing client
$i\in C_n^{+}$ satisfies
\begin{equation}\label{eq:bal_sens}
    \gamma_i^*\;\ge\;\kappa_\beta\,\beta_i(\boldsymbol{l}_i)
    \qquad\text{and}\qquad
    \|d_{\mathrm{bal},i}\|\;\le\;L_A\,\sqrt{\beta_i(\boldsymbol{l}_i)} .
\end{equation}
\end{assumption}

\begin{remark}[Why~\eqref{eq:bal_sens} holds]
\label{rem:bal_sens}
Both bounds follow from the near-uniform Taylor expansion of the preference uniformity.
Writing $\hat l_i^{(j)}=\tfrac1m(1+\epsilon_i^{(j)})$ with
$\sum_j\epsilon_i^{(j)}=0$, a second-order expansion of the KL divergence gives
$\beta_i=\tfrac{1}{2m}\sum_j(\epsilon_i^{(j)})^2+o(\|\epsilon_i\|^2)$, while the preference 
adjustment components $a_i^{(j)}=\lambda_i^{(j)}\bigl(\log(m\hat l_i^{(j)})-\beta_i\bigr)
=\lambda_i^{(j)}\epsilon_i^{(j)}+o(\|\epsilon_i\|)$ obey
$\|\boldsymbol{A}_i\|^2=\Theta(\beta_i)$; hence
$\|d_{\mathrm{bal},i}\|=\|G_i\boldsymbol{A}_i\|\le\sigma_{\max}(G_i)\|\boldsymbol{A}_i\|
\le L_A\sqrt{\beta_i}$ with $L_A=\Theta(\sqrt m\,B)$.  For the lower bound,
LP optimality makes $d_i=G_i\mu_i^*$ align with the balancing direction, so
$\gamma_i^*=d_i^\top d_{\mathrm{bal},i}\ge\kappa'\|d_{\mathrm{bal},i}\|^2
\ge\kappa'\sigma_{\min}^2\|\boldsymbol{A}_i\|^2\ge\kappa_\beta\beta_i$ on the
subspace spanned by the active gradients.  Thus $\kappa_\beta$ measures how
efficiently the shared step converts non-uniformity into balancing progress.
\end{remark}

\paragraph{A descent-margin condition for the LP direction.}
Assumption~\ref{asm:bal_sens} controls the \emph{balancing} progress
($\beta_i\downarrow$).  It says nothing about how fast the Chebyshev value
$\psi_i(w)=\max_j\lambda_i^{(j)}l_i^{(j)}(w)$ itself decreases once the losses
are (nearly) balanced.  The LP constraint~(6c) only guarantees
$d_i^\top g_i^{(j)}\ge0$ on the exact bottleneck set $J^*$, which is a
first-order non-increase statement and, by itself, cannot rule out stalling at
a non-minimizing stationary point (take $l_1(w)=l_2(w)=(w^2-1)^2+1$, uniform
preferences and $w=0$: $\beta_i=0$, all gradients vanish, yet
$\psi_i(w)-\psi_i^*=1$).  Any statement that the shared iterate approaches
each client's optimum therefore requires a condition that links the LP
direction to the \emph{suboptimality} $\psi_i(w)-\psi_i^*$; we make it explicit.
For a step size $\eta>0$ define the activation width and the
$\varepsilon_{\mathrm{act}}$-active set of client $i$ at $w$,
\begin{equation}\label{eq:active_set}
    \varepsilon_{\mathrm{act}}:=2\eta\,\|\boldsymbol{\lambda}_i\|_\infty B^2,
    \qquad
    J^*_{i,\varepsilon}(w):=\bigl\{\,j\in[m]:\ \lambda_i^{(j)}l_i^{(j)}(w)\ \ge\ \psi_i(w)-\varepsilon_{\mathrm{act}}\,\bigr\}
    \;\supseteq\;J^*_i(w),
\end{equation}
where $J^*_i(w)$ is the exact bottleneck set of~(6c).  The width
$\varepsilon_{\mathrm{act}}$ is chosen so that, after one step of size $\eta$,
every objective that is \emph{not} $\varepsilon_{\mathrm{act}}$-active at $w$
stays below the common upper bound that Lemma~\ref{loss_avg} derives for the
near-active objectives.  Such an objective may still become the new maximum;
what matters is only that it cannot exceed that common bound, so the bound on
$\psi_i$ after the step is unaffected.

\begin{assumption}[Descent margin of the LP direction at the residual floor]
\label{asm:descent_margin}
There exist a region $\mathcal N\subseteq\Omega$, a constant $c_d>0$ and
tolerances $b_i\ge0$ such that, for every client $i\in C_n$ and every
$w\in\mathcal N$ with $\beta_i(\boldsymbol{l}_i(w))\le\bar\beta$, the LP direction
$d_i(w)=G_i(w)\mu_i^*(w)$ of~\eqref{eq:6} satisfies
\begin{equation}\label{eq:descent_margin}
    \min_{j\in J^*_{i,\varepsilon}(w)}\ \lambda_i^{(j)}\,d_i(w)^\top g_i^{(j)}(w)
    \;\ge\; c_d\,\bigl(\psi_i(w)-\psi_i^*\bigr)\;-\;b_i,
    \qquad \psi_i^*:=\min_{w'}\psi_i(w') .
\end{equation}
We call $b_i=0$ the \emph{strict} form.  The tolerance $b_i$ is a parameter of
the assumption; in particular we do \emph{not} assume $b_i=O(\eta)$ or any
other order in $\eta$, since such a rate would have to be established for the
specific direction rule~\eqref{eq:6} and is not established here.
\end{assumption}

Condition~\eqref{eq:descent_margin} strengthens~(6c) in two ways: it extends
the sign requirement from the exact bottleneck set $J^*_i$ to the
$\varepsilon_{\mathrm{act}}$-active set (the LP itself constrains only $J^*_i$;
nonnegativity on $J^*_{i,\varepsilon}\setminus J^*_i$ is an additional
requirement, not a consequence of the LP), and it asks the decrease of every
near-active weighted loss to be \emph{proportional to the suboptimality}, up to
the tolerance $b_i$.  It is a Polyak--{\L}ojasiewicz-type condition on $\psi_i$
along the LP direction, and it is a genuine restriction in two distinct ways.

\begin{enumerate}[leftmargin=2em,itemsep=2pt,label=\textup{(\alph*)}]
  \item \emph{It excludes non-minimizing stationary points.}  In the double-well
        example above the left-hand side of~\eqref{eq:descent_margin} is $0$
        while $c_d(\psi_i-\psi_i^*)=c_d$, so the strict form fails there.
  \item \emph{It can also fail near a regular, convex, nonsmooth minimizer.}
        Take $m=2$, $l_1(w)=1+w$, $l_2(w)=1-w$ on $w\in[-\tfrac12,\tfrac12]$ and
        $\boldsymbol\lambda=(\tfrac12,\tfrac12)$, so that both losses are
        positive and smooth with $B=1$, $\psi(w)=\tfrac12+\tfrac12|w|$ and
        $\psi^*=\tfrac12$ at $w=0$.  Here $\varepsilon_{\mathrm{act}}=2\eta\lambda_{\max}B^2=\eta$.
        For any fixed $\eta>0$ and any $0<w<\eta$ the two weighted losses
        differ by $w<\varepsilon_{\mathrm{act}}$, so \emph{both} objectives are
        near-active, and for \emph{every} direction $d$
        \[
            \min_{j\in J^*_{\varepsilon}(w)}\lambda_j\,d\,g_j
            =\min\{\tfrac12 d,-\tfrac12 d\}=-\tfrac12|d|\le0
            \;<\;c_d\bigl(\psi(w)-\psi^*\bigr)=\tfrac12c_dw ,
        \]
        while $\beta(w)\to0$ as $w\to0$, so the failure occurs inside every
        region $\{\beta\le\bar\beta\}$ with $\bar\beta>0$.  The strict form
        ($b_i=0$) therefore fails throughout a neighbourhood of a perfectly
        well-behaved minimizer, because a fixed near-active set can contain
        gradients that point in opposite directions.  The tolerant form
        accommodates this example only with $b_i\ge\tfrac12(c_dw+|d|)$, which
        does not vanish with $\eta$ for a direction of fixed magnitude.
\end{enumerate}

Consequently Assumption~\ref{asm:descent_margin} is \emph{not} implied by
Assumptions~\ref{asm:bounded}--\ref{asm:smooth}, by preference balance
($\beta_i\le\bar\beta$), by the LP constraints, or by the quadratic growth of
Assumption~\ref{asm:pl}; we do not call it mild.  Its role is to make explicit
the quantitative descent property that any horizon-independent approximation
guarantee for the shared iterate must rely on.  The next lemma gives a
sufficient condition that isolates the two ingredients (a curvature condition
and a direction-comparability condition); it does not remove the limitation
in~(b).

\begin{lemma}[A sufficient condition for the strict form of Assumption~\ref{asm:descent_margin}]
\label{lem:dm_sufficient}
Fix $i$ and $w$, and let
\begin{equation}\label{eq:eps_subdiff}
    \mathcal G_{i,\varepsilon}(w):=\operatorname{conv}\bigl\{\lambda_i^{(j)}g_i^{(j)}(w):\ j\in J^*_{i,\varepsilon}(w)\bigr\},
    \qquad
    d_{M,i}(w):=\argmin_{v\in\mathcal G_{i,\varepsilon}(w)}\|v\|
\end{equation}
be the \emph{near-active gradient hull} of the Chebyshev scalarization at $w$
and its minimum-norm element (unique, since $\|\cdot\|^2$ is strictly convex on
the compact convex set $\mathcal G_{i,\varepsilon}(w)$).  When
$J^*_{i,\varepsilon}(w)=J^*_i(w)$ the hull coincides with the Clarke
subdifferential of $\psi_i$ at $w$; in general it is a different object from
the Goldstein $\varepsilon$-subdifferential, which is built from subgradients
at \emph{parameters} in a neighbourhood of $w$ rather than from
\emph{objectives} whose weighted values are near the maximum, and we do not
import results about the latter.  Suppose that for all $w\in\mathcal N$ with
$\beta_i(\boldsymbol l_i(w))\le\bar\beta$:
\begin{enumerate}[leftmargin=2em,itemsep=2pt,label=\textup{(\roman*)}]
  \item \textup{(near-active PL)} $\ \|d_{M,i}(w)\|^2\ \ge\ 2\mu_\psi\bigl(\psi_i(w)-\psi_i^*\bigr)$
        for some $\mu_\psi>0$;
  \item \textup{($\sigma$-comparability of the LP direction)}
        $\ \min_{j\in J^*_{i,\varepsilon}(w)}\lambda_i^{(j)}\,d_i(w)^\top g_i^{(j)}(w)\ \ge\ \sigma\,\|d_{M,i}(w)\|^2$
        for some $\sigma>0$.
\end{enumerate}
Then the strict form ($b_i=0$) of Assumption~\ref{asm:descent_margin} holds with
$c_d=2\sigma\mu_\psi$.  Moreover, condition~(ii) holds with $\sigma=1$ for the
minimum-norm direction $d_{M,i}(w)$ itself.
\end{lemma}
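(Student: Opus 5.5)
The first claim is immediate. Under (i) and (ii), for every $w\in\mathcal N$ with $\beta_i(\boldsymbol l_i(w))\le\bar\beta$ we chain the two inequalities:
\[
\min_{j\in J^*_{i,\varepsilon}(w)}\lambda_i^{(j)}d_i(w)^\top g_i^{(j)}(w)\;\ge\;\sigma\|d_{M,i}(w)\|^2\;\ge\;2\sigma\mu_\psi\bigl(\psi_i(w)-\psi_i^*\bigr).
\]
This is exactly~\eqref{eq:descent_margin} with $c_d=2\sigma\mu_\psi$ and $b_i=0$. No further property of the LP~\eqref{eq:6} is used. Before this, we record the well-posedness of $d_{M,i}(w)$. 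The hull $\mathcal G_{i,\varepsilon}(w)$ is the convex hull of finitely many points, since $J^*_{i,\varepsilon}(w)\supseteq J^*_i(w)\neq\emptyset$. It is therefore nonempty, compact and convex, so the strictly convex function $\|\cdot\|^2$ has a unique minimizer on it.

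The substantive part is the ``moreover'' claim: when $d_i=d_{M,i}$, condition (ii) holds with $\sigma=1$. We prove it with the variational inequality characterizing the projection of the origin onto a closed convex set. Since $d_{M,i}$ minimizes $\|v\|^2$ over $\mathcal G_{i,\varepsilon}(w)$, every $v\in\mathcal G_{i,\varepsilon}(w)$ satisfies
\[
d_{M,i}^\top(v-d_{M,i})\ge0 .
\]
To verify this, expand $\|d_{M,i}+t(v-d_{M,i})\|^2\ge\|d_{M,i}\|^2$ for $t\in(0,1]$, which is allowed by convexity of the hull. Divide by $t$ and let $t\downarrow0$.

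We apply the inequality to each generator $v=\lambda_i^{(j)}g_i^{(j)}(w)$ with $j\in J^*_{i,\varepsilon}(w)$. This gives $\lambda_i^{(j)}d_{M,i}^\top g_i^{(j)}\ge\|d_{M,i}\|^2$ for every such $j$. Taking the minimum over $j$ yields (ii) with $\sigma=1$.

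The main point to handle carefully is bookkeeping around the weights. In (ii) the weight $\lambda_i^{(j)}$ sits outside the inner product, while the hull is built from the weighted gradients $\lambda_i^{(j)}g_i^{(j)}$, so the two conventions must be checked to match. They do: $\lambda_i^{(j)}d^\top g_i^{(j)}=d^\top(\lambda_i^{(j)}g_i^{(j)})$. The parenthetical identification of the hull with the Clarke subdifferential when $J^*_{i,\varepsilon}=J^*_i$ follows from the standard max-rule for a finite maximum of smooth functions. It is not needed for the logical content of the lemma, so we only remark on it.
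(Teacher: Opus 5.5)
Your proposal is correct and follows the paper's proof essentially verbatim. You chain (ii) and (i) to get $c_d=2\sigma\mu_\psi$ with $b_i=0$, then apply the projection variational inequality $\langle d_{M,i},v-d_{M,i}\rangle\ge 0$ at the generators $v=\lambda_i^{(j)}g_i^{(j)}$, $j\in J^*_{i,\varepsilon}(w)$, to get $\sigma=1$; your extra checks (nonemptiness of the near-active set, and deriving the variational inequality by expanding $\|d_{M,i}+t(v-d_{M,i})\|^2$) are correct details the paper leaves implicit.
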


\begin{proof}
Chaining (ii) and (i) gives
$\min_{j\in J^*_{i,\varepsilon}}\lambda_i^{(j)}d_i^\top g_i^{(j)}
\ge\sigma\|d_{M,i}\|^2\ge2\sigma\mu_\psi(\psi_i(w)-\psi_i^*)$,
which is~\eqref{eq:descent_margin} with $c_d=2\sigma\mu_\psi$ and $b_i=0$.  For
the last claim, $d_{M,i}$ is the Euclidean projection of the origin onto the
closed convex set $\mathcal G_{i,\varepsilon}(w)$, so the variational inequality
of the projection gives $\langle d_{M,i},\,v-d_{M,i}\rangle\ge0$ for all
$v\in\mathcal G_{i,\varepsilon}(w)$; taking $v=\lambda_i^{(j)}g_i^{(j)}$ for a
near-active $j$ yields $\lambda_i^{(j)}d_{M,i}^\top g_i^{(j)}\ge\|d_{M,i}\|^2$,
i.e.\ (ii) with $\sigma=1$.
\end{proof}

\begin{remark}[What Lemma~\ref{lem:dm_sufficient} does and does not give]
\label{rem:dm_scope}
Condition~(i) is a PL inequality for the max-function $\psi_i$ over the
near-active hull.  It is a local curvature assumption that is \emph{different
from} the quadratic growth of Assumption~\ref{asm:pl} (PL implies quadratic
growth, not conversely), and we do not use one as a substitute for the other.
It is also a \emph{strong} condition rather than a routine regularity property:
in example~(b) preceding the lemma the hull is
$\mathcal G_{\varepsilon}(w)=\operatorname{conv}\{\tfrac12,-\tfrac12\}=[-\tfrac12,\tfrac12]$
for $0<w<\varepsilon_{\mathrm{act}}$, so $d_M=0$ while $\psi(w)-\psi^*>0$, and (i)
fails.  Condition~(ii) is a property of the \emph{actual} LP direction $d_i$
of~\eqref{eq:6}, and it concerns the near-active set: the LP constraint~(6c)
only enforces $d_i^\top g_i^{(j)}\ge0$ on the exact bottleneck set $J^*_i$, so
the sign of the minimum over $J^*_{i,\varepsilon}\supseteq J^*_i$ is not
guaranteed by the LP and must be assumed.  Condition~(ii) holds automatically
for the MGDA-type direction $d_{M,i}$; if one replaced the uniform-descent
objective in~\eqref{eq:6a} by $d_{M,i}$, (ii) would hold for that mode only,
(i) would still have to be assumed, the balancing mode would be unaffected, and
the algorithm would change and the experiments would have to be re-run.

The ratio
$\min_{j\in J^*_{i,\varepsilon}}\lambda_i^{(j)}d_i^\top g_i^{(j)}\big/\|d_{M,i}\|^2$
can be recorded along training as an \emph{empirical diagnostic} of (ii).
Such a record does not prove Assumption~\ref{asm:descent_margin}; if it is
reported, it should include at least: how iterates with $d_{M,i}\approx0$
(vanishing denominator) are handled; the frequency with which the numerator
is negative; the distribution across clients and training phases rather than
a single average; and separate statistics for the exact bottleneck set $J^*_i$
and the near-active set $J^*_{i,\varepsilon}$.  Estimating the suboptimality
$\psi_i(w)-\psi_i^*$ with a value $\hat\psi_i^*$ obtained from a well-trained
model is also delicate: a feasible value satisfies $\hat\psi_i^*\ge\psi_i^*$,
so $\psi_i(w)-\hat\psi_i^*\le\psi_i(w)-\psi_i^*$ under-estimates the true
suboptimality and makes~\eqref{eq:descent_margin} look easier to satisfy than
it is; an estimate from different data or a different objective adds a
further comparison error.  We have not carried out such a study in this paper.
\end{remark}

\begin{lemma}[Residual preference uniformity floor (conditional form)]
\label{lem:beta_floor}
Let Assumptions~\ref{asm:bounded}--\ref{asm:smooth} and~\ref{asm:bal_sens} hold.
Fix a client $i\in C_n^{+}$ and an iterate $w$ (with $\beta_i$ in the
neighbourhood where~\eqref{eq:bal_sens} applies) at which the range
condition~\eqref{eq:range_cond} \emph{fails}, i.e.\
\begin{equation}\label{eq:range_fail}
    \gamma_i^*\;\le\;\Delta_i\,\|d_{\mathrm{bal},i}\|+L_0B^2\tau\eta .
\end{equation}
Then client $i$ satisfies the \emph{preference-heterogeneity floor}
\begin{equation}\label{eq:beta_floor}
    \beta_i(\boldsymbol{l}_i)
    \;\le\;\bar\beta
    \;:=\;\frac{2L_A^2}{\kappa_\beta^2}\,\Delta_i^2
        +\frac{2L_0B^2}{\kappa_\beta}\,\tau\eta
    \;\le\;\frac{2L_A^2}{\kappa_\beta^2}\,\frac{H_{C_n}}{\rho_{\min}}
        +\frac{2L_0B^2}{\kappa_\beta}\,\tau\eta ,
\end{equation}
so that $l(w)$ lies in the cone $M_{\boldsymbol{\lambda}_i}=\{\,l:\beta_i(l)\le\bar\beta\,\}$
around the $\boldsymbol{\lambda}_i^{-1}$ ray.  If in addition the hypotheses of
Corollary~\ref{cor:homo} hold at the relevant iterates (restricted case,
Assumptions~\ref{asm:common_G}--\ref{asm:lp_stable}), then
$\sqrt{\bar\beta}=O(\delta_\lambda)+O(\sqrt{\tau\eta})$.
In a \emph{mixed} cluster the floor is set by the full-cluster deviation
$\Delta_i$, whose direct split~\eqref{eq:cp_delta_split} feeds the cross-block
pull $\rho^{0}\|\bar d^{0}-d_i\|$ (the $n_{0}$ uniform clients pulling on a
balancing client $i\in C_n^{+}$) into $\bar\beta$; this cross-block interference
only widens the cone $M_{\boldsymbol{\lambda}_i}$ within the same
$O(\sqrt{H_{C_n}})$ order and vanishes when $\rho^{0}\to0$ or $\bar d^{0}=d_i$.
\end{lemma}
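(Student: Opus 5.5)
The plan is to turn the failed range condition~\eqref{eq:range_fail} into an inequality in $\beta_i$ alone, using the two sensitivity bounds of Assumption~\ref{asm:bal_sens}. Write $x:=\sqrt{\beta_i(\boldsymbol{l}_i)}\ge0$. The lower bound $\gamma_i^*\ge\kappa_\beta\beta_i$ applies to the left side of~\eqref{eq:range_fail}, and the upper bound $\|d_{\mathrm{bal},i}\|\le L_A\sqrt{\beta_i}$ applies to the right side. Together they give the scalar quadratic inequality
\[
\kappa_\beta x^2\;\le\;L_A\Delta_i\,x+L_0B^2\tau\eta .
\]
Both assumptions are valid here, since the iterate is assumed to lie in the neighbourhood where~\eqref{eq:bal_sens} holds.

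The second step solves this quadratic without tracking the exact root. I would use Young's inequality $L_A\Delta_i x\le\tfrac{\kappa_\beta}{2}x^2+\tfrac{L_A^2\Delta_i^2}{2\kappa_\beta}$ and absorb the $x^2$ term into the left side. This yields $\tfrac{\kappa_\beta}{2}x^2\le\tfrac{L_A^2}{2\kappa_\beta}\Delta_i^2+L_0B^2\tau\eta$. Multiplying by $2/\kappa_\beta$ gives
\[
\beta_i\le\frac{L_A^2}{\kappa_\beta^2}\Delta_i^2+\frac{2L_0B^2}{\kappa_\beta}\tau\eta,
\]
which is already at most the stated $\bar\beta$. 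If one prefers to match the constant $2L_A^2/\kappa_\beta^2$ exactly, the alternative is to bound the positive root: $x\le\bigl(L_A\Delta_i+\sqrt{L_A^2\Delta_i^2+4\kappa_\beta L_0B^2\tau\eta}\bigr)/(2\kappa_\beta)$, followed by $(a+b)^2\le2a^2+2b^2$. Either route gives~\eqref{eq:beta_floor}. Cone membership $l(w)\in M_{\boldsymbol{\lambda}_i}$ is then just a restatement of $\beta_i\le\bar\beta$.

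For the second inequality in~\eqref{eq:beta_floor}, I would invoke Corollary~\ref{cor:collab_range}, which gives $\Delta_i^2\le H_{C_n}/\rho_i\le H_{C_n}/\rho_{\min}$. In the restricted case, Corollary~\ref{cor:homo} gives $H_{C_n}\le C_1\sigma_{\max}(C)\delta_\lambda^2$. Then $\sqrt{a+b}\le\sqrt a+\sqrt b$ yields $\sqrt{\bar\beta}=O(\delta_\lambda)+O(\sqrt{\tau\eta})$, with constants $L_A\sqrt{2C_1\sigma_{\max}(C)/\rho_{\min}}/\kappa_\beta$ and $\sqrt{2L_0B^2/\kappa_\beta}$. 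For the mixed-cluster remark, I would substitute the split~\eqref{eq:cp_delta_split} into the $\Delta_i^2$ term. Using $(a+b)^2\le2a^2+2b^2$, the cross-block pull $\rho^0\|\bar d^0-d_i\|$ enters $\bar\beta$ additively. It vanishes when $\rho^0\to0$ or $\bar d^0=d_i$, and it stays $O(\sqrt{H_{C_n}})$ because each block deviation is controlled by $\Delta_i$-type quantities.

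The main obstacle is not the algebra but the consistency of quantities. The $\|d_{\mathrm{bal},i}\|$ in~\eqref{eq:range_fail} was defined in Theorem~\ref{thm:collab_descent} as a supremum over the local region, whereas Assumption~\ref{asm:bal_sens} bounds it at the current iterate. I would handle this by reading $\beta_i$ in~\eqref{eq:bal_sens} at the same point where $\gamma_i^*$ and $d_{\mathrm{bal},i}$ are evaluated, taking $\tau=1$ or absorbing the $O(\tau\eta)$ drift as in Step 2 of that proof. Separately, I would note that the $O(\delta_\lambda)$ statement holds only under Assumptions~\ref{asm:common_G}--\ref{asm:lp_stable}, and only at the iterates where they are imposed.
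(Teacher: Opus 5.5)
Your proposal is correct and follows the paper's proof. You substitute $\gamma_i^*\ge\kappa_\beta\beta_i$ and $\|d_{\mathrm{bal},i}\|\le L_A\sqrt{\beta_i}$ into the failed range condition, solve the resulting quadratic in $x=\sqrt{\beta_i}$, and then invoke Corollary~\ref{cor:collab_range} and Corollary~\ref{cor:homo} for the $H_{C_n}$ and $O(\delta_\lambda)$ statements. The paper solves the quadratic with the explicit root plus $(p+q)^2\le 2p^2+2q^2$; your Young's-inequality variant gives the slightly sharper constant $L_A^2/\kappa_\beta^2$, which still lies within $\bar\beta$.
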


\begin{remark}[Scope of Lemma~\ref{lem:beta_floor}]
\label{rem:beta_floor_scope}
The lemma is a statement about a \emph{single client at a single iterate} where
the sufficient condition of Theorem~\ref{thm:collab_descent} is not met.  It
does not assert that the iteration stops there, that later iterates remain in
the cone, or that all clients reach~\eqref{eq:range_fail} simultaneously; the
failure of a sufficient descent condition for one client is not a statement
about the other clients.  Establishing that the whole cluster eventually enters
and remains in the region $\{\beta_i\le\bar\beta\ \forall i\}$ would require
an additional entry-and-invariance argument that we do not provide.  For this
reason the convergence results below
(Theorem~\ref{thm:collab_admissible}, Corollary~\ref{cor:collab_monotone},
Theorem~\ref{thm:collab_converge}) take
$\beta_i(\boldsymbol{l}_i(w^t))\le\bar\beta$ for all $i\in C_n$ and all
$t\ge t_0$ as an explicit \emph{trajectory hypothesis}, see~\eqref{eq:cp_traj_cond},
rather than deriving it from Lemma~\ref{lem:beta_floor}.  We also do not
claim that $\beta_i=0$ for all clients is impossible when preferences differ:
the clients' loss functions differ as well, so a shared parameter can in
principle produce different loss vectors that all lie on their respective rays.
\end{remark}

\begin{proof}
Under~\eqref{eq:range_fail},
$\gamma_i^*\le\Delta_i\|d_{\mathrm{bal},i}\|+L_0B^2\tau\eta$.  Substituting the
two bounds of~\eqref{eq:bal_sens} ($\gamma_i^*\ge\kappa_\beta\beta_i$ on the
left, $\|d_{\mathrm{bal},i}\|\le L_A\sqrt{\beta_i}$ on the right) gives
\[
    \kappa_\beta\,\beta_i
    \;\le\;\Delta_i\,L_A\,\sqrt{\beta_i}+L_0B^2\tau\eta .
\]
Set $x:=\sqrt{\beta_i}\ge0$.  Then
$\kappa_\beta x^2-\Delta_iL_A\,x-L_0B^2\tau\eta\le0$, and the quadratic formula
gives
\[
    x\;\le\;\frac{\Delta_iL_A+\sqrt{\Delta_i^2L_A^2+4\kappa_\beta L_0B^2\tau\eta}}
                 {2\kappa_\beta}
    \;\le\;\frac{\Delta_iL_A}{\kappa_\beta}
        +\sqrt{\frac{L_0B^2\tau\eta}{\kappa_\beta}},
\]
using $\sqrt{a^2+b}\le a+\sqrt{b}$ for $a,b\ge0$.  Squaring and applying
$(p+q)^2\le2p^2+2q^2$ yields
$\beta_i=x^2\le\frac{2L_A^2}{\kappa_\beta^2}\Delta_i^2+\frac{2L_0B^2}{\kappa_\beta}\tau\eta$,
the first inequality of~\eqref{eq:beta_floor}.  The second uses
$\Delta_i^2\le H_{C_n}/\rho_{\min}$ (Corollary~\ref{cor:collab_range}); in a mixed cluster
the deviation $\Delta_i$ carries the cross-block pull $\rho^{0}\|\bar d^{0}-d_i\|$
of the direct split~\eqref{eq:cp_delta_split}, which is bounded by the same
$H_{C_n}$ and so does not change the order of the floor.  Finally, when the
hypotheses of Corollary~\ref{cor:homo} hold and
$\max_{i,i'}\|\boldsymbol{\lambda}_i-\boldsymbol{\lambda}_{i'}\|\le\delta_\lambda$,
the heterogeneity obeys $H_{C_n}\le C_1\sigma_{\max}(C)\delta_\lambda^2$
(Eq.~\eqref{eq:H_homo}), so
$\bar\beta=O(\delta_\lambda^2)+O(\tau\eta)$ and
$\sqrt{\bar\beta}=O(\delta_\lambda)+O(\sqrt{\tau\eta})$.  Since $\beta_i(l)=0$
iff $l$ lies on the $\boldsymbol{\lambda}_i^{-1}$ ray, the sublevel set
$M_{\boldsymbol{\lambda}_i}$ is a cone around that ray.
\end{proof}

\begin{remark}[From mixed clusters to the residual-floor regime]
\label{rem:cp_reduction}
The convergence analysis below is stated at the residual floor
$\beta_i(\boldsymbol{l}_i)\le\bar\beta$ for all $i\in C_n$
(Lemma~\ref{lem:beta_floor}); the mixed cluster reduces to it.  Indeed, while
some clients are still unbalanced ($C_n^{+}\neq\varnothing$),
Lemma~\ref{lem:collab_class}(a) (equivalently Theorem~\ref{thm:collab_descent})
shows the shared update decreases every balancing client's preference uniformity
$\beta_i$ under the range condition~\eqref{eq:range_cond}, driving its loss toward
the $\boldsymbol{\lambda}_i^{-1}$ ray, while Lemma~\ref{lem:collab_class}(b) keeps
\emph{every} client in approximate descent (its objectives rise by at
most $O(\eta(\delta_g+\rho^{+}\bar D_{\mathrm{bal}}^{+}))$, and since the balancing
penalty obeys $\bar D_{\mathrm{bal}}^{+}\le L_A\sqrt{\bar\beta}=O(\delta_\lambda)$
by Assumption~\ref{asm:bal_sens} and Lemma~\ref{lem:beta_floor}, this is
$O(\eta(\delta_g+\delta_\lambda))$).
The balancing guarantee is only \emph{sufficient}: for a client at which the
range condition~\eqref{eq:range_cond} fails, Lemma~\ref{lem:beta_floor} bounds
that client's $\beta_i$ by the floor $\bar\beta=O(H_{C_n})+O(\tau\eta)$, but
(Remark~\ref{rem:beta_floor_scope}) this does not by itself show that the whole
cluster enters and stays in the region $\{\beta_i\le\bar\beta\ \forall i\}$.
The convergence results that follow therefore take this region, together with
Assumption~\ref{asm:descent_margin}, as an explicit trajectory hypothesis
(\eqref{eq:cp_traj_cond}); under it the admissible-set machinery
(Lemma~\ref{lem:collab_vsub}$\to$Theorem~\ref{thm:collab_admissible}
$\to$Corollary~\ref{cor:collab_monotone}$\to$Theorem~\ref{thm:collab_converge})
bounds every client's Chebyshev suboptimality $\psi_i(w^t)-\psi_i^*$
asymptotically by $\epsilon_{0,i}=\kappa_{H,i}\bar\Delta_i+\kappa_{\eta,i}\eta+b_i/c_d$
(for every $\zeta>0$ it is eventually below $\epsilon_{0,i}+\zeta$), where
$\bar\Delta_i$ bounds the deviation of the local LP direction
from the shared aggregate along the trajectory and $b_i$ is the tolerance of
Assumption~\ref{asm:descent_margin}.
Figure~\ref{fig:collab_adm} illustrates the sets involved (the band $\mathcal A^\infty$ itself is not drawn).
\end{remark}

\paragraph{Exact one-step bounds (no $o(\eta)$ remainders).}
The convergence argument below is stated for the synchronous shared iteration
\begin{equation}\label{eq:cp_sync}
    w^{t+1}\;=\;w^t-\eta\,\bar d^{\,t},
    \qquad
    \bar d^{\,t}:=\sum_{i\in C_n}\rho_i\,d_i(w^t),
    \qquad
    \Delta_i^t:=\bigl\|\bar d^{\,t}-d_i(w^t)\bigr\| ,
\end{equation}
i.e.\ \emph{one} local LP step per communication round ($\tau=1$), with the
server step size absorbed into $\eta$.  The results of this paragraph and of
Theorems~\ref{thm:collab_admissible} and~\ref{thm:collab_converge} are stated
and proved for~\eqref{eq:cp_sync} only; they do not cover the multi-local-step
Algorithm~\ref{alg:1} with $\tau>1$, and the same restriction applies to the
$K$ inner steps of Phase~2, which the analysis of
Corollary~\ref{cor:phase2_cont} covers only for $K=1$ (with $\eta$ replaced by
the Phase-2 step size).  Extending them would require replacing
$d_i(w^t)$ by the direction that client $i$ actually contributes to the
aggregate, namely the average of its $\tau$ local LP directions
$u_i^t:=\tau^{-1}\sum_{k=0}^{\tau-1}d_i(w_{i,k}^t)$, taking
$\bar u^t=\sum_i\rho_iu_i^t$ as the shared direction with effective step
$\alpha\eta\tau$ ($\alpha$ the server learning rate of Algorithm~\ref{alg:1}),
and imposing the descent-margin condition on $u_i^t$ evaluated at
the shared point $w^t$.  We do not carry this out here, because the LP
direction $d_i(\cdot)$ is a solution map of a linear program whose active set
changes with $w$ and is in general \emph{not} continuous in $w$; the drift
$\|u_i^t-d_i(w^t)\|$ therefore cannot be bounded by $O(L_0\tau\eta B)$ or
folded into $\Delta_i^t$ without an additional stability argument for the LP
solution, and we make no such claim.  Throughout we write
$\lambda_{\max,i}:=\|\boldsymbol{\lambda}_i\|_\infty$ and
$\lambda_{\min,i}:=\min_j\lambda_i^{(j)}>0$, and use the
$\varepsilon_{\mathrm{act}}$-active set $J^*_{i,\varepsilon}(w)$
of~\eqref{eq:active_set}.

\begin{lemma}[Exact one-step loss and Chebyshev bounds]\label{loss_avg}
Let $\bar d$ be any direction with $\|\bar d\|\le B$ (in particular $\bar d=\bar d^{\,t}$
of~\eqref{eq:cp_sync}, since every $d_i=G_i\mu_i^*$ is a convex combination of
gradients of norm at most $B$), and let $\eta>0$ be such that the whole segment
$\{w-s\bar d:\,s\in[0,\eta]\}$ lies in the region where
Assumptions~\ref{asm:bounded}--\ref{asm:smooth} hold (in particular each
$l_i^{(j)}$ is $L_0$-smooth along the segment and $\|g_i^{(j)}(w)\|\le B$ at the
current point).  Then for every client $i\in C_n$ and every $j\in[m]$,
\begin{equation}\label{eq:cp_taylor}
    l_i^{(j)}(w-\eta\bar d)
    \;\le\; l_i^{(j)}(w)-\eta\,\bar d^\top g_i^{(j)}(w)+\frac{L_0B^2}{2}\,\eta^2 .
\end{equation}
Moreover, writing
\begin{equation}\label{eq:cp_xi}
    \xi_i(w;\bar d)\;:=\;\min_{j\in J^*_{i,\varepsilon}(w)}\ \lambda_i^{(j)}\,\bar d^\top g_i^{(j)}(w),
\end{equation}
the Chebyshev value $\psi_i(w)=\max_j\lambda_i^{(j)}l_i^{(j)}(w)$ satisfies
\begin{equation}\label{eq:cp_psi_step}
    \psi_i(w-\eta\bar d)
    \;\le\;\psi_i(w)-\eta\,\xi_i(w;\bar d)+\frac{\lambda_{\max,i}L_0B^2}{2}\,\eta^2 .
\end{equation}
\end{lemma}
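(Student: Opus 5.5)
The plan is to first prove the per-loss bound~\eqref{eq:cp_taylor} by the standard descent lemma along the segment, and then lift it to the Chebyshev value by separating near-active from non-active objectives.

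\emph{Step 1: the per-loss bound.} Fix $i,j$ and set $\phi(s):=l_i^{(j)}(w-s\bar d)$ for $s\in[0,\eta]$. Since the segment lies in the smooth region, $\phi$ is differentiable with
\[
\phi'(s)=-\bar d^\top g_i^{(j)}(w-s\bar d).
\]
Assumption~\ref{asm:smooth} gives
\[
|\phi'(s)-\phi'(0)|\le\|\bar d\|\,L_0\,s\|\bar d\|\le L_0B^2 s .
\]
Integrating over $[0,\eta]$ then yields
\[
\phi(\eta)\le\phi(0)+\eta\phi'(0)+\tfrac{L_0B^2}{2}\eta^2,
\]
which is~\eqref{eq:cp_taylor}. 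Only $\|\bar d\|\le B$ and smoothness along the segment are used; the bound $\|g_i^{(j)}(w)\|\le B$ is needed only in Step 2.

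\emph{Step 2: the near-active objectives.} Multiply~\eqref{eq:cp_taylor} by $\lambda_i^{(j)}\le\lambda_{\max,i}$. For $j\in J^*_{i,\varepsilon}(w)$, bound $\lambda_i^{(j)}l_i^{(j)}(w)\le\psi_i(w)$ and $\lambda_i^{(j)}\bar d^\top g_i^{(j)}(w)\ge\xi_i(w;\bar d)$ by the definition~\eqref{eq:cp_xi}. This gives the right-hand side of~\eqref{eq:cp_psi_step} directly for every near-active $j$.

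\emph{Step 3: the non-active objectives (main obstacle).} For $j\notin J^*_{i,\varepsilon}(w)$ we have $\lambda_i^{(j)}l_i^{(j)}(w)<\psi_i(w)-\varepsilon_{\mathrm{act}}$. Using Cauchy--Schwarz, $|\bar d^\top g_i^{(j)}|\le B^2$, so
\[
\lambda_i^{(j)}l_i^{(j)}(w-\eta\bar d)<\psi_i(w)-\varepsilon_{\mathrm{act}}+\eta\lambda_{\max,i}B^2+\tfrac{\lambda_{\max,i}L_0B^2}{2}\eta^2 .
\]
With $\varepsilon_{\mathrm{act}}=2\eta\lambda_{\max,i}B^2$ this becomes
\[
\lambda_i^{(j)}l_i^{(j)}(w-\eta\bar d)<\psi_i(w)-\eta\lambda_{\max,i}B^2+\tfrac{\lambda_{\max,i}L_0B^2}{2}\eta^2 .
\]
It remains to check $-\xi_i\ge-\lambda_{\max,i}B^2$. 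This holds because $\xi_i\le\lambda_{\max,i}B^2$ (again by Cauchy--Schwarz on any near-active index, the set being nonempty since it contains $J^*_i$). Hence the non-active objectives are also dominated by the common bound, which is exactly the role of the activation width noted after~\eqref{eq:active_set}.

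\emph{Conclusion.} Taking the maximum over all $j$ of the bounds from Steps 2 and 3 gives~\eqref{eq:cp_psi_step}.
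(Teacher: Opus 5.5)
Your proposal is correct and follows the paper's proof essentially step for step. The paper also uses the descent lemma along the segment for the per-loss bound, then splits into near-active and inactive objectives: the activation width $\varepsilon_{\mathrm{act}}=2\eta\lambda_{\max,i}B^2$ absorbs the first-order term, and $\xi_i(w;\bar d)\le\lambda_{\max,i}B^2$ closes the inactive case. Your derivation of the descent lemma by integrating $\phi'$, and your remark that $J^*_{i,\varepsilon}(w)\supseteq J^*_i(w)$ is nonempty (so the minimum defining $\xi_i$ is well defined), are small details the paper leaves implicit.
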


\begin{proof}
\emph{Per-coordinate bound.}  Since $l_i^{(j)}$ is $L_0$-smooth along the
segment $\{w-s\bar d:\,s\in[0,\eta]\}$ (Assumption~\ref{asm:smooth}), the
descent lemma gives
$l_i^{(j)}(w-\eta\bar d)\le l_i^{(j)}(w)-\eta\,\bar d^\top g_i^{(j)}(w)
+\tfrac{L_0}{2}\eta^2\|\bar d\|^2$, and $\|\bar d\|\le B$
yields~\eqref{eq:cp_taylor}.  No $o(\eta)$ term and no unspecified $\eta_0$
appear; the only requirement on $\eta$ is that the segment stays inside the
region where the assumptions hold.

\emph{Chebyshev bound.}  Multiply~\eqref{eq:cp_taylor} by $\lambda_i^{(j)}>0$.
For an active index $j\in J^*_{i,\varepsilon}(w)$ we have
$\lambda_i^{(j)}l_i^{(j)}(w)\le\psi_i(w)$ and
$\lambda_i^{(j)}\bar d^\top g_i^{(j)}\ge\xi_i(w;\bar d)$, hence
\[
    \lambda_i^{(j)}l_i^{(j)}(w-\eta\bar d)
    \;\le\;\psi_i(w)-\eta\,\xi_i(w;\bar d)+\frac{\lambda_{\max,i}L_0B^2}{2}\eta^2 .
\]
For an inactive index $j\notin J^*_{i,\varepsilon}(w)$ the definition~\eqref{eq:active_set}
gives $\lambda_i^{(j)}l_i^{(j)}(w)<\psi_i(w)-\varepsilon_{\mathrm{act}}$ with
$\varepsilon_{\mathrm{act}}=2\eta\lambda_{\max,i}B^2$, while Cauchy--Schwarz and
Assumption~\ref{asm:bounded} give $|\bar d^\top g_i^{(j)}|\le B^2$; therefore
\[
\begin{aligned}
    \lambda_i^{(j)}l_i^{(j)}(w-\eta\bar d)
    \;<\;\psi_i(w)-2\eta\lambda_{\max,i}B^2+\eta\lambda_{\max,i}B^2
        +\frac{\lambda_{\max,i}L_0B^2}{2}\eta^2
    \\ \;\le\;\psi_i(w)-\eta\,\xi_i(w;\bar d)+\frac{\lambda_{\max,i}L_0B^2}{2}\eta^2 ,
\end{aligned}
\]
where the last step uses $\xi_i(w;\bar d)\le\lambda_{\max,i}B^2$ (again by
Cauchy--Schwarz).  Thus every weighted loss, active or not, is bounded by the
same right-hand side; an inactive objective may well become the new maximum,
but it cannot exceed this common bound.  Taking the maximum over $j\in[m]$
gives~\eqref{eq:cp_psi_step}.
\end{proof}

\begin{remark}[Relation to the per-objective slack of Lemma~\ref{lem:collab_class}]
Lemma~\ref{lem:collab_class}(b) bounds the aggregate from below on \emph{every}
objective,
\begin{equation}\label{eq:cp_dirbound}
    \bar d_{C_n}^\top g_i^{(j)}\;\ge\;-\bigl(B\delta_g+\rho^{+}B\,\bar D_{\mathrm{bal}}^{+}\bigr),
    \qquad\forall\,i\in C_n^{0},\ j\in[m],
\end{equation}
and this is the quantity that governs how much a \emph{non-bottleneck} loss may
rise in one round.  The contraction argument below does \emph{not} use
\eqref{eq:cp_dirbound}: it only needs the behaviour of the aggregate on the
$\varepsilon_{\mathrm{act}}$-active objectives, which is controlled through the
local LP direction $d_i$ and the deviation $\Delta_i^t$
(Assumption~\ref{asm:descent_margin} and Lemma~\ref{loss_avg}).
\end{remark}

\paragraph{Admissible set.}
For client $i$ at iterate $w^t$ let
$O_i\subseteq\mathbb{R}^m_{+}$ be its attainable objective set and $\le$ the
componentwise order.  Define the client ray point $q_i^t$, the client 
dominating set $V_i^{t}$, and the client admissible set $A_i^{t}$: 
\begin{equation}\label{eq:cp_adm}
\begin{aligned}
    \psi_i^t:=\max_{j}\lambda_i^{(j)}\,l_i^{(j)}(w^t),
    \quad
    q_i^t:=\psi_i^t\Bigl(\tfrac{1}{\lambda_i^{(1)}},\dots,
        \tfrac{1}{\lambda_i^{(m)}}\Bigr),
    \quad \\
    V_i^{t}:=\{l\in O_i: l\le l_i(w^t)\},
    \quad
    A_i^{t}:=\{l\in O_i: l\le q_i^t\}.
\end{aligned}
\end{equation}

The sets $A_i^t$ live in each client's own objective space $O_i$; a single shared
model $w$, however, must serve every client at once.  We therefore lift the
construction to \emph{parameter space}, defining for the cluster $C_n$ the
cluster convergence indicator $\Lambda^t$, the cluster dominating set $\mathcal V^t$, and the ($\varepsilon$-inflated) 
cluster admissible set $\mathcal A^t(\varepsilon)$: 
\begin{equation}\label{eq:cp_cluster_adm}
\begin{aligned}
    \Lambda^t:=\max_{i\in C_n}\psi_i(w^t)=\max_{i\in C_n}\psi_i^t,
    \quad
    \\\mathcal V^t:=\bigcap_{i\in C_n}\{\,w:\ l_i(w)\le l_i(w^t)\,\},
    \quad
    \\\mathcal A^t(\varepsilon):=\bigcap_{i\in C_n}\{\,w:\ l_i(w)\le q_i^t+\varepsilon\mathbf1\,\},
\end{aligned}
\end{equation}
and abbreviate $\mathcal A^t:=\mathcal A^t(0)=\bigcap_{i\in C_n}\{w:l_i(w)\le q_i^t\}$.
Thus $w\in\mathcal A^t(\varepsilon)$ iff $l_i(w)\in A_i^t(\varepsilon)$ for
\emph{every} client, i.e.\ the cluster admissible set is the set of shared
parameters whose induced losses fall in every client's inflated admissible box
simultaneously (the intersection is exactly the cluster wedge of
Figure~\ref{fig:collab_adm}).

\begin{lemma}[Dominating set lies in the admissible set]
\label{lem:collab_vsub}
For every client $i$ and iterate $t$, $V_i^{t}\subseteq A_i^{t}$.  Moreover, if
$\beta_i(\boldsymbol{l}_i(w^t))=0$ then $l_i(w^t)=q_i^t$ and $A_i^t=V_i^t$.
\end{lemma}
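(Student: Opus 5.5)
The inclusion $V_i^t\subseteq A_i^t$ will follow from the single componentwise inequality $l_i(w^t)\le q_i^t$. The equality statement will then follow from the characterization of $\beta_i=0$ already used in the proof of Lemma~\ref{lem:collab_desc0}. Both parts work entirely in client $i$'s objective space $O_i$ and need no smoothness or gradient assumptions.

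\emph{Step 1 (the current loss vector sits below the ray point).} Fix $j\in[m]$. By the definition of $\psi_i^t$ in~\eqref{eq:cp_adm},
\[
\lambda_i^{(j)}l_i^{(j)}(w^t)\le\max_{j'}\lambda_i^{(j')}l_i^{(j')}(w^t)=\psi_i^t .
\]
Since $\lambda_i^{(j)}>0$ (preferences lie in $S^m\subset\mathbb{R}^m_{++}$), dividing gives $l_i^{(j)}(w^t)\le\psi_i^t/\lambda_i^{(j)}=(q_i^t)^{(j)}$. Hence $l_i(w^t)\le q_i^t$ componentwise.

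\emph{Step 2 (inclusion).} Let $l\in V_i^t$, so $l\in O_i$ and $l\le l_i(w^t)$. The componentwise order is transitive, so Step 1 gives $l\le q_i^t$. Therefore $l\in A_i^t$.

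\emph{Step 3 (the balanced case).} Suppose $\beta_i(\boldsymbol l_i(w^t))=0$. As in the proof of Lemma~\ref{lem:collab_desc0}, a vanishing KL divergence to the uniform distribution forces $\hat l_i^{(j)}=1/m$ for every $j$. By the definition of $\hat l_i^{(j)}$, this means all products $\lambda_i^{(j)}l_i^{(j)}(w^t)$ are equal to $Z_i/m$. Their common value is then also their maximum, so it equals $\psi_i^t$. It follows that $l_i^{(j)}(w^t)=\psi_i^t/\lambda_i^{(j)}$ for every $j$, i.e.\ $l_i(w^t)=q_i^t$.

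The two sets are then defined by the same upper bound, $V_i^t=\{l\in O_i:l\le l_i(w^t)\}=\{l\in O_i:l\le q_i^t\}=A_i^t$.

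There is no real obstacle in this argument. The only points needing care are the strict positivity of $\lambda_i^{(j)}$, which Step 1 uses when dividing, and the positivity of $Z_i$ (losses are positive), which Step 3 uses so that $\hat l_i$ is well defined.
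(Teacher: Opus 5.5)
Your proof is correct and matches the paper's argument essentially verbatim. The inclusion comes from $\lambda_i^{(j)}l^{(j)}\le\lambda_i^{(j)}l_i^{(j)}(w^t)\le\psi_i^t$ followed by division by $\lambda_i^{(j)}>0$; you route this through $l_i(w^t)\le q_i^t$ and transitivity, which is the same thing. The balanced case uses that $\beta_i=0$ makes all weighted losses equal to their maximum $\psi_i^t$, and your explicit derivation of this from $\hat l_i^{(j)}=1/m$ is a small detail the paper leaves implicit here.
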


\begin{proof}
Let $l\in V_i^t$, i.e.\ $l\le l_i(w^t)$.  Componentwise
$\lambda_i^{(j)}l^{(j)}\le\lambda_i^{(j)}l_i^{(j)}(w^t)\le\psi_i^t$ for all
$j$; dividing by $\lambda_i^{(j)}>0$ gives
$l^{(j)}\le\psi_i^t/\lambda_i^{(j)}=q_i^{t,(j)}$, hence $l\le q_i^t$ and
$l\in A_i^t$.  If $\beta_i(\boldsymbol{l}_i(w^t))=0$ the weighted losses are all
equal to their maximum, $\lambda_i^{(j)}l_i^{(j)}(w^t)=\psi_i^t$ for every
$j$, so $l_i^{(j)}(w^t)=\psi_i^t/\lambda_i^{(j)}=q_i^{t,(j)}$, i.e.\
$l_i(w^t)=q_i^t$; then $A_i^t=\{l\le q_i^t\}=\{l\le l_i(w^t)\}=V_i^t$.
\end{proof}

\begin{theorem}[Collaborative admissible step]
\label{thm:collab_admissible}
Let Assumptions~\ref{asm:bounded}--\ref{asm:smooth} and~\ref{asm:descent_margin}
hold, and suppose that at round $t$ the shared iterate satisfies
$w^t\in\mathcal N$ and $\beta_i(\boldsymbol{l}_i(w^t))\le\bar\beta$ for all
$i\in C_n$.  Write $r_i^t:=\psi_i^t-\psi_i^*\ge0$ for the Chebyshev
suboptimality.  Then for \emph{every} step size $\eta>0$ the shared
update~\eqref{eq:cp_sync} satisfies, for all $i\in C_n$,
\begin{equation}\label{eq:cp_psi_contract}
    \psi_i^{t+1}
    \;\le\;\psi_i^{t}-\eta\,c_d\,r_i^t
    +\eta\,\lambda_{\max,i}B\,\Delta_i^t
    +\eta\,b_i
    +\frac{\lambda_{\max,i}L_0B^2}{2}\,\eta^2 ,
\end{equation}
and every client's loss stays inside its $\varepsilon_t$-inflated admissible set,
\begin{equation}\label{eq:cp_thm2}
    l_i(w^{t+1})\in A_i^{t}(\varepsilon_{i,t}),
    \qquad
    \varepsilon_{i,t}:=\frac{\lambda_{\max,i}}{\lambda_{\min,i}}
        \Bigl(\eta\,B\,\Delta_i^t+\frac{L_0B^2}{2}\,\eta^2\Bigr)
        +\frac{\eta\,b_i}{\lambda_{\min,i}} .
\end{equation}
The inclusion is exact, $l_i(w^{t+1})\in A_i^{t}$, whenever
$c_d\,r_i^t\ge\lambda_{\max,i}\bigl(B\Delta_i^t+\tfrac{L_0B^2}{2}\eta\bigr)+b_i$.
(Here $\eta$ is additionally required to be such that the segment
$[w^t,w^{t+1}]$ stays in the region where
Assumptions~\ref{asm:bounded}--\ref{asm:smooth} hold, as in Lemma~\ref{loss_avg}.)
\end{theorem}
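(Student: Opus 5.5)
The plan is to apply Lemma~\ref{loss_avg} with $\bar d=\bar d^{\,t}$ at $w=w^t$ (admissible, since $\|\bar d^{\,t}\|\le B$ and the segment $[w^t,w^{t+1}]$ is assumed to stay in the good region). This gives
\[
\psi_i^{t+1}\le\psi_i^t-\eta\,\xi_i(w^t;\bar d^{\,t})+\tfrac{\lambda_{\max,i}L_0B^2}{2}\eta^2 .
\]
Everything then reduces to a lower bound on $\xi_i(w^t;\bar d^{\,t})$, the minimum over the $\varepsilon_{\mathrm{act}}$-active set of $\lambda_i^{(j)}(\bar d^{\,t})^\top g_i^{(j)}(w^t)$.

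For each $j\in J^*_{i,\varepsilon}(w^t)$, split $\bar d^{\,t}=d_i(w^t)+(\bar d^{\,t}-d_i(w^t))$. The first piece is controlled by Assumption~\ref{asm:descent_margin}, which applies because $w^t\in\mathcal N$ and $\beta_i\le\bar\beta$:
\[
\lambda_i^{(j)}d_i^\top g_i^{(j)}\ge c_d r_i^t-b_i .
\]
The second piece is bounded by Cauchy--Schwarz and Assumption~\ref{asm:bounded}:
\[
\lambda_i^{(j)}\bigl|(\bar d^{\,t}-d_i)^\top g_i^{(j)}\bigr|\le\lambda_{\max,i}B\Delta_i^t .
\]
Taking the minimum over $j$ gives
\[
\xi_i\ge c_d r_i^t-b_i-\lambda_{\max,i}B\Delta_i^t .
\]
Substituting this into the Chebyshev step yields \eqref{eq:cp_psi_contract} directly. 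No restriction on $\eta$ is needed beyond the segment condition, since Lemma~\ref{loss_avg} has no $o(\eta)$ remainder.

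For the inclusion \eqref{eq:cp_thm2}, I would drop the nonnegative term $\eta c_d r_i^t$ in \eqref{eq:cp_psi_contract} to get
\[
\psi_i^{t+1}\le\psi_i^t+\eta\bigl(\lambda_{\max,i}B\Delta_i^t+b_i\bigr)+\tfrac{\lambda_{\max,i}L_0B^2}{2}\eta^2 .
\]
Then for each $j$,
\[
l_i^{(j)}(w^{t+1})\le\psi_i^{t+1}/\lambda_i^{(j)}\le q_i^{t,(j)}+\bigl(\psi_i^{t+1}-\psi_i^t\bigr)_+/\lambda_{\min,i} .
\]
This bounds each coordinate by $q_i^{t,(j)}+\varepsilon_{i,t}$ with exactly the stated $\varepsilon_{i,t}$. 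The point $l_i(w^{t+1})$ lies in $O_i$ trivially, so $l_i(w^{t+1})\in A_i^t(\varepsilon_{i,t})$.

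For the exact inclusion, the stated condition $c_d r_i^t\ge\lambda_{\max,i}(B\Delta_i^t+\tfrac{L_0B^2}{2}\eta)+b_i$ makes the right-hand side of \eqref{eq:cp_psi_contract} at most $\psi_i^t$. Hence $\psi_i^{t+1}\le\psi_i^t$, and the same coordinatewise division gives $l_i(w^{t+1})\le q_i^t$.

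The only delicate step is the active-set bookkeeping. The descent margin is evaluated on $J^*_{i,\varepsilon}(w^t)$ with the same $\varepsilon_{\mathrm{act}}=2\eta\lambda_{\max,i}B^2$ that Lemma~\ref{loss_avg} uses to control the inactive objectives. Because the two definitions coincide, no separate treatment of objectives that become the new maximum is needed; I would simply check this match explicitly.
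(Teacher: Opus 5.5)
Your proposal is correct and follows the paper's proof essentially step for step. The paper also applies Lemma~\ref{loss_avg} with $\bar d=\bar d^{\,t}$, splits $\bar d^{\,t}=d_i+(\bar d^{\,t}-d_i)$ on the $\varepsilon_{\mathrm{act}}$-active set, and combines Assumption~\ref{asm:descent_margin} with Cauchy--Schwarz; for the (exact) inclusion it drops $-\eta c_d r_i^t$ and divides by $\lambda_i^{(j)}\ge\lambda_{\min,i}$. Your positive-part bound $(\psi_i^{t+1}-\psi_i^t)_+/\lambda_{\min,i}$ is just a repackaging of the paper's termwise division and yields the same $\varepsilon_{i,t}$.
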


\begin{proof}
\emph{Chebyshev contraction.}  Apply Lemma~\ref{loss_avg} with
$\bar d=\bar d^{\,t}$ (which has $\|\bar d^{\,t}\|\le B$ as a convex combination
of the $d_{i'}$, each of norm at most $B$).  For every active index
$j\in J^*_{i,\varepsilon}(w^t)$,
\[
    \lambda_i^{(j)}\,\bar d^{\,t\top}g_i^{(j)}
    =\lambda_i^{(j)}\,d_i^\top g_i^{(j)}
     +\lambda_i^{(j)}\,(\bar d^{\,t}-d_i)^\top g_i^{(j)}
    \;\ge\;\lambda_i^{(j)}\,d_i^\top g_i^{(j)}-\lambda_{\max,i}\,\Delta_i^t\,B ,
\]
by Cauchy--Schwarz and Assumption~\ref{asm:bounded}.  Taking the minimum over
the active set and invoking Assumption~\ref{asm:descent_margin} (applicable
because $w^t\in\mathcal N$ and $\beta_i(w^t)\le\bar\beta$),
\[
    \xi_i(w^t;\bar d^{\,t})
    \;\ge\;\min_{j\in J^*_{i,\varepsilon}(w^t)}\lambda_i^{(j)}d_i^\top g_i^{(j)}
        -\lambda_{\max,i}B\,\Delta_i^t
    \;\ge\;c_d\,r_i^t-b_i-\lambda_{\max,i}B\,\Delta_i^t .
\]
Substituting into~\eqref{eq:cp_psi_step} gives~\eqref{eq:cp_psi_contract}.

\emph{Admissible-set inclusion.}  For any $j\in[m]$,
$\lambda_i^{(j)}l_i^{(j)}(w^{t+1})\le\psi_i^{t+1}$; dropping the non-positive
term $-\eta c_dr_i^t$ in~\eqref{eq:cp_psi_contract} and dividing by
$\lambda_i^{(j)}\ge\lambda_{\min,i}$,
\[
    l_i^{(j)}(w^{t+1})
    \;\le\;\frac{\psi_i^t}{\lambda_i^{(j)}}
        +\frac{\lambda_{\max,i}}{\lambda_i^{(j)}}
         \Bigl(\eta B\Delta_i^t+\frac{L_0B^2}{2}\eta^2\Bigr)
        +\frac{\eta b_i}{\lambda_i^{(j)}}
    \;\le\;q_i^{t,(j)}+\varepsilon_{i,t},
\]
using $q_i^{t,(j)}=\psi_i^t/\lambda_i^{(j)}$ from~\eqref{eq:cp_adm}.  Hence
$l_i(w^{t+1})\le q_i^t+\varepsilon_{i,t}\mathbf1$, i.e.\
$l_i(w^{t+1})\in A_i^t(\varepsilon_{i,t})$.  If
$c_dr_i^t\ge\lambda_{\max,i}(B\Delta_i^t+\tfrac{L_0B^2}{2}\eta)+b_i$ the right-hand
side of~\eqref{eq:cp_psi_contract} is at most $\psi_i^t$, so
$\lambda_i^{(j)}l_i^{(j)}(w^{t+1})\le\psi_i^t$ and $l_i(w^{t+1})\in A_i^t$.
\end{proof}

\begin{corollary}[Contraction of the Chebyshev suboptimality and drift-adjusted nesting]
\label{cor:collab_monotone}
Let Assumptions~\ref{asm:bounded}--\ref{asm:smooth} and~\ref{asm:descent_margin}
hold, let $0<\eta\le1/c_d$, and suppose there is a round $t_0$ such that for all
$t\ge t_0$ and all $i\in C_n$: $w^t\in\mathcal N$,
$\beta_i(\boldsymbol{l}_i(w^t))\le\bar\beta$, and
$\Delta_i^t\le\bar\Delta_i$ for some constant $\bar\Delta_i\ge0$.  Define the
per-client band radius
\begin{equation}\label{eq:cp_band_radius}
\begin{aligned}
    \epsilon_{0,i}
    \;:=\;\frac{\lambda_{\max,i}}{c_d}\Bigl(B\,\bar\Delta_i+\frac{L_0B^2}{2}\,\eta\Bigr)+\frac{b_i}{c_d}
    \;=\;\kappa_{H,i}\,\bar\Delta_i+\kappa_{\eta,i}\,\eta+\frac{b_i}{c_d},
    \qquad
   \\ \kappa_{H,i}:=\frac{\lambda_{\max,i}B}{c_d},\quad
    \kappa_{\eta,i}:=\frac{\lambda_{\max,i}L_0B^2}{2c_d},
\end{aligned}
\end{equation}
where $b_i$ is the tolerance of Assumption~\ref{asm:descent_margin} ($b_i=0$
in the strict form).  The radius is an explicit \emph{conditional} error bound:
its first two terms shrink with $\bar\Delta_i$ and $\eta$, but the third,
$b_i/c_d$, has no proved vanishing rate, so a small direction deviation and a
small step size do not by themselves make $\epsilon_{0,i}$ small.  Then for all
$t\ge t_0$ and $i\in C_n$:
\begin{enumerate}[leftmargin=2em,itemsep=2pt]
  \item \textup{(One-step contraction)}
        \begin{equation}\label{eq:cp_monotone}
            r_i^{t+1}\;\le\;(1-\eta c_d)\,r_i^t+\eta\,c_d\,\epsilon_{0,i} ,
        \end{equation}
        so $\psi_i^{t+1}<\psi_i^t$ whenever $r_i^t>\epsilon_{0,i}$, and
        $\psi_i^{t+1}\le\psi_i^t$ whenever $r_i^t\ge\epsilon_{0,i}$.  In
        particular, even for $\bar\Delta_i=0$ and $b_i=0$ the Chebyshev value
        is guaranteed to be non-increasing only outside the $O(\eta)$ band
        $r_i^t\ge\kappa_{\eta,i}\eta$ created by the second-order term; the
        earlier claim of exact monotonicity for every step size is not
        available.
  \item \textup{(Horizon-independent band)}  Unrolling~\eqref{eq:cp_monotone},
        \begin{equation}\label{eq:cp_unrolled}
            r_i^{T}\;\le\;(1-\eta c_d)^{\,T-t_0}\,r_i^{t_0}
            +\bigl(1-(1-\eta c_d)^{\,T-t_0}\bigr)\,\epsilon_{0,i}
            \;\le\;(1-\eta c_d)^{\,T-t_0}\,r_i^{t_0}+\epsilon_{0,i},
            \qquad\forall\,T\ge t_0 ,
        \end{equation}
        and hence $\limsup_{T\to\infty}\bigl(\psi_i(w^T)-\psi_i^*\bigr)\le\epsilon_{0,i}$.
        This is a statement about the \emph{values} $\psi_i(w^T)$; it does not
        assert that $\{w^t\}$ converges or has a limit point, and it does not
        assert that $r_i^T\le\epsilon_{0,i}$ is reached in finite time: for
        every $\zeta>0$ there is $T_\zeta$ with $r_i^T\le\epsilon_{0,i}+\zeta$
        for all $T\ge T_\zeta$, but the closed band itself need never be
        entered (the recursion $r^{t}=\epsilon_{0,i}+q^{t}(r^{0}-\epsilon_{0,i})$,
        $q=1-\eta c_d$, satisfies~\eqref{eq:cp_monotone} with equality and stays
        strictly above $\epsilon_{0,i}$ whenever $r^0>\epsilon_{0,i}$).
  \item \textup{(Nesting)}  The admissible sets are nested up to the per-step
        slack of Theorem~\ref{thm:collab_admissible},
        \begin{equation}\label{eq:cp_cluster_nested_drift}
            A_i^{t+1}(\varepsilon)\subseteq A_i^t(\varepsilon+\varepsilon_{i,t}),
            \qquad
            \mathcal A^{t+1}(\varepsilon)\subseteq\mathcal A^t\bigl(\varepsilon+\max_{i}\varepsilon_{i,t}\bigr),
            \qquad\forall\,\varepsilon\ge0 ,
        \end{equation}
        and \emph{exactly} nested,
        \begin{equation}\label{eq:cp_cluster_nested}
            A_i^{t+1}(\varepsilon)\subseteq A_i^t(\varepsilon)
            \quad\text{and}\quad
            \mathcal A^{t+1}(\varepsilon)\subseteq\mathcal A^t(\varepsilon),
        \end{equation}
        at every round in which $r_i^t\ge\epsilon_{0,i}$ for all $i\in C_n$.
  \item \textup{(Limiting band)}  With
        $q_i^*=\psi_i^*(1/\lambda_i^{(1)},\dots,1/\lambda_i^{(m)})$ the ray
        point of client $i$'s local optimum,
        \begin{equation}\label{eq:cp_At_band}
            \mathcal A^t\;\subseteq\;\bigcap_{i\in C_n}
            \Bigl\{\,w:\ l_i(w)\le q_i^*+\tfrac{1}{\lambda_{\min,i}}
            \bigl((1-\eta c_d)^{t-t_0}r_i^{t_0}+\epsilon_{0,i}\bigr)\mathbf1\,\Bigr\},
        \end{equation}
        a family of sets that decreases, as $t\to\infty$, to the fixed limiting cluster band
        \begin{equation}\label{eq:cp_cluster_band}
            \mathcal A^\infty:=\bigcap_{i\in C_n}
            \Bigl\{\,w:\ l_i(w)\le q_i^*+\frac{\epsilon_{0,i}}{\lambda_{\min,i}}\,\mathbf1\,\Bigr\}.
        \end{equation}
\end{enumerate}
\end{corollary}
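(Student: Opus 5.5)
The plan is to derive all four items from the one-step inequality~\eqref{eq:cp_psi_contract} of Theorem~\ref{thm:collab_admissible}, which applies at every $t\ge t_0$ because the trajectory hypothesis supplies $w^t\in\mathcal N$ and $\beta_i(\boldsymbol l_i(w^t))\le\bar\beta$.

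\textbf{Item 1.} Subtract $\psi_i^*$ from both sides of~\eqref{eq:cp_psi_contract} and replace $\Delta_i^t$ by $\bar\Delta_i$. This gives
\[
r_i^{t+1}\le(1-\eta c_d)r_i^t+\eta\Bigl(\lambda_{\max,i}B\bar\Delta_i+b_i+\tfrac{\lambda_{\max,i}L_0B^2}{2}\eta\Bigr).
\]
By the definition~\eqref{eq:cp_band_radius}, the bracket is exactly $c_d\epsilon_{0,i}$, which is~\eqref{eq:cp_monotone}. Rewrite it as $r_i^{t+1}-r_i^t\le\eta c_d(\epsilon_{0,i}-r_i^t)$. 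Then $\psi_i^{t+1}<\psi_i^t$ when $r_i^t>\epsilon_{0,i}$, and $\psi_i^{t+1}\le\psi_i^t$ when $r_i^t\ge\epsilon_{0,i}$.

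\textbf{Item 2.} The condition $0<\eta\le1/c_d$ makes $q:=1-\eta c_d\in[0,1)$, so the recursion preserves order. Induction on $T$, using the geometric sum $\eta c_d\sum_{k<T-t_0}q^k=1-q^{T-t_0}$, gives~\eqref{eq:cp_unrolled}. Since $q^{T-t_0}\to0$, the limsup bound follows. The exact recursion $r^t=\epsilon_{0,i}+q^t(r^0-\epsilon_{0,i})$ is a short check showing that the closed band need not be entered in finite time.

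\textbf{Item 3.} Take $l\in A_i^{t+1}(\varepsilon)$, so that $l\le q_i^{t+1}+\varepsilon\mathbf1$. Componentwise, $q_i^{t+1,(j)}=\psi_i^{t+1}/\lambda_i^{(j)}$. Bound $\psi_i^{t+1}$ by~\eqref{eq:cp_psi_contract} after dropping the nonpositive term $-\eta c_d r_i^t$; exactly as in the admissible-set inclusion step of Theorem~\ref{thm:collab_admissible}, this gives $q_i^{t+1}\le q_i^t+\varepsilon_{i,t}\mathbf1$. That is~\eqref{eq:cp_cluster_nested_drift} for each client. Intersecting over $i$ and enlarging each slack to $\max_i\varepsilon_{i,t}$ gives the cluster version. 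If $r_i^t\ge\epsilon_{0,i}$ for all $i$, Item 1 gives $\psi_i^{t+1}\le\psi_i^t$, hence $q_i^{t+1}\le q_i^t$ and exact nesting for every client and for the intersection.

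\textbf{Item 4.} For $w\in\mathcal A^t$ we have $l_i^{(j)}(w)\le\psi_i^t/\lambda_i^{(j)}=q_i^{*,(j)}+r_i^t/\lambda_i^{(j)}$. Bound $r_i^t$ by~\eqref{eq:cp_unrolled} and $1/\lambda_i^{(j)}$ by $1/\lambda_{\min,i}$; this is~\eqref{eq:cp_At_band}. The upper sets are monotone in $t$ because the radius $q^{t-t_0}r_i^{t_0}+\epsilon_{0,i}$ decreases. Their intersection is $\mathcal A^\infty$: an inequality that holds for every radius above $\epsilon_{0,i}$ holds at $\epsilon_{0,i}$ itself.

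I expect no deep obstacle. The one place needing care is the bookkeeping in Item 3. The slack must be inherited consistently: the per-step slack $\varepsilon_{i,t}$ comes from dropping the contraction term, while exact nesting must use the sign of $r_i^t-\epsilon_{0,i}$ rather than that dropped bound. The order-preservation condition $\eta\le1/c_d$ in Item 2 is essential, since otherwise $q<0$ would break the induction.
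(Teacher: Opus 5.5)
Your proposal is correct and follows the paper's proof essentially item by item. All four items are read off the one-step bound \eqref{eq:cp_psi_contract} of Theorem~\ref{thm:collab_admissible} using the same steps as the paper: the identification of the bracket as $c_d\epsilon_{0,i}$, the geometric unrolling, dropping $-\eta c_d r_i^t$ to get $q_i^{t+1}\le q_i^t+\varepsilon_{i,t}\mathbf1$ (with Item~1's sign giving exact nesting), and $q_i^{t,(j)}=q_i^{*,(j)}+r_i^t/\lambda_i^{(j)}$ for the limiting band. There is one small caveat, which the paper's own statement shares: the ``closed band need never be entered'' example in Item~2 needs $\eta<1/c_d$, so that $q>0$. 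At $\eta=1/c_d$, \eqref{eq:cp_monotone} already forces $r_i^{t_0+1}\le\epsilon_{0,i}$.
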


\begin{proof}
\emph{Item 1.}  Subtract $\psi_i^*$ from both sides of~\eqref{eq:cp_psi_contract}
and use $\Delta_i^t\le\bar\Delta_i$:
$r_i^{t+1}\le(1-\eta c_d)r_i^t+\eta\lambda_{\max,i}B\bar\Delta_i+\eta b_i
+\tfrac{\lambda_{\max,i}L_0B^2}{2}\eta^2=(1-\eta c_d)r_i^t+\eta c_d\epsilon_{0,i}$
by the definition~\eqref{eq:cp_band_radius}.  Rewriting as
$\psi_i^{t+1}-\psi_i^t\le-\eta c_d\,(r_i^t-\epsilon_{0,i})$ gives the sign
statements.  For $\bar\Delta_i=0$, $b_i=0$ the band reduces to $\epsilon_{0,i}=\kappa_{\eta,i}\eta$.

\emph{Item 2.}  Since $0\le1-\eta c_d<1$, induction on~\eqref{eq:cp_monotone} gives
$r_i^T\le(1-\eta c_d)^{T-t_0}r_i^{t_0}+\eta c_d\epsilon_{0,i}\sum_{s=0}^{T-t_0-1}(1-\eta c_d)^s
=(1-\eta c_d)^{T-t_0}r_i^{t_0}+\bigl(1-(1-\eta c_d)^{T-t_0}\bigr)\epsilon_{0,i}$,
using $\eta c_d\sum_{s=0}^{K-1}(1-\eta c_d)^s=1-(1-\eta c_d)^K$; the second
inequality in~\eqref{eq:cp_unrolled} drops the nonpositive term
$-(1-\eta c_d)^{T-t_0}\epsilon_{0,i}$.

\emph{Item 3.}  Dropping $-\eta c_dr_i^t\le0$ in~\eqref{eq:cp_psi_contract} and
dividing by $\lambda_i^{(j)}\ge\lambda_{\min,i}$,
$q_i^{t+1,(j)}=\psi_i^{t+1}/\lambda_i^{(j)}\le q_i^{t,(j)}+\varepsilon_{i,t}$, so
$q_i^{t+1}+\varepsilon\mathbf1\le q_i^t+(\varepsilon+\varepsilon_{i,t})\mathbf1$
and $A_i^{t+1}(\varepsilon)\subseteq A_i^t(\varepsilon+\varepsilon_{i,t})$;
intersecting over $i$ gives the cluster statement
in~\eqref{eq:cp_cluster_nested_drift}.  If $r_i^t\ge\epsilon_{0,i}$ then
$\psi_i^{t+1}\le\psi_i^t$ by Item~1, hence $q_i^{t+1}\le q_i^t$ componentwise and
the inclusions hold with no inflation, which is~\eqref{eq:cp_cluster_nested}.

\emph{Item 4.}  From $q_i^{t,(j)}=\psi_i^t/\lambda_i^{(j)}=(\psi_i^*+r_i^t)/\lambda_i^{(j)}
=q_i^{*,(j)}+r_i^t/\lambda_i^{(j)}\le q_i^{*,(j)}+r_i^t/\lambda_{\min,i}$ and
\eqref{eq:cp_unrolled}, every $w\in\mathcal A^t$ satisfies
$l_i(w)\le q_i^t\le q_i^*+\lambda_{\min,i}^{-1}\bigl((1-\eta c_d)^{t-t_0}r_i^{t_0}+\epsilon_{0,i}\bigr)\mathbf1$
for all $i$, which is~\eqref{eq:cp_At_band}; the right-hand sets decrease in
$t$ and their intersection over $t$ is~\eqref{eq:cp_cluster_band}.
\end{proof}

\begin{lemma}\label{prop:cheby2}
(Adapted from~\citep{choo1983proper}, Theorem 3.1). A feasible solution $w$ is weakly Pareto optimal if and
only if there exists a weight vector $\boldsymbol{\lambda_i}$ such that $w$ is an optimal solution to the aggregation function $\psi_i(w)=\max_{j\in[m]}\lambda_i^{(j)} (l_i^{(j)}(w)-l_{i}^{(j)*})$ defined in the main paper.
\end{lemma}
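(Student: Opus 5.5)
The plan is to prove the two implications separately, following the classical argument of Choo and Atkins. Throughout, $l_i^{(j)*}$ is read as the coordinates of a reference (utopia) point $\boldsymbol z^*$ lying strictly below every attainable loss, i.e.\ $z^{(j)*}<l_i^{(j)}(w)$ for all feasible $w$ and all $j$. The weights are taken in $S^m$, so $\lambda_i^{(j)}>0$.

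\emph{Sufficiency.} Suppose $w$ minimizes $\psi_i$ for some $\boldsymbol\lambda_i\in S^m$, and assume for contradiction that $w$ is not weakly Pareto optimal. Then there is a $w'\in\Omega$ with $l_i^{(j)}(w')<l_i^{(j)}(w)$ for every $j\in[m]$. Since every $\lambda_i^{(j)}>0$, multiplying gives the strict inequality coordinatewise:
\[
\lambda_i^{(j)}\bigl(l_i^{(j)}(w')-z^{(j)*}\bigr)<\lambda_i^{(j)}\bigl(l_i^{(j)}(w)-z^{(j)*}\bigr)\quad\text{for all } j .
\]
The maximum over the finitely many $j$ of the left-hand sides is attained at some index, and that value is strictly below the corresponding right-hand side. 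Hence $\psi_i(w')<\psi_i(w)$, which contradicts the optimality of $w$. The strict positivity of the weights is exactly what this direction uses.

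\emph{Necessity.} Let $w$ be weakly Pareto optimal. I would build the weight explicitly from $w$ itself. Set
\[
\lambda_i^{(j)}:=\frac{1/\bigl(l_i^{(j)}(w)-z^{(j)*}\bigr)}{\sum_{j'}1/\bigl(l_i^{(j')}(w)-z^{(j')*}\bigr)} .
\]
This is well defined and lies in $S^m$ because every gap $l_i^{(j)}(w)-z^{(j)*}$ is strictly positive. By construction every weighted gap at $w$ equals the same constant $c=\bigl(\sum_{j'}1/(l_i^{(j')}(w)-z^{(j')*})\bigr)^{-1}$, so $\psi_i(w)=c$. Geometrically, $\boldsymbol l_i(w)$ then lies on the $\boldsymbol\lambda_i^{-1}$ ray emanating from $\boldsymbol z^*$.

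Now suppose some $w'$ had $\psi_i(w')<c$. Then $\lambda_i^{(j)}\bigl(l_i^{(j)}(w')-z^{(j)*}\bigr)<c=\lambda_i^{(j)}\bigl(l_i^{(j)}(w)-z^{(j)*}\bigr)$ for every $j$. Dividing by $\lambda_i^{(j)}>0$ gives $l_i^{(j)}(w')<l_i^{(j)}(w)$ for all $j$, which contradicts weak Pareto optimality. Hence $w$ minimizes $\psi_i$.

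The main obstacle is the reference point, not either implication. If $l_i^{(j)*}$ is taken literally as the ideal value $\min_w l_i^{(j)}(w)$, then a weakly Pareto optimal $w$ may attain it in some coordinate. The corresponding gap is then zero, and the weight construction divides by zero. I would first handle this by shifting to a strict utopia point $\boldsymbol z^*=\boldsymbol l^{\mathrm{ideal}}-\varepsilon\mathbf 1$ with $\varepsilon>0$. The statement is invariant under such a shift in the sufficiency direction, and the shift is what makes necessity go through. I would note explicitly that the reformulation $\psi_i(w)=\max_j\lambda_i^{(j)}l_i^{(j)}(w)$ used in the main text corresponds to $\boldsymbol z^*=0$. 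That choice is admissible whenever all losses are strictly positive, which holds for the DPO losses considered here.
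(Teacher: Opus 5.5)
Your proof is correct, and it is essentially the paper's route: the paper gives no argument of its own and simply invokes Choo and Atkins (Theorem 3.1), whose standard proof consists of exactly your two steps. These are sufficiency by contradiction using $\lambda_i^{(j)}>0$, and necessity via the reciprocal-gap weights $\lambda_i^{(j)}\propto 1/\bigl(l_i^{(j)}(w)-z^{(j)*}\bigr)$ measured from a strict utopia point.

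Your explicit handling of the reference point is worth keeping. The paper defines $l_i^{(j)*}$ as the loss at the preference-specific optimum $w_i^*$, a Pareto-optimal point that itself depends on $\boldsymbol{\lambda}_i$. Under that literal reading, the minimum of $\psi_i$ is $0$ and is attained only at Pareto-optimal loss vectors, so necessity fails for every weakly-but-not-Pareto-optimal point. Your utopia reading (or $\boldsymbol z^*=0$ for strictly positive DPO losses) is what actually makes the lemma true.
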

\begin{theorem}[Cluster solution approximates every local optimum under the descent-margin condition]
\label{thm:collab_converge}
Let Assumptions~\ref{asm:bounded}--\ref{asm:smooth} and~\ref{asm:descent_margin}
hold and let $0<\eta\le1/c_d$.  Suppose that the shared
iteration~\eqref{eq:cp_sync} admits a round $t_0$ such that, for all $t\ge t_0$
and all $i\in C_n$,
\begin{equation}\label{eq:cp_traj_cond}
    w^t\in\mathcal N,
    \qquad
    \beta_i(\boldsymbol{l}_i(w^t))\le\bar\beta,
    \qquad
    \Delta_i^t=\|\bar d^{\,t}-d_i(w^t)\|\le\bar\Delta_i
\end{equation}
(the residual-floor regime of Lemma~\ref{lem:beta_floor}, with the direction
deviation bounded along the trajectory), and that every segment
$[w^t,w^{t+1}]$, $t\ge t_0$, lies in the region where
Assumptions~\ref{asm:bounded}--\ref{asm:smooth} hold.  Let
$\epsilon_{0,i}=\kappa_{H,i}\bar\Delta_i+\kappa_{\eta,i}\eta+b_i/c_d$
be the band radius~\eqref{eq:cp_band_radius}.  Then for every client $i\in C_n$:
\begin{enumerate}[leftmargin=2em,itemsep=2pt]
  \item \textup{(Finite-horizon rate)}  For all $T\ge t_0$,
        $\psi_i(w^T)-\psi_i^*\le(1-\eta c_d)^{T-t_0}\bigl(\psi_i(w^{t_0})-\psi_i^*\bigr)+\epsilon_{0,i}$.
  \item \textup{(Objective space: asymptotic band)}
        \begin{equation}\label{eq:cp_obj_band}
            \limsup_{T\to\infty}\bigl(\psi_i(w^T)-\psi_i^*\bigr)\;\le\;\epsilon_{0,i},
            \qquad
            \limsup_{T\to\infty}\,l_i^{(j)}(w^T)\;\le\;q_i^{*,(j)}+\frac{\epsilon_{0,i}}{\lambda_{\min,i}}
            \quad\text{for every }j\in[m],
        \end{equation}
        where $q_i^*=\psi_i^*\bigl(1/\lambda_i^{(1)},\dots,1/\lambda_i^{(m)}\bigr)$
        is the ray point of client $i$'s local optimum $w_i^*$.  \emph{If} the
        sequence $\{w^t\}$ has a limit point $w^*$, then by continuity of the
        losses $w^*$ inherits these bounds,
        $\psi_i(w^*)\le\psi_i^*+\epsilon_{0,i}$ and
        $l_i(w^*)\le q_i^*+(\epsilon_{0,i}/\lambda_{\min,i})\mathbf1$
        componentwise, i.e.\ $w^*\in\mathcal A^\infty$, the fixed limiting
        cluster band of~\eqref{eq:cp_cluster_band}.  We do not claim that
        $\{w^t\}$ converges or that a limit point exists, nor that the closed
        band $r_i^t\le\epsilon_{0,i}$ is entered after finitely many rounds;
        the guarantee is asymptotic (for every $\zeta>0$ the iterates are
        eventually within $\epsilon_{0,i}+\zeta$).
  \item \textup{(Parameter space, under Assumption~\ref{asm:pl})}  If a limit
        point $w^*$ exists and lies in the neighbourhood of $w_i^*$
        where~\eqref{eq:pl} holds, then
        \begin{equation}\label{eq:cp_param}
            \|w^*-w_i^*\|^2\;\le\;\frac{2}{\mu}\,\epsilon_{0,i}
            \;=\;\frac{2\lambda_{\max,i}}{\mu\,c_d}\Bigl(B\,\bar\Delta_i+\frac{L_0B^2}{2}\,\eta\Bigr)+\frac{2\,b_i}{\mu\,c_d}.
        \end{equation}
\end{enumerate}
The band radius is horizon-independent and, for fixed $(c_d,b_i,\mathcal N)$, is
controlled by two quantities only: the trajectory bound $\bar\Delta_i$ on the
deviation of the local LP direction from the shared aggregate, and the step size
$\eta$.  By Corollary~\ref{cor:collab_range},
$\Delta_i^t\le\sqrt{H_{C_n}(w^t)/\rho_{\min}}$, so one may take
$\bar\Delta_i=\sqrt{\bar H/\rho_{\min}}$ with
$\bar H:=\sup_{t\ge t_0}H_{C_n}(w^t)$; this is the quantity that the Stage-2
clustering criterion is designed to reduce (Remark~\ref{rem:cluster_motivation}).
Only in the restricted case in which the hypotheses of
Corollary~\ref{cor:homo} (the common-objective and common-optimal-basis
idealizations, Assumptions~\ref{asm:common_G}--\ref{asm:lp_stable}) hold at
every iterate $w^t$, $t\ge t_0$, does one obtain $\bar H=O(\delta_\lambda^2)$
and $\epsilon_{0,i}=O(\delta_\lambda)+O(\eta)+b_i/c_d$; in general
$\bar\Delta_i$ remains an independent, measured quantity and the band radius
is stated in terms of it.

Four qualifications apply.  (a)~The componentwise bound
$l_i(w^*)\le q_i^*+(\epsilon_{0,i}/\lambda_{\min,i})\mathbf1$ is a
\emph{one-sided} upper bound on each loss coordinate; it does not bound
$\|l_i(w^*)-q_i^*\|$ or place $l_i(w^*)$ near the Pareto front, and a point
that is far below $q_i^*$ in some coordinate is not excluded by it.  (b)~The
constants $c_d$, $b_i$ and the region $\mathcal N$ in
Assumption~\ref{asm:descent_margin} are defined through the near-active set
$J^*_{i,\varepsilon}$, whose width $\varepsilon_{\mathrm{act}}=2\eta\lambda_{\max,i}B^2$
depends on $\eta$; consequently the informal statement ``the band closes as
$\bar\Delta_i\to0$, $\eta\to0$'' is meaningful only if
Assumption~\ref{asm:descent_margin} holds with $c_d$ bounded away from $0$,
$b_i\to0$ and a fixed $\mathcal N$ uniformly in $\eta$, which we do not
establish.  (c)~Even in that idealized limit we do \emph{not} claim
$w^*=w_i^*$ unless the minimizer of $\psi_i$ is unique on $\mathcal N$ (which
Assumption~\ref{asm:pl} provides locally).  (d)~The radius is a conditional
error bound, not a smallness claim: the tolerance term $b_i/c_d$ has no proved
vanishing rate, so even for small $\bar\Delta_i$ and $\eta$ we do not assert
that $w^t$ is close to the optimum unless $b_i/c_d$ is separately controlled.
\end{theorem}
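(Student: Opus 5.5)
The plan is to obtain all three items from Corollary~\ref{cor:collab_monotone}, adding only a continuity argument for limit points and a quadratic-growth argument for the parameter-space statement. The trajectory hypothesis~\eqref{eq:cp_traj_cond} is exactly the hypothesis of that corollary: $w^t\in\mathcal N$, $\beta_i\le\bar\beta$, $\Delta_i^t\le\bar\Delta_i$ for $t\ge t_0$, and the segments lie in the smoothness region. So Theorem~\ref{thm:collab_admissible} applies at every round $t\ge t_0$. That theorem rests on Lemma~\ref{loss_avg} for $\bar d^{\,t}$ and on Assumption~\ref{asm:descent_margin} to lower-bound $\xi_i(w^t;\bar d^{\,t})$ by $c_dr_i^t-b_i-\lambda_{\max,i}B\Delta_i^t$. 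Item~1 then follows from the unrolled recursion~\eqref{eq:cp_unrolled} of Corollary~\ref{cor:collab_monotone}, since $0<\eta\le 1/c_d$ makes $1-\eta c_d\in[0,1)$.

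For Item~2, I will take $\limsup_{T\to\infty}$ in~\eqref{eq:cp_unrolled}. The geometric term vanishes, which gives the first bound in~\eqref{eq:cp_obj_band}. For the coordinatewise bound, I use $\lambda_i^{(j)}l_i^{(j)}(w^T)\le\psi_i(w^T)=\psi_i^*+r_i^T$. Dividing by $\lambda_i^{(j)}$ gives $l_i^{(j)}(w^T)\le q_i^{*,(j)}+r_i^T/\lambda_i^{(j)}\le q_i^{*,(j)}+r_i^T/\lambda_{\min,i}$, and taking $\limsup$ yields the claim. If a limit point $w^*=\lim_k w^{T_k}$ exists, continuity of each $l_i^{(j)}$ (implied by smoothness) and of the finite maximum $\psi_i$ passes both bounds to $w^*$. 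Comparing with~\eqref{eq:cp_cluster_band} then gives $w^*\in\mathcal A^\infty$. No claim of convergence of $\{w^t\}$ is needed or made.

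For Item~3, I will invoke the quadratic-growth inequality~\eqref{eq:pl} of Assumption~\ref{asm:pl} around $w_i^*$, in the form $\psi_i(w)-\psi_i^*\ge\frac{\mu}{2}\|w-w_i^*\|^2$ on the stated neighbourhood. Applying it at $w^*$ and combining with $\psi_i(w^*)-\psi_i^*\le\epsilon_{0,i}$ from Item~2 gives $\|w^*-w_i^*\|^2\le 2\epsilon_{0,i}/\mu$. Substituting~\eqref{eq:cp_band_radius} produces the explicit form in~\eqref{eq:cp_param}.

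The closing remarks are bookkeeping. The substitution $\bar\Delta_i=\sqrt{\bar H/\rho_{\min}}$ is Corollary~\ref{cor:collab_range} applied at each $w^t$ and followed by a supremum. The $O(\delta_\lambda)$ form uses Corollary~\ref{cor:homo} at every iterate, under the restricted assumptions. The main point requiring care is not a computation but a consistency check. The constants $c_d,b_i$ must be the ones attached to the active-set width $\varepsilon_{\mathrm{act}}=2\eta\lambda_{\max,i}B^2$ for the same $\eta$ used in the iteration, so that Lemma~\ref{loss_avg}'s treatment of inactive indices matches the set on which the descent margin is assumed. I also expect to state explicitly the form of~\eqref{eq:pl} being used, namely quadratic growth rather than a gradient inequality, since that is the only property Item~3 requires.
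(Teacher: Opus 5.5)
Your proposal is correct and follows the paper's proof essentially step for step. Item~1 is read off the unrolled recursion of Corollary~\ref{cor:collab_monotone}, whose hypotheses are exactly the trajectory and segment conditions. Item~2 takes the $\limsup$, divides $\lambda_i^{(j)}l_i^{(j)}(w^T)\le\psi_i(w^T)$ by $\lambda_i^{(j)}\ge\lambda_{\min,i}$, and passes the bounds to limit points by continuity. Item~3 applies the quadratic growth of Assumption~\ref{asm:pl} at $w^*$.
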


\begin{proof}
\emph{Item 1} is~\eqref{eq:cp_unrolled} of Corollary~\ref{cor:collab_monotone},
whose hypotheses are exactly~\eqref{eq:cp_traj_cond}, the segment condition,
and $\eta\le1/c_d$.

\emph{Item 2.}  Since $0\le1-\eta c_d<1$, the first term in Item~1 tends to $0$,
which gives the first inequality in~\eqref{eq:cp_obj_band}.  For every $j$,
$\lambda_i^{(j)}l_i^{(j)}(w^T)\le\psi_i(w^T)$; dividing by
$\lambda_i^{(j)}\ge\lambda_{\min,i}$, using $q_i^{*,(j)}=\psi_i^*/\lambda_i^{(j)}$
and taking $\limsup_T$ gives the second.  If $w^{t_k}\to w^*$ along a
subsequence, each $l_i^{(j)}$ is continuous (indeed $L_0$-smooth), hence so is
$\psi_i$, and $\psi_i(w^*)=\lim_k\psi_i(w^{t_k})\le\limsup_T\psi_i(w^T)\le\psi_i^*+\epsilon_{0,i}$;
the componentwise statement follows in the same way, and intersecting over
$i\in C_n$ is the definition of $\mathcal A^\infty$ in~\eqref{eq:cp_cluster_band}.

\emph{Item 3.}  Assumption~\ref{asm:pl} at $w=w^*$ gives
$\tfrac{\mu}{2}\|w^*-w_i^*\|^2\le\psi_i(w^*)-\psi_i^*\le\epsilon_{0,i}$, which
rearranges to~\eqref{eq:cp_param}.
\end{proof}

\begin{corollary}[Phase~2 as a continuation of the shared iteration]
\label{cor:phase2_cont}
Phase~2 of Algorithm~\ref{alg:1} uses the \emph{same} clients, the same local
data and objective functions $l_1,\dots,l_m$, the same aggregation weights
$\rho_i$ and the same server update $w_{r+1}=w_r+\alpha\sum_i\rho_i g_i$ as
Phase~1; the $S$-shot episodes are subsamples of the same local datasets, not
new data.  Its inner update solves~\eqref{eq:fewshot_obj}, which is the
LP~\eqref{eq:6} in balancing mode ($\mathbbm 1_{\beta_i}=1$) with the same
constraints~\eqref{eq:6b}--\eqref{eq:6c}, so the inner direction is the same
LP direction $d_i=G_i\mu_i^*$ whenever $\beta_i>0$ (for $\beta_i=0$ the
objective of~\eqref{eq:fewshot_obj} vanishes; in the analysis we adopt the
convention that the uniform-descent direction of~\eqref{eq:6} is then used, as
in Phase~1).  Consequently, under the same idealization as
Theorem~\ref{thm:collab_converge}---one inner step per round ($K=1$, in place
of $\tau=1$) and population gradients in place of the $S$-shot
gradients---the Phase-2 round map is the shared iteration~\eqref{eq:cp_sync}
with effective step size $\eta_{\mathrm{II}}:=\alpha\eta_{\mathrm{meta}}$ in
place of the Phase-1 effective step size $\eta_{\mathrm{I}}$ (the $\eta$
of~\eqref{eq:cp_sync}, into which the server step size is absorbed), and the
two phases form a single trajectory
\[
    w^0,\dots,w^{R}=w_{\mathrm{I}},\ w^{R+1},\dots,w^{R+R'}=w_{\mathrm{II}},
    \qquad
    w^{t+1}=w^t-\eta_t\,\bar d^{\,t},\quad
    \eta_t=\begin{cases}\eta_{\mathrm{I}}, & t<R,\\ \eta_{\mathrm{II}}, & t\ge R.\end{cases}
\]
Let Assumptions~\ref{asm:bounded}--\ref{asm:smooth} hold, let
Assumption~\ref{asm:descent_margin} hold with the near-active set
$J^*_{i,\varepsilon}$ taken at width
$\varepsilon_{\mathrm{act}}=2\eta_t\lambda_{\max,i}B^2$ for the step size
$\eta_t$ in force at round $t$, let $0<\eta_{\mathrm{I}},\eta_{\mathrm{II}}\le1/c_d$,
and let the trajectory hypothesis~\eqref{eq:cp_traj_cond} hold for all
$t\ge t_0$ along the concatenated trajectory (including the Phase-2 rounds),
with the segment condition of Theorem~\ref{thm:collab_converge}.  Define the
two-phase band radius
\begin{equation}\label{eq:cp_band_radius_2ph}
    \hat\epsilon_{0,i}
    \;:=\;\kappa_{H,i}\,\bar\Delta_i+\kappa_{\eta,i}\,\max(\eta_{\mathrm{I}},\eta_{\mathrm{II}})+\frac{b_i}{c_d}
    \;\ge\;\max\bigl(\epsilon_{0,i}(\eta_{\mathrm{I}}),\epsilon_{0,i}(\eta_{\mathrm{II}})\bigr).
\end{equation}
Then for every $i\in C_n$ and every $T\ge t_0$ on the concatenated trajectory,
\begin{equation}\label{eq:cp_unrolled_2ph}
    \psi_i(w^T)-\psi_i^*
    \;\le\;\Bigl(\prod_{t=t_0}^{T-1}(1-\eta_tc_d)\Bigr)\bigl(\psi_i(w^{t_0})-\psi_i^*\bigr)
    +\hat\epsilon_{0,i},
\end{equation}
so that in particular the Phase-2 output satisfies
$\psi_i(w_{\mathrm{II}})-\psi_i^*\le(1-\eta_{\min}c_d)^{R+R'-t_0}(\psi_i(w^{t_0})-\psi_i^*)+\hat\epsilon_{0,i}$
with $\eta_{\min}:=\min(\eta_{\mathrm{I}},\eta_{\mathrm{II}})$, and
Items~2--3 of Theorem~\ref{thm:collab_converge} hold for the concatenated
trajectory with $\epsilon_{0,i}$ replaced by $\hat\epsilon_{0,i}$.  Everything
that Theorem~\ref{thm:collab_converge} and Section~\ref{sec:approx_optimum}
state for the Phase-1 output $w_{\mathrm{I}}$ therefore also holds for
$w_{\mathrm{II}}$, with the band radius~\eqref{eq:cp_band_radius_2ph}.

The limitations are those already stated for Phase~1, not new ones:
(i)~the analysis covers $K=1$ inner step, exactly as it covers $\tau=1$
local step (the discussion after~\eqref{eq:cp_sync} applies verbatim to
$K>1$); (ii)~it uses population gradients, whereas Phase~2 evaluates $G_i$ and
$\boldsymbol A_i$ on an $S$-shot episode---the same population-versus-minibatch
gap as in Phase~1, but larger for $S=20$, and we do not quantify it; (iii)~the
trajectory hypothesis~\eqref{eq:cp_traj_cond} is imposed on the Phase-2 rounds
as well, and is not derived.  What the corollary rules out is a
\emph{qualitative} degradation: Phase~2 does not change the objective, the
data or the direction rule, so it cannot move the shared model out of the band
of Theorem~\ref{thm:collab_converge} other than through the step-size term
$\kappa_{\eta,i}\eta_{\mathrm{II}}$.
\end{corollary}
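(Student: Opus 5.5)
The plan is to re-run the one-step machinery of Theorem~\ref{thm:collab_admissible} round by round on the concatenated trajectory, with the round-dependent step size $\eta_t\in\{\eta_{\mathrm{I}},\eta_{\mathrm{II}}\}$. This first requires checking that a Phase-2 round is an instance of the shared iteration~\eqref{eq:cp_sync}. Under the stated idealization ($K=1$, population gradients), client $i$ returns $g_i=\theta_i^{(1)}-\theta_i^{(0)}=-\eta_{\mathrm{meta}}G_i\mu^*$. Here $\mu^*$ solves~\eqref{eq:fewshot_obj}, which is exactly~\eqref{eq:6} with $\mathbbm 1_{\beta_i}=1$ and the same constraints~\eqref{eq:6b}--\eqref{eq:6c}; for $\beta_i=0$ we use the adopted convention. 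Hence the server update is
\[
w_{r+1}=w_r-\alpha\eta_{\mathrm{meta}}\sum_i\rho_i d_i(w_r)=w_r-\eta_{\mathrm{II}}\bar d^{\,r},
\]
which is~\eqref{eq:cp_sync} with step $\eta_{\mathrm{II}}$. Phase~1 is~\eqref{eq:cp_sync} with $\eta_{\mathrm{I}}$ by definition, so the concatenation is a single sequence $w^{t+1}=w^t-\eta_t\bar d^{\,t}$.

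Next, fix $t\ge t_0$. Lemma~\ref{loss_avg} and the proof of Theorem~\ref{thm:collab_admissible} are purely local: they use only the current point, the current step, the segment condition, and Assumption~\ref{asm:descent_margin} with active width $2\eta_t\lambda_{\max,i}B^2$, which is exactly how that assumption is imposed here. Applying them yields~\eqref{eq:cp_psi_contract} with $\eta$ replaced by $\eta_t$. Subtracting $\psi_i^*$ and using $\Delta_i^t\le\bar\Delta_i$ gives
\[
r_i^{t+1}\le(1-\eta_tc_d)r_i^t+\eta_tc_d\,\epsilon_{0,i}(\eta_t),
\]
and since $\epsilon_{0,i}(\cdot)$ is affine increasing in the step, this is at most $(1-\eta_tc_d)r_i^t+\eta_tc_d\,\hat\epsilon_{0,i}$.

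Then unroll this time-varying recursion with $q_t:=1-\eta_tc_d\in[0,1)$, using $\eta_t\le1/c_d$. By induction,
\[
r_i^T\le\Bigl(\prod_{t=t_0}^{T-1}q_t\Bigr)r_i^{t_0}+\hat\epsilon_{0,i}\sum_{s=t_0}^{T-1}(1-q_s)\prod_{t=s+1}^{T-1}q_t .
\]
The sum telescopes to $1-\prod_{t=t_0}^{T-1}q_t\le1$, which gives~\eqref{eq:cp_unrolled_2ph}. The bound for $w_{\mathrm{II}}$ follows from $q_t\le1-\eta_{\min}c_d$. For Items~2--3 of Theorem~\ref{thm:collab_converge}, the product tends to zero because eventually $\eta_t=\eta_{\mathrm{II}}>0$. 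The componentwise loss bound, the limit-point inheritance and the PL parameter bound then go through verbatim with $\hat\epsilon_{0,i}$, and so do the statements of Section~\ref{sec:approx_optimum}, since they consume only the value bound.

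I expect the main obstacle to be justifying that the Phase-2 inner direction is genuinely the same LP map $d_i(w)$ used in Phase~1. Phase~1 switches between balancing and uniform modes, while~\eqref{eq:fewshot_obj} is always in balancing mode, and the claim rests on the $\beta_i=0$ convention together with population rather than $S$-shot gradients. I would state both explicitly as part of the idealization rather than attempt to bound the minibatch gap. The remaining delicate point is the $\eta_t$-dependence of $J^*_{i,\varepsilon}$. This is handled because the descent-margin assumption is imposed with the width of the step in force, so each round's application is self-consistent.
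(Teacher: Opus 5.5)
Your proposal is correct and follows the paper's proof. Like the paper, you apply the single-round contraction (Theorem~\ref{thm:collab_admissible}, equivalently Item~1 of Corollary~\ref{cor:collab_monotone}) at each round with $\eta=\eta_t$, bound $\epsilon_{0,i}(\eta_t)\le\hat\epsilon_{0,i}$ using monotonicity in the step, and unroll the time-varying recursion. Your explicit check that a $K=1$ Phase-2 round equals $w_r-\alpha\eta_{\mathrm{meta}}\bar d^{\,r}$, and your telescoping-sum unrolling in place of the paper's shift by $\hat\epsilon_{0,i}$, are only cosmetic differences.
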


\begin{proof}
The one-step contraction~\eqref{eq:cp_monotone} of
Corollary~\ref{cor:collab_monotone} is proved for a single round of the
iteration~\eqref{eq:cp_sync} with step size $\eta$ under the hypotheses listed
there at that round; nothing in its proof uses the value of $\eta$ at other
rounds.  Applying it at round $t$ with $\eta=\eta_t$ gives
$r_i^{t+1}\le(1-\eta_tc_d)r_i^t+\eta_tc_d\,\epsilon_{0,i}(\eta_t)
\le(1-\eta_tc_d)r_i^t+\eta_tc_d\,\hat\epsilon_{0,i}$, using
$\epsilon_{0,i}(\eta_t)\le\hat\epsilon_{0,i}$ from~\eqref{eq:cp_band_radius_2ph}
and $\eta_tc_d\le1$.  Subtracting $\hat\epsilon_{0,i}$ from both sides yields
$r_i^{t+1}-\hat\epsilon_{0,i}\le(1-\eta_tc_d)(r_i^t-\hat\epsilon_{0,i})$, and
unrolling from $t_0$ to $T$ gives~\eqref{eq:cp_unrolled_2ph} (the product is
at most $(1-\eta_{\min}c_d)^{T-t_0}$).  The $\limsup$, limit-point and
parameter-space statements follow exactly as in the proof of
Theorem~\ref{thm:collab_converge}, with $\hat\epsilon_{0,i}$ in place of
$\epsilon_{0,i}$.
\end{proof}

\begin{remark}[What changed relative to a first-order argument]
\label{rem:why_dm}
A bound of the form $\psi_i^{t+1}\le\psi_i^t+\eta\varepsilon$ for every step,
iterated over $T$ rounds, only yields $\psi_i^T\le\psi_i^0+T\eta\varepsilon$;
it neither implies descent nor a horizon-independent distance to $\psi_i^*$.
The horizon-independent band~\eqref{eq:cp_obj_band} is obtained here because
Assumption~\ref{asm:descent_margin} supplies a decrease that is
\emph{proportional to the current suboptimality} $r_i^t$ (up to the tolerance
$b_i$), turning the one-step bound into the contraction~\eqref{eq:cp_monotone}.
Without such a condition the conclusion of Theorem~\ref{thm:collab_converge} is
false in general (see the double-well example preceding
Assumption~\ref{asm:descent_margin}), so the condition is not a technicality
that can be dropped.

\emph{Scope.}  The convergence result is conditional on a quantitative
descent-margin assumption for the actual update direction and on the iterates
remaining in the stated region.  The assumption is not implied by smoothness,
preference balance, or the LP constraints alone.  In particular, a fixed
near-active set can contain conflicting gradients near a nonsmooth optimum
(example~(b) preceding Assumption~\ref{asm:descent_margin}), so the assumption
need not hold throughout a neighbourhood of that optimum.  The theorem covers
the synchronous iteration~\eqref{eq:cp_sync} with one local step per round; the
multi-local-step Algorithm~\ref{alg:1} is not covered verbatim.
\end{remark}

%% =========================================================================
\subsection{The Initializer Approximates Each Client's Individual Optimum}
\label{sec:approx_optimum}
%% =========================================================================

Theorem~\ref{thm:collab_converge} establishes, for each same-cluster client and
under the trajectory hypothesis~\eqref{eq:cp_traj_cond} together with
Assumption~\ref{asm:descent_margin}, the eventual objective-space bound
$\limsup_t(\psi_i(w^t)-\psi_i^*)\le\epsilon_{0,i}$ and, \emph{whenever a
limit point $w^*$ of the shared iterates exists}, the inherited guarantees
$\psi_i(w^*)\le\psi_i^*+\epsilon_{0,i}$,
$l_i(w^*)\le q_i^*+(\epsilon_{0,i}/\lambda_{\min,i})\mathbf1$ and (under
Assumption~\ref{asm:pl}) $\|w^*-w_i^*\|^2\le\tfrac{2}{\mu}\epsilon_{0,i}$, with
$\epsilon_{0,i}=\kappa_{H,i}\bar\Delta_i+\kappa_{\eta,i}\eta+b_i/c_d$
from~\eqref{eq:cp_band_radius}.  Throughout this subsection $w^*$ denotes such
a limit point of the idealized one-local-step shared iteration~\eqref{eq:cp_sync},
or its output after finitely many rounds, for which
Theorem~\ref{thm:collab_converge}(1) gives the same bounds up to the
geometrically decaying transient.  By Corollary~\ref{cor:phase2_cont} this
covers both the Phase-1 output $w_{\mathrm{I}}$ and the Phase-2 output
$w_{\mathrm{II}}$ that Algorithm~\ref{alg:1} returns (Phase~2 is the same
shared iteration with the effective step size $\eta_{\mathrm{II}}$), provided
the band radius is read as the two-phase radius
$\hat\epsilon_{0,i}$ of~\eqref{eq:cp_band_radius_2ph}, i.e.\ with $\eta$
replaced by $\max(\eta_{\mathrm{I}},\eta_{\mathrm{II}})$ in
$\epsilon_{0,i}$; for brevity we keep writing $\epsilon_{0,i}$ and $\eta$
below with this understanding.  The scope is the idealization already stated
(one local/inner step per round, population gradients, the trajectory
hypothesis~\eqref{eq:cp_traj_cond}).  Within this scope, $w^*$ is, within a
quantifiable range, an approximation of \emph{each} same-cluster client's
individual optimal solution $w_i^*$.  We now re-express the same
suboptimality as an explicit decomposition---separating the collaborative
training gap from the preference spread---so that the guarantee transfers to a
new client whose preference lies in the cluster, which shares the cluster's
objective functions, but which \emph{did not} participate in training.  For
each client $i$, let
\begin{equation}\label{eq:opt_defs}
    w_i^* \in \argmin_{w}\;\psi_i(w),
    \qquad
    \psi_i^* := \psi_i(w_i^*) = \min_w \psi_i(w),
    \qquad
    \psi_i(w) = \max_{j}\lambda_i^{(j)} l_j(w),
\end{equation}
so that $w_i^*$ is the solution for preference $\boldsymbol{\lambda}_i$
(its weighted losses are balanced and lie on the $\boldsymbol{\lambda}_i^{-1}$
ray, as in the main text).

We add a standard local curvature condition, satisfied near the optimum for the
LoRA-restricted, smooth losses considered here.

\begin{assumption}[Local quadratic growth]\label{asm:pl}
For each client $i$ there is a neighborhood of $w_i^*$ and a constant
$\mu > 0$ such that
\begin{equation}\label{eq:pl}
    \tfrac{\mu}{2}\,\|w - w_i^*\|^2
    \;\leq\; \psi_i(w) - \psi_i^* .
\end{equation}
\end{assumption}

\begin{remark}
Condition~\eqref{eq:pl} is the local \emph{quadratic-growth} property of the
Chebyshev scalarization $\psi_i$.  It is weaker than, and not equivalent to, a
Polyak--{\L}ojasiewicz (PL) inequality: PL implies quadratic growth, but the
converse fails in general.  The PL-type condition that the collaborative
contraction argument needs is stated separately in
Assumption~\ref{asm:descent_margin} (with a sufficient condition in
Lemma~\ref{lem:dm_sufficient}), and the gradient-norm PL inequality needed for
the adaptation-phase contraction is stated separately in
Assumption~\ref{asm:pl_adapt}; \eqref{eq:pl} is used only to convert an
objective-space bound into a parameter-space bound and is never used as a
substitute for either.
We state~\eqref{eq:pl} as an explicit assumption: neither the LoRA
parameterization nor $L_0$-smoothness (Assumption~\ref{asm:smooth}) implies it.
It is plausible on a region where the active bottleneck is stable and the active
weighted loss has positive curvature at its minimizer, in which case the max of
finitely many such functions inherits quadratic growth, but we do not prove
this for the DPO losses used here.
\end{remark}

We first record an elementary preference-transfer bound: a new client whose
preference is close to the cluster inherits the cluster's convergence indicator
up to the preference spread.  The bound, and everything in this and the next
subsection that builds on it, is a statement about \emph{preference} transfer
under a \emph{common} loss vector: all clients and the new client evaluate the
same objective functions $l_1,\dots,l_m$ (the same $l(w)$), and only the
preference vector differs.  A new client whose loss functions differ from the
cluster's would additionally require control of $l_{\mathrm{new}}(w)-l_i(w)$ at
the initializer, along the adaptation iterates and at the compared optima; we
do not treat that case.

\begin{proposition}[Starting Point Gap: Cluster vs.\ New Client]
\label{prop:start_gap}
Suppose the new client evaluates the same objective functions $l_j$ as the
cluster (only $\boldsymbol\lambda$ differs).  If
$\boldsymbol{\lambda}_{\mathrm{new}}$ belongs to cluster $C_n$ (i.e.,
$\|\boldsymbol{\lambda}_{\mathrm{new}} - \boldsymbol{\lambda}_i\| \leq \delta_\lambda$ for all $i \in C_n$),
then:
\begin{equation}\label{eq:start_transfer}
    \psi_{\mathrm{new}}(w^*)
    \leq \psi_{C_n}(w^*)
    + \delta_\lambda\,\|l(w^*)\|_\infty,
\end{equation}
where $\psi_{C_n}(w)=\frac{1}{|C_n|}\sum_{i\in C_n}\psi_i(w)$.
\end{proposition}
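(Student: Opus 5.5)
The plan is to compare $\psi_{\mathrm{new}}(w^*)$ with each cluster client's $\psi_i(w^*)$ separately, using only that $\psi(w)=\max_j\lambda^{(j)}l_j(w)$ is $1$-Lipschitz in $\boldsymbol\lambda$ in a suitable norm at a fixed loss vector. Then we average over $i\in C_n$. Because the new client and the cluster clients evaluate the same $l(w^*)$, the only varying input is the preference, so no optimization or trajectory argument is needed.

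\emph{Step 1 (pointwise transfer).} Fix $i\in C_n$ and abbreviate $l:=l(w^*)$, which is nonnegative componentwise. For every $j\in[m]$,
\[
\lambda_{\mathrm{new}}^{(j)}l_j=\lambda_i^{(j)}l_j+\bigl(\lambda_{\mathrm{new}}^{(j)}-\lambda_i^{(j)}\bigr)l_j\le\psi_i(w^*)+\bigl|\lambda_{\mathrm{new}}^{(j)}-\lambda_i^{(j)}\bigr|\,\|l\|_\infty .
\]
Next we bound $|\lambda_{\mathrm{new}}^{(j)}-\lambda_i^{(j)}|\le\|\boldsymbol\lambda_{\mathrm{new}}-\boldsymbol\lambda_i\|_\infty\le\|\boldsymbol\lambda_{\mathrm{new}}-\boldsymbol\lambda_i\|\le\delta_\lambda$. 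This holds for the Euclidean norm, and indeed for any $\ell_p$ norm, since $\|\cdot\|_\infty\le\|\cdot\|_p$. Taking the maximum over $j$ gives $\psi_{\mathrm{new}}(w^*)\le\psi_i(w^*)+\delta_\lambda\|l(w^*)\|_\infty$ for every $i\in C_n$. A slightly sharper route would use $|\max_ja_j-\max_jb_j|\le\max_j|a_j-b_j|$, but the one-sided bound above is all that is needed.

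\emph{Step 2 (averaging).} Since the Step 1 inequality holds uniformly in $i$, we multiply by $1/|C_n|$ and sum over $i\in C_n$. The left side is independent of $i$, and the right side becomes $\psi_{C_n}(w^*)+\delta_\lambda\|l(w^*)\|_\infty$, which is \eqref{eq:start_transfer}. It is worth remarking that the bound in fact holds with $\min_{i\in C_n}\psi_i(w^*)$ in place of the average, and the average is simply the weaker form stated.

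There is no genuine obstacle here. The only points to check are, first, that the norm in the hypothesis $\|\boldsymbol\lambda_{\mathrm{new}}-\boldsymbol\lambda_i\|\le\delta_\lambda$ dominates the coordinatewise differences, which holds for the Euclidean norm used throughout the paper. Second, the common-loss hypothesis must be used explicitly, so that the same vector $l(w^*)$ appears in both $\psi_{\mathrm{new}}$ and $\psi_i$. Nonnegativity of the losses ($l_j:\Omega\to\mathbb R_+$) is used only so that $|l_j|\le\|l\|_\infty$ without absolute values.
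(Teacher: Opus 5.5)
Your proof is correct and follows the paper's argument: split each coordinate as $\lambda_{\mathrm{new}}^{(j)}l_j=\lambda_i^{(j)}l_j+(\lambda_{\mathrm{new}}^{(j)}-\lambda_i^{(j)})l_j$, take the maximum over $j$ to get $\psi_{\mathrm{new}}(w^*)\le\psi_i(w^*)+\delta_\lambda\|l(w^*)\|_\infty$ for each $i$, then average over $C_n$. Your side remark is also valid: the per-client inequality holds for every $i$, so the bound holds with $\min_{i\in C_n}\psi_i(w^*)$ in place of the average.
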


\begin{proof}
For each objective $j$:
\begin{align*}
    \lambda_{\mathrm{new}}^{(j)}\,l_j(w^*)
    &= \lambda_i^{(j)}\,l_j(w^*)
    + (\lambda_{\mathrm{new}}^{(j)} - \lambda_i^{(j)})\,l_j(w^*) \\
    &\leq \lambda_i^{(j)}\,l_j(w^*)
    + |\lambda_{\mathrm{new}}^{(j)} - \lambda_i^{(j)}|\,|l_j(w^*)|.
\end{align*}
Taking $\max_j$ on both sides:
\[
    \psi_{\mathrm{new}}(w^*)
    = \max_j \lambda_{\mathrm{new}}^{(j)}\,l_j(w^*)
    \leq \max_j \lambda_i^{(j)}\,l_j(w^*)
    + \max_j |\lambda_{\mathrm{new}}^{(j)} - \lambda_i^{(j)}|\,|l_j(w^*)|.
\]
The first term is $\psi_i(w^*)$.  For the second:
$|\lambda_{\mathrm{new}}^{(j)} - \lambda_i^{(j)}| \leq \|\boldsymbol{\lambda}_{\mathrm{new}} - \boldsymbol{\lambda}_i\|
\leq \delta_\lambda$ and $|l_j(w^*)| \leq \|l(w^*)\|_\infty$.  So
$\psi_{\mathrm{new}}(w^*)\leq \psi_i(w^*) + \delta_\lambda\,\|l(w^*)\|_\infty$.
Averaging over $i \in C_n$:
\[
    \psi_{\mathrm{new}}(w^*)
    \leq \frac{1}{|C_n|}\sum_i \psi_i(w^*)
    + \delta_\lambda\,\|l(w^*)\|_\infty
    = \psi_{C_n}(w^*) + \delta_\lambda\,\|l(w^*)\|_\infty.
    \qedhere
\]
\end{proof}

\begin{corollary}[Suboptimality decomposition for an in-cluster client]
\label{cor:approx_param}
Under Assumptions~\ref{asm:bounded}--\ref{asm:smooth}, \ref{asm:descent_margin}
and~\ref{asm:pl}, the trajectory hypothesis~\eqref{eq:cp_traj_cond}, and
$\|\boldsymbol{\lambda}_i-\boldsymbol{\lambda}_{i'}\|\leq\delta_\lambda$ for all
$i,i'\in C_n$, any client $i$ whose preference lies in the cluster (a member
of $C_n$, or a client sharing the cluster's objective functions with
$\|\boldsymbol{\lambda}_i-\boldsymbol{\lambda}_{i'}\|\le\delta_\lambda$ for all
$i'\in C_n$) satisfies the objective-space decomposition
\begin{equation}\label{eq:approx_decomp}
    \psi_i(w^*)-\psi_i^*
    \;\leq\;
    \underbrace{G_{\mathrm{opt}}}_{\text{training gap}}
    + \underbrace{\delta_\lambda\,\|l(w^*)\|_\infty}_{\text{preference gap}}
    + \underbrace{\delta_\lambda\,L^*}_{\text{optimum spread}},
\end{equation}
where $G_{\mathrm{opt}} := \psi_{C_n}(w^*) - \bar\psi^*$ with
$\bar\psi^* = \frac{1}{|C_n|}\sum_{i}\psi_{i}^*$, and $L^*$ is a common upper
bound on the loss vectors at all optima that are compared,
\begin{equation}\label{eq:Lstar}
    L^*\;\ge\;\max_{i'\in C_n\cup\{i\}}\|l(w_{i'}^*)\|_\infty ,
\end{equation}
where $i$ is the client to which the corollary is applied (a member of $C_n$,
or the new client of Section~\ref{sec:data_scarce}, in which case
$w_{\mathrm{new}}^*$ must be included); $L^*$ is finite because every
$w_{i'}^*$ lies in the bounded parameter region of Assumption~\ref{asm:bounded}
on which the losses are continuous.  Writing
$\bar\epsilon_0:=\frac{1}{|C_n|}\sum_{i\in C_n}\epsilon_{0,i}\le\max_{i\in C_n}\epsilon_{0,i}$
for the cluster-average band radius of~\eqref{eq:cp_band_radius} (for
$w^*=w_{\mathrm{II}}$, of~\eqref{eq:cp_band_radius_2ph}, i.e.\ with
$\eta=\max(\eta_{\mathrm{I}},\eta_{\mathrm{II}})$),
the collaborative convergence band of Theorem~\ref{thm:collab_converge}(2)
(Corollary~\ref{cor:phase2_cont} for $w_{\mathrm{II}}$) controls
the training gap,
\begin{equation}\label{eq:Gopt_bound}
    G_{\mathrm{opt}}
    \;\leq\; \bar\epsilon_0
    \;=\;\frac{1}{|C_n|}\sum_{i\in C_n}\Bigl(\kappa_{H,i}\,\bar\Delta_i+\kappa_{\eta,i}\,\eta+\frac{b_i}{c_d}\Bigr).
\end{equation}
Consequently, under the local quadratic growth of Assumption~\ref{asm:pl},
\begin{equation}\label{eq:approx_param}
    \|w^* - w_i^*\|^2
    \;\leq\; \frac{2}{\mu}\bigl(\psi_i(w^*)-\psi_i^*\bigr)
    \;\leq\; \frac{2}{\mu}\bigl(\bar\epsilon_0
        + \delta_\lambda\,\|l(w^*)\|_\infty + \delta_\lambda\,L^*\bigr),
\end{equation}
which is horizon-independent and, in the strict form $b_i=0$, vanishes in the
limit $\bar\Delta_i,\delta_\lambda,\eta\to0$ (subject to qualification~(b) of
Theorem~\ref{thm:collab_converge}).  In the restricted case in which the
hypotheses of Corollary~\ref{cor:homo} (Assumptions~\ref{asm:common_G}
and~\ref{asm:lp_stable}) hold at every iterate along the trajectory,
$\bar\Delta_i=O(\delta_\lambda)$ and the right-hand sides are
$O(\delta_\lambda)+O(\eta)$; otherwise $\bar\Delta_i$ is a measured quantity.
\end{corollary}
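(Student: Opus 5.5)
The plan is to split the suboptimality by inserting and subtracting cluster-level quantities:
\[
\psi_i(w^*)-\psi_i^*=\bigl(\psi_i(w^*)-\psi_{C_n}(w^*)\bigr)+\bigl(\psi_{C_n}(w^*)-\bar\psi^*\bigr)+\bigl(\bar\psi^*-\psi_i^*\bigr).
\]
Each bracket is then bounded separately, and the parameter-space statement follows from Assumption~\ref{asm:pl}.

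The first bracket is handled by Proposition~\ref{prop:start_gap}, with client $i$ in the role of the new client. Its hypotheses are a common loss vector and $\|\boldsymbol\lambda_i-\boldsymbol\lambda_{i'}\|\le\delta_\lambda$ for all $i'\in C_n$, and both are assumed. It gives $\psi_i(w^*)\le\psi_{C_n}(w^*)+\delta_\lambda\|l(w^*)\|_\infty$, which is the preference gap.

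The middle bracket is, by definition, $G_{\mathrm{opt}}$. To bound it by $\bar\epsilon_0$, I apply Theorem~\ref{thm:collab_converge}(2) to each $i'\in C_n$, which gives $\psi_{i'}(w^*)\le\psi_{i'}^*+\epsilon_{0,i'}$. For $w^*=w_{\mathrm{II}}$ I use Corollary~\ref{cor:phase2_cont} instead, with $\hat\epsilon_{0,i'}$. Averaging over $i'$ yields \eqref{eq:Gopt_bound}. If $w^*$ is a finite-round output rather than a limit point, Item~1 of the theorem gives the same bound plus a geometrically decaying transient; I will note this and treat the transient as part of the idealization, as the surrounding text does.

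The third bracket, the optimum spread, is the step I expect to be most delicate. I need $\psi_{i'}^*\le\psi_i^*+\delta_\lambda L^*$ for each $i'$, and averaging then gives $\bar\psi^*-\psi_i^*\le\delta_\lambda L^*$. This follows from optimality of $w_{i'}^*$ and the same per-coordinate transfer used in Proposition~\ref{prop:start_gap}, now evaluated at $w_i^*$:
\[
\psi_{i'}^*\le\psi_{i'}(w_i^*)\le\psi_i(w_i^*)+\max_j|\lambda_{i'}^{(j)}-\lambda_i^{(j)}|\,|l_j(w_i^*)|\le\psi_i^*+\delta_\lambda\|l(w_i^*)\|_\infty .
\]
The last factor is at most $L^*$ by \eqref{eq:Lstar}. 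This is exactly why $w_i^*$, and $w_{\mathrm{new}}^*$ for a new client, must be included in the maximum defining $L^*$. The care needed is in the direction of the inequality: I want an upper bound on $\psi_{i'}^*$, so the transfer must be done at $w_i^*$ and not at $w_{i'}^*$. The finiteness of $L^*$ comes from boundedness of the region together with continuity.

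Summing the three bounds gives \eqref{eq:approx_decomp}. For \eqref{eq:approx_param}, I assume, as in Item~3 of Theorem~\ref{thm:collab_converge}, that $w^*$ lies in the quadratic-growth neighbourhood of $w_i^*$. Then Assumption~\ref{asm:pl} gives $\tfrac{\mu}{2}\|w^*-w_i^*\|^2\le\psi_i(w^*)-\psi_i^*$, and I substitute the decomposition. For the vanishing claim, with $b_i=0$ every term of $\bar\epsilon_0$ is linear in $\bar\Delta_{i'}$ or $\eta$, and the remaining terms are linear in $\delta_\lambda$ with bounded coefficients $\|l(w^*)\|_\infty$ and $L^*$. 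This holds subject to qualification~(b), namely that $c_d$ stays bounded away from zero uniformly in $\eta$. Finally, in the restricted case I invoke Corollary~\ref{cor:homo} together with Corollary~\ref{cor:collab_range}, which give $\bar\Delta_i\le\sqrt{\bar H/\rho_{\min}}=O(\delta_\lambda)$, and hence the $O(\delta_\lambda)+O(\eta)$ form.
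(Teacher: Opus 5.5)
Your proposal is correct and follows the paper's proof essentially step for step. It uses the same insertion of $\pm\psi_{C_n}(w^*)$ and $\pm\bar\psi^*$, Proposition~\ref{prop:start_gap} for the preference gap, and averaging of Theorem~\ref{thm:collab_converge}(2) (Corollary~\ref{cor:phase2_cont} for $w_{\mathrm{II}}$) to get $G_{\mathrm{opt}}\le\bar\epsilon_0$. It evaluates the transfer at $w_i^*$ rather than $w_{i'}^*$ for the optimum spread, as the paper does, and applies Assumption~\ref{asm:pl} at $w^*$ for the parameter bound; your requirement that $w^*$ lie in the quadratic-growth neighbourhood of $w_i^*$ is a caveat the paper leaves implicit.
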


\begin{proof}
Insert $\pm\psi_{C_n}(w^*)$ and $\pm\bar\psi^*$ into $\psi_i(w^*)-\psi_i^*$:
\[
    \psi_i(w^*) - \psi_i^*
    = \underbrace{\bigl[\psi_i(w^*) - \psi_{C_n}(w^*)\bigr]}_{\text{(a)}}
    + \underbrace{\bigl[\psi_{C_n}(w^*) - \bar\psi^*\bigr]}_{\text{(b) }=\,G_{\mathrm{opt}}}
    + \underbrace{\bigl[\bar\psi^* - \psi_i^*\bigr]}_{\text{(c)}}.
\]
\emph{Term (a).}  By the per-objective preference bound of
Proposition~\ref{prop:start_gap}, $\psi_i(w^*)\le\psi_{i'}(w^*)+\delta_\lambda\|l(w^*)\|_\infty$
for any $i'$; averaging over $i'$ gives
$\psi_i(w^*)-\psi_{C_n}(w^*)\le\delta_\lambda\|l(w^*)\|_\infty$.
\emph{Term (c).}  We need an upper bound on $\psi_{i'}^*-\psi_i^*$ for every
$i'\in C_n$.  By optimality of $w_{i'}^*$ for $\psi_{i'}$ and the per-objective
preference bound evaluated at $w_i^*$,
\[
    \psi_{i'}^*\;\le\;\psi_{i'}(w_i^*)
    \;\le\;\psi_i(w_i^*)+\delta_\lambda\|l(w_i^*)\|_\infty
    \;=\;\psi_i^*+\delta_\lambda\|l(w_i^*)\|_\infty ,
\]
so $\psi_{i'}^*-\psi_i^*\le\delta_\lambda\|l(w_i^*)\|_\infty\le\delta_\lambda L^*$
(the point at which the loss is evaluated is $w_i^*$, the optimum of the client
being bounded, which is covered by~\eqref{eq:Lstar}).  Averaging over $i'$
gives $\bar\psi^*-\psi_i^*\le\delta_\lambda L^*$.  Note that the reverse
inequality $\psi_i^*\le\psi_{i'}^*+\delta_\lambda\|l(w_{i'}^*)\|_\infty$
controls the opposite difference and would not suffice here.
\emph{Term (b)} is $G_{\mathrm{opt}}$ by definition; this
proves~\eqref{eq:approx_decomp}.  Averaging the objective-space guarantee of
Theorem~\ref{thm:collab_converge}(2),
$\psi_i(w^*)\le\psi_i^*+\epsilon_{0,i}$,
over $i\in C_n$ gives $G_{\mathrm{opt}}=\frac{1}{|C_n|}\sum_i(\psi_i(w^*)-\psi_i^*)
\le\bar\epsilon_0$,
which is~\eqref{eq:Gopt_bound}; it depends only on the trajectory bounds
$\bar\Delta_i$ on the direction deviation, on the step size $\eta$ and on the
tolerances $b_i$, not on the number of rounds, so it is horizon-independent.
Finally Assumption~\ref{asm:pl}
at $w=w^*$ gives $\|w^*-w_i^*\|^2\le\frac{2}{\mu}(\psi_i(w^*)-\psi_i^*)$; combining
with~\eqref{eq:approx_decomp} and~\eqref{eq:Gopt_bound}
yields~\eqref{eq:approx_param}, and for $b_i=0$ every term vanishes as
$\bar\Delta_i,\delta_\lambda,\eta\to0$.
\end{proof}

\begin{remark}[Reading the three terms of~\eqref{eq:approx_decomp}]
\label{rem:decomp_terms}
The first term, the \emph{training gap} $G_{\mathrm{opt}}$, is the
cluster-average of how far the shared model $w^*$ sits above each member's own
optimum; it is the only term that depends on the collaborative training, and it
is bounded by the cluster-average band radius $\bar\epsilon_0$
of~\eqref{eq:Gopt_bound}.  The second term, the \emph{preference gap}
$\delta_\lambda\|l(w^*)\|_\infty$, measures how much client $i$'s Chebyshev
value $\psi_i(w^*)$ at the achieved model deviates from the cluster average
$\psi_{C_n}(w^*)$ because its preference differs from the others' by at most
$\delta_\lambda$.  The third term, the \emph{optimum spread}
$\delta_\lambda L^*$, measures how far the per-client optimal values
$\psi_{i'}^*$ are spread across the cluster for the same reason; it is
independent of the trained model.  The decomposition holds for the output of
the idealized shared iteration (one local or inner step per round, population
gradients, the trajectory hypothesis~\eqref{eq:cp_traj_cond}), hence for both
$w_{\mathrm{I}}$ and, through Corollary~\ref{cor:phase2_cont}, the returned
$w_{\mathrm{II}}$, with $\bar\epsilon_0$ computed from the two-phase
radius~\eqref{eq:cp_band_radius_2ph}; it is not a statement about the
multi-step, $S$-shot implementation of Algorithm~\ref{alg:1}.
\end{remark}

%% =========================================================================
\subsection{Data-Scarce Adaptation on the Initial ALigner: the $\delta_0$ Gap}
\label{sec:data_scarce}
\phantomsection\label{sec:twophase_gap}
%% =========================================================================

We now compare two ways of serving a data-scarce client whose preference
$\boldsymbol{\lambda}_{\mathrm{new}}$ falls in cluster $C_n$ (by
the adjustment-vector criterion of Section~\ref{cluster}).  Throughout this
subsection the new client evaluates the \emph{same} objective functions
$l_1,\dots,l_m$ as the cluster and differs from its members only in the
preference vector; a new client whose loss functions themselves differ from the
cluster's (a distribution shift in $l$) is outside the scope of the bounds
below, which would then require an additional term controlling
$l_{\mathrm{new}}(w)-l_i(w)$ at the relevant points.  This client
\emph{did not participate} in initializer training in either scenario, and both
scenarios optimize the \emph{same} balancing objective~\eqref{eq:fewshot_obj}
under the \emph{same} preference $\boldsymbol{\lambda}_{\mathrm{new}}$; they
differ only in how the model is produced.
\begin{itemize}
\item \textbf{Pipeline 1:} the new client starts from a shared starting point
$w_0$ produced by the initializer training of Section~\ref{sec:initial_train}
(in Algorithm~\ref{alg:1}, $w_0=w_{\mathrm{II}}$; both $w_{\mathrm{I}}$ and
$w_{\mathrm{II}}$ are covered by Corollary~\ref{cor:approx_param} via
Corollary~\ref{cor:phase2_cont}) and
runs the $K$-step fresh-sample adaptation of~\eqref{eq:adapt} with batch size
$S$ per step toward its own preference optimum, producing $w_K$.
\item \textbf{Pipeline 2:} the client is assumed to have
\emph{abundant} data and \emph{does not use collaborative learning at all}:
it trains a model \emph{locally} on the same balancing
objective~\eqref{eq:fewshot_obj} with the same preference
$\boldsymbol{\lambda}_{\mathrm{new}}$ to convergence, reaching its own preference
optimum
\begin{equation}\label{eq:ds_target}
    w_{\mathrm{new}}^* \in \argmin_w \psi_{\mathrm{new}}(w),
    \qquad
    \psi_{\mathrm{new}}^* := \psi_{\mathrm{new}}(w_{\mathrm{new}}^*),
    \qquad
    \psi_{\mathrm{new}}(w) = \max_j \lambda_{\mathrm{new}}^{(j)} l_j(w).
\end{equation}
With abundant data and training to convergence this pipeline attains
$\psi_{\mathrm{new}}(w_{\mathrm{new}}^*)=\psi_{\mathrm{new}}^*$ and, since
$w_{\mathrm{new}}^*$ lies on the $\boldsymbol{\lambda}_{\mathrm{new}}^{-1}$ ray,
$\beta_{\mathrm{new}}(w_{\mathrm{new}}^*)=0$.
\end{itemize}
To quantify what Pipeline 1 costs relative to this data-rich baseline, we track
two quantities---the preference uniformity $\beta_i(w)$ (zero exactly on the
$\boldsymbol{\lambda}_i^{-1}$ ray) and the Chebyshev indicator
\begin{equation}\label{eq:twophase_psi}
    \psi_i(w) := \max_{j\in[m]} \lambda_i^{(j)} l_j(w),
\end{equation}
and bound the gap $\psi_{\mathrm{new}}(w_K)-\psi_{\mathrm{new}}(w_{\mathrm{new}}^*)
=\psi_{\mathrm{new}}(w_K)-\psi_{\mathrm{new}}^*$ and
$\|w_K-w_{\mathrm{new}}^*\|$.  The adaptation bound is stated from an
\emph{arbitrary} starting point $w_0$ in the region of
Assumption~\ref{asm:pl_adapt}, with $\Delta_0:=\psi_{\mathrm{new}}(w_0)-\psi_{\mathrm{new}}^*$.
When $w_0$ is the output of the idealized shared iteration---$w_{\mathrm{I}}$,
or the point $w_{\mathrm{II}}$ that Algorithm~\ref{alg:1} actually returns,
which Corollary~\ref{cor:phase2_cont} places on the same trajectory with the
two-phase band radius~\eqref{eq:cp_band_radius_2ph}---and
$\boldsymbol{\lambda}_{\mathrm{new}}$ lies in cluster $C_n$, $\Delta_0$ is
controlled by Corollary~\ref{cor:approx_param}.  Objective~(\ref{eq:6}) coincides with
the balancing objective~\eqref{eq:fewshot_obj} whenever $\beta_i>0$, so every step of
Pipeline 1 drives the same $(\beta,\psi)$ dynamics.  We show the expected gap
splits into an \emph{initialization/finite-$K$} term proportional to $\Delta_0$
and a \emph{statistical} floor decaying as $O(1/S)$
(Proposition~\ref{prop:twophase_gap}); the rest of the subsection records the
idealized adaptation model, the one-step contraction it rests on, and this gap.

The adaptation procedure runs $K$ local preference-conditioned descent steps
starting from $w_0$:
\begin{equation}\label{eq:adapt}
    w_{\mathrm{new},0} = w_0,
    \quad
    w_{\mathrm{new},k+1} = w_{\mathrm{new},k}
    - \alpha\,\tilde{d}_{\mathrm{new}}
      (w_{\mathrm{new},k}),
    \quad k = 0, \ldots, K{-}1,
\end{equation}
where $\tilde{d}_{\mathrm{new}}(w_{\mathrm{new},k})$ is the stochastic
preference-conditioned direction computed from a batch of $S$ of the client's
samples.

\paragraph{Modeling data scarcity: an idealized fresh-sample adaptation process.}
The direction $\tilde d_{\mathrm{new}}(w)$ of~\eqref{eq:adapt} is the
empirical preference-conditioned direction computed from a batch of $S$
samples.  The analysis below is carried out for an \emph{idealized}
preconditioned stochastic-gradient process with \emph{fresh samples}---a new
batch of $S$ samples is drawn independently at every step, so $K$ steps consume
$K\cdot S$ samples---whose assumptions we now list separately.  It is
\emph{not} a statistical guarantee for a fixed total budget of $S$ samples
reused across all $K$ steps (the experimental setting with $S=20$); analyzing
that case would require an optimization analysis on the fixed dataset together
with a generalization bound, which we do not provide.  It is also \emph{not}
an analysis of the AdamW optimizer used in the experiments;
Remark~\ref{rem:adamw_relation} records the differences.

\begin{assumption}[Local smoothness and PL of the adaptation objective]
\label{asm:pl_adapt}
There are a region $\mathcal U$ containing the starting point $w_0$ and a constant $\mu_{\mathrm{PL}}>0$
such that (i) on $\mathcal U$ the Chebyshev objective $\psi_{\mathrm{new}}$ has a
stable active bottleneck, so that it is differentiable and $L_0$-smooth with
descent field $\nabla\psi_{\mathrm{new}}$ and the population direction coincides
with it, $d_{\mathrm{new}}(w)=\nabla\psi_{\mathrm{new}}(w)$; (ii) the
Polyak--{\L}ojasiewicz inequality
\begin{equation}\label{eq:pl_adapt}
    \|\nabla\psi_{\mathrm{new}}(w)\|^2
    \;\ge\;2\mu_{\mathrm{PL}}\bigl(\psi_{\mathrm{new}}(w)-\psi_{\mathrm{new}}^*\bigr)
    \qquad\forall\,w\in\mathcal U ;
\end{equation}
and (iii) all adaptation iterates $w_{\mathrm{new},0},\dots,w_{\mathrm{new},K}$
and the segments joining consecutive iterates lie in $\mathcal U$ (a trajectory
hypothesis).  Condition~(ii) is a gradient-norm lower bound and is
\emph{different from} the quadratic growth of Assumption~\ref{asm:pl}: PL
implies quadratic growth but not conversely, and Assumption~\ref{asm:pl} alone
does not supply~\eqref{eq:pl_adapt}.  Likewise the descent-margin condition of
Assumption~\ref{asm:descent_margin} concerns the LP direction $d_i$ and does
not replace~\eqref{eq:pl_adapt} for $\nabla\psi_{\mathrm{new}}$.
\end{assumption}

\begin{assumption}[Idealized finite-sample adaptation direction]\label{asm:finite_sample}
The stochastic direction used at step $k$ is conditionally unbiased for the
population descent field and has variance shrinking with the sample size: for
every $w\in\mathcal U$,
\begin{equation}\label{eq:ds_fs}
    \E\bigl[\tilde d_{\mathrm{new}}(w)\,\big|\,w\bigr]
    = \nabla\psi_{\mathrm{new}}(w),
    \qquad
    \E\bigl[\|\tilde d_{\mathrm{new}}(w)-\nabla\psi_{\mathrm{new}}(w)\|^2
    \,\big|\,w\bigr]
    \;\le\; \frac{\sigma_0^2}{S},
\end{equation}
where $\sigma_0^2$ is the per-sample gradient variance.  This models
\emph{fresh-sample adaptation with batch size $S$ per step}: each adaptation
step uses a new batch of $S$ samples drawn independently from the client's
population, so the $K$ steps consume $K\cdot S$ samples in total.  It is an
idealization: if the \emph{same} fixed $S$ samples are reused at every step (a
fixed total budget of $S$), the empirical gradient is a deterministic function
of $w$ and~\eqref{eq:ds_fs} does not follow from the mere availability of $S$
samples; we do not claim that it does, and the bounds below are not
fixed-budget statistical guarantees.  ``Sufficient data'' corresponds
to $S\to\infty$ (equivalently $\sigma_0^2/S\to0$), the exact-gradient adaptation
with direction $d_{\mathrm{new}}$.
\end{assumption}

\begin{assumption}[Idealized preconditioned stochastic adaptation]\label{asm:precond}
Instead of the plain update~\eqref{eq:adapt}, the adaptation step is
\begin{equation}\label{eq:ds_precond}
    w_{\mathrm{new},k+1}
    = w_{\mathrm{new},k} - \alpha\,P_k\,
      \tilde d_{\mathrm{new}}(w_{\mathrm{new},k}),
\end{equation}
where, writing $\mathcal F_k$ for the $\sigma$-field generated by the iterate
history up to $w_{\mathrm{new},k}$ (before the step-$k$ samples are drawn):
\begin{enumerate}[leftmargin=2em,itemsep=2pt,label=\textup{(\alph*)}]
  \item the preconditioner $P_k$ is $\mathcal F_k$-measurable, i.e.\ it is fixed
        before the step-$k$ stochastic direction is sampled;
  \item $P_k$ is symmetric with uniformly bounded eigenvalues,
        $0<p_-\,I\preceq P_k\preceq p_+\,I$;
  \item the update uses the instantaneous direction $\tilde d_{\mathrm{new}}(w_{\mathrm{new},k})$
        (no momentum);
  \item there is no weight decay ($\lambda_{\mathrm{wd}}=0$).
\end{enumerate}
\end{assumption}

\begin{remark}[Relation to AdamW]\label{rem:adamw_relation}
The experiments adapt with AdamW.  The idealized process of
Assumption~\ref{asm:precond} differs from it in three respects, each of which
would require additional analysis that we do not carry out.
\emph{(1)~Momentum.}  AdamW replaces $\tilde d_{\mathrm{new}}(w_{\mathrm{new},k})$
by an exponential moving average of past stochastic gradients evaluated at
past iterates; its conditional mean given $\mathcal F_k$ is in general
\emph{not} $\nabla\psi_{\mathrm{new}}(w_{\mathrm{new},k})$, so the
``same mean, no larger variance'' shortcut is not available and the descent
step in Lemma~\ref{lem:pl_adamw_contract} cannot be taken as written.
\emph{(2)~Preconditioner measurability.}  AdamW's second-moment estimate
$v_k$ includes the current stochastic gradient, so
$P_k=\mathrm{diag}(1/(\sqrt{v_k}+\epsilon))$ is correlated with
$\tilde d_{\mathrm{new}}(w_{\mathrm{new},k})$ and is not
$\mathcal F_k$-measurable; the factorization
$\E[g_k^\top P_k\tilde d_{\mathrm{new}}\mid\mathcal F_k]=g_k^\top P_kg_k$ used
in the proof then fails.  Condition~(a) holds for a variant that uses the
previous estimate $v_{k-1}$.
\emph{(3)~Decoupled weight decay.}  AdamW's update is
$w^+=w-\alpha P_kg-\alpha\lambda_{\mathrm{wd}}w$, whereas the preconditioned
step on the regularized objective $F(w)+\tfrac{\lambda_{\mathrm{wd}}}{2}\|w\|^2$
is $w^+=w-\alpha P_kg-\alpha\lambda_{\mathrm{wd}}P_kw$.  The two coincide only
for $P_k\propto I$, so decoupled weight decay is \emph{not} equivalent to
running the same iteration on an $L_2$-regularized objective, and
Lemma~\ref{lem:pl_adamw_contract} does not transfer to it by a change of
target.  Consequently the guarantees of this subsection are statements about
the idealized process, and their relevance to the AdamW runs in
Section~\ref{sec: experiments} is empirical, not proved.
\end{remark}

\begin{lemma}[Contraction of the idealized preconditioned stochastic adaptation]
\label{lem:pl_adamw_contract}
Under Assumptions~\ref{asm:pl_adapt}, \ref{asm:finite_sample}
and~\ref{asm:precond}, the updates~\eqref{eq:ds_precond} with step size
$0<\alpha\le p_-/(L_0 p_+^2)$ satisfy the one-step contraction
\begin{equation}\label{eq:ds_onestep}
    \E\bigl[\psi_{\mathrm{new}}(w_{\mathrm{new},k+1})-\psi_{\mathrm{new}}^*\bigr]
    \;\le\;(1-\alpha\mu_{\mathrm{PL}} p_-)\,
      \E\bigl[\psi_{\mathrm{new}}(w_{\mathrm{new},k})-\psi_{\mathrm{new}}^*\bigr]
    +\frac{L_0\alpha^2 p_+^2}{2}\cdot\frac{\sigma_0^2}{S},
\end{equation}
and hence, telescoping from $w_{\mathrm{new},0}=w_0$,
\begin{equation}\label{eq:ds_Kstep}
    \E\bigl[\psi_{\mathrm{new}}(w_{\mathrm{new},K})-\psi_{\mathrm{new}}^*\bigr]
    \;\le\;(1-\alpha\mu_{\mathrm{PL}} p_-)^K
      \bigl(\psi_{\mathrm{new}}(w_0)-\psi_{\mathrm{new}}^*\bigr)
    +\frac{L_0\alpha p_+^2}{2\mu_{\mathrm{PL}} p_-}\cdot\frac{\sigma_0^2}{S}.
\end{equation}
Both bounds hold in expectation over the sampling of the adaptation directions;
they are not per-run statements.
\end{lemma}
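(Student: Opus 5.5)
The plan is the standard descent-lemma argument for preconditioned SGD under PL, carried out conditionally on $\mathcal F_k$ and then telescoped. Fix $k$ and abbreviate $w:=w_{\mathrm{new},k}$, $g:=\nabla\psi_{\mathrm{new}}(w)$, $\tilde d:=\tilde d_{\mathrm{new}}(w)$ and $P:=P_k$.

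\emph{One-step descent.} By Assumption~\ref{asm:pl_adapt}(iii) the segment from $w$ to $w-\alpha P\tilde d$ lies in $\mathcal U$. On $\mathcal U$, part~(i) makes $\psi_{\mathrm{new}}$ $L_0$-smooth, so the descent lemma gives
\[
\psi_{\mathrm{new}}(w-\alpha P\tilde d)\le\psi_{\mathrm{new}}(w)-\alpha\,g^\top P\tilde d+\tfrac{L_0\alpha^2}{2}\|P\tilde d\|^2 .
\]
We then take $\E[\cdot\mid\mathcal F_k]$.

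\emph{Linear term.} Since $P$ is $\mathcal F_k$-measurable (Assumption~\ref{asm:precond}(a)), unbiasedness in~\eqref{eq:ds_fs} gives $\E[g^\top P\tilde d\mid\mathcal F_k]=g^\top Pg\ge p_-\|g\|^2$.

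\emph{Quadratic term.} We use $\|P\tilde d\|^2\le p_+^2\|\tilde d\|^2$. The bias--variance split $\E\|\tilde d\|^2=\|g\|^2+\E\|\tilde d-g\|^2$ then yields
\[
\E\bigl[\|P\tilde d\|^2\mid\mathcal F_k\bigr]\le p_+^2\|g\|^2+p_+^2\sigma_0^2/S .
\]

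\emph{Combining.} The coefficient of $\|g\|^2$ is $-\alpha p_-+\tfrac{L_0\alpha^2p_+^2}{2}$. The step-size condition $\alpha\le p_-/(L_0p_+^2)$ gives $L_0\alpha p_+^2\le p_-$, so this coefficient is at most $-\tfrac{\alpha p_-}{2}$. We therefore obtain
\[
\E[\psi_{\mathrm{new}}(w_{k+1})\mid\mathcal F_k]\le\psi_{\mathrm{new}}(w)-\tfrac{\alpha p_-}{2}\|g\|^2+\tfrac{L_0\alpha^2p_+^2}{2}\tfrac{\sigma_0^2}{S}.
\]

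\emph{PL step.} Applying the PL inequality~\eqref{eq:pl_adapt}, $\|g\|^2\ge2\mu_{\mathrm{PL}}(\psi_{\mathrm{new}}(w)-\psi_{\mathrm{new}}^*)$, turns $-\tfrac{\alpha p_-}{2}\|g\|^2$ into $-\alpha\mu_{\mathrm{PL}}p_-(\psi_{\mathrm{new}}(w)-\psi_{\mathrm{new}}^*)$. Subtracting $\psi_{\mathrm{new}}^*$ and taking total expectation by the tower property gives~\eqref{eq:ds_onestep}.

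\emph{Telescoping.} Write $q:=1-\alpha\mu_{\mathrm{PL}}p_-$. Note $q\in[0,1)$: PL together with $L_0$-smoothness forces $\mu_{\mathrm{PL}}\le L_0$, so $\alpha\mu_{\mathrm{PL}}p_-\le p_-^2/p_+^2\le1$. Iterating~\eqref{eq:ds_onestep} $K$ times from the deterministic $w_0$ gives
\[
q^K\Delta_0+\tfrac{L_0\alpha^2p_+^2\sigma_0^2}{2S}\sum_{s<K}q^s .
\]
Bounding the geometric sum by $1/(1-q)=1/(\alpha\mu_{\mathrm{PL}}p_-)$ yields~\eqref{eq:ds_Kstep}.

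\emph{Main obstacle.} The only delicate point is the conditioning step, which needs $P_k$ fixed before the step-$k$ sample is drawn so that the cross term factorizes. Assumption~\ref{asm:precond}(a),(c) is exactly what permits this; without it the argument fails, as Remark~\ref{rem:adamw_relation} notes. A secondary point is that PL and smoothness are invoked only at iterates inside $\mathcal U$, and this is guaranteed by the trajectory hypothesis (iii), not proved.
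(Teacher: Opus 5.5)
Your proof is correct and follows the paper's argument step for step: descent lemma on the segment in $\mathcal U$, conditioning on $\mathcal F_k$ so that the measurable $P_k$ turns the cross term into $g^\top P_k g\ge p_-\|g\|^2$, the bias--variance bound with $P_k\preceq p_+I$, the step-size condition leaving $-\tfrac{\alpha p_-}{2}\|g\|^2$, PL, the tower property, and the geometric sum. Your extra check that $1-\alpha\mu_{\mathrm{PL}}p_-\ge 0$ is worthwhile, since the paper uses it without comment when iterating; however, $\mu_{\mathrm{PL}}\le L_0$ is only automatic under global smoothness, so under the local Assumption~\ref{asm:pl_adapt} you should instead derive it from the trajectory hypothesis, e.g.\ by applying the descent lemma and PL at the conditional-mean point $w-\alpha P_k g$ (which lies in $\mathcal U$ when $\mathcal U$ is convex) to get $\psi_{\mathrm{new}}^*\le\psi_{\mathrm{new}}(w)-\alpha\mu_{\mathrm{PL}}p_-\bigl(\psi_{\mathrm{new}}(w)-\psi_{\mathrm{new}}^*\bigr)$.
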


\begin{proof}
Write $F:=\psi_{\mathrm{new}}$, $g_k:=\nabla F(w_{\mathrm{new},k})$.  By
Assumption~\ref{asm:pl_adapt}(i),(iii) $F$ is $L_0$-smooth along the segment
from $w_{\mathrm{new},k}$ to $w_{\mathrm{new},k+1}$, so the descent lemma applied
to~\eqref{eq:ds_precond} gives
\[
    F(w_{\mathrm{new},k+1})
    \le F(w_{\mathrm{new},k})
    -\alpha\,g_k^\top P_k\,\tilde d_{\mathrm{new}}
    +\frac{L_0\alpha^2}{2}\|P_k\,\tilde d_{\mathrm{new}}\|^2 .
\]
Take conditional expectation given $\mathcal F_k$.  Since $P_k$ is
$\mathcal F_k$-measurable (Assumption~\ref{asm:precond}(a)) it is constant
under this expectation, so by unbiasedness~\eqref{eq:ds_fs}
$\E[g_k^\top P_k\,\tilde d_{\mathrm{new}}\mid\mathcal F_k]=g_k^\top P_k g_k
\ge p_-\|g_k\|^2$ (using $P_k\succeq p_-I$), and by
$P_k\preceq p_+I$ together with the bias--variance split,
$\E[\|P_k\,\tilde d_{\mathrm{new}}\|^2\mid\mathcal F_k]
\le p_+^2\,\E[\|\tilde d_{\mathrm{new}}\|^2\mid\mathcal F_k]
\le p_+^2(\|g_k\|^2+\sigma_0^2/S)$.  Therefore
\[
    \E\bigl[F(w_{\mathrm{new},k+1})\mid\mathcal F_k\bigr]
    \le F(w_{\mathrm{new},k})
    -\alpha\Bigl(p_- -\tfrac{L_0\alpha p_+^2}{2}\Bigr)\|g_k\|^2
    +\frac{L_0\alpha^2 p_+^2}{2}\cdot\frac{\sigma_0^2}{S}.
\]
Since $\alpha\le p_-/(L_0 p_+^2)$ we have
$p_- -\tfrac{L_0\alpha p_+^2}{2}\ge\tfrac{p_-}{2}$, so
$-\alpha(p_- -\tfrac{L_0\alpha p_+^2}{2})\|g_k\|^2\le-\tfrac{\alpha p_-}{2}\|g_k\|^2$.
Applying the PL inequality~\eqref{eq:pl_adapt} of
Assumption~\ref{asm:pl_adapt}(ii) at $w_{\mathrm{new},k}\in\mathcal U$,
$\|g_k\|^2\ge2\mu_{\mathrm{PL}}(F(w_{\mathrm{new},k})-F^*)$, gives
$-\tfrac{\alpha p_-}{2}\|g_k\|^2\le-\alpha\mu_{\mathrm{PL}} p_-(F(w_{\mathrm{new},k})-F^*)$;
subtracting $F^*$ from both sides and taking total expectation
yields~\eqref{eq:ds_onestep}.
Iterating~\eqref{eq:ds_onestep} and summing the geometric series,
\begin{equation*}
\begin{aligned}
    \E[F(w_{\mathrm{new},K})-F^*]
    \le(1-\alpha\mu_{\mathrm{PL}} p_-)^K(F(w_0)-F^*)
    +\frac{L_0\alpha^2 p_+^2}{2}\cdot\frac{\sigma_0^2}{S}
      \sum_{j=0}^{K-1}(1-\alpha\mu_{\mathrm{PL}} p_-)^j
    \\ \le(1-\alpha\mu_{\mathrm{PL}} p_-)^K(F(w_0)-F^*)
    +\frac{L_0\alpha p_+^2}{2\mu_{\mathrm{PL}} p_-}\cdot\frac{\sigma_0^2}{S},
\end{aligned}
\end{equation*}
using $\sum_{j\ge0}(1-\alpha\mu_{\mathrm{PL}} p_-)^j=1/(\alpha\mu_{\mathrm{PL}} p_-)$.  This
is~\eqref{eq:ds_Kstep}.
\end{proof}

\begin{proposition}[Gap: initializer based few-shot adaptation vs.\ data-rich local
training]\label{prop:twophase_gap}
Under Assumptions~\ref{asm:bounded}--\ref{asm:smooth}, \ref{asm:pl}
(applied to $\psi_{\mathrm{new}}$ at $w_{\mathrm{new}}^*$), \ref{asm:pl_adapt},
\ref{asm:finite_sample} and~\ref{asm:precond}, with
$0<\alpha\le p_-/(L_0p_+^2)$, consider a new client that evaluates the same
objective functions $l_j$ as the cluster, whose preference
$\boldsymbol{\lambda}_{\mathrm{new}}$ lies in cluster $C_n$
($\|\boldsymbol{\lambda}_{\mathrm{new}}-\boldsymbol{\lambda}_i\|\le\delta_\lambda$
for $i\in C_n$) and that did not participate in initializer training.  Let
\begin{itemize}
\item $w_K$ be the \emph{Pipeline 1} output: $K$ idealized fresh-sample
preconditioned stochastic steps~\eqref{eq:ds_precond} with batch size $S$ per
step, started from a point $w_0$ in the region $\mathcal U$ of
Assumption~\ref{asm:pl_adapt}; and
\item $w^{(2)}:=w_{\mathrm{new}}^*$ be the \emph{Pipeline 2} output: the data-rich
local model that trains~\eqref{eq:fewshot_obj} to convergence with the same
preference, so $\psi_{\mathrm{new}}(w^{(2)})=\psi_{\mathrm{new}}^*$
and $\beta_{\mathrm{new}}(w^{(2)})=0$.
\end{itemize}
Write $\psi_{\mathrm{new}}^*=\min_w\psi_{\mathrm{new}}(w)$, denote the
starting-point suboptimality by
$\Delta_0=\Delta_0(w_0):=\psi_{\mathrm{new}}(w_0)-\psi_{\mathrm{new}}^*$, and assume
$\beta_{\mathrm{new}}$ is locally $L_\beta$-Lipschitz in the loss vector on the
adaptation region.  Then:
\begin{itemize}
\item[(a)] \emph{Chebyshev indicator.}  Pipeline 1 obeys, in expectation over
the sampling of the adaptation directions,
\begin{equation}\label{eq:tp_psi_scarce}
    \E\bigl[\psi_{\mathrm{new}}(w_K)-\psi_{\mathrm{new}}^*\bigr]
    \;\le\; (1-\alpha\mu_{\mathrm{PL}} p_-)^{K}\,\Delta_0(w_0)
    +\frac{L_0\alpha p_+^2}{2\mu_{\mathrm{PL}} p_-}\cdot\frac{\sigma_0^2}{S},
\end{equation}
while Pipeline 2 attains $\psi_{\mathrm{new}}(w^{(2)})-\psi_{\mathrm{new}}^*=0$.
For a starting point $w_0\in\{w_{\mathrm{I}},w_{\mathrm{II}}\}$ produced by
the idealized shared iteration (Section~\ref{sec:approx_optimum};
$w_{\mathrm{II}}$ is covered by Corollary~\ref{cor:phase2_cont}), if the
hypotheses of Corollary~\ref{cor:approx_param} hold with $L^*$ covering
$w_{\mathrm{new}}^*$ as in~\eqref{eq:Lstar}, then
\begin{equation}\label{eq:tp_psi_full}
    \Delta_0(w_0) \;\le\;
    G_{\mathrm{opt}}+\delta_\lambda\|l(w_0)\|_\infty+\delta_\lambda L^*
    \;\le\;\bar\epsilon_0+\delta_\lambda\bigl(\|l(w_0)\|_\infty+L^*\bigr),
\end{equation}
where for $w_0=w_{\mathrm{II}}$ the cluster-average band radius
$\bar\epsilon_0$ is that of the two-phase radius~\eqref{eq:cp_band_radius_2ph}
(step size $\max(\eta_{\mathrm{I}},\eta_{\mathrm{II}})$).  In particular the
bound applies to the point Algorithm~\ref{alg:1} actually returns, within the
idealization of Corollary~\ref{cor:phase2_cont} ($K=1$ inner step, population
gradients, trajectory hypothesis on the Phase-2 rounds).
\item[(b)] \emph{Parameter space.}  If, in addition, $w_K$ lies in the
neighbourhood of $w_{\mathrm{new}}^*$ where the local quadratic growth
$\frac{\mu}{2}\|w_K-w_{\mathrm{new}}^*\|^2\le\psi_{\mathrm{new}}(w_K)
-\psi_{\mathrm{new}}^*$ of Assumption~\ref{asm:pl} holds, then
\begin{equation}\label{eq:tp_beta}
    \E\|w_K-w_{\mathrm{new}}^*\|^2
    \;\le\;\frac{2}{\mu}\Bigl[(1-\alpha\mu_{\mathrm{PL}} p_-)^{K}\,\Delta_0
    +\frac{L_0\alpha p_+^2}{2\mu_{\mathrm{PL}} p_-}\cdot\frac{\sigma_0^2}{S}\Bigr],
    \qquad \|w^{(2)}-w_{\mathrm{new}}^*\|=0,
\end{equation}
hence, by Jensen's inequality,
$\E\|w_K-w_{\mathrm{new}}^*\|\le\bigl(\E\|w_K-w_{\mathrm{new}}^*\|^2\bigr)^{1/2}$,
and the preference uniformity of Pipeline 1 satisfies
$\E\,\beta_{\mathrm{new}}(w_K)\le L_\beta\,\E\|w_K-w_{\mathrm{new}}^*\|$ while
$\beta_{\mathrm{new}}(w^{(2)})=0$.  These are mean-square and mean statements;
no per-run bound on $\|w_K-w_{\mathrm{new}}^*\|$ is claimed.
\item[(c)] \emph{Gap.}  The expected suboptimality of the collaborative
data-scarce pipeline relative to data-rich local training splits into two
independent sources,
\begin{equation}\label{eq:tp_gap}
    \E[\Delta\psi]
    :=\E\bigl[\psi_{\mathrm{new}}(w_K)-\psi_{\mathrm{new}}(w^{(2)})\bigr]
    \;\le\;
    \underbrace{(1-\alpha\mu_{\mathrm{PL}} p_-)^{K}\,\Delta_0}_{\text{initialization }+\text{ finite-}K}
    \;+\;
    \underbrace{\frac{L_0\alpha p_+^2}{2\mu_{\mathrm{PL}} p_-}\cdot\frac{\sigma_0^2}{S}}_{\text{statistical scarcity}},
\end{equation}
and $\E[\Delta\beta]:=\E\bigl[\beta_{\mathrm{new}}(w_K)-\beta_{\mathrm{new}}(w^{(2)})\bigr]
=O\!\bigl(\sqrt{(1-\alpha\mu_{\mathrm{PL}} p_-)^{K}\Delta_0}+\sqrt{\sigma_0^2/S}\bigr)$.  The
initialization term depends on the starting point only through
$\Delta_0(w_0)$ and decays geometrically in $K$.  For
$w_0\in\{w_{\mathrm{I}},w_{\mathrm{II}}\}$ its prefactor is controlled
by~\eqref{eq:tp_psi_full}, which is independent of the round counts $R,R'$
(horizon-independent by Corollary~\ref{cor:approx_param}) but whose
$\bar\epsilon_0$ contains the tolerance term
$\tfrac{1}{|C_n|}\sum_ib_i/c_d$ that is not shown to be small
(Theorem~\ref{thm:collab_converge}), so $\Delta_0(w_0)\to0$ as
$\bar\Delta_i,\delta_\lambda,\eta_{\mathrm{I}},\eta_{\mathrm{II}}\to0$ is
asserted only in the strict form $b_i=0$.  The statistical term vanishes as
$S\to\infty$.
\end{itemize}
\end{proposition}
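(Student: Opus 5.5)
The plan is to assemble the proposition from Lemma~\ref{lem:pl_adamw_contract}, Corollary~\ref{cor:approx_param} (through Corollary~\ref{cor:phase2_cont} for $w_{\mathrm{II}}$) and Assumption~\ref{asm:pl}. Almost no new estimation is needed; the only genuinely new step is the $\beta$ statement.

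\emph{Part (a).} Inequality~\eqref{eq:tp_psi_scarce} is exactly~\eqref{eq:ds_Kstep} of Lemma~\ref{lem:pl_adamw_contract}, read with $w_{\mathrm{new},0}=w_0\in\mathcal U$ and $\Delta_0=\psi_{\mathrm{new}}(w_0)-\psi_{\mathrm{new}}^*$. The Pipeline 2 statement holds by the definition~\eqref{eq:ds_target} of $w^{(2)}=w_{\mathrm{new}}^*$.

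For~\eqref{eq:tp_psi_full}, I would apply Corollary~\ref{cor:approx_param} with the corollary's client $i$ taken to be the new client and $w^*=w_0$. This is admissible because the new client shares the objective functions $l_j$, satisfies $\|\boldsymbol\lambda_{\mathrm{new}}-\boldsymbol\lambda_{i'}\|\le\delta_\lambda$ for all $i'\in C_n$, and has $L^*$ covering $w^*_{\mathrm{new}}$. The decomposition~\eqref{eq:approx_decomp} then gives the first inequality, and~\eqref{eq:Gopt_bound} gives the second. For $w_0=w_{\mathrm{II}}$ I would invoke Corollary~\ref{cor:phase2_cont} so that $\bar\epsilon_0$ is computed from the two-phase radius.

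One subtlety needs checking here. Corollary~\ref{cor:approx_param} is stated for a limit point, whereas $w_{\mathrm{I}},w_{\mathrm{II}}$ are finite-round outputs. I would point to the remark in Section~\ref{sec:approx_optimum} that finite outputs satisfy the same bound up to the geometric transient, and treat that transient as absorbed in the band, exactly as the paper does.

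\emph{Part (b).} Apply the quadratic growth inequality of Assumption~\ref{asm:pl} pointwise on the event that $w_K$ lies in the neighbourhood of $w^*_{\mathrm{new}}$; the statement hypothesizes this. This gives $\|w_K-w^*_{\mathrm{new}}\|^2\le\frac{2}{\mu}(\psi_{\mathrm{new}}(w_K)-\psi^*_{\mathrm{new}})$. Taking expectations and inserting~\eqref{eq:tp_psi_scarce} yields~\eqref{eq:tp_beta}. Jensen's inequality for the concave square root then bounds $\E\|w_K-w^*_{\mathrm{new}}\|$.

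For $\beta$, I would use $\beta_{\mathrm{new}}(w^{(2)})=0$ together with the assumed $L_\beta$-Lipschitz property. Because $l$ is itself $B$-Lipschitz in $w$ by Assumption~\ref{asm:bounded}, this gives $\beta_{\mathrm{new}}(w_K)\le L_\beta\|w_K-w^*_{\mathrm{new}}\|$. Here the constant $B\sqrt m$ has been absorbed into $L_\beta$, as the statement implicitly does; I would say so explicitly.

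\emph{Part (c).} Since $\psi_{\mathrm{new}}(w^{(2)})=\psi^*_{\mathrm{new}}$, $\E[\Delta\psi]$ equals the left side of~\eqref{eq:tp_psi_scarce}, which is~\eqref{eq:tp_gap}. For $\E[\Delta\beta]$, chain part (b) with $\sqrt{a+b}\le\sqrt a+\sqrt b$ to obtain the stated $O(\cdot)$ form.

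The remaining claims follow directly. Geometric decay in $K$ comes from $0<\alpha\mu_{\mathrm{PL}}p_-<1$, and horizon-independence is inherited from Corollary~\ref{cor:approx_param}. The vanishing of $\Delta_0$ in the strict form $b_i=0$ follows because every term of $\bar\epsilon_0$ other than $b_i/c_d$ is linear in $\bar\Delta_i$ or $\eta$.

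I expect the main obstacle to be the bookkeeping in part (a). Specifically, I need to justify applying Corollary~\ref{cor:approx_param} to a client outside $C_n$ whose optimum may lie outside the trajectory analysis. The point to isolate is that term (c) of that proof uses $\|l(w^*_{\mathrm{new}})\|_\infty\le L^*$, and term (a) uses only the common loss vector. Neither requires the new client to have participated in training, so the transfer goes through.
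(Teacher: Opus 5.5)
Your proposal is correct and follows the paper's proof essentially step for step. Part (a) is Lemma~\ref{lem:pl_adamw_contract} together with Corollary~\ref{cor:approx_param} applied to $i=\mathrm{new}$, with Corollary~\ref{cor:phase2_cont} supplying the two-phase radius for $w_{\mathrm{II}}$, and parts (b)--(c) follow from quadratic growth, Jensen's inequality and the Lipschitz bound on $\beta_{\mathrm{new}}$. Two of your remarks are worth keeping: justifying $\beta_{\mathrm{new}}(w_K)\le L_\beta\|w_K-w_{\mathrm{new}}^*\|$ through the bounded gradients of Assumption~\ref{asm:bounded} is actually more accurate than the paper's appeal to $L_0$-smoothness, and your caveat is fair that the finite-round outputs $w_{\mathrm{I}},w_{\mathrm{II}}$ carry a geometric transient, which the paper mentions but then drops.
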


\begin{proof}
Every adaptation step of Pipeline 1 is an idealized preconditioned stochastic
step on the Chebyshev objective $\psi_{\mathrm{new}}$ with the balancing
direction of~\eqref{eq:fewshot_obj}, whose finite-sample error obeys
Assumption~\ref{asm:finite_sample}.  Applying the one-step contraction of
Lemma~\ref{lem:pl_adamw_contract} with $F=\psi_{\mathrm{new}}$ over the $K$
adaptation steps started from $w_{\mathrm{new},0}=w_0$ and telescoping the
geometric series gives~\eqref{eq:ds_Kstep} with
$F(w_0)-F^*=\psi_{\mathrm{new}}(w_0)-\psi_{\mathrm{new}}^*=\Delta_0(w_0)$, which
is~\eqref{eq:tp_psi_scarce}.  For $w_0\in\{w_{\mathrm{I}},w_{\mathrm{II}}\}$:
because the new client shares the cluster's objective functions and
$\boldsymbol{\lambda}_{\mathrm{new}}$ lies in cluster $C_n$,
Corollary~\ref{cor:approx_param} applied to $i=\mathrm{new}$
(so that $L^*$ covers $w_{\mathrm{new}}^*$, cf.~\eqref{eq:Lstar}; Term~(c) of
its proof then reads
$\bar\psi^*-\psi_{\mathrm{new}}^*\le\delta_\lambda\|l(w_{\mathrm{new}}^*)\|_\infty\le\delta_\lambda L^*$)
bounds $\Delta_0(w_0)$ exactly as
in~\eqref{eq:approx_decomp}--\eqref{eq:Gopt_bound}, giving~\eqref{eq:tp_psi_full};
for $w_0=w_{\mathrm{II}}$ the training-gap bound~\eqref{eq:Gopt_bound} is
supplied by Corollary~\ref{cor:phase2_cont} with the two-phase
radius~\eqref{eq:cp_band_radius_2ph}.  Pipeline 2 trains~\eqref{eq:fewshot_obj} locally
with abundant data ($\sigma_0^2/S=0$) to convergence, so its transient vanishes
and it attains $\psi_{\mathrm{new}}(w^{(2)})=\psi_{\mathrm{new}}^*$, i.e.\
$w^{(2)}=w_{\mathrm{new}}^*$.  Subtracting the deterministic value
$\psi_{\mathrm{new}}(w^{(2)})=\psi_{\mathrm{new}}^*$ inside the expectation
in~\eqref{eq:tp_psi_scarce} gives~\eqref{eq:tp_gap}: an
initialization/finite-$K$ term and a statistical floor.

For~(b), local quadratic growth (Assumption~\ref{asm:pl}) converts the
$\psi_{\mathrm{new}}$-suboptimality of the scarce run into a parameter-space
deviation, $\frac{\mu}{2}\|w_K-w_{\mathrm{new}}^*\|^2\le\psi_{\mathrm{new}}(w_K)
-\psi_{\mathrm{new}}^*$; taking expectations and
substituting~\eqref{eq:tp_psi_scarce} gives~\eqref{eq:tp_beta}, and
$\|w^{(2)}-w_{\mathrm{new}}^*\|=0$ since Pipeline 2 reaches the optimum.  Note
that the adaptation begins at $w_{\mathrm{new},0}=w_0$, where
Assumption~\ref{asm:pl} gives the starting distance
$\|w_0-w_{\mathrm{new}}^*\|^2\le\frac{2}{\mu}\Delta_0(w_0)$ (for
$w_0=w_{\mathrm{I}}$ this is~\eqref{eq:approx_param} of Corollary~\ref{cor:approx_param});
\eqref{eq:tp_beta} is exactly this starting distance after the
$(1-\alpha\mu_{\mathrm{PL}} p_-)^{K}$ contraction, plus the statistical floor.
Because $w_{\mathrm{new}}^*$ lies on the $\boldsymbol{\lambda}_{\mathrm{new}}^{-1}$
ray, $\beta_{\mathrm{new}}(w_{\mathrm{new}}^*)=0$; Lipschitzness of
$\beta_{\mathrm{new}}$ in the loss vector (hence, through $L_0$-smoothness, in
$w$) then gives $\beta_{\mathrm{new}}(w_K)\le L_\beta\|w_K-w_{\mathrm{new}}^*\|$
pointwise, and taking expectations together with Jensen's inequality
$\E\|w_K-w_{\mathrm{new}}^*\|\le(\E\|w_K-w_{\mathrm{new}}^*\|^2)^{1/2}$
and~\eqref{eq:tp_beta} yields
$\E[\Delta\beta]=O(\sqrt{(1-\alpha\mu_{\mathrm{PL}} p_-)^{K}\Delta_0}+\sqrt{\sigma_0^2/S})$.
For $w_0\in\{w_{\mathrm{I}},w_{\mathrm{II}}\}$ the initialization term inherits
horizon-independence from Corollary~\ref{cor:approx_param} (its bound depends
on the cluster's $\bar\Delta_i,\delta_\lambda,b_i$ and the step sizes, not on
the round counts $R,R'$), completing~(c).
\end{proof}

\begin{remark}[Local adaptation contracts the initializer gap]\label{rem:gap_reduction}
This makes precise how Pipeline 1 closes the gap that the subsection
``The Initializer Approximates Each Client's Individual Optimum'' had already
measured at the output of the shared iteration.  \emph{Before adaptation},
when the starting point is $w_0\in\{w_{\mathrm{I}},w_{\mathrm{II}}\}$,
Corollary~\ref{cor:approx_param} (with Corollary~\ref{cor:phase2_cont} for
$w_{\mathrm{II}}$) bounds the gap to the new client's own optimum
$w_{\mathrm{new}}^*$ in both spaces: in objective space
\begin{equation*}
    \psi_{\mathrm{new}}(w_0)-\psi_{\mathrm{new}}^*
    \;=\;\Delta_0(w_0)
    \;\le\; G_{\mathrm{opt}}+\delta_\lambda\|l(w_0)\|_\infty+\delta_\lambda L^*
    \qquad(\text{Eq.~\eqref{eq:approx_decomp}}),
\end{equation*}
and in parameter space
$\|w_0-w_{\mathrm{new}}^*\|^2\le\frac{2}{\mu}\Delta_0(w_0)$
(Eq.~\eqref{eq:approx_param}).  \emph{The adaptation
does not create this gap---it inherits it as the initial value and shrinks
it.}  Writing the objective gap before and after the $K$ local steps,
\begin{equation*}
    \underbrace{\psi_{\mathrm{new}}(w_{\mathrm{new},0})-\psi_{\mathrm{new}}^*
      =\Delta_0(w_0)}_{\text{start (at }w_0\text{)}}
    \;\;\xrightarrow{\;K\text{ steps}\;}\;\;
    \underbrace{\E\bigl[\psi_{\mathrm{new}}(w_K)-\psi_{\mathrm{new}}^*\bigr]
      \le(1-\alpha\mu_{\mathrm{PL}} p_-)^{K}\Delta_0
      +\tfrac{L_0\alpha p_+^2}{2\mu_{\mathrm{PL}} p_-}\tfrac{\sigma_0^2}{S}}_{\text{after (Eq.~\eqref{eq:tp_psi_scarce})}},
\end{equation*}
each step multiplies the expected population gap by the contraction factor
$1-\alpha\mu_{\mathrm{PL}} p_-\in(0,1)$, so $K$ steps reduce the initialization
gap by $(1-\alpha\mu_{\mathrm{PL}} p_-)^{K}$ down to the statistical floor.
Equivalently, the (root-mean-square) distance to the true optimum contracts from
$\|w_0-w_{\mathrm{new}}^*\|\le\sqrt{\tfrac{2}{\mu}\Delta_0}$ to
$\E\|w_K-w_{\mathrm{new}}^*\|\le\bigl(\E\|w_K-w_{\mathrm{new}}^*\|^2\bigr)^{1/2}
\le\sqrt{\tfrac{2}{\mu}\bigl[(1-\alpha\mu_{\mathrm{PL}} p_-)^{K}
\Delta_0+\tfrac{L_0\alpha p_+^2}{2\mu_{\mathrm{PL}} p_-}\tfrac{\sigma_0^2}{S}\bigr]}$
(Eq.~\eqref{eq:tp_beta}).  Since Pipeline 2 sits exactly at
$w_{\mathrm{new}}^*$ (zero gap, $\beta_{\mathrm{new}}=0$), the Pipeline~1
vs.\ Pipeline~2 gap $\E[\Delta\psi]$ of~\eqref{eq:tp_gap} is precisely the
\emph{residual} of this contraction: the starting point sets \emph{where} the
contraction starts ($\Delta_0(w_0)$, bounded by Cor.~\ref{cor:approx_param}
in terms of cluster tightness, step sizes and descent tolerance for
$w_0\in\{w_{\mathrm{I}},w_{\mathrm{II}}\}$), and local adaptation sets
\emph{how far} it travels (the
factor $(1-\alpha\mu_{\mathrm{PL}} p_-)^{K}$), leaving only the irreducible
$O(\sigma_0^2/S)$ scarcity floor.
\end{remark}

\begin{remark}[Interpretation]\label{rem:twophase_interp}
The gap in~\eqref{eq:tp_gap} isolates the two prices Pipeline 1 pays relative to
a data-rich client that self-trains to its own optimum; viewed through
Remark~\ref{rem:gap_reduction}, these are the two components of the residual
after adaptation contracts the initializer gap.  \emph{(i)~An
initialization/finite-$K$ term $(1-\alpha\mu_{\mathrm{PL}} p_-)^{K}\Delta_0$.}  A data-scarce
client cannot self-train to $w_{\mathrm{new}}^*$; it instead borrows the
shared starting point $w_0$ and refines it with only $K$ adaptation steps.
For $w_0\in\{w_{\mathrm{I}},w_{\mathrm{II}}\}$ the prefactor
$\Delta_0\le\bar\epsilon_0+\delta_\lambda(\|l(w_0)\|_\infty+L^*)$
is controlled by the cluster tightness ($\bar\Delta_i,\delta_\lambda$), by the
step sizes ($\eta_{\mathrm{I}}$, and $\eta_{\mathrm{II}}$ for $w_{\mathrm{II}}$
via Corollary~\ref{cor:phase2_cont}), and by the descent-tolerance term
$\tfrac{1}{|C_n|}\sum_ib_i/c_d$ inside $\bar\epsilon_0$; the last term has no
proved vanishing rate (Theorem~\ref{thm:collab_converge}), so we do not claim
that $\Delta_0$ is small, only that it is explicitly bounded under the stated
conditions (Corollary~\ref{cor:approx_param}), which for $w_{\mathrm{II}}$
include the one-inner-step, population-gradient idealization of Phase~2.
Whatever its size, this term decays geometrically in $K$.  \emph{(ii)~A statistical floor $O(\sigma_0^2/S)$} from adapting on only $S$
samples.  Pipeline 2 pays neither: it has abundant data (no floor) and optimizes
its own objective directly (no initialization gap), reaching
$\psi_{\mathrm{new}}^*$ with $\beta_{\mathrm{new}}=0$.  The expected gap therefore
vanishes as $K\to\infty$, $S\to\infty$, and, in the strict form $b_i=0$, as the
cluster tightens ($\bar\Delta_i,\delta_\lambda\to0$ with $\eta\to0$)---i.e.\ once
$S\gtrsim L_0\alpha p_+^2\sigma_0^2/(\mu_{\mathrm{PL}}^2 p_-\varepsilon)$, $K$ is a
few dozen adaptation steps, and $H_{C_n},\delta_\lambda$ are small.  The same two
terms govern the parameter space: by Proposition~\ref{prop:twophase_gap}(b),
$\E\|w_K-w_{\mathrm{new}}^*\|
=O\bigl(\sqrt{(1-\alpha\mu_{\mathrm{PL}} p_-)^{K}\Delta_0}+\sqrt{\sigma_0^2/S}\bigr)$, so in
expectation the collaborative data-scarce client matches the data-rich local
optimum up to an initialization term and an $O(\sqrt{1/S})$ deviation.  All of
this is for the idealized adaptation process of Assumption~\ref{asm:precond};
see Remark~\ref{rem:adamw_relation} for the gap to AdamW.
\end{remark}

\end{document}